\newcommand{\nocontentsline}[3]{}
\newcommand{\tocless}[2]{\bgroup\let\addcontentsline=\nocontentsline#1{#2}\egroup}
\newcommand{\toclesslab}[3]{\bgroup\let\addcontentsline=\nocontentsline#1{#2\label{#3}}\egroup}
\newif\ifappendix
\declaretheorem[numberwithin=section]{theorem}
\declaretheorem[style=plain, name=Proposition, sibling=theorem]{pro}
\declaretheorem[style=plain, name=Lemma, sibling=theorem]{lem}
\declaretheorem[style=plain, name=Corollary, sibling=theorem]{cor}
\declaretheorem[style=plain, name=Theorem, sibling=theorem]{thrm}
\declaretheorem[style=plain, name=Theorem, sibling=theorem]{thrm_main}
\declaretheorem[style=definition, name=Definition, sibling=theorem]{dfn}
\declaretheorem[style=definition, name=Definition, sibling=theorem]{dfn_main}
\declaretheorem[style=definition, name=Example, sibling=theorem]{exa}
\declaretheorem[style=remark, name=Remark, sibling=theorem]{rem}
\newcommand{\N}{\mathds{N}}
\newcommand{\Z}{\mathds{Z}}
\newcommand{\R}{\mathds{R}}
\newcommand{\C}{\mathds{C}}
\newcommand{\K}{\mathds{K}}
\newcommand{\HH}{\mathds{H}}
\newcommand\pig[1]{\scalerel*[5pt]{\big#1}{%
\ensurestackMath{\addstackgap[1.5pt]{\big#1}}}}
\newcommand{\equivclass}[1]{#1/{\sim}}
\newcommand{\lam}[1]{\lambda(#1)}
\newcommand{\Ltwo}[2]{L^2_{#1}(#2)}
\newcommand{\indic}[1]{\mathbf{1}_{#1}}
\newcommand{\realD}[2]{\tensor[^{r}]{D_{#1}(#2)}{}}
\newcommand{\slimrealD}[1]{\tensor[^{r}]{D}{}_{#1}}
\newcommand{\bra}[1]{\langle #1 |}
\newcommand{\ket}[1]{| #1 \rangle}
\DeclareMathOperator{\End}{End}
\DeclareMathOperator{\Hom}{Hom}
\DeclareMathOperator{\ID}{id}
\DeclareMathOperator{\spann}{span}
\DeclareMathOperator{\inn}{in}
\DeclareMathOperator{\out}{out}
\DeclareMathOperator{\rep}{Rep}
\DeclareMathOperator{\Ker}{GKer}
\DeclareMathOperator{\CG}{CG}
\DeclareMathOperator{\Ind}{Ind}
\DeclareMathOperator{\RE}{Re}
\renewcommand{\O}[1]{\ensuremath{\operatorname{O}(#1)}}
\newcommand{\SO}[1]{\ensuremath{\operatorname{SO}(#1)}}
\newcommand{\SU}[1]{\ensuremath{\operatorname{SU}(#1)}}
\newcommand{\U}[1]{\ensuremath{\operatorname{U}(#1)}}
\newcommand{\E}[1]{\ensuremath{\operatorname{E}(#1)}}
\newcommand{\SE}[1]{\ensuremath{\operatorname{SE}(#1)}}
\newcommand{\GL}{\ensuremath{\operatorname{GL}}}
\newcommand{\DN}{\ensuremath{\operatorname{D}_{\!N}}}
\newcommand{\CN}{\ensuremath{\operatorname{C}_{\!N}}}
\newcommand{\av}[1]{\ensuremath{\operatorname{av}(#1)}}
\renewcommand{\dim}[1]{\ensuremath{\operatorname{dim}(#1)}}
\newcommand{\im}[1]{\ensuremath{\operatorname{im}(#1)}}
\newcommand{\Null}[1]{\ensuremath{\operatorname{null}(#1)}}
\newcommand{\B}[2]{\ensuremath{\operatorname{B}_{#1}(#2)}}
\newcommand*{\citenst}[2][]{%
  \begingroup
  \let\NAT@mbox=\mbox
  \let\@cite\NAT@citenum
  \let\NAT@space\NAT@spacechar
  \let\NAT@super@kern\relax
  \renewcommand\NAT@open{[}%
  \renewcommand\NAT@close{]}%
  \citet[#1]{#2}%
  \endgroup
}
\title{A Wigner-Eckart Theorem for\\ Group Equivariant Convolution Kernels}
\author{%
    Leon Lang\thanks{This research has been conducted during an internship at QUVA lab, University of Amsterdam.} \\
    AMLab, \ CSL \\
    University of Amsterdam \\
    \texttt{l.lang@uva.nl}
    \And
    Maurice Weiler \\
    AMLab, \ QUVA Lab \\
    University of Amsterdam \\
    \texttt{m.weiler.ml@gmail.com}
}
\begin{document}

\maketitle

\begin{abstract}
Group equivariant convolutional networks (GCNNs) endow classical convolutional networks with additional symmetry priors, which can lead to a \mbox{considerably} improved performance.
Recent advances in the theoretical description of \mbox{GCNNs} revealed that such models can generally be understood as performing convolutions with $G$-\emph{steerable} kernels, that is, kernels that satisfy an equivariance constraint themselves.
While the $G$-steerability constraint has been \emph{derived}, it has to date only been \emph{solved} for specific use cases
-- a general characterization of $G$-steerable kernel spaces is still missing.
This work provides such a characterization for the practically relevant case of $G$ being any compact group.
Our investigation is motivated by a striking analogy between the constraints underlying steerable kernels on the one hand and spherical tensor operators from quantum mechanics on the other hand.
By generalizing the famous Wigner-Eckart theorem for spherical tensor operators, we prove that steerable kernel spaces are fully understood and parameterized in terms of
1) generalized reduced matrix elements,
2) Clebsch-Gordan coefficients, and
3) harmonic basis functions on \mbox{homogeneous spaces}. 
\end{abstract}

\toclesslab\section{Introduction}{sec:introduction} 

\todo[inline, caption={}, color=pink]{
List of planned changes to the second ICLR submission. 

{\color{Blue}
Still To Do:
\begin{itemize}
\item Reveal author names / acknowledgments when the paper is finally accepted.
\item Shorten everything to $9$ pages.
\end{itemize}
}

{\color{red}
Don't do anything since it's rather clear from our exposition:
\begin{itemize}
\item R4.1: Clarify more that we don't ``need'' to consider physics, but that is is just a useful analogy and that both situations are just based on representastion theory. MY (Leon's) OPINION: I think this is actually pretty clear, we emphasize often that this is just an analogy. Do you think this could be missunderstood?
\end{itemize}
}

{\color{Green}
Already done:
\begin{itemize}
\item Maurice: Zu CG-Nets in Related Work Section: Klar machen, dass er SO(3) Features und deren Tensor Produkte betrachtet, nicht SO(2) Features, und, dass die Operationen nicht lokal sind. Zu Orthogonalität: "Falls "orthogonal" heisst, das beide designs miteinander verbunden werden koennen bin ich nicht ganz sicher. Wahrscheinlcih geht das schon irgendwie, ist aber nicht ganz offensichtlich."
\item Maurice: Mehr Credit an TFN geben, weil die bereits unsere Lösung (inklusive CG Koeffizienten) vorhergesehen haben.
\item R1: Clarify the expected impact in the applications section: For G-equivariant convolutions, one needs G-steerable kernels, and we provide the kernel solutions. O(3) and SU(N) for future results.
\item R4: Clarify that we don't provide ``full solutions'', but that we reduce the problem to finding CG coefficients, harmonic basis functions, and endomorphisms, at least in intro and conclusion (or frame it as ``We find the general structure of the solution''). (This is more or less done, see the several to do notes with changes on this)
\item R4.2: Use notation $d_j$ instead of $[j]$, and the same for $[l]$ and $[J]$.
\item R4.3: Explain how to come from the solutions for irreps to the solutions for general representations.
\item R4.4: Explain early and clearly enough why endomorphisms are necessary to consider. Also cite Broecker/Dieck.
\item R4.5: Make Thm. C.7 precise by directly working with the limit.
\item R4.6: Replace ``which is zero for almost all J'' by ``which is zero for all but finitely many''.
\item R4.7: Always write $\SO{2}$ instead of $\U{1}$, but then explain in the one section in the appendix why we think it is useful to switch.
\end{itemize}
}
}

Undoubtedly, symmetries play a central role in the formulation of physical theories.
Any imposed symmetry greatly reduces the set of admissible physical laws and dynamics.
Specifically in quantum mechanics, the Hilbert space of a system is equipped with a group representation which specifies the transformation law of system states.
Quantum mechanical operators, which map between different states, are required to respect these transformation laws.
That is, any symmetry transformation of a state on which they act should lead to a corresponding transformation of the resulting state after their action.
This requirement imposes a \emph{symmetry constraint on the operators} themselves --
only specific operators can map between a given pair of states.

The situation in equivariant deep learning is remarkably similar to that in physics.
Instead of a physical system, one considers in this case some learning task subject to symmetries.
For instance, image segmentation is usually assumed to be translationally symmetric: a shift of the input image should lead to a corresponding shift of the predicted segmentation mask.
Convolutional networks guarantee this property via their inherent translation equivariance.
The role of the quantum states is in equivariant deep learning taken by the features in each layer, which are due to the enforced equivariance endowed with some transformation law.
The analog of quantum mechanical operators, mapping between states, is the neural connectivity, mapping between features of consecutive layers.
As in the case of operators, there is a \emph{symmetry (equivariance) constraint on the neural connectivity} -- only specific connectivity patterns guarantee a correct transformation law of the resulting features.


In this work we are considering group equivariant convolutional networks (GCNNs), which are convolutional networks that are equivariant w.r.t. symmetries of the space on which the convolution is performed.
Typical examples are isometry equivariant CNNs on Euclidean spaces \citep{weiler2019general} or spherical CNNs \citep{spherical_cnns}.
Many different formulations of GCNNs have been proposed, however, it has recently been shown that $H$-equivariant \mbox{GCNNs} on homogeneous spaces $H/G$ can
in a fairly general setting be understood as performing convolutions with $G$-\emph{steerable kernels} \citep{general_theory}.
Convolutional weight sharing hereby guarantees the equivariance under ``translations'' of the space while $G$-steerability is a constraint on the convolution kernel that ensures its equivariance under the action of the stabilizer subgroup~$G<H$.
Although the space of $G$-steerable kernels has been characterized
for specific choices of groups~$G$ and feature transformation laws, i.e., group representations~$\rho$, see Section~\ref{reelated},
no general solution was known so far.
This work 
characterizes the solution space
for arbitrary compact groups~$G$.

Our solution is motivated by the close resemblance of the $G$-steerability kernel constraint to the defining constraint of \emph{spherical tensor operators} (or more general representation operators \citep{Jeevanjee}) in quantum mechanics.
The famous \emph{Wigner-Eckart theorem} describes the general structure of these operators by Clebsch-Gordan coefficients, with the degrees of freedom given by reduced matrix elements. 
By generalizing this theorem, we find a general characterization and parameterization of $G$-steerable kernel spaces. 
For specific examples, like $G = \SO{3}$ or compact subgroups of $G = \O{2}$, our kernel space solution specializes to earlier work, e.g., \citet{hnets, ThomasTensorField, weiler2019general}.
Our main contributions are the following:

\begin{itemize}[leftmargin=2.5em]
    \item We present a \emph{generalized Wigner-Eckart theorem}~\ref{introduction_otline} \emph{for $G$-steerable kernels}.
    It describes the general structure of equivariant kernels in terms of
    1) endomorphism bases, which generalize reduced matrix elements,
    2) Clebsch-Gordan coefficients, and
    3) harmonic basis functions on a suitable homogeneous space.
    In contrast to the usual formulation, we cover any compact group~$G$ and both real and complex representations. 
    \item Corollary~\ref{thm:steerable_kernel_parametrization_corollary} explains how to \emph{parameterize $G$-steerable kernels} and thus GCNNs.
    \item We apply the theorem exemplarily to solve for the kernel spaces for the symmetry groups $\SO2\,$,\, $\Z/2\,$,\, $\SO3\,$ and $\O3\,$, considering both real and complex representations. Thereby, we demonstrate that the endomorphism bases, Clebsch-Gordan coefficients, and harmonic basis functions can usually be determined for practically relevant symmetry groups. 
\end{itemize}

\paragraph{Outline}
This paper is organized as follows:
Section~\ref{sec:dual_constraint} motivates our investigation by highlighting analogies between representation operators and $G$-steerable kernels.
Section~\ref{baackround} concisely introduces mathematical concepts which are in Section~\ref{reesult} used to formulate our Wigner-Eckart theorem for steerable kernels.
The following Sections~\ref{reelated} and~\ref{eexamples} put our result in context to prior work and give a recipe for constructing steerable kernel bases in practice.
Example applications of this recipe are found in Appendix~\ref{examples}.

As the full and detailed proofs underlying our generalized Wigner-Eckart theorem are rather lengthy, the reader can find them together with all required background knowledge on representation theory in the appendix.
The main part of this paper states the key concepts and results in a self-contained way and gives a short outline of the proofs.

\toclesslab\section{Symmetry-constrained Operators and their Matrix Elements}{sec:dual_constraint}

To motivate our generalized Wigner-Eckart theorem, we review quantum mechanical representation operators and $G$-steerable kernels with an emphasis on the similarity of their underlying symmetry constraints.
\mbox{Due to their symmetries, the matrix elements of such operators and} kernels are fully specified by a comparatively small number of reduced matrix elements or learnable parameters, respectively. This reduction is for representation operators described by the Wigner-Eckart theorem. For clarity, we discuss this theorem in its most popular form, i.e., for spherical tensor operators ($\SO3$-representation operators transforming under irreducible representations).

\paragraph{The Representation Operator Constraint}

Consider a quantum mechanical system with symmetry under the action of some group~$G$, for instance rotations.
The action of this symmetry group on quantum states is modeled by some unitary $G$-representation%
\footnote{
    Unitary representations are explained in Section \ref{baackround}. The notation $U$ for the operator is distinct from the notation U of the unitary group $\U{H}$.
}
$U: G \to \U{\mathcal{H}}$ on the Hilbert space~$\mathcal{H}$.
More specifically, $G$ acts on kets according to $|\psi\rangle \mapsto |\psi'\rangle := U(g)\, |\psi\rangle$ and on bras according to $\langle\psi| \mapsto \langle\psi'| := \langle\psi|\, U(g)^\dagger$, where $U(g)^\dagger$ is the adjoint of $U(g)$.
Observables of the system correspond to self-adjoint operators $A = A^\dagger$.
The expectation value of such an observable in some quantum state $|\psi\rangle$ is given by $\langle\psi| A |\psi\rangle \in \R$.

The transformation behaviors of states and observables need to be consistent with each other.
As an example, consider a system consisting of a single, free particle in $\R^3$, which is (among other symmetries) symmetric under rotations $G=\SO3$.
The momentum of the particle in the direction of the three frame axes is measured by the three momentum operators $(P_1, P_2, P_3)$.
Since the momentum of a classical particle transforms geometrically like a vector, one needs to demand the same for the
momentum observable expectation values.
If we denote by $p_i := \langle\psi| P_i |\psi\rangle$ the expected momentum in $i$-direction, this means that the expected momentum of a rotated system is given by
\begin{equation*}
    p'_i \ =\, \sum\nolimits_j R_{ij} \mkern2mu p_j \ =\, \sum\nolimits_j R_{ij} \langle\psi| P_j |\psi\rangle \,,
\end{equation*}
where $R\in\SO3$ is an element of the rotation group.
This result should agree with the expectation values for rotated system states, that is,
\begin{equation*}
    p'_i = \langle\psi'| P_i |\psi'\rangle \ =\ \langle\psi|\mkern1.5mu U(R)^\dagger \mkern1mu P_i \mkern1mu U(R) \mkern1.5mu|\psi\rangle \,.
\end{equation*}
As this argument is independent from the particular choice of state $|\psi\rangle$, and making use of the linearity of the operations, this implies a consistency constraint
\begin{equation*}
    \sum\nolimits_j R_{ij} P_j \ =\ U(R)^\dagger \mkern1mu P_i \mkern1mu U(R) \,,
\end{equation*}
which identifies the collection $(P_1, P_2, P_3)$ as a \emph{vector operator}.
Other geometric quantities are required to satisfy similar constraints:
For instance, energy is a scalar (i.e., invariant) quantity and the Hamilton operator $H$ is a scalar operator, satisfying
$H = U(R)^\dagger \mkern1mu H \mkern1mu U(R)$.
Similarly, any matrix valued classical quantity corresponds to a rank $(1,1)$ Cartesian tensor operator $(M_{ij})_{i,j=1,2,3}$ subject to
$\sum_{kl} R_{ik} \mkern1mu M_{kl} \mkern1mu (R^{-1})_{lj} = U(R)^\dagger \mkern1mu M_{ij} \mkern1mu U(R)$.
The overarching framework to study such situations is the notion of a \emph{representation operator}, which we define as a family of operators $(A_1,\dots,A_N)$ which are required to satisfy the constraint
\begin{align}\label{eq:repr_operator_constraint}
    \sum\nolimits_{j=1}^N \pi(g)_{ij} \mkern1mu A_j\ =\ U(g)^\dagger \mkern1mu A_i \mkern1mu U(g) \qquad \forall\ g\in G \,,
\end{align}
where $\pi: G \to \U{\C^N}$ is some unitary representation of the symmetry group under consideration.
The examples above correspond to specific choices of representations, namely the trivial representation $\pi(R) = 1$ for scalars, the ``standard'' representation $\pi(R) = R$ for vectors and the tensor product representation $\pi(R) = R \otimes (R^{-1})^{\!\top}$ for matrices.
Spherical tensor operators, discussed below, correspond to the irreps (irreducible representations) of~$\SO3$.

\paragraph{The Steerable Kernel Constraint}\label{sec:kernel_constraint}

Convolution kernels of group equivariant CNNs are required to satisfy a very similar constraint to that in Eq.~\eqref{eq:repr_operator_constraint}.
Before coming to such GCNNs, consider the case of conventional CNNs, processing image-like signals on a Euclidean space $\R^d$.
Such signals~are formalized as $c$-channel feature maps $f: \R^d \to \K^c$ that assign a $c$-dimensional feature vector $f(x) \in \K^c$ to each point $x \in \R^d$, where we allow for $\K$ being either of the real or complex numbers $\R$ or $\C$.
Each CNN layer maps its input feature map $f_\textup{in}: \R^d \to \K^{c_\textup{in}}$ via a convolution to an output feature map $f_\textup{out} := K\star f_\textup{in} : \R^d \to \K^{c_\textup{out}}$.
Since the convolution maps $c_\textup{in}$ input channels to $c_\textup{out}$ output channels, the kernel $K: \R^d \to \K^{c_\textup{out}\times c_\textup{in}}$ is matrix-valued.

Conventional CNNs are translation equivariant, however it is often desirable that the convolution is equivariant w.r.t. a larger symmetry group, for instance the isometries $\E{d}$ of $\R^d$~\citep{weiler2019general}.
For simplicity, we consider semidirect product groups of the form $(\R^d,+) \rtimes G$, where $G\leq\GL(d)$ is any compact group.
Group elements $tg \in (\R^d,+) \rtimes G$ are uniquely split into a translation $t\in (\R^d,+)$ and an element $g\in G$, stabilizing the origin.
They act on $\R^d$ according to $x \mapsto (tg)\cdot x := gx+t$.
The equivariance of a GCNN -- which is the analog to the symmetry of a quantum mechanical system -- requires the feature spaces to be endowed with a group action of the symmetry group.
A natural choice is to model the feature spaces as spaces of \emph{feature fields},
for instance scalar, vector or tensor fields~\citep{cohenSteerable}.

Such feature fields are defined as functions $f: \R^d \to V$, where the difference to conventional feature maps is that the space $V \cong \K^c$ of feature vectors is equipped with a group representation $\rho: G \to \GL(V)$ of the stabilizer~$G$.
The full symmetry group acts on feature fields according to $f \mapsto (tg)\cdot f := \rho(g) \circ f \circ (tg)^{-1}$, which is known as the induced representation of $\rho$.
As proven in~\citep{3d_cnns}, the most general linear and equivariant map from an input field $f_\textup{in}: \R^d \to V_\textup{in}$ to an output field $f_\textup{out}: \R^d \to V_\textup{out}$ is a convolution with a $G$-steerable kernel $K: \R^d \to \Hom_\K(V_\textup{in}, V_\textup{out}) \cong \K^{c_\textup{out}\times c_\textup{in}}$.
Such kernels take values in the space of \emph{linear operators} from $V_\textup{in}$ to $V_\textup{out}$ and are required to satisfy the $G$-\emph{steerability} (equivariance) constraint
\begin{align}\label{eq:the_kernel_constraint}
    K(gx)\ =\ \rho_\textup{out}(g) \circ K(x) \circ \rho_\textup{in}(g)^{-1} \qquad \forall\ g\in G,\ x\in \R^d \,.
\end{align}
One can easily check that a convolution with a $G$-steerable kernel $K$ is indeed equivariant, i.e., satisfies $K \star ((tg)\cdot f) = (tg)\cdot (K\star f)$ for any $tg\in (\R^d,+)\rtimes G$.
This result was later generalized to feature fields on homogeneous spaces $H/G$ of unimodular locally compact groups~$H$ \citep{general_theory} and on Riemannian manifolds with structure group $G$~\citep{gauge_cnns}.
That the equivariance of the convolutional network requires $G$-steerable kernels in any of these settings underlines the great practical relevance of our results.

The two constraints, Eq.~\eqref{eq:repr_operator_constraint} and Eq.~\eqref{eq:the_kernel_constraint}, are remarkably similar:
the left-hand-sides are in both cases given by a $G$-transformation of the operator or kernel itself while the right-hand-sides are given by pre- and postcomposition of the operator or kernel with unitary representations.
More details on this comparison can be found in Appendix \ref{sec:more_details_spherical_kernel}.

\paragraph{The Wigner-Eckart Theorem for Spherical Tensor Operators}

All information about a linear operator $A: \mathcal{H} \to \mathcal{H}$ is encoded by its \emph{matrix elements} $A_{\mu\nu} := \bra{\mu} A \ket{\nu} \in \C$ relative to a given basis,
where $\ket{\nu} \in \mathcal{H}$ and $\bra{\mu} \in \mathcal{H}^*$ denote basis elements of the Hilbert space and its dual.
Similarly, all information about a convolution kernel $K$ is encoded by its matrix elements $K_{\mu\nu}(x) := {\bra{\mu} K(x) \ket{\nu} \in \K}$, where $\ket{\nu} \in V_\textup{in}$ and $\bra{\mu} \in V_\textup{out}^*$ are 
elements of chosen bases for the input representation and dual output representation.
Considering general operators and kernels, i.e., ignoring the symmetry constraints in Eqs.~\eqref{eq:repr_operator_constraint} and~\eqref{eq:the_kernel_constraint}, all matrix elements are independent degrees of freedom.
In the case of convolution kernels, they correspond directly to the ${c_\textup{out} \mkern-2.5mu\cdot\mkern-1.5mu c_\textup{in}}$ learnable \mbox{parameters for every point of} the kernel.
However, if $A$ is a representation operator -- or if $K$ is a $G$-steerable kernel -- the symmetry constraints couple the matrix elements to each other such that they can not be chosen freely anymore.
For representation operators, \mbox{this statement is made precise by the Wigner-Eckart theorem.}

The Wigner-Eckart theorem is best known in its classical form, which applies specifically to \emph{spherical tensor operators}. These operators are the representation operators for the irreps of $\SO3$, i.e., the Wigner D-matrices ${D_j: \SO{3} \to \U{\C^{2j+1}}}$. As such, spherical tensor operators of rank~$j$ are defined as families $\boldsymbol{T}_{j} = (T_{j}^{-j}, \dots, T_{j}^{j})^\top$ of $2j+1$ operators $T_j^m$ that satisfy the constraint
\begin{equation*}
    \sum\nolimits_{n=-j}^j D_j^{mn}(g)\, T_j^n \ =\ U(g)^\dagger\, T_j^m \,U(g) 
\end{equation*}
for any $g\in\SO3$.

In order to express the operators $T_j^m$ in terms of matrix elements, we need to fix a basis of $\mathcal{H}$.
Due to the $\SO3$-symmetry of $\boldsymbol{T}_j$, a natural choice are the angular momentum eigenstates%
\footnote{
    The system could in general have further quantum numbers, which we suppress here for simplicity.
}
$\ket{ln}$, where $l \in \N_{\geq 0}$ and $n=-l,\ldots,l$.
For fixed quantum numbers $j$, $l$, and $J$, there are $2j+1$ components $T_j^m$ of $\boldsymbol{T}_j$, $2l+1$ basis kets $\ket{ln}$, and $2J+1$ basis bras $\bra{JM}$.
This implies that there are $(2J+1) (2j+1) (2l+1)$ different matrix elements $\langle JM|\, T_j^m \,|ln \rangle \in \C$ for these quantum numbers.
According to the Wigner-Eckart theorem, all of these matrix elements are fully specified by one single number~\citep{Jeevanjee}: 
\begin{thrm_main}[Wigner-Eckart theorem for Spherical Tensor Operators]\label{original_we}
    Let $j,l,J \in \N_{\geq0}$ and let $\boldsymbol{T}_j$ be a spherical tensor operator of rank~$j$.
    Then there is a unique complex number, the \emph{reduced matrix element} $\lambda \in \C$ (often written $\langle J \| \boldsymbol{T}_j \| l \rangle \in \C$), that completely determines any of the ${(2J+1)} {(2j+1)} {(2l+1)}$ matrix elements $\langle JM |\mkern2mu T_j^m \mkern2mu| ln \rangle$ by the relation
    \begin{equation*}
        \langle JM |\mkern2mu T_j^m \mkern2mu| ln \rangle\ =\ \lambda \cdot \langle JM | j m; l n\rangle.
    \end{equation*}
    The coupling coefficients $\langle JM | jm;ln\rangle$, known as \emph{Clebsch-Gordan coefficients}, are given by the projection of the tensor product basis $\ket{jm;ln} := \ket{jm} \otimes \ket{ln}$ on $\ket{JM}$.
    They are purely algebraic and therefore independent of the spherical tensor operator $\boldsymbol{T}_j$.
\end{thrm_main}
This result generalizes to arbitrary representation operators of the form in Eq.~\eqref{eq:repr_operator_constraint} \citep{wigner-eckart}.
The similarities between representation operators and $G$-steerable kernels suggests that a similar statement might hold for the matrix elements of $G$-steerable kernels as well.
As proven below, this is indeed the case:
our generalized Wigner-Eckart theorem separates their independent degrees of freedom from purely algebraic relations between mutually dependent matrix elements.
It does therefore give an explicit parametrization of the space of $G$-steerable kernels.

\toclesslab\section{Building Blocks of Steerable Kernels}{baackround}

This chapter gives a brief introduction to the mathematical concepts that are required to formulate our Wigner-Eckart theorem for $G$-steerable kernels.
The first two of the following paragraphs explain why it is w.l.o.g. possible to restrict attention to steerable kernels on homogeneous spaces~and to irreducible representations.
The following three paragraphs discuss the building blocks of steerable kernels, which are \emph{endomorphisms}, \emph{harmonic basis functions} described by the Peter-Weyl theorem, and \emph{tensor product representations} and their \emph{Clebsch-Gordan decomposition}.
An illustration of the concepts introduced in this chapter is given in Appendix~\ref{sec:so2_example_appendix}.
The reader may jump back and forth between the technical definitions here and the running example in the appendix.

\paragraph{The Restriction to Homogeneous Spaces}

Convolution kernels are usually defined on a Euclidean space $\R^d$, i.e., they are functions $K: \R^{d} \to \Hom_{\K}(V_{\inn}, V_{\out})$.
The $G$-steerability constraint in Eq.~\eqref{eq:the_kernel_constraint} relates kernel values $K(x)$ at $x$ to kernel values $K(gx)$ at all other points $gx$ on the \emph{orbit} ${Gx} := \{gx \,|\, g\in G\}$ of~$x$.
To solve the constraint, it is therefore w.l.o.g. sufficient to consider restrictions of kernels to the individual orbits, from which the full solution on $\R^d$ can be assembled~\citep{3d_cnns}.
By construction, the orbits have the structure of a \emph{homogeneous space}:
\begin{dfn_main}[Homogeneous Space, Transitive Action]
    Let $\,\cdot: G \times X \to X$ be a continuous~action of a compact group $G$ on a topological space $X$.
    Then $X$ is called a \emph{homogeneous space} w.r.t.~$G$ if $\emptyset \neq X$ and if for all $x, y \in X$ there is a $g \in G$ such that $gx = y$.
    The action is then called \emph{transitive}.
\end{dfn_main}
We will in the following w.l.o.g. consider steerable kernels $K: X \to \Hom_{\K}(V_{\inn}, V_{\out})$ on such homogeneous spaces $X$.

\paragraph{Restriction to Irreducible Unitary Representations}
The theorems below apply specifically to unitary representations, that is, representations for which the automorphisms $\rho(g)$ preserve distances~\citep{knapp}.
As asserted by Theorem~\ref{all_linear_unitary}, this is not really a restriction as every finite-dimensional linear representation can be considered as being unitary.
Thus, we assume $\rho: G \to \U{V}$, where $\U{V}$ is the \emph{unitary group}, i.e., the group of distance-preserving linear functions on $V$.
In the case of $\K = \R$ we say \emph{orthogonal} instead of unitary and write~$\O{V}$.

Additionally, prior research has shown that it is sufficient to solve the kernel constraint in Eq.~\eqref{eq:the_kernel_constraint} for \emph{irreducible} (unitary) input- and output representations
instead of arbitrary finite-dimensional representations~\citep{weiler2019general}.
This is possible due to the linearity of the constraint and the fact that any finite-dimensional unitary representation decomposes by Proposition~\ref{perpendicular_direct_sum} into an orthogonal direct sum of irreps.
The solution for general representations can thus be recovered from the solutions for irreps. More details on these considerations can be found in Section \ref{reduction_to_unit_irreps}.


If two unitary irreps are related by an isometric intertwiner, they are \emph{isomorphic}; see Definition~\ref{isomorphic_unitary_reps}.
The set of isomorphism classes of unitary irreps of $G$ is denoted by $\widehat{G}$.
We assume that for each isomorphism class $j \in \widehat{G}$ we have picked a representative irrep $\rho_j: G \to \U{V_{j}}$.
We denote by $d_j$ the dimension of $V_{j}$, so that we have $V_{j} \cong \K^{d_j}$.

Overall, we can w.l.o.g. replace $\R^d$ with $X$ and $\rho_{\inn}$ and $\rho_{\out}$ by $\rho_l: G \to \U{V_l}$ and $\rho_J: G \to \U{V_J}$, where $X$ is a homogeneous space and $\rho_{l}$ and $\rho_{J}$ are (representatives of isomorphism classes of) irreducible unitary representations of $G$.
 This leads to our working definition of steerable kernels, to which we restrict from now on:

\begin{dfn_main}[Steerable Kernel on a Homogeneous Space w.r.t. Unitary Irreps]\label{def:definition_steerable_kernel}
    Let $X$ be a homogeneous space of $G$ and $\rho_l: G \to \U{V_l}$ and $\rho_J: G \to \U{V_J}$ be representatives of isomorphism classes of irreducible unitary representations of $G$.
    A $G$-\emph{steerable kernel} (on a homogeneous space and w.r.t. unitary irreps) is any function $K: X \to \Hom_{\K}(V_l, V_J)$ such that the following $G$-steerability constraint holds:
    \begin{align}\label{eq:irrep_kernel_constraint}
        K(gx)\ =\ \rho_J(g) \circ K(x) \circ \rho_l(g)^{-1} \qquad \forall\ g\in G,\ x\in X.
    \end{align}
    We denote the space of $G$-steerable kernels by $\Hom_{G}(X,\, \Hom_{\K}(V_l, V_J))$, where the subscript $G$ signals their $G$-equivariance.
\end{dfn_main}

\paragraph{Endomorphisms}

An important concept, underlying the reduced matrix elements in the Wigner-Eckart theorem for spherical tensor operators, is that of endomorphisms of linear representations. 



\begin{dfn_main}[Endomorphism of a of Linear Representation]
    Let $\rho: G \to \GL(V)$ be a linear~representation.
    An \emph{endomorphism} of $\rho$ is a linear map $c: V \to V$ that commutes with $\rho$, i.e.,~which satisfies $c \circ \rho(g) = \rho(g) \circ c$ for all $g\in G$.
    The space of all endomorphisms of $\rho$ is written $\End_{G, \K}(V)$.
\end{dfn_main}

Endomorphisms play a central role in our generalized Wigner-Eckart theorem for steerable kernels.
To get an insight why this is the case, consider a given steerable kernel $K: X \to \Hom_{\K}(V_l, V_J)$.
The post-composition $(c \circ K)(x) \coloneqq c \circ (K(x))$ of this kernel with \emph{any} endomorphism $c \in \End_{G, \K}(V_J)$ is obviously still steerable, i.e., satisfies Eq.~\eqref{eq:irrep_kernel_constraint}.
A basis of the space of steerable kernels is therefore partly explained by \emph{bases of the endomorphism spaces}, and thus occurs in our general solution.
In the following, we write $\{c_r \mid r = 1, \dots, E_J\}$ for the basis of $\End_{G, \K}(V_J)$, where $E_J \coloneqq \dim{\End_{G, \K}(V_J)}$ is the dimension of the endomorphism space.

How complicated can the space of endomorphisms be? For $\K = \C$, Schur's Lemma \ref{Schur} tells us that the endomorphism spaces of irreducible representations are always $1$-dimensional, generated by the identity. In that case, one can omit considering endomorphisms in our final description of basis kernels. For $\K = \R$, however, one can show that the endomorphism spaces of irreducible representations have either $1$, $2$, or $4$ dimensions, and such representations are then correspondingly called of real type, complex type, and quaternionic type, see \citet{broecker}, Theorem II.$6.3$.

\todo[inline]{The paragraph above was not in the original ICLR submission and is supposed to make the reviewer happy who was originally confused that we need endomorphisms.}
\todo[inline, backgroundcolor=green]{
I like the sentence but it takes lots of space....
}

\paragraph{The Peter-Weyl Theorem and Harmonic Basis Functions}

A cornerstone in our proof of the Wigner-Eckart theorem for steerable kernels is Theorem~\ref{steerable kernels = representation operators}.
It states that the space of steerable kernels, which are $G$-equivariant maps $K: X \to \Hom_{\K}(V_l, V_J)$, is isomorphic to the space of \emph{linear} $G$-equivariant maps of the form $\widehat{K}: \Ltwo{\K}{X} \to \Hom_{\K}(V_l, V_J)$.
We are therefore interested in the representation theory of $\Ltwo{\K}{X}$, which is described by the \emph{Peter-Weyl theorem}%
.\footnote{
    Usually, the Peter-Weyl theorem uses $G$ itself as the homogeneous space and is formulated for complex representations~\citep{knapp}.
    However, generalizations to arbitrary homogeneous spaces and real representations are possible, as we explain in Appendix~\ref{endinin}
}

\begin{thrm_main}[Peter-Weyl Theorem, Existence of Harmonic Basis Functions]\label{thm:peter_weyl}
    Let $G$ be a compact group and $X$ a homogeneous space.
    Let $\widehat{G}$ be the set of isomorphism classes of irreducible representations.
    For $j \in \widehat{G}$, let $\rho_j: G \to \U{V_j}$ be a representative with dimension $d_j = \dim{V_j}$.
    Then there are multiplicities $m_j \in \N_{\geq 0}$ with $m_j \leq d_j$, and for each $i = 1, \dots, m_j$ there are \emph{harmonic basis functions} $Y_{ji}^{m}: X \to \K, \  m = 1, \dots, d_j$, such that the following three properties hold:
    \begin{enumerate}
        \item The $Y_{ji}^m$, for fixed $j$ and $i$, are \emph{steerable} \citep{steerable1, steerable2}, i.e., transformation via $g \in G$ can be expressed by shifting basis coefficients with $\rho_j$:
        \begin{equation*}
            Y_{ji}^m(g^{-1} x)\ =\ \Big( \sum\nolimits_{m' = 1}^{d_j} \rho_j^{m'm}(g)\, Y_{ji}^{m'} \Big)(x) \,.
        \end{equation*}
        \item Any square-integrable function $f: X \to \K$ can be uniquely expanded in terms of harmonic basis functions, i.e.,
        \begin{equation*}
            f\ =\ \sum\nolimits_{j \in \widehat{G}} \sum\nolimits_{i = 1}^{m_j} \sum\nolimits_{m = 1}^{d_j}\, \lambda_{jim}\, Y_{ji}^m
        \end{equation*}
        with coefficients $\lambda_{jim} \!\in \K$.
        \item The $Y_{ji}^m$ are an orthonormal system with respect to the scalar product given by integration:
        \begin{equation*}
            \int_X \overline{Y_{ji}^m(x)}\; Y_{j'i'}^{m'}(x) \;dx \ =\ \delta_{jj'}\, \delta_{ii'}\, \delta_{mm'} \,.
        \end{equation*}
    \end{enumerate}
\end{thrm_main}
Note the similarity of these properties to those encountered in usual Fourier analysis.
Indeed, the Peter-Weyl theorem can be viewed as describing the \emph{harmonic analysis} on arbitrary compact groups and their homogeneous spaces.

From a representation theoretic viewpoint, the functions $Y_{ji}^m$ for fixed $j$ and $i$ span an irreducible subrepresentation $V_{ji}$ of the unitary representation $\lambda: G \to \U{\Ltwo{\K}{X}}$ given by $\left[\lambda(g)f\right](x) \coloneqq f(g^{-1}x)$.
$\Ltwo{\K}{X}$ then splits into an orthogonal direct sum $\Ltwo{\K}{X} = \widehat{\bigoplus}_{j\in\widehat{G}} \bigoplus_{i=1}^{m_j} V_{ji}$.
This viewpoint is explained in the equivalent, more representation theoretic formulation of the Peter-Weyl theorem in Theorem~\ref{pw}.

\paragraph{Tensor Products and Clebsch-Gordan Coefficients}

The last ingredients that we need to discuss are tensor product representations and Clebsch-Gordan coefficients.
They appear, roughly speaking, in the following  way:
the kernel $K$ can be thought of as being built from harmonic basis functions $Y_{ji}^m$ which transform according to the corresponding irrep~$\rho_j$.
When a harmonic kernel component of type $\rho_j$ acts on an input feature field of type~$\rho_l$, the combination will transform according to their tensor product $\rho_j \otimes \rho_l$.
If the convolution should map to an output field of type~$\rho_J$, not any harmonic component $Y_{ji}^m$ is admissible, but only those for which $\rho_J$ appears as a subrepresentation in the tensor product $\rho_j \otimes \rho_l$.
The Clebsch-Gordan coefficients encode whether $\rho_j \otimes \rho_l$ contains $\rho_J$,
and, if it does, in which way and how often $\rho_J$ is embedded in the tensor product.

\begin{dfn_main}[Tensor product representation]
    Let $\rho: G \to \U{V}$ and $\tilde{\rho}: G \to \U{\tilde{V}}$ be unitary representations.
    Then their tensor product $\rho \otimes \tilde{\rho}: G \to \U{V \otimes \tilde{V}}$ is defined by:
    \begin{equation}
        \big[ (\rho \otimes \tilde{\rho})(g) \big] (v \otimes \tilde{v})
        \ =\ \big[\rho(g) \big](v) \otimes \big[\tilde{\rho}(g) \big](\tilde{v}).
    \end{equation}
\end{dfn_main}

The tensor product $\rho_j \otimes \rho_l$ of two irreps is itself in general \emph{not} irreducible anymore.
However, as it is again a unitary representation, it splits by Proposition~\ref{perpendicular_direct_sum} into a direct sum of irreducible unitary subrepresentations.
Thus, there is an equivariant isomorphism
\begin{equation}\label{eq:cg_isom}
    \CG_{jl}: V_j \otimes V_l \to \bigoplus\nolimits_{J \in \widehat{G}} \bigoplus\nolimits_{s = 1}^{[J(jl)]} V_J.
\end{equation}
The integer $[J(jl)]$ is the multiplicity of $V_J$ in $V_j \otimes V_l$, which is zero for all but finitely many $J$.


For fixed $l$ and $J$, we will be able to find a basis kernel of type $j$ that transforms input features of type $l$ to output features of type $J$ if and only if $[J(jl)] > 0$.

The matrix elements of $\CG_{jl}$ are denoted as Clebsch-Gordan coefficients:

\begin{dfn_main}[Clebsch-Gordan Coefficients]
    Let $Y_j^m \otimes Y_l^n$ be the basis tensors in $V_j \otimes V_l$ and
    let the basis element $Y_{Js}^M$ be the copy of $Y_J^M$ with index $s$ in $\bigoplus\nolimits_{J \in \widehat{G}} \bigoplus\nolimits_{s = 1}^{[J(jl)]} V_J$.
    Then the \emph{Clebsch-Gordan coefficients} are the matrix elements of $\CG_{jl}$ relative to these bases,
    \begin{equation*}
        \langle s, JM |\, jm;ln \rangle
        \ :=\  \big\langle Y_{Js}^M \big| \CG_{jl} \big| Y_j^m \otimes Y_l^n \big\rangle \,,
    \end{equation*}
    i.e., the scalar product of $\CG_{jl}\!\big( Y_j^m \otimes Y_l^n \big)$ and $Y_{Js}^M$.
\end{dfn_main}

For more details on the definitions in this section see Appendix \ref{wigner_eckart_steerable_kernels}.

\toclesslab\section{A Wigner-Eckart Theorem for \emph{G}-steerable Kernels}{reesult}

Now that we have discussed all of the required ingredients, we are ready for stating our main theorem.
Intuitively, our Wigner-Eckart theorem identifies exactly those combinations of harmonics, Clebsch-Gordan coefficients and endomorphisms that, when being assembled together, yield a $G$-steerable kernel $K: X \to \Hom_{\K}(V_l, V_J)$.
The kernel will thereby comprise all those harmonics $Y_{ji}^m$ for which the tensor product $V_j \otimes V_l$ contains $V_J$ as a factor.
The number of possible combinations depends therefore on the number of different isomorphism classes $j\in\widehat{G}$ for which $V_J$ appears as a factor in the tensor product, the multiplicity $[J(jl)]$ with which it occurs,
and the multiplicities~$m_j$ of harmonics $Y_{ji}^m$ in the Peter-Weyl decomposition that transform according to $\rho_j$.
In addition, each individual combination can subsequently be composed with an endomorphism in $\End_{G,\K}(V_J)$, which increases the number of combinations by a factor of $E_J = \dim{\End_{G,\K}(V_J)}$ to a total of 
\begin{equation}\label{eq:number_of_parameters}
    \Lambda_{Jl}\: \coloneqq\: E_J \cdot \sum\nolimits_{j\in\widehat{G}}\ [J(jl)] \cdot m_j.
\end{equation}
This number is finite, as we explain in Remark \ref{parameterization_in_abstract}.

How are such assembled steerable kernels parameterized?
The learnable parameters correspond to the degrees of freedom in the individual components from which the kernel is built.
While the Clebsch-Gordan coefficients and harmonic basis functions are fixed,
the endomorphisms are elements of the $E_J$-dimensional vector spaces $\End_{G,\K}(V_J)$.
The degrees of freedom of a $G$-steerable kernel are therefore identified with the choice of endomorphisms%
.\footnote{
    This statement is made precise by the isomorphism $\Ker$, defined in Eq.~\eqref{eq:Gker_isomorphism} in Theorem~\ref{introduction_otline}.
}
This gives a total of
$\Lambda_{Jl}$
parameters which take values in~$\K$.
Note that the choice of endomorphisms corresponds directly to the choice of reduced matrix elements of spherical tensor operators.

For a kernel $K: X \to \Hom_{\K}(V_l, V_J)$, we write $\left\langle JM \middle| K(x) \middle| ln \right\rangle$ for the matrix elements of $K(x) \in \Hom_{\K}(V_l, V_J)$ with indices $n \leq d_l$ and $M \leq d_J$, see also Definition~\ref{matrix_element}.
Similarly, endomorphisms $c \in \End_{G, \K}(V_J)$ have matrix elements $\left\langle JM | c | JM' \right\rangle$ with $M, M' \leq d_J$.
We furthermore write $\left\langle i,jm \middle| x\right\rangle \coloneqq \overline{Y_{ji}^m(x)}$.
Finally, recall that the space of $G$-steerable kernels is denoted by $\Hom_{G}(X,\, \Hom_{\K}(V_l, V_J))$.

Our main result is the following Wigner-Eckart theorem for $G$-steerable kernels.
Other versions at different levels of abstraction can be found in Theorems \ref{theorem} and \ref{matrix-form}.

\begin{thrm_main}[Wigner-Eckart Theorem for $G$-Steerable Kernels]\label{introduction_otline}
    Let $X$ be a homogeneous space of the compact group $G$, $\rho_l: G \to \U{V_l}$ and $\rho_{J}: G \to \U{V_J}$ irreducible input- and output representations, $(\rho_j)_{j \in \widehat{G}}$ an enumeration of all unitary irreps,
    $\{Y_{ji}^m \mid j \in \widehat{G},\ \ i=1, \dots, m_j, \ \ m = 1, \dots, d_j\}$ the harmonic basis functions in $\Ltwo{\K}{X}$, and $\left\langle s, JM \middle| jm;ln \right\rangle$ the Clebsch-Gordan coefficients.
    There is an isomorphism of vector spaces
    \begin{align}\label{eq:Gker_isomorphism}
        \Ker:\ \bigoplus\nolimits_{j \in \widehat{G}} \bigoplus\nolimits_{i = 1}^{m_j} \bigoplus\nolimits_{s = 1}^{[J(jl)]}\End_{G, \K}(V_J)
        \ \to\ \Hom_{G}(X,\, \Hom_{\K}(V_l, V_J)) \,.
    \end{align}
    A general steerable kernel $K = \Ker((c_{jis})_{jis})$ with $c_{jis} \in \End_{G, \K}(V_J)$ has matrix elements
    \begin{align}\label{eq:main_full_wigner_eckart}
        \underbracket[.6pt]{\vphantom{\Big|} \left\langle JM \mkern1mu\middle|\mkern1mu K(x) \mkern1mu\middle|\mkern1mu ln \right\rangle}_\textup{kernel matrix elements}
        \,=\,
        \sum_{j \in \widehat{G}} \sum_{i = 1}^{m_j} \sum_{s = 1}^{[J(jl)]} \sum_{m = 1}^{d_j} \sum_{M' = 1}^{d_J}\ 
        \underbracket[.6pt]{\vphantom{\Big|} \big\langle JM \big| c_{jis} \big| JM' \big\rangle}_{\textup{endomorphisms}} \cdot
        \underbracket[.6pt]{\vphantom{\Big|} \big\langle s,JM' \big| jm;ln \big\rangle}_{\textup{Clebsch-Gordan}} \cdot 
        \underbracket[.6pt]{\vphantom{\Big|} \big\langle i,jm \big| x \big\rangle}_{\textup{harmonics}} .
    \end{align}
\end{thrm_main}

\begin{proof}
    We shortly sketch a proof of this theorem. We use the notation $\Hom_{G,\K}$ to denote \emph{linear} equivariant maps.
    The space of steerable kernels can be progressively transformed as follows:
\begingroup
\begin{alignat*}{3}
                      &
    \Hom_{G}\!\big(X,\ \Hom_{\K}(V_l, V_J)\big)
    \quad && \big( \text{\small Space of steerable kernels, Def.~\ref{def:definition_steerable_kernel}} \big) \notag\\
\overset{(1)}{\cong}\ &
    \Hom_{G, \K}\!\big(\Ltwo{\K}{X},\, \Hom_{\K}(V_l, V_J)\big)
    \quad && \big( \text{\small Linearization, Theorem~\ref{steerable kernels = representation operators}} \big) \notag\\
\overset{(2)}{\cong}\ &
    \Hom_{G, \K}\!\Big(\widehat{\bigoplus\nolimits}_{\mkern-2mu j \in\widehat{G}}\bigoplus\nolimits_{i = 1}^{m_j} V_{ji}, \Hom_{\K}(V_l, V_J) \!\Big)
    \quad && \big( \text{\small Peter-Weyl, Theorem~\ref{pw}} \big) \notag\\
    \overset{(3)}{\cong}\ &
    \Hom_{G, \K}\!\Big(\bigoplus\nolimits_{\mkern-2mu j \in\widehat{G}}\bigoplus\nolimits_{i = 1}^{m_j} V_{ji}, \Hom_{\K}(V_l, V_J) \!\Big)
    \quad && \big( \text{\small Lemma \ref{ignore_closure}} \big) \notag\\
\overset{(4)}{\cong}\ &
    \bigoplus\nolimits_{\mkern-2mu j\in\widehat{G}} \bigoplus\nolimits_{i=1}^{m_j} \Hom_{G, \K}\! \big(V_j, \Hom_{\K}(V_l, V_J)\big)
    \quad && \big( \text{\small Basic linear algebra} \big) \notag\\
\overset{(5)}{\cong}\ &
    \bigoplus\nolimits_{\mkern-2mu j\in\widehat{G}} \bigoplus\nolimits_{i=1}^{m_j} \Hom_{G, \K}\! \big(V_j \!\otimes\mkern-2mu V_l, V_J\big)
    \quad && \big( \text{\small Hom-tensor adjunction, Prop.~\ref{correspondence}} \big) \notag\\
\overset{(6)}{\cong}\ &
    \bigoplus\nolimits_{\mkern-2mu j\in\widehat{G}} \bigoplus\nolimits_{i=1}^{m_j} \Hom_{G, \K}\mkern-5mu \Big(\mkern-5mu
    \bigoplus\nolimits_{\mkern-3mu J'\mkern-1mu\in\widehat{G}} \bigoplus\nolimits_{s = 1}^{[J'(jl)]} V_{J'}, V_J\Big)
    \quad && \big( \text{\small Clebsch-Gordan, Eq.~\eqref{eq:cg_isom}} \big) \notag\\
\overset{(7)}{\cong}\ &
    \bigoplus\nolimits_{j \in \widehat{G}} \bigoplus\nolimits_{i = 1}^{m_j} \bigoplus\nolimits_{s = 1}^{[J(jl)]} \Hom_{G,\K}(V_{J}, V_J)
    \quad && \big( \text{\small Schur's Lemma~\ref{schur_unitary}} \big) \notag\\
\overset{(8)}{\cong}\ &
    \bigoplus\nolimits_{\mkern-2mu j\in\widehat{G}} \bigoplus\nolimits_{i=1}^{m_j} \bigoplus\nolimits_{s = 1}^{[J(jl)]} \End_{G, \K}(V_J)
    \quad && \big( \text{\small Def. of endomorphisms $\End_{G, \K}(V_J)$} \big) \notag\\
\intertext{
    This concludes the proof of Theorem~\ref{introduction_otline}.
    For completeness, we add the following two steps.
    They explain the further transformation to the \emph{parameter space}, as explained in Corollary \ref{thm:steerable_kernel_parametrization_corollary}:
}
\overset{(9)}{\cong}\ &
    \bigoplus\nolimits_{\mkern-2mu j\in\widehat{G}} \bigoplus\nolimits_{i=1}^{m_j} \bigoplus\nolimits_{s = 1}^{[J(jl)]} \K^{E_J}
    \quad && \big( \text{\small Choice of a basis $(c_r)_{r=1}^{E_J}$} \big) \notag\\
\overset{(10)}{\cong}\ &
    \K^{\Lambda_{Jl}}
    \quad && \big( \text{\small Def. of $\Lambda_{Jl}$, Eq.~\eqref{eq:number_of_parameters}} \big) \notag\\
\end{alignat*}
\endgroup
    In step $(1)$, we \emph{linearize} the kernels such that they become (continuous) representation operators, as detailed in Theorem \ref{steerable kernels = representation operators}.
    Step $(2)$ applies the representation-theoretic version of the Peter-Weyl Theorem \ref{pw} to decompose $\Ltwo{\K}{X}$ in harmonic basis functions.
    In $(3)$, we remove the topological closure, denoted by $\widehat{(\cdot)}$, using Lemma \ref{ignore_closure}.
    Step $(4)$ makes use of the well-known fact that linear maps can be described on each direct summand individually.
    In step $(5)$, we use the hom-tensor adjunction Proposition \ref{correspondence}.
    In step $(6)$, we use the Clebsch-Gordan decomposition Eq.~\eqref{eq:cg_isom}, which provides us with Clebsch-Gordan coefficients.
    In step $(7)$, we use that nontrivial linear equivariant maps from $V_{J'}$ to $V_J$ exist by Schur's Lemma~\ref{schur_unitary} only for $J'=J$ and, once again, that we can describe linear maps on each direct summand individually.
    Finally, in step~$(8)$, we note that $\Hom_{G, \K}(V_J, V_J) = \End_{G, \K}(V_J)$ is the space of endomorphisms. Steps $(9)$ and $(10)$ are explained in Corollary \ref{thm:steerable_kernel_parametrization_corollary}. The formula of the matrix coefficients Eq.~\eqref{eq:main_full_wigner_eckart} is fully proven in Theorem \ref{theorem} by carefully tracing back all the isomorphisms above.

    Technically, step (1) is the main gap that we had to bridge:
    it establishes that non-linear kernels on $X$ can be seen as linear representation operators on $\Ltwo{\K}{X}$.
    Steps $(2)$ to $(7)$ orient at the proof of the Wigner-Eckart theorem for representation operators by~\citet{wigner-eckart}.
    However, it differs non-trivially from the reference by
    a) allowing the operator to be non-injective,
    b) topological considerations, since $\Ltwo{\K}{X}$ is not simply a direct sum of irreps but its \emph{topological closure}, and
    c) the possibility to allow for real representations, which is why we end up with endomorphisms.
\end{proof}

A direct consequence of Theorem~\ref{introduction_otline} is the following corollary, which clarifies how steerable kernels can be parameterized:

\begin{cor}\label{thm:steerable_kernel_parametrization_corollary}
    The space $\Hom_{G}(X,\ \Hom_{\K}(V_l, V_J))$ of steerable kernels is spanned by basis kernels
    $\lbrace K_{jisr}: X \to \Hom_{\K}(V_l, V_J) \ |\ j \in \widehat{G}, \ \ i \leq m_j, \ \  s \leq [J(jl)], \ \ r \leq E_J \rbrace$
    with matrix elements
    \begin{equation}\label{eq:basis_kernels_matrix_elem}
        \left\langle JM \middle| K_{jisr}(x) \middle| ln \right\rangle
        \ =\ 
        \sum\nolimits_{m = 1}^{d_j} \sum\nolimits_{M' = 1}^{d_J} 
        \big\langle JM \big| c_{r} \big| JM' \big\rangle \cdot 
        \big\langle s,JM' \big| jm;ln \big\rangle \cdot 
        \big\langle i,jm \big| x \big\rangle \,,
    \end{equation}
    where $c_r$ is one of the $E_J$ basis endomorphisms of $\End_{G,\K}(V_J)$.
    This means that a general steerable kernel $K: X \to \Hom_{\K}(V_l, V_J)$ is of the form
    \begin{equation*}
        K\ =\ \sum\nolimits_{j \in \widehat{G}} \sum\nolimits_{i = 1}^{m_j} \sum\nolimits_{s = 1}^{[J(jl)]} \sum\nolimits_{r = 1}^{E_J} \lambda_{jisr} \cdot K_{jisr}
    \end{equation*}
    with a total of $\Lambda_{Jl} = E_J \cdot \sum_{j\in\widehat{G}} [J(jl)] \cdot m_j$ \emph{learnable parameters} $\lambda_{jisr} \in \K$.
    Overall, the kernel space can therefore be parameterized with an isomorphism
    \begin{equation*}
        \overline{\Ker}:\ \K^{\Lambda_{Jl}} \,\to\, \Hom_{G}(X, \Hom_{\K}(V_l, V_J)) \,,
    \end{equation*}
    which expands a parameter array into steerable kernels \citep{3d_cnns,weiler2019general}. Thereby, $\overline{\Ker} = \Ker \circ\, \Omega$, where
    \begin{equation*}
        \Omega:\ \K^{\Lambda_{Jl}} \cong \bigoplus\nolimits_{j \in \widehat{G}} \bigoplus\nolimits_{i = 1}^{m_j} \bigoplus\nolimits_{s = 1}^{[J(jl)]} \K^{E_J}
        \ \to\ \bigoplus\nolimits_{j \in \widehat{G}} \bigoplus\nolimits_{i = 1}^{m_j} \bigoplus\nolimits_{s = 1}^{[J(jl)]} \End_{G, \K}(V_J)
    \end{equation*} 
is an isomorphism that chooses the same basis for each copy of $\End_{G,\K}(V_J)$.
\end{cor}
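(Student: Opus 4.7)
The plan is to deduce the corollary directly from Theorem~\ref{introduction_otline} by choosing bases and tracing through the resulting isomorphisms; essentially no new ideas are needed beyond the main theorem, so the proof reduces to careful bookkeeping. First, I would fix a basis $\{c_r\}_{r=1}^{E_J}$ of the finite-dimensional vector space $\End_{G,\K}(V_J)$ and define $\Omega$ componentwise, sending the scalar tuple $(\lambda_{jisr})_{jisr}$ to the endomorphism tuple whose $(j,i,s)$-entry equals $\sum_{r=1}^{E_J} \lambda_{jisr}\, c_r$. Since $\{c_r\}$ is a basis, $\Omega$ restricted to each summand $\K^{E_J}$ is a linear isomorphism onto $\End_{G,\K}(V_J)$, so $\Omega$ is itself an isomorphism between $\K^{\Lambda_{Jl}} \cong \bigoplus_{j,i,s} \K^{E_J}$ and the endomorphism-valued direct sum appearing as the domain of $\Ker$ in Eq.~\eqref{eq:Gker_isomorphism}. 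Composing with the isomorphism $\Ker$ of Theorem~\ref{introduction_otline} then yields an isomorphism $\overline{\Ker} := \Ker \circ \Omega$ of the claimed form.

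Second, I would define the basis kernels $K_{jisr}$ as the images under $\Ker$ of the standard basis elements of the domain: for fixed $(j,i,s,r)$, let $K_{jisr} := \Ker(\tau_{jisr})$, where $\tau_{jisr}$ is the tuple whose $(j,i,s)$-entry equals $c_r$ and whose other entries vanish. Since $\Ker$ is an isomorphism and the $\tau_{jisr}$ form a basis of $\bigoplus_{j,i,s} \End_{G,\K}(V_J)$, the kernels $\{K_{jisr}\}$ form a basis of $\Hom_G(X, \Hom_\K(V_l, V_J))$. To verify Eq.~\eqref{eq:basis_kernels_matrix_elem}, I would substitute this specific choice of tuple into the matrix-element formula Eq.~\eqref{eq:main_full_wigner_eckart}: because only the single summand indexed by $(j,i,s)$ contributes a nonzero endomorphism, the outer sums over $\widehat{G}$, $m_j$ and $[J(jl)]$ collapse to a single term, leaving exactly the right-hand side of Eq.~\eqref{eq:basis_kernels_matrix_elem}.

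Third, for any steerable kernel $K$ there is by Theorem~\ref{introduction_otline} a unique tuple $(c_{jis})_{jis}$ with $K = \Ker((c_{jis})_{jis})$; expanding each $c_{jis} = \sum_r \lambda_{jisr}\, c_r$ in the chosen basis and invoking linearity of $\Ker$ yields $K = \sum_{j,i,s,r} \lambda_{jisr}\, K_{jisr}$ with uniquely determined coefficients $\lambda_{jisr} \in \K$. Counting indices, the total number of parameters is $\sum_{j \in \widehat{G}} \sum_{i=1}^{m_j} \sum_{s=1}^{[J(jl)]} E_J = E_J \sum_{j} m_j \cdot [J(jl)] = \Lambda_{Jl}$, matching Eq.~\eqref{eq:number_of_parameters}. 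There is no genuine obstacle here; the statement is essentially the explicit bookkeeping version of steps $(8)$--$(10)$ in the proof of Theorem~\ref{introduction_otline}, and the only point requiring care is fixing a consistent enumeration of $(j,i,s,r)$ so that the identification $\K^{\Lambda_{Jl}} \cong \bigoplus_{j,i,s} \K^{E_J}$ underlying $\Omega$ is unambiguous.
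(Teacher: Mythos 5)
Your proposal is correct and follows essentially the same route as the paper: define $K_{jisr}$ as the image under $\Ker$ of the tuple with $c_r$ in slot $(j,i,s)$ and zeros elsewhere, note these tuples form a basis so the $K_{jisr}$ do too, and identify $\Omega$ with steps $(9)$--$(10)$ of the main theorem's proof. Your version just spells out the collapse of the sums in Eq.~\eqref{eq:main_full_wigner_eckart} and the parameter count more explicitly than the paper does.
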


\begin{proof}
    We simply choose $K_{jisr} \coloneqq \Ker ((c^{jisr}_{j'i's'})_{j'i's'})$ with $c^{jisr}_{j'i's'} = \delta_{j'j}\cdot \delta_{i'i}\cdot \delta_{s's} \cdot c_r$.
    Clearly, the $\boldsymbol{c}^{jisr}$ are a basis of $\bigoplus\nolimits_{j \in \widehat{G}} \bigoplus\nolimits_{i = 1}^{m_j} \bigoplus\nolimits_{s = 1}^{[J(jl)]}\End_{G, \K}(V_J)$, and since $\Ker$ is an isomorphism, the $K_{jisr}$ form a basis of steerable kernels.
    The isomorphism $\Omega$ corresponds to steps $(9)$ and $(10)$ in the proof of Theorem \ref{introduction_otline}.
\end{proof}

A matrix-expression of the basis kernels from Eq.~\eqref{eq:basis_kernels_matrix_elem} is given in Eq.~\eqref{matrix_version}.

\begin{rem}
We make three remarks about this theorem:
\begin{itemize}
\item
    The endomorphism matrix elements $\left\langle JM \middle| c_{jis} \middle| JM' \right\rangle$ relate to the reduced matrix elements $\lambda = \left\langle J \middle| \boldsymbol{T}_j \middle| l \right\rangle \in \C$ of spherical tensor operators as follows:
    in the case of spherical tensor operators one deals with complex irreps, whose endomorphism spaces are according to Schur's Lemma~\ref{Schur} $1$-dimensional, generated by the identity.
    Consequently, such endomorphisms~$c$ have matrix-elements $\left\langle JM \middle| c \middle| JM' \right\rangle = \lambda\, \delta_{MM'}$ for some scaling factor $\lambda \in \C$, which parameterizes the endomorphism space.
    While not actually being a specific matrix element, $\lambda$ determines all endomorphism matrix elements and is therefore denoted as \emph{reduced matrix element} of the spherical tensor operator.
    The direct analog to $\lambda$ in our generalized Wigner-Eckart theorem are the learnable parameters $\lambda_{jisr} \in \K$, which parameterize $G$-steerable kernels.
    Note that the sum over $M'$ in Eq.~\eqref{eq:main_full_wigner_eckart} disappears in the original Wigner-Eckart Theorem \ref{original_we} since the endomorphisms of complex irreps are scaled identity matrices.

\item
    In Eq.~\eqref{eq:main_full_wigner_eckart} we see terms $\left\langle i,jm \middle| x \right\rangle$ which are not present in the original Wigner-Eckart Theorem \ref{original_we}.
    They appear through a process in which the steerable kernel is \emph{linearized} to make them more similar to spherical tensor operators, as we explain in Theorem~\ref{steerable kernels = representation operators}.
    In that process, the domain $X$ gets replaced by the space of square-integrable functions $\Ltwo{\K}{X}$ with basis $\{Y_{ji}^m\}$, and $\left\langle i,jm \middle| x \right\rangle$ can be interpreted as a coupling coefficient between such a basis function and the original point $x \in X$.
\end{itemize}
\end{rem}

\toclesslab\section{Related Work}{reelated}

Harmonic convolution kernels have a long history in classical image processing, dating back at least to the early '80s \citep{Hsu:82,Rosen:88}.
The term \emph{steerable filter} was coined in \citet{steerable1}.
The authors found a basis of steerable filters by expanding the filters in terms of a Fourier basis, which can be seen as a special case of the Peter-Weyl theorem.
\citet{steerable2} generalized steerable filters to general Lie groups and proposed their explicit construction for Abelian Lie groups.
\citet{Reisert2007} proposed matrix valued steerable kernels between representation spaces, which are very similar to our $G$-steerable kernels.
The fact that harmonic functions and kernels appear frequently throughout physics, signal processing, and related fields, reflects their fundamental nature and great practical relevance.

Steerable CNNs formulate group equivariant CNNs in the language of representation theory and feature fields, which leads automatically to steerable kernels.
This design was proposed by \citet{cohenSteerable}, who specifically considered finite groups, for which the kernel constraint can be solved numerically.
\citet{3d_cnns} introduced the $G$-steerability constraint in the form in Eq.~\eqref{eq:the_kernel_constraint} for $G=\SO3$.
The authors choose a slightly different approach to solve the constraint in which they decompose the space $\Hom_{\R}(V_l,V_J) \cong V_l^* \otimes V_J$ instead of $V_j \otimes V_l$ via Clebsch-Gordan coefficients.
An essentially equivalent design was simultaneously proposed by~\citet{ThomasTensorField}, who decomposed $V_j \otimes V_l$ as in the present work, however, specifically for $G=\SO3$; see Appendix~\ref{the_real_case}.
The case of complex valued irreps of $\SO2$ was investigated by \citet{hnets} and \citet{hnets_gauge}; see Appendix~\ref{harmonic_networks}.
\citet{weiler2019general} solve the constraint for any, not necessarily irreducible, representation of the groups $\O2$, $\SO2$, $\DN$ and $\CN$.
Their solution strategy is based on an expansion of the kernel in the Fourier basis of $\Ltwo{\R}{S^1}$ and solving for the Fourier coefficients satisfying the constraint.
This is a special case of the strategy that we employ in the proof of our Wigner-Eckart theorem.
\citet{haan2020gauge} solve for $\SO{2}$-steerable kernels by viewing them as invariants of the tensor product representation
$\Ltwo{\R}{S^1}^* \otimes V_l^* \otimes V_J$.
As they use real valued irreps, they can use that the duals are isomorphic to their original counterparts.
Note that the solution strategies in all of these papers apply only for specific choices of groups~$G$ and representations.
Our Wigner-Eckart theorem unifies all of them in one general framework.

To which use cases does the proposed kernel space solution apply?
As argued by \citet{general_theory}, \emph{any} $H$-equivariant convolutional network on a homogeneous space $H/G$ needs to satisfy a $G$-steerability constraint --- if $H$ is locally compact and unimodular.
Furthermore, \emph{gauge equivariant convolutions} on Riemannian manifolds rely on $G$-steerable kernels, where $G$ is in this case the structure group of the feature vector bundle over~$M$ \citep{gauge_cnns}.
While these works proved the necessity of steerable kernels, they \emph{did not solve the constraint} -- a gap which is filled by our Wigner-Eckart theorem for compact groups $G$, see also Remark \ref{cohens_setting_general_proof}.
We want to emphasize that steerable convolutions include in particular the popular \emph{group convolutions} on flat spaces \citep{cohenGroupEquivariant} and homogeneous spaces of compact groups \citep{Kondor_general_theory} and Lie groups \citep{bekkers2020bspline}, including for instance the sphere \citep{spherical_cnns}.
Specifically, if $\rho_{\inn}$ and $\rho_{\out}$ are chosen to be regular representations $\Ltwo{\K}{G}$, steerable convolutions are \emph{equivalent} to group convolutions \citep{weiler2019general}.
While being equivalent in theory, an implementation in terms of harmonic basis kernels is more appropriate as it naturally allows for bandlimiting, which reduces aliasing artifacts in a discretized implementation and improves the model performance \citep{Weiler2018Steerable,graham2020dense}.

A related line of work are \emph{Clebsch-Gordan Networks} \citep{KondorClebsch,KondorNBody,KondorCormorant,KondorLorentz}.
As in our work, these models consider features that transform under irreducible representations. However, while our features live at specific points of the base space, their features are \emph{global} features, e.g., Fourier coefficients of functions on a homogeneous base space like the sphere. This results in a network design that implements convolution \emph{implicitly} using a \emph{fully-connected} linear layer.
They apply bilinear equivariant nonlinearities which compute the tensor products of irrep features.
A subsequent Clebsch-Gordan decomposition disentangles the resulting product features back into irrep features.
Note that in this network design, the Clebsch-Gordan coefficients are used in the \emph{nonlinear} part of the network, which differs from our use of these coefficients in the parameterization of steerable basis kernels, i.e. in the \emph{linear} part of the network.

\toclesslab\section{Example Applications}{eexamples}

\citet{general_theory} showed in a fairly general setting that every GCNN is based on $G$-steerable kernels. In practice, a basis for the space of $G$-steerable kernels needs to be determined for parameterizing GCNNs. This work explains the general structure of these basis kernels for compact (point-)symmetry groups $G$ and their homogeneous spaces $X$ in terms of several ingredients. Corollary~\ref{thm:steerable_kernel_parametrization_corollary} explains that one needs to determine
\begin{enumerate}
    \item the~\emph{irreps} $\rho_l$ of $G$, where $l\in \widehat{G}$,
    \item 
    harmonic basis functions $Y_{ji}^m$ in $\Ltwo{\K}{X}$ according to the Peter-Weyl Theorem \ref{thm:peter_weyl},
    \item the \emph{Clebsch-Gordan} decomposition of $V_j \otimes V_l$ for any $j, l \in \widehat{G}$, given by the Clebsch-Gordan coefficients $\left\langle s,JM | jm;ln \right\rangle$, and 
    \item an $E_J$-dimensional basis of \emph{endomorphisms} $c_r \in \End_{G, \K}(V_J)$ for any $J \in \widehat{G}$.
\end{enumerate}
Given these ingredients, they can in a fifth step be put together according to Eq.~\eqref{eq:basis_kernels_matrix_elem} to obtain a complete, $\Lambda_{Jl}$-dimensional basis of $G$-steerable kernels $K_{jisr}: X \to \Hom_{\K}(V_l,V_J)$.

Appendix~\ref{examples} demonstrates this procedure for the examples of $G$ being $\SO2$ with both real and complex irreps, $\SO3$ with both real and complex irreps, $\O3$ with both real and complex irreps, and the real irreps of the reflection group $\Z/2$.
In any of these cases, we derive the kernel bases following \emph{exactly} the five steps outlined above.
This procedure can easily be applied to further compact groups, for instance $\SU2$ or $\SU3$, which play an important role in physics applications of deep learning.

\toclesslab\section{Conclusions and Future Work}{coonclusions}

Prior work revealed that group equivariant convolutions generally rely on $G$-steerable kernels.
No general solution of the linear constraint defining such kernels was known so far.
Our Wigner-Eckart theorem for $G$-steerable kernels 
characterizes the solution space
for the practically relevant case of $G$ being any compact group.
It gives a complete basis of steerable kernels in terms of harmonic basis functions, Clebsch-Gordan coefficients, and endomorphisms -- ingredients which we determine for several exemplary symmetry groups.
The degrees of freedom -- or learnable parameters -- correspond thereby precisely to the choice of endomorphisms.
This mirrors the situation in quantum mechanics, where the degrees of freedom of spherical tensor operators, or more generally representation operators, are given by reduced matrix elements.

It would be desirable to extend this result to non-compact groups, where the Peter-Weyl Theorem does not hold anymore.
One alternative might be Pontryagin duality \citep{reiter1968classical}, which describes the Fourier transform on locally compact \emph{abelian} groups.
This might lead to a better theoretical understanding and generalizations of kernels that are, for example, scale-equivariant \citep{scale_spaces, bekkers2020bspline, ivan_scale}.
Obviously, being abelian is a restriction, and as work on Lorentz group equivariant networks shows \citep{ShuttyIrreps}, an extension of our results to non-compact, non-abelian groups should in principle be possible. This is further motivated by the existence of a Wigner-Eckart Theorem for non-compact Lie groups \citep{wigner_eckart_non_compact}.
One angle might be to consider groups $G$ that are of so-called type I, second-countable, and locally compact.
In that case, the space of square-integrable functions on $G$ has a \emph{direct integral decomposition}.
This is a generalization of the Peter-Weyl theorem that can be found in~\citet{direct_integral_decomposition} and~\citet{direct_integral_even_better}.

Finally, we hope that the analogies between steerable kernels and representation operators appearing in physics inspire further research in this fascinating crossdisciplinary domain. This could lead to applications of GCNNs for learning tasks with physical symmetries.

\newpage

\subsubsection*{Acknowledgments}
We thank Lucas Lang for discussions on the Wigner-Eckart Theorem and observables in physics and Patrick Forr\'e for discussions on the link between steerable kernels and representation operators. Additionally, we are greatful for discussions with Gabriele Cesa on the connection between real and complex representations of compact groups. Furthermore, we thank Stefan Dawydiak and Terrence Tao for online discussions on aspects surrounding a real version of the Peter-Weyl theorem. Finally, we thank Roberto Bondesan, Miranda Cheng, Tom Lieberum, and Rupert McCallum for feedback on different aspects of our work.

\bibliography{literature}
\bibliographystyle{iclr2021_conference}

\ifappendix

\appendix

\newpage

{\LARGE\sc Appendix \par}

This appendix contains a detailed and rigorous treatment of the Wigner-Eckart theorem for steerable kernels, including background knowledge, proofs, and many example applications.

In Chapter \ref{sec:so2_example_appendix}, we shortly look at the simple example $\SO{2}$ for motivating the concepts and results in Section \ref{baackround}.

Everything afterwards, starting with Chapter \ref{rep_theory_compact}, can be read independently of the main paper and is a self-contained treatment of our investigations. In Chapter \ref{rep_theory_compact}, we start with the foundations of the representation theory of compact groups. We formulate the Peter-Weyl Theorem \ref{pw}, which tells us how to decompose the space of square-integrable functions on a homogeneous space into irreducible representations, leading to harmonic basis functions. In the second half, we include a proof of the more algebraic parts of this theorem. We do this since the theorem is usually only proven for complex representations in the literature, but we need it for real representation as well.

In Chapter \ref{proof_of_main} we investigate steerable kernels and show their similarities to representation operators from physics and representation theory. In Theorem \ref{steerable kernels = representation operators} we will then proof a precise isomorphism between steerable kernels and representation operators on the space of square-integrable functions on a homogeneous space. We call these \emph{kernel operators}. 

In Chapter \ref{chapter_wigner_eckart} we will then formulate and prove the Wigner-Eckart theorem for steerable kernels of general compact groups \ref{theorem}. The proof makes in essential parts use of the Peter-Weyl Theorem and Theorem \ref{steerable kernels = representation operators}, and additionally of Schur's Lemma \ref{schur_unitary}, the hom-tensor adjunction Proposition \ref{correspondence}, and the Clebsch-Gordan decomposition of tensor products. 

In Chapter \ref{examples}, we then look at specific example applications of our theory. In these examples, we look at specific compact transformation groups $G$, specific, relevant homogeneous spaces $X$ of the group and one of the fields $\R$ or $\C$. For this combination we derive a basis for the space of steerable kernels between arbitrary irreducible input- and output representations of the group. Specifically, we look at harmonic networks~\citep{hnets}, $\SO{2}$-equivariant networks for real representations~\citep{weiler2019general}, $\Z_2$-equivariant networks for real representations, $\SO{3}$-equivariant networks for both real and complex representations~\citep{3d_cnns, ThomasTensorField}, and $\O{3}$-equivariant networks for both real and complex representations. The investigation of $\Z_2$-equivariant CNNs will additionally show that our result is consistent with group convolutional CNNs for the regular representation~\citep{cohenGroupEquivariant}. 

In Chapter \ref{mathematical_preliminaries_yabadaba}, we summarize some important notions and results from the theory of topological spaces, metric spaces, normed vector spaces, and (pre-)Hilbert spaces that we use throughout this appendix.

Chapters \ref{rep_theory_compact}, \ref{proof_of_main}, and \ref{chapter_wigner_eckart} contain the bulk of the theoretical work. We recommend the reader to first only read the first halves of these chapters, Sections \ref{foundations}, \ref{fundamentals_correspondence} and \ref{wigner_eckart_steerable_kernels}, since they contain the formulation of the most important results and the main intuitions, whereas the second halves of these chapters, i.e., Sections \ref{endinin}, \ref{this_really_is_the_proof} and \ref{proof_wigner_eckart_general}, mainly contain detailed proofs that can be skipped when going over the material for the first time.

\renewcommand*\contentsname{Contents of the Appendix}
\setcounter{tocdepth}{2}
\tableofcontents

\section*{List of Symbols}

\setlength{\LTleft}{0pt}

\subsection*{General Set Theory and Functions}
\begin{longtable}{cp{0.6\textwidth}}
$A \cap B$ & intersection of sets $A$ and $B$ \\
$A \cup B$ & union of sets $A$ and $B$ \\
$\bigcap_{i \in I} A_i$ & intersection of sets $A_i$ \\
$\bigcup_{i \in I} A_i$ & union of sets $A_i$ \\
$\bigsqcup_{i \in I} A_i$ & union of sets $A_i$ which are disjoint from each other \\
$A \subseteq B$ & $A$ is a subset of $B$ \\
$A \subsetneq B$ & $A$ is a strict subset of $B$ \\
$A \setminus B$ & set of all elements in $A$ which are not in $B$ \\
$A \times B$ & Cartesian product of sets or structures (e.g., groups) $A, B$ \\
$\emptyset$ & empty set \\
$X \coloneqq Y$ & $X$ is defined as $Y$ \\ 
$\sim$ & often an equivalence relation \\
$[x]$ & equivalence class with respect to an equivalence relation \\
$\indic{A}$ & indicator function of set $A$ \\
$f \circ g$ & composition of two composable functions $f$ and $g$ \\
$f^{-1}$ & either the inverse of function $f$ or the preimage function \\
$f|_A$ & restriction of a function $f$ to a subset $A$ \\
\end{longtable}

\subsection*{Numbers and Collections of Numbers}

\begin{longtable}{cp{0.6\textwidth}}
$\N$ & natural numbers including $0$ \\
$\Z$ & integers \\
$\R$ & field of real numbers \\
$\C$ & field of complex numbers \\
$\HH$ & skew-field of quaternions \\
$\K$ & one of the two fields $\R$ and $\C$ \\
$\K^n$ & $n$-dimensional canonical vector space over $\K$ \\
$\overline{x}$ & complex conjugate of $x$  \\
\end{longtable}

\subsection*{Groups}

\begin{longtable}{cp{0.6\textwidth}}
$G$ & a compact topological group \\
$1, e$ & neutral element of a group with multiplication as operation \\
$0$ & neutral element of an additive group \\
$G \rtimes H$ & semidirect product of two groups $G$ and $H$ \\
$\CN$ & group of planar rotations of a regular $N$-gon \\
$\DN$ & group of planar rotations and reflections of a regular $N$-gon \\
$\SO{d}$ & special orthogonal group in $d$ real dimensions \\
$\O{d}$ & orthogonal group in $d$ real dimensions \\
$\O{V}$ & orthogonal group of a \emph{real} Hilbert space $V$ \\
$\SU{d}$ & special unitary group in $d$ complex dimensions \\
$\U{d}$ & unitary group in $d$ complex dimensions \\
$\U{V}$ & unitary group of a \emph{complex} Hilbert space $V$ \\
$\E{d}$ & Euclidean motion group in $d$ dimensions \\
\end{longtable}

\subsection*{Basic Representation Theory}

\begin{longtable}{cp{0.6\textwidth}}
$\rho$ & a linear representation of a group \\
$\rho^v$ & The function $G \to V$, $g \mapsto \rho(g)(v)$ \\
$\rho^{uv}$ & matrix coefficient of the unitary representation $\rho$ \\
$\rho^{\inn}, \rho^{\out}$ & representations of the $\inn$-field and $\out$-field, respectively \\
$\rho_{\Hom}$ & $\Hom$-representation on $\Hom_{\K}(V, V')$ of representations $\rho$ and $\rho'$ \\
$\rho \otimes \rho'$ & tensor product representation on $V \otimes V'$ of representations $\rho$ and $\rho'$ \\
$\Ind_G^{H}\rho$ & induced representation on $H$ or a representation $\rho$ on $G$ \\
$\widehat{G}$ & set of isomorphism classes of unitary representations on $G$ \\
$l$ & an isomorphism class of unitary representations \\
$\rho_{l}$ & a representative of isomorphism class $l$ \\
$V_{l}$ & vector space on which $\rho_{l}$ acts \\
$v^i_{l}$ or $Y_l^n$ & fixed chosen orthonormal basis vector of $V_{l}$ \\
\end{longtable}

\subsection*{Vector Spaces and Hilbert Spaces}

\begin{longtable}{cp{0.6\textwidth}}
$\dim{V}$ & dimension of $\K$-vectorspace $V$ \\
$V \perp W$ & $V$ and $W$ are perpendicular \\
$V \cong W$ & $V$ and $W$ are isomorphic with respect to their structures \\
$V \ncong W$ & $V$ and $W$ are not isomorphic with respect to their structures \\
$\left\langle f \middle| g \right\rangle$ & bra-ket notation of a scalar product on a Hilbert space \\
$\left\langle y \middle| f \middle| x\right\rangle$ & equivalent to $\left\langle y \middle| f(x) \right\rangle$ for a function $f$ \\
$\Null{f}$ & null space of $f$ \\
$\im{f}$ & image of $f$ \\
$f^*$ & adjoint of the operator $f$ \\
$\ID_V$ & identity function on $V$ \\
\end{longtable}

\subsection*{(Hilbert) Space Constructions from Other Spaces}

\begin{longtable}{cp{0.6\textwidth}}
$\Hom_{\K}(V, W)$ & space of $\K$-linear functions from $V$ to $W$ \\
$\GL(V)$ & space of invertible $\K$-linear functions from $V$ to itself, sometimes written $\GL(V, \K)$ in the literature \\
$\Hom_{G,\K}(V, W)$ & space of intertwiners from $V$ to $W$ \\
$\Hom_{G}(X, W)$ & space of $G$-equivariant continuous maps from $X$ to $W$, for a homogeneous space $X$ \\
$\End_{G,\K}(V)$ & space of endomorphisms of $V$, i.e., intertwiners from $V$ to $V$ \\
$V \otimes W$ & tensor product of two vector spaces over their common field. Also denotes the tensor product of pre-Hilbert spaces \\
$\bigoplus_{i \in I} V_i$ & (orthogonal) direct sum of all $V_i$ \\
$\widehat{\bigoplus}_{i \in I} V_i$ & topological closure of the (orthogonal) direct sum of all $V_i$\\
$\spann_\K(M)$ & vector subspace of a $\K$-vector space spanned by $M$ \\
$V^{\perp}$ & orthogonal complement of $V$ \\
$E_{\lambda}(\varphi)$ & eigenspace of $\varphi$ for eigenvalue $\lambda$ \\
\end{longtable}

\subsection*{Topological Spaces, Metric Spaces, Normed Spaces}

\begin{longtable}{cp{0.6\textwidth}}
$\mathcal{T}$ & topology \\
$U_x$ & open neighborhood of $x \in X$ \\
$\mathcal{U}_x$ & set of all open neighborhoods of $x \in X$ \\
$\lim_{U \in \mathcal{U}_x}$ & limit over the directed set of open neighborhoods of $x$ \\
$\lim_{k \to \infty} x_k$ & limit of the sequence $(x_k)_k$ \\
$\overline{A}$ & topological closure of $A \subseteq X$  \\
$\|x\|$ & norm of $x$ \\
$|x|$ & absolute value of $x$ \\
$d(x,x')$ & distance of $x, x'$ according to metric $d$ \\
$\B{\epsilon}{x}$ & $\epsilon$-ball around $x$ according to some metric $d$\\
\end{longtable}

\subsection*{Homogeneous Spaces and the Peter-Weyl Theorem}

\begin{longtable}{cp{0.6\textwidth}}
$X$ & a homogeneous space of $G$ \\ 
$x^* \in X$ & arbitrary point \\
$S^n$ & $n$-dimensional sphere in $(n+1)$-dimensional space \\
$\mu$ & a measure on a compact group $G$ or its Homogeneous Space $X$ \\
$\int_{X}$ & integral on a space $X$ with respect to its measure \\
$\Ltwo{\K}{X}, \Ltwo{\K}{G}$ & Hilbert space of square-integrable functions on $X$ and $G$ with values in $\K$ \\
$\lambda$ & unitary representation on $\Ltwo{\K}{X}$ or $\Ltwo{\K}{G}$ \\
$g(x)$ & arbitrary lift of $x$ with respect to projection $\pi: G \to X, \ g \mapsto gx^*$ \\
$\av{f}$ & average of $f: G \to \K$ along cosets \\
$\pi^*$ & lift of functions  $\Ltwo{\K}{X} \to \Ltwo{\K}{G}$\\
$\delta_x$ & Dirac delta function at point $x$ \\
$\delta_U$ & approximated Dirac delta function for nonempty open set $U$ \\
$\rho_{l}^{ij}$ & abbreviation for $\rho_{l}^{v^iv^j}$ for orthonormal basis vectors $v^i, v^j \in V_l$ \\
$\mathcal{E}$ & linear span of all matrix coefficients of irreducible unitary representations \\
$\mathcal{E}_{l}$ & linear span of all matrix coefficients of $\rho_{l}$ \\
$\mathcal{E}_{l}^{j}$ & linear span of all matrix coefficients $\rho_{l}^{ij}$ with varying $i$ but fixed $j$\\
$n_{l}, m_{l}$ & multiplicity of $l$ in orthogonal decomposition of $\Ltwo{\K}{G}$ and $\Ltwo{\K}{X}$, respectively \\
$V_{li}$ & copy of $V_l$ appearing in the Peter-Weyl decomposition of $\Ltwo{\K}{X}$ \\
$p_{li}$ & canonical projection $p_{li}: \Ltwo{\K}{X} \to V_{li}$ and $p_{li}: \bigoplus_{l'i'}V_{l'i'} \to V_{li}$ \\
$\sin_m$, $\cos_m$ & the functions $x \mapsto \sin(mx)$ and $x \mapsto \cos(mx)$ \\
$Y_l^n, {}^r Y_l^n$ & complex- and real-valued version of a spherical harmonic \\
$D_l, {}^r D_l$ & complex- and real-valued version of Wigner D-matrix \\
\end{longtable}

\subsection*{Kernels and Representation Operators}

\begin{longtable}{cp{0.6\textwidth}}
$K$ & kernel $K: X \to \Hom_{\K}(V_{\inn}, V_{\out})$ \\
$K \star f$ & convolution of kernel $K$ with input $f$ \\
$\mathcal{K}$ & kernel operator or (more generally) representation operator $\mathcal{K}: T \to \Hom_{\K}(U, V)$\\
$\widehat{K}$ & kernel operator $\widehat{K}: \Ltwo{\K}{X} \to \Hom_{\K}(V_{\inn}, V_{\out})$ corresponding to a kernel $K$\\
$\mathcal{K}|_X$ & kernel $\mathcal{K}|_X: X \to \Hom_{\K}(V_{\inn}, V_{\out})$ corresponding to a kernel operator $\mathcal{K}$ \\
$\tilde{\mathcal{K}}$ & for a representation operator $\mathcal{K}: T \to \Hom_{\K}(U, V)$, this denotes the corresponding map $\tilde{\mathcal{K}}: T \otimes U \to V$ under the hom-tensor adjunction \\
\end{longtable}

\subsection*{The Wigner-Eckart Theorem}

\begin{longtable}{cp{0.6\textwidth}}
$\rho_l, \rho_J$ & input- and output representations on the spaces $V_l$ and $V_J$ \\
$Y_j^m, Y_l^n, Y_J^M$ & fixed chosen orthonormal basis vectors of the abstract irreducible representations $V_j$, $V_l$, $V_J$ \\
$\left\langle JM | K(x) | ln \right\rangle$ & matrix element of $K(x)$ for a kernel $K$ and $x \in X$ \\
$d_l$ & dimension of $l$'th irrep $V_l$ as $\K$-vector space \\
$m_{j}$ & number of times $V_j$ is in the Peter-Weyl decomposition of $\Ltwo{\K}{X}$ \\
$[J(jl)]$ & number of times $V_J$ is in the direct sum decomposition of $V_j \otimes V_l$ \\
$c, c_{jis}$ & endomorphisms, mostly on $V_J$. $c_{jis}$ are endomorphisms appearing in the Wigner-Eckart theorem for steerable kernels \\
$c_r$ & basis endomorphism of $\rho_J$, indexed with index set $r = 1, \dots, E_J$ \\
$\left\langle JM \middle| c \middle| JM' \right\rangle$ & matrix element at indices $M, M'$ for endomorphism $c$ \\
$l_s, l_{jis}$ & linear equivariant isometric embeddings $l_s: V_J \to V_j \otimes V_l$ and $l_{jis}: V_J \to V_{ji} \otimes V_l$ \\
$p_{jis}$ & projection $p_{jis}: V_{ji} \otimes V_l \to V_J$ corresponding to (i.e.: adjoint to) the embedding $l_{jis}$ \\
$\left\langle s,JM \middle| jm;ln\right\rangle$ & Clebsch-Gordan coefficient corresponding to $l_s$ \\
$\CG_{J(jl)s}$ & $3$-dimensional matrix of Clebsch-Gordan coefficients \\
$Y_{ji}^m$ & harmonic basis function, for example, spherical harmonic. Element of $V_{ji} \subseteq \Ltwo{\K}{X}$ \\
$\left\langle i,jm \middle| x \right\rangle$ & shorthand notation for $\lim_{U \in \mathcal{U}_x} \left\langle Y_{ji}^m \middle| \delta_U \right\rangle$. Equal to $\overline{Y_{ji}^m(x)}$ \\
$\left\langle i,j \middle| x\right\rangle$ & row vector with entries $\left\langle i,jm \middle| x \right\rangle$ \\
$\rep$ & isomorphism between tuples of endomorphisms and kernel operators \\
$\Ker$ & isomorphism between tuples of endomorphisms and steerable kernels \\
$K_{jisr}$ & basis kernel \\
$w_{jisr}$ & learnable parameter \\
\end{longtable}

\newpage

\renewcommand{\listtheoremname}{List of Theorems}
\listoftheorems[ignoreall,show={thrm}]

\renewcommand{\listtheoremname}{List of Definitions}
\listoftheorems[ignoreall,show={dfn}]

\section{Building Blocks of SO(2)-Steerable Kernels -- Running Example for Section \ref{baackround}}\label{sec:so2_example_appendix}

In this short chapter, we briefly explain the components of steerable kernels at the specific example of real valued irreps of the circle group $\SO2$.
While this example is quite simple, it shows some non-trivial properties like $2$-dimensional endomorphism spaces $\End_{\SO2,\R}(V_J)$ for $J>0$ and a Clebsch-Gordan decomposition in which the multiplicity $[J(jl)]$ can differ from $0$ or~$1$.
To give a quick overview:
Example~\ref{ex:the_circle} considers the circle as an orbit and homogeneous space of $\SO2$ while Example~\ref{ex:example_irreps_so2} introduces the real valued irreps.
Their endomorphisms are stated in Example~\ref{basis_endomorphisms_so2}.
As discussed in Example~\ref{ex:peter_weyl_so2}, the Peter-Weyl theorem corresponds here to the usual Fourier series on $S^1$.
The Clebsch-Gordan decomposition of tensor products of the irreps are discussed in Example~\ref{ex:clebsch_gordan_decomps_so2}.
With these ingredients, we are ready to instantiate the kernel spaces as described by our Wigner-Eckart Theorem~\ref{introduction_otline} for steerable kernels, for which we refer, including proofs, to Section~\ref{SO2_real}.

$\SO2$-steerable kernels $K: \R^2 \to \Hom_{\R}(V_{\inn}, V_{\out}) \cong \R^{c_{\out} \times c_{\inn}}$ allow for rotation equivariant convolutions.
For instance, a convolution with an $\SO2$-steerable kernel on $\R^2$ is guaranteed to be $\SE2 = (\R^2,+)\rtimes\SO2$ equivariant while a convolution with an $\SO2$-steerable kernel on $S^2$ will be $\SO3$-equivariant.

\paragraph{Homogeneous Spaces}
$\SO{2}$ acts on the kernel's domain $\R^2$ by rotating it.
The orbits of the action are therefore given by 1) the origin $\{0\}$ and 2) circles of arbitrary radius.
We know that the kernel constraint can be solved on each orbit individually, and so we can restrict to looking at those.
Since $\{0\}$ is rather trivial, we specifically consider the circle $S^1$ as a more interesting homogeneous space.

\begin{exa}\label{ex:the_circle}
    Consider the circle $S^1$ and the rotation group $\SO2$.
    For convenience, we reparameterize both: we view $\SO2$ as the group of angles $\phi \in \R/2\pi \Z \cong [0,2\pi]/_{0 \sim 2\pi}$ and $S^1$ as the space $\R/2\pi\Z$ as well.
    Then the action of $\SO2$ on $S^1$ is given by $\phi \cdot x \coloneqq (\phi + x) \!\mod 2\pi$. 
    It is easy to see that this action is transitive, which makes the circle a homogeneous space of $\SO2$.
\end{exa}

\paragraph{Irreducible Representations}
As it is sufficient to solve the kernel constraint for irreducible orthogonal input- and output representations, we now state a classification of those up to isomorphism.

\begin{exa}\label{ex:example_irreps_so2}
    The irreducible orthogonal representations ${\rho_l: \SO{2} \to \O{V_l}}$ of $\SO{2}$ are labeled by indices (``quantum numbers'') $l\in\N_{\geq 0}$.
    For $l=0$, one has the trivial representation with $V_0 = \R$ and $\rho_{0}(\phi) = \ID_{\R}$.
    For $l \geq 1$, one has $V_l = \R^2$ and 
    \begin{equation*}
        \rho_l(\phi)\ =\ 
        \begin{pmatrix} 
            \cos(l \phi) &          - \sin (l \phi) \\
            \sin(l \phi) & \phantom{-}\cos (l \phi)
        \end{pmatrix} \,,
    \end{equation*}
    i.e., rotation matrices of ``frequency~$l\,$''.
    The isomorphism classes of irreducible orthogonal representations are then given by $\widehat{\SO{2}} \cong \N_{\geq 0}$.
\end{exa}

We are thus in the following considering $\SO2$-steerable kernels of the form
\[
    K: S^1 \to \Hom_{\R}(V_l, V_J) \,,
\]
where $l,J \geq 0$.

\paragraph{Endomorphisms}
Remember that if $c: V_J \to V_J$ is an endomorphism, i.e., commutes with $\rho_J$, that $c \circ K$ is then steerable as well.
Thus, we now look at a classification of the endomorphisms of the irreducible orthogonal representations:
\begin{exa}\label{basis_endomorphisms_so2}
    Let $G = \SO{2}$ with the irreducible representations $\rho_J$ as in Example \ref{ex:example_irreps_so2}.
    Clearly, the endomorphism space $\End_{\SO2, \R}(V_0)$ is $1$-dimensional, i.e., $E_0 = 1$.
    For all $J \geq 1$, the endomorphism space is two-dimensional ($E_J = 2$) and given by combinations of scalings and rotations%
    \footnote{
        Another way to imagine this is to identify $V_J$ with the complex plane $\C$.
        Then an endomorphism is given by multiplication with an arbitrary complex number.
    }
    on $V_J = \R^2$.
    A basis of this space is given by the following two matrices:
    \begin{equation*}
        c_1 = \ID_{\R^{2}} = 
        \begin{pmatrix}
            1 & 0 \\
            0 & 1
        \end{pmatrix}, \ \ \ 
        c_2 = 
        \begin{pmatrix}
            0 & - 1 \\
            1 & \ \ \ 0 
        \end{pmatrix}, \ \ \ 
        \spann_{\R}\{c_1, c_2\} = \End_{\SO2, \K}(V_J).
    \end{equation*}
    That $c_1$ is an endomorphism of $\rho_J$ for $J\geq1$ is immediately clear. That the same holds for $c_2$ is checked by the following simple calculation:
    \begin{align*}
    \setlength\arraycolsep{3pt}
        c_2\, \rho_J(\phi)
        \ & =\ 
        \begin{pmatrix}
            0 & - 1 \\
            1 & \ \ \ 0 
        \end{pmatrix}
        \begin{pmatrix} 
            \cos(J\phi) &          - \sin (J\phi) \\
            \sin(J\phi) & \phantom{-}\cos (J\phi)
        \end{pmatrix}
        \ =\ 
        \begin{pmatrix} 
                     - \sin(J\phi) & -\cos(J\phi) \\
            \phantom{-}\cos(J\phi) & -\sin(J\phi)
        \end{pmatrix}
        \\
        \ & =\ 
        \begin{pmatrix} 
            \cos(J\phi) &          - \sin (J\phi) \\
            \sin(J\phi) & \phantom{-}\cos (J\phi)
        \end{pmatrix}
        \begin{pmatrix}
            0 & - 1 \\
            1 & \ \ \ 0 
        \end{pmatrix}
        \ =\ 
        \rho_J(\phi)\, c_2
    \end{align*}
The proof that there are no other endomorphisms is sketched in Proposition \ref{endomorphisms}.
\end{exa}

\paragraph{Peter-Weyl and Harmonic Basis Functions}
Another ingredient that we need to construct $\SO2$-steerable kernels on $S^1$ is the decomposition of $\Ltwo{\R}{S^1}$ into its irreducible subrepresentations $V_{ji}$, which the Peter-Weyl theorems guarantees to exist.
Less abstractly, we are interested in an orthonormal set of harmonic (steerable) basis functions on $S^1$ that span $\Ltwo{\R}{S^1}$ -- which corresponds to the usual Fourier series on $S^1$.

\begin{exa}\label{ex:peter_weyl_so2}
    As in Example \ref{ex:the_circle}, we assume $G = \SO{2}$ and $X=S^1$.
    A standard result in harmonic analysis says that square-integrable functions $f: S^1 \to \R$, i.e., $f \in \Ltwo{\R}{S^1}$, can be uniquely written as an infinite sum of sine and cosine terms,
    \begin{align}\label{eq:fourier_series_S1}
        f(x) = a_0 + \sum\nolimits_{j = 1}^{\infty} a_j \cos(jx) + b_j \sin(jx) \,,
    \end{align}
    where $a_0$ and $a_j,b_j,\ j\geq1$ are real-valued expansion coefficients.

    How does this result relate to the harmonic basis functions in the Peter-Weyl theorem~\ref{thm:peter_weyl}?
    As~stated above, we have isomorphism classes $\widehat{G} = \widehat{\SO2} \cong \N_{\geq0}$ of irreps with representatives $\rho_j$.
    A~comparison of the Fourier series in Eq.~\eqref{eq:fourier_series_S1} with property~2 in the Peter-Weyl theorem~\ref{thm:peter_weyl} suggests the following identification of harmonic basis functions $Y_j^m$ and coefficients $\lambda_{jm}$,
    \begin{alignat*}{5}
        Y_0^1 &= \cos_0 = 1\, &,& \qquad & \lambda_{01} =&\, a_0 \qquad&\textup{for}\qquad & j=0 \\
        Y_j^1 &= \cos_j       &,&        & \lambda_{j1} =&\, a_j \qquad&\textup{for}\qquad & j\geq1 \\
        Y_j^2 &= \sin_j       &,&        & \lambda_{j2} =&\, b_j \qquad&\textup{for}\qquad & j\geq1\ ,
    \end{alignat*}
    where we introduced the shorthand notations $\cos_j(x) := \cos(jx)$ and $\sin_j(x) := \sin(jx)$.
    Note that we dropped the index $i = 1,\dots,m_j$ since $m_j=1$ for any $j\in\widehat{\SO2}$.
    As expected, we have indices $m=1$ for $j=0$ with $d_j = \dim{V_0}=1$ and indices $m=1,2$ for $j\geq1$ with $d_j = \dim{V_j}=2$.
    The orthogonality relations in property~3 of the Peter-Weyl theorem hold up to a simple normalization of these basis functions and are easily checked by explicitly computing the scalar products.
    Property~1, i.e., the $\SO2$-steerability of the harmonic bases, is trivial for $j=0$.
    For $j\geq1$, the standard angle summation formulas for cosines and sines lead to the following expressions for harmonics that are translated by $\phi\in\SO2$:
    \begin{alignat*}{3}
        \cos_j(x - \phi)\ &=\ &&\cos_j(x) \cos_j(-\phi) - \sin_j(x) \sin_j(-\phi)\ &=&\ \left( \rho_{j}^{11}(\phi) \cos_j + \rho_j^{21}(\phi) \sin_j \right)(x) \\
        \sin_j(x - \phi)\ &=\ &&\cos_j(x) \sin_j(-\phi) + \sin_j(x) \cos_j(-\phi)\ &=&\ \left( \rho_{j}^{12}(\phi) \cos_j + \rho_j^{22}(\phi) \sin_j \right)(x) \,,
    \end{alignat*}
    which is just property~1 in the Peter-Weyl theorem.
    This is concisely summarized by
    \begin{align*}
        \begin{pmatrix} \cos_j \\ \sin_j \end{pmatrix} (\phi^{-1}\cdot x)
        \ =\ 
        \begin{pmatrix} \cos_j \\ \sin_j \end{pmatrix} (x-\phi)
        \ =\ 
        \rho_j(\phi)^\top \,
        \begin{pmatrix} \cos_j \\ \sin_j \end{pmatrix} (x) \,,
    \end{align*}
    which shows that the basis functions $Y_j^1 = \cos_j$ and $Y_j^2 = \sin_j$ span an invariant subspace $V_j$ of $\Ltwo{\R}{S^1}$ under rotations.
    From a more abstract viewpoint, the Peter-Weyl theorem just states that $\Ltwo{\R}{S^1}$ splits into the orthogonal direct sum
    $\bigoplus_{j\in\widehat{\SO2}} V_j$.
\end{exa}

\paragraph{Tensor Products and Clebsch-Gordan Coefficients}
Finally, we need to investigate the tensor products of irreducible representations and their decomposition via Clebsch-Gordan coefficients.
They will be used to correctly assemble harmonic basis functions to steerable kernels.

\begin{exa}\label{ex:clebsch_gordan_decomps_so2}
    Remember the irreducible representations $\rho_l: \SO{2} \to \O{V_l}$ given in Example \ref{ex:example_irreps_so2}.
    As we prove in Proposition \ref{decomposition_results}, including a description of the Clebsch-Gordan coefficients, the tensor products decompose as follows:
    \begin{equation*}
        V_0 \otimes V_0 \cong V_0, \quad \ V_j \otimes V_0 \cong V_j, \quad V_0 \otimes V_l \cong V_l, \quad V_j \otimes V_l \cong V_{|j-l|} \oplus V_{j+l},
    \end{equation*}
    where the last isomorphism only holds if $j, l \geq 1$ and $j \neq l$. If $j, l \geq 1$ and $j = l$, then we obtain
    \begin{equation*}
        V_l \otimes V_l \cong (V_0)^2 \oplus V_{2l},
    \end{equation*}
    i.e., $V_0$ here appears \emph{twice} in the decomposition of a tensor product of irreducible representations.
    We therefore have multiplicities $[J(jl)]$ which are $1$ for $[0(00)]$, \ $[j(j0)]$, \ $[l(0l)]$, \ $[|j-l|(jl)]$, \ $[j+l(jl)]$ and $[2l(ll)]$ while $[0(ll)] = 2$.
    Any other multiplicity is zero.
\end{exa}

\paragraph{Wigner-Eckart theorem for SO(2)-steerable kernels}
With these ingredients one can then determine all $\SO{2}$-steerable kernels. This is explained in Proposition \ref{example_of_matrix_form}.

\section{Representation Theory of Compact Groups}\label{rep_theory_compact}

In this chapter, we outline the main ingredients of the representation theory of compact groups that we need for our applications to steerable CNNs. Usually, this theory is only developed for representations over the complex numbers. However, since we want to apply it also to steerable CNNs using real representations, we need to be a bit more careful. In particular, we need to make sure that the Peter-Weyl theorem is correctly stated and proven. 

The outline is as follows: In Section~\ref{foundations}, we start by stating all the important definitions and concepts from group theory and representation theory of (unitary) representations that are needed for formulating the Peter-Weyl theorem. After defining Haar measures both for compact groups and their homogeneous spaces and shortly discussing their square-integrable functions, we formulate the Peter-Weyl Theorem \ref{pw}. In Section~\ref{endinin}, then, we give a proof of this version of the Peter-Weyl theorem, carefully making sure to not use properties that are only true over $\C$. In some essential steps, mainly the density of the matrix coefficients in the regular representation, we refer to the literature, since the proof clearly does not make use of $\C$ per se. While we initially only give the proof for the regular representation, i.e., the space of square-integrable functions on the group itself, we end this section with a discussion of general unitary representations and, in particular, the space of square-integrable functions for an arbitrary homogeneous space.

In the whole chapter, let $\K$ be the field of real or complex numbers.

\subsection{Foundations of Representation Theory and the Peter-Weyl Theorem}\label{foundations}

\subsubsection{Preliminaries of Topological Groups and their Actions}

In this section, we define preliminary concepts from topological groups and their actions. This material can, for example, be found in detail in~\citet{topgroups}. For the topological concepts that we use, we refer to Chapter \ref{topological_concepts}.

\begin{dfn}[Group, Abelian Group]\label{def_group}
A \emph{group} $G = (G, \cdot, (\cdot)^{-1}, e)$, most often simply written $G$, consists of the following data:
\begin{enumerate}
\item A set $G$ of group elements $g \in G$.
\item A multiplication $\cdot: G \times G \to G$, $(g, h) \mapsto g \cdot h$.
\item An inversion $(\cdot)^{-1}: G \to G$, $g \mapsto g^{-1}$.
\item A distinguished unit element $e \in G$. It is also called neutral element.
\end{enumerate}
They are assumed to have the following properties for all $g, h, k \in G$:
\begin{enumerate}
\item The multiplication is associative: $g \cdot (h \cdot k) = (g \cdot h) \cdot k$.
\item The unit element is neutral with respect to multiplication: $e \cdot g = g = g \cdot e$.
\item The inversion of an element multiplied with itself is the neutral element: $g \cdot g^{-1} = g^{-1} \cdot g = e$.
\end{enumerate}
A group is called \emph{abelian} if, additionally, the multiplication is commutative: $g \cdot h = h \cdot g$ for all $g, h \in G$. If this is the case, a group is often written as $G = (G, +, -(\cdot), 0)$.
\end{dfn}

If we consider several groups at once, say $G$ and $H$, then we often do not distinguish their multiplication, inversion, and neutral elements in notation. It will be clear from the context which group the operation belongs to.

\begin{dfn}[Subgroup]
Let $G$ be a group and $H \subseteq G$ a subset. $H$ is called a \emph{subgroup} if:
\begin{enumerate}
\item For all $h, h' \in H$ we have $h \cdot h' \in H$.
\item For all $h \in H$ we have $h^{-1} \in H$.
\item The neutral element $e \in G$ is in $H$.
\end{enumerate}
Consequently, $H$ is also a group with the restrictions of the multiplication and inversion of $G$ to $H$.
\end{dfn}

\begin{dfn}[Group Homomorphism]
Let $G$ and $H$ be groups. A function $f: G \to H$ is called a \emph{group homomorphism} if it respects the multiplication, inversion, and neutral element, i.e., for all $g, h \in G$:
\begin{enumerate}
\item $f(g \cdot h) = f(g) \cdot f(h)$.
\item $f(g^{-1}) = f(g)^{-1}$.
\item $f(e) = e$.
\end{enumerate}
The second and third properties automatically follow from the first and so do not need to be verified in order to prove that a certain function is a group homomorphism.
\end{dfn}

\begin{dfn}[Topological Group, Compact Group]\label{compact_group}
Let $G$ be a group and $\mathcal{T}$ be a topology of the underlying set of $G$. Then $G = (G, \mathcal{T})$ is called a \emph{topological group}~\citep{topgroups} if both multiplication $G \times G \to G$, $(x, y) \mapsto x \cdot y$ and inversion $G \to G$, $x \mapsto x^{-1}$ are continuous maps. Additionally, we always assume the topology to be Hausdorff.

A topological group is called \emph{compact} if the underlying topological space is compact.
\end{dfn}

From now on, all groups considered are compact topological groups. Furthermore, whenever $G$ is a finite group, we assume that it is a topological group with the \emph{discrete topology}, i.e., the topology with respect to which all subsets of $G$ are open.

We will need the following definition in order to define homogeneous spaces:

\begin{dfn}[Group Action]\label{group_action}
Let $G$ be a compact group and $X$ a topological space. Then a \emph{group action} of $G$ on $X$ is a continuous function $\cdot: G \times X \to X$ with the following properties:
\begin{enumerate}
\item $(g \cdot h) \cdot x = g \cdot (h \cdot x)$ for all $g, h, \in G$ and $x \in X$.
\item $e \cdot x = x$ for all $x \in X$.
\end{enumerate}
We will often simply write $gx$ instead of $g \cdot x$. Also, note that the multiplication within $G$ is denoted by the same symbol as the group action on the space $X$.
\end{dfn}

\begin{dfn}[Orbit]\label{orbit}
Let $\cdot: G \times X \to X$ be a group action. Let $x \in X$. Then it's \emph{orbit}, denoted $G \cdot x$, is given by the set
\begin{equation*}
G \cdot x \coloneqq \{g \cdot x \mid g \in G\} \subseteq X.
\end{equation*}
\end{dfn}

\begin{dfn}[Transitive Action, Homogeneous Space]\label{transitive_action}
Let $\cdot: G \times X \to X$ be a group action. This action is called \emph{transitive} if for all $x, y \in X$ there exists $g \in G$ such that $gx = y$. Equivalently, each orbit is equal to $X$, that is: For all $x \in X$ we have $G \cdot x = X$.

$X$ is called a \emph{homogeneous space} (with respect to the action) if the action is transitive, $X$ is Hausdorff and $X \neq \emptyset$. 
\end{dfn}

The Hausdorff condition and non-emptiness in the definition of homogeneous spaces is needed for Lemma \ref{homeomorphism_to_group_quotient}, which is necessary to even define a normalized Haar measure on a homogeneous space. Some texts in the literature may define homogeneous spaces without these conditions.

\begin{dfn}[Stabilizer Subgroup]\label{stabilizer_subgroup}
Let $\cdot: G \times X \to X$ be a group action. Let $x \in X$. The \emph{stabilizer subgroup} $G_x$ is the subgroup of $G$ given by
\begin{equation*}
G_x \coloneqq \{g \in G \mid gx = x\} \subseteq G.
\end{equation*}
\end{dfn}

\begin{exa}\label{homogeneous_space}
The multiplication of the group $G$ is a group action of $G$ on itself. $G$ is a homogeneous space with this action. Furthermore, for each $g \in G$ the stabilizers $G_g$ are the trivial subgroup ${e}$.

In general, homogeneous spaces with the property that all stabilizers are trivial are called \emph{torsors} or \emph{principal homogeneous spaces}. Principal homogeneous spaces are topologically indistinguishable from the group itself.
\end{exa}

\subsubsection{Linear and Unitary Representations}
In this section, we define many of the foundational concepts about linear and unitary representations~\citep{knapp,reptheory_introduction}. 

Whenever we will consider linear or unitary representations of compact groups, we want those representations to be \emph{continuous}. This requires that the vector spaces on which our groups act carry themselves a topology. Prototypical examples of such vector spaces are (pre-)Hilbert spaces. They are the main examples of vector spaces considered in this work. Foundational concepts about (pre-)Hilbert spaces can be found in Chapter \ref{hilbert_spaces}. The most important difference between how we view pre-Hilbert spaces and how it can often be found in the literature is that in this work, scalar products are antilinear in the first component and linear in the second. This is the convention usually chosen in physics.

For a vector space $V$ over $\K$ let $\GL(V)$ be the group of invertible linear functions from $V$ to $V$. Sometimes in the literature, this is also written $\GL(V, \K)$. The multiplication is given by function composition and the neutral element by the identity function $\ID_V$ on $V$.

\begin{dfn}[Linear Representation]\label{Linear Representation}
Let $G$ be a compact group and $V$ be a $\K$-vector space carrying a topology, for example, a (pre)-Hilbert space. Then a \emph{linear representation} of $G$ on $V$ is a group homomorphism $\rho: G \to \GL(V)$ which is continuous in the following sense: for all $v \in V$, the function 
\begin{equation*}
\rho^v: G \to V, \ \ g \mapsto \rho^v(g) \coloneqq \rho(g)(v)
\end{equation*}
is continuous. From the definition we obtain $\rho(e) = \ID_V$, $\rho(g \cdot h) = \rho(g) \circ \rho(h)$ and $\rho(g^{-1}) = \rho(g)^{-1}$ for all $g, h \in G$. For simplicity, we also just say \emph{representation} or $G$-representation instead of \emph{linear representation}. Instead of denoting the representation by $\rho$, we often denote it by $V$ if the function $\rho$ is clear from the context.
\end{dfn}

Note that in this definition, $V$ can be any abstract topological $\K$-vector space with a topology and does not need to be a space $\K^n$ or something similar. Consequently, we usually do not view the functions $\rho(g)$ as matrices, but as abstract linear automorphisms from $V$ to $V$.

\begin{dfn}[Intertwiner]
Let $\rho: G \to \GL(V)$ and $\rho': G \to \GL(V')$ be two representations over the same group $G$. An \emph{intertwiner} between them is a linear function $f: V \to V'$ that is additionally equivariant with respect to $\rho$ and $\rho'$ and continuous. Equivariance means that for all $g \in G$ one has $f \circ \rho(g) = \rho'(g) \circ f$, which means the following diagram commutes:
\begin{equation*}
\begin{tikzcd}
V \ar[r, "f"] \ar[d, "\rho(g)"'] & V' \ar[d, "\rho'(g)"] \\
V \ar[r, "f"'] & V'
\end{tikzcd}
\end{equation*}
\end{dfn}

\begin{dfn}[Equivalent Representations]
Let $\rho: G \to \GL(V)$ and $\rho': G \to \GL(V')$ be two representations. They are called \emph{equivalent} if there is an intertwiner $f: V \to V'$ that has an inverse. That is, there exists an intertwiner $\tilde{f}: V' \to V$ such that $\tilde{f} \circ f = \ID_V$ and $f \circ \tilde{f} = \ID_{V'}$.
\end{dfn}

In categorical terms, equivalent representations are isomorphic in the category of linear representations. The reason we do not call them isomorphic is that there is a stronger notion of isomorphism between representations which we will later use, namely isomorphisms of \emph{unitary} representations.

\begin{dfn}[Invariant Subspace, Subrepresentation, Closed Subrepresentation]
Let $\rho: G \to \GL(V)$ be a representation. An \emph{invariant subspace} $W \subseteq V$ is a linear subspace of $V$ such that $\rho(g)(w) \in W$ for all $g \in G$ and $w \in W$. Consequently, the restriction $\rho|_W: G \to \GL(W)$, $g \mapsto \rho(g)|_W: W \to W$ is a representation as well, called \emph{subrepresentation} of $\rho$.

A subrepresentation is called \emph{closed} if $W$ is closed in the topology of $V$.
\end{dfn}

\begin{dfn}[Irreducible Representation]
A representation $\rho: G \to \GL(V)$ is called \emph{irreducible} if $V \neq 0$ and if the only closed subrepresentations of $V$ are $0$ and~$V$ itself. An irreducible representation is also shortly called \emph{irrep}.
\end{dfn}

\begin{dfn}[Unitary Group]
Let $V$ be a pre-Hilbert space. The \emph{unitary group} $\U{V}$ of $V$ is defined as the group of all linear invertible maps $f: V \to V$ that respect the inner product, i.e., $\left\langle f(x) \middle| f(y) \right\rangle = \left\langle x \middle| y \right\rangle$ for all $x, y \in V$. It is a group with respect to the usual composition and inversion of invertible linear maps.
\end{dfn}

Note that if the field $\K$ is the real numbers, then what we call ``unitary'' is actually called \emph{orthogonal}, and the group would be denoted $\O{V}$. However, the mathematical properties are essentially the same, and since the term ``unitary'' is more widely used (as normally, representations over the complex numbers are considered) we stick with ``unitary''.

More generally, we have the following:

\begin{dfn}[Unitary Transformation]
Let $V, V'$ be two pre-Hilbert spaces. A \emph{unitary transformation} $f: V \to V'$ is a bijective linear function such that $\left\langle f(x) \middle| f(y) \right\rangle = \left\langle x \middle| y \right\rangle$ for all $x, y \in V$. These can be regarded as isomorphisms between pre-Hilbert spaces.
\end{dfn}

Note that unitary transformations are in particular isometries, i.e., they keep the distances of vectors with respect to the metric defined by the scalar product. For the definition of this metric, see the discussion before and after Definition \ref{metric}.

\begin{dfn}[Unitary Representation]
Let $V$ be a pre-Hilbert space and $G$ a group. Then a representation $\rho: G \to \GL(V)$ is called a \emph{unitary representation} if $\rho(g) \in \U{V}$ for all $g \in G$. We then write $\rho: G \to \U{V}$.
\end{dfn}

In this whole chapter, the space $V$ of a unitary representation is supposed to be a \emph{Hilbert space}, instead of just a pre-Hilbert space. Only in chapter \ref{chapter_wigner_eckart} will we consider unitary representations on pre-Hilbert spaces. Note that all finite-dimensional pre-Hilbert spaces are already complete by Proposition \ref{completeness_finite_dim}, so in these cases, there is no difference. The same proposition also shows that for finite-dimensional unitary representations, we can ignore the topological closedness condition in order to check whether it is irreducible. It will later turn out that all irreducible representations of a compact group are automatically finite-dimensional anyway, see Proposition \ref{irreps_finite_dim}, so this further simplifies our considerations.

As before with the unitary group, a unitary representation is actually called ``orthogonal representation'' when the field is the real numbers $\R$. $\U{V}$ is then replaced by $\O{V}$. We again stick with $\U{V}$ whenever the field $\K$ is not specified.

\begin{dfn}[Isomorphism of Unitary Representations]\label{isomorphic_unitary_reps}
Let $\rho: G \to \U{V}$, $\rho': G \to \U{V'}$ be unitary representations and $f: V \to V'$ an intertwiner. $f$ is called an \emph{isomorphism} (of unitary representations) if, additionally, $f$ is a unitary transformation.  The representations are then called \emph{isomorphic}. For this, we write $\rho \cong \rho'$ or $V \cong V'$ depending on whether we want to emphasize the representations or the underlying vector spaces.
\end{dfn}

We note the following, which we will frequently use: due to the unitarity of $\rho(g)$ for a unitary representation $\rho$, we have $\rho(g)^* = \rho(g)^{-1}$, i.e., the adjoint is the inverse. Adjoints are defined in Definition \ref{adjoint} and this statement is proven more generally in Proposition \ref{adjoints_of_unitary}. Overall, this means that $\left\langle \rho(g)(v) \middle| w \right\rangle = \left\langle v \middle| \rho(g)^{-1}(w) \right\rangle$ for all $v, w$ and $g$.

In the end, it will turn out that the Peter-Weyl theorem which we aim at is exclusively a statement about unitary representations. One may then wonder whether this is too restrictive. After all, the representations that we consider for steerable CNNs (with precise definitions given in Section~\ref{fundamentals_correspondence}) are not necessarily unitary, and so it is not immediately obvious how the Peter-Weyl theorem will be able to help for those. However, as it turns out, all linear representations on finite-dimensional spaces \emph{can} be considered as unitary, and so the theory applies. We will discuss this in Proposition \ref{all_linear_unitary} once we understand Haar measures on compact groups.

\subsubsection{The Haar Measure, the Regular Representation and the Peter-Weyl Theorem}\label{square_integrable_functions}

Now that we have introduced many notions in the representation theory of compact groups, we can formulate the most important result, the \emph{Peter-Weyl theorem} that we will use throughout this work. In the next section, we will then go through a step-by-step proof of this theorem. The material in this section is based on~\citet{haar_integral, reptheory_introduction} and~\citet{knapp}. We thank Stefan Dawydiak for a discussion about the Peter-Weyl theorem over the real numbers~\citep{stefan_dawydiak}.

We assume that the reader knows what a measure is~\citep{taoMeasure}. Let $G$ be a compact group. A standard result is that there exists a measure $\mu$ on $G$, called a \emph{Haar measure} that, among other properties, fulfills the following:
\begin{enumerate}
\item $\mu(S)$ can be evaluated for all Borel sets $S \subseteq G$. Here, the Borel sets form the smallest so-called $\sigma$-algebra that contains all the open sets. 
\item In particular, we can evaluate $\mu(S)$ for all open or closed sets $S \subseteq G$.
\item The Haar measure is normalized: $\mu(G) = 1$.
\item $\mu$ is left and right invariant: $\mu(gS) = \mu(S) = \mu(Sg)$ for all $g \in G$ and $S$ measurable.
\item $\mu$ is inversion invariant: $\mu(S^{-1}) = \mu(S)$ for all $S$ measurable.
\end{enumerate}

These properties then translate into properties of the associated \emph{Haar integral}: let $f: G \to \K$ be integrable with respect to $\mu$, then we obtain:
\begin{enumerate}
\item $\int_G 1 dg = 1$ for the constant function with value $1$.
\item $\int_G f(hg)dg = \int_G f(g) dg = \int_G f(gh) dg$ for all $h \in G$.
\item $\int_G f(g^{-1}) dg = \int_G f(g) dg$.
\end{enumerate}

\begin{exa}[Finite Groups]
If $G$ is a finite group with $n$ elements, then the Haar measure is just the normalized counting measure which assigns $\mu(g) = \frac{1}{n}$ for all $g \in G$. Each function $f: G \to \K$ is then integrable, and its integral is just given by
\begin{equation*}
\int_{G} f(g) dg = \frac{1}{n} \sum_{g \in G} f(g).
\end{equation*}
In this special case, one can easily verify all properties of Haar measures and Haar integrals stated above.
\end{exa}

With this measure defined, we can already understand why all linear representations on finite-dimensional spaces can be considered as unitary:

\begin{pro}\label{all_linear_unitary}
Let $\rho: G \to \GL(V)$ be a linear representation on a finite-dimensional space $V$. Then there exists a scalar product $\left\langle \cdot \middle| \cdot \right\rangle_{\rho}: V \times V \to \K$ that makes $(V, \left\langle \cdot \middle| \cdot \right\rangle)$ a Hilbert space and such that $\rho$ becomes a unitary representation with respect to this scalar product. 
\end{pro}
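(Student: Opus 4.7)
The plan is to use the classical unitarization (or Weyl averaging) trick: starting from any Hilbert space structure on $V$, I will produce a new scalar product by averaging the old one along the group orbit with respect to the normalized Haar measure. This averaged form will automatically be $\rho$-invariant, which is exactly the condition for $\rho$ to be unitary.

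First, since $V$ is finite-dimensional, pick any auxiliary scalar product $\left\langle \cdot \middle| \cdot \right\rangle_0: V \times V \to \K$ (e.g., choose a basis and use the standard scalar product of $\K^n$). Then define
\begin{equation*}
    \left\langle v \middle| w \right\rangle_\rho \ \coloneqq\ \int_G \left\langle \rho(g)v \middle| \rho(g)w \right\rangle_0 \, dg .
\end{equation*}
For this to be a well-defined Haar integral, I need to check that for each fixed $v,w$ the map $g \mapsto \left\langle \rho(g)v \middle| \rho(g)w \right\rangle_0$ is continuous. This follows because $\rho^v$ and $\rho^w$ are continuous by the definition of a linear representation (Definition~\ref{Linear Representation}) and the scalar product $\left\langle \cdot \middle| \cdot \right\rangle_0$ is a continuous function on $V\times V$ (it is continuous with respect to the norm topology, which is the unique Hausdorff vector space topology on a finite-dimensional space).

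Next I verify that $\left\langle \cdot \middle| \cdot \right\rangle_\rho$ is a scalar product. Sesquilinearity (linear in the second argument, antilinear in the first for $\K=\C$, bilinear for $\K=\R$) and Hermitian symmetry follow pointwise from the same properties of $\left\langle \cdot \middle| \cdot \right\rangle_0$ and carry through the integral by linearity. Positive definiteness is the subtle point: for $v \neq 0$, the integrand $g\mapsto \left\langle \rho(g)v \middle| \rho(g)v \right\rangle_0 = \|\rho(g)v\|_0^2$ is continuous, non-negative, and strictly positive at $g=e$ (since $\rho(e)v = v \neq 0$). By continuity it is strictly positive on an open neighborhood of $e$, which has positive Haar measure because the Haar measure of every non-empty open set in a compact group is positive. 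Hence the integral is strictly positive.

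Now the unitarity of $\rho$ with respect to the new form is a direct consequence of the left-invariance of the Haar measure: for any $h\in G$,
\begin{equation*}
    \left\langle \rho(h)v \middle| \rho(h)w \right\rangle_\rho
    \ =\ \int_G \left\langle \rho(gh)v \middle| \rho(gh)w \right\rangle_0 \, dg
    \ =\ \int_G \left\langle \rho(g)v \middle| \rho(g)w \right\rangle_0 \, dg
    \ =\ \left\langle v \middle| w \right\rangle_\rho ,
\end{equation*}
using the substitution $g\mapsto gh^{-1}$ and the right-invariance $\int_G f(gh)\,dg = \int_G f(g)\,dg$. Thus $\rho(h)$ preserves $\left\langle\cdot|\cdot\right\rangle_\rho$, and since $\rho(h)$ is invertible it lies in $\U{V}$. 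Finally, $(V,\left\langle\cdot|\cdot\right\rangle_\rho)$ is automatically a Hilbert space by Proposition~\ref{completeness_finite_dim}, since $V$ is finite-dimensional. The main (and essentially only) obstacle is the positive definiteness step, which requires the fact that open sets in a compact group have positive Haar measure; the rest is a routine exploitation of the Haar integral's invariance properties.
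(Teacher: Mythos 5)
Your proof is correct and follows essentially the same Weyl averaging argument as the paper: pick any auxiliary scalar product, average it over $G$ with the normalized Haar measure, and use the invariance of the integral to conclude unitarity. You additionally spell out the positive-definiteness step (via positivity of the Haar measure on non-empty open sets), which the paper leaves as "easily checked" -- note only the small terminological slip where you first attribute the invariance computation to left-invariance but then correctly invoke right-invariance, as the paper does.
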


\begin{proof}
Since $V$ is finite-dimensional, there is an isomorphism of vector spaces to some $\K^{n}$. Consequently, there is \emph{some} scalar product $\left\langle \cdot \middle| \cdot \right\rangle: V \times V \to \K$ that makes $V$ a Hilbert space. However, this scalar product does not necessarily make $\rho$ a unitary representation. However, we can define $\left\langle \cdot \middle| \cdot \right\rangle_{\rho}: V \times V \to \K$ by
\begin{equation*}
\left\langle v \middle| w \right\rangle_{\rho} \coloneqq \int_G \left\langle \rho(g)(v) \middle| \rho(g)(w) \right\rangle dg.
\end{equation*}
That this integral exists is due to the continuity of linear representations and since also the scalar product is continuous by Proposition \ref{scalar_product_continuous}. It can easily be checked that this construction makes $V$ a Hilbert space. And due to the right invariance of the Haar measure, we can check that $\rho$ is a unitary representation with respect to this scalar product. Namely, for arbitrary $g' \in G$ we have:
\begin{align*}
\big\langle \rho(g')(v) \big| \rho(g')(w) \big\rangle_{\rho} & = \int_G \big\langle \rho(g) \rho(g')v \big| \rho(g) \rho(g')w   \big\rangle dg\\
& = \int_G \big\langle \rho(gg') (v) \big| \rho(gg')(w)\big\rangle dg \\
& = \int_G \big\langle \rho(g)(v) \big| \rho(g)(w) \big\rangle dg \\
& = \langle v | w \rangle_{\rho}.
\end{align*}
\end{proof}

Now, for a measure space $Y$ with corresponding measure $\mu$, we can consider the space of square-integrable functions on $Y$ with values in $\K$, denoted $\Ltwo{\K}{Y}$ (the measure is omitted in the notation since there is usually no ambiguity). In these spaces, functions are identified if they coincide on a set with measure $0$. $\Ltwo{\K}{Y}$ is clearly a vector space over $\K$, but it turns out that it can even be considered to be a Hilbert space as follows:
\begin{equation*}
\left\langle f \middle| g\right\rangle \coloneqq \int_Y \overline{f(y)} g(y) dy.
\end{equation*} 
Here, the overline means complex conjugation. The Hilbert space properties are easily verified.

In particular, one can consider the space $\Ltwo{\K}{G}$ of square-integrable functions on the group $G$ itself. Now the claim is that $\Ltwo{\K}{G}$ can actually be equipped with a prototypical structure as a unitary representation over $G$ which makes this space, in some sense, ``universal among unitary representations''. This works with the following canonical representation, called the \emph{regular representation}:
\begin{equation*}
\lambda: G \to \U{\Ltwo{\K}{G}}, \ \left[\lam{g}(f)\right](g') \coloneqq f(g^{-1}g').
\end{equation*}
continuity of this map is non-trivial and is, for example, shown in~\citet{knapp}. However, the more algebraic properties of being a unitary representation are easy to appreciate. First of all, we clearly see that $\lambda$ is a group homomorphism mapping each group element to a linear automorphism. And finally, the unitarity of this representation can be understood as a direct consequence of the properties of the Haar measure, where we notably make only use of the left-invariance:
\begin{align*}
\left\langle \lam{g}(f) \middle| \lam{g}(h)\right\rangle & = \int_G \overline{\left[\lambda(g)(f)\right](g')} \cdot  \left[\lambda(g)(h)\right](g') dg' \\
& = \int_G \overline{f(g^{-1}g')} \cdot h(g^{-1}g') dg' \\
& = \int_G \overline{f(g')} h(g') dg' \\
& = \left\langle f \middle| h\right\rangle.
\end{align*}
We saw in Example \ref{homogeneous_space} that $G$ is a homogeneous space with respect to the action on itself. We can now ask whether these constructions can also work if $X$ is an arbitrary homogeneous space of $G$. This requires us to define a suitable measure on $X$. This is indeed possible. For a fixed element $x^* \in X$, denote the stabilizer subgroup by $H = G_{x^*} \subseteq G$. Then the Hausdorff property of $X$ allows to write down a homeomorphism between $X$ and $G/H$, which in turn will allow us to use a canonical measure on $G/H$ that we study below. We denote cosets $gH \in G/H$ by $[g]$.

\begin{lem}\label{homeomorphism_to_group_quotient}
Let $X$ be a homogeneous space of the compact group $G$ and $H$ the stabilizer subgroup of a fixed element $x^* \in X$. Then the map
\begin{equation*}
\varphi: G/H \to X, \ [g] \mapsto gx^* 
\end{equation*}
is a homeomorphism. Furthermore, $H$ is topologically closed.
\end{lem}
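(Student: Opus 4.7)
The plan is to establish closedness of $H$ first, then verify that $\varphi$ is a well-defined continuous bijection, and finally upgrade it to a homeomorphism via a standard compact-to-Hausdorff argument.

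First I would show $H$ is closed. The orbit map $f\colon G \to X$, $g \mapsto g x^*$, is continuous because it is the composition of $g \mapsto (g, x^*)$ with the continuous action $G \times X \to X$ from Definition~\ref{group_action}. Since $X$ is Hausdorff by the definition of a homogeneous space, the singleton $\{x^*\}$ is closed in $X$, so $H = G_{x^*} = f^{-1}(\{x^*\})$ is closed in $G$.

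Next I would verify that $\varphi$ is a well-defined bijection. Well-definedness: if $[g] = [g']$, then $g' = gh$ for some $h \in H$, so $g' x^* = g(h x^*) = g x^*$. Injectivity: if $g x^* = g' x^*$, then $g^{-1} g' \in H$, so $[g] = [g']$. Surjectivity: by transitivity (Definition~\ref{transitive_action}), for every $y \in X$ there is $g \in G$ with $g x^* = y$, giving $\varphi([g]) = y$.

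For continuity, the orbit map $f$ factors as $f = \varphi \circ \pi$ where $\pi\colon G \to G/H$ is the quotient projection. By the universal property of the quotient topology, continuity of $f$ implies continuity of $\varphi$.

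The final and most important step is showing $\varphi^{-1}$ is continuous, i.e.\ $\varphi$ is a closed map. This is where the topological hypotheses pay off: since $G$ is compact (Definition~\ref{compact_group}) and $\pi$ is continuous and surjective, the quotient $G/H$ is compact. Thus any closed subset $A \subseteq G/H$ is compact, and so $\varphi(A)$ is compact in $X$. Since $X$ is Hausdorff, compact subsets are closed, so $\varphi(A)$ is closed. Hence $\varphi$ is a continuous closed bijection, i.e.\ a homeomorphism. The main subtle point is really just invoking the Hausdorff assumption on $X$ at the right place (to ensure both that $H$ is closed and that $\varphi$ is closed); if $X$ were not required to be Hausdorff, neither conclusion could be drawn in this clean way.
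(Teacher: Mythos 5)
Your proof is correct, and the main line of argument (continuity of the orbit map, descent to $G/H$ via the universal property of the quotient, bijectivity from transitivity and the stabilizer property, and the compact-to-Hausdorff upgrade to a homeomorphism) matches the paper's proof; where the paper cites Proposition~\ref{compact_hausdorff_homeo} you simply re-derive that standard fact inline via closed-subset-of-compact-is-compact. The one place you genuinely diverge is the closedness of $H$: you prove it directly and more economically as $H = f^{-1}(\{x^*\})$, the preimage of a closed singleton (using that $X$ is Hausdorff) under the continuous orbit map. The paper instead obtains it \emph{after} establishing the homeomorphism, arguing that $X$ Hausdorff forces $G/H$ to be Hausdorff and then invoking a result of Bourbaki that a quotient $G/H$ is Hausdorff only if $H$ is closed. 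Your route is more self-contained and avoids the external citation; the paper's route has the mild advantage of packaging the fact ``$G/H$ Hausdorff $\iff H$ closed'' for reuse, but for the purposes of this lemma your direct argument is cleaner and logically independent of the homeomorphism claim.
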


\begin{proof}
Let $\tilde{\varphi}: G \to X$, $g \mapsto gx^*$. This map is equal to the composition of the maps $G \to G \times X$, $g \mapsto (g, x^*)$ and $G \times X \to X$, $(g, x) \mapsto gx$. Both these are continuous, and thus $\tilde{\varphi}$ is continuous as well. Furthermore, note that if $g^{-1}g' \in H$, then there is $h \in H$ such that $g' = gh$, and thus
\begin{equation*}
\tilde{\varphi}(g') = \tilde{\varphi}(gh) = (gh)x^* = g(hx^*) = gx^* = \tilde{\varphi}(g)
\end{equation*}
which means that by Proposition \ref{quotient_universal}, the map $\varphi: G/H \to X, [g] \mapsto gx^*$ is a well-defined continuous map. It is surjective since the action is transitive by definition of a homogeneous space. Furthermore, it is injective since if $gx^* = g'x^*$ then $x^* = (g^{-1}g')x^*$ and thus $g^{-1}g' \in H$, which means $[g] = [g']$.

Overall, $\varphi$ is a continuous bijective map from $G/H$ to $X$. Furthermore, $G/H$ is compact since it is the continuous image of the compact group $G$ under the projection $G \to G/H$, see Proposition \ref{compact_image}. Since $X$ is Hausdorff by definition of homogeneous spaces, $\varphi$ is a homeomorphism according to Proposition \ref{compact_hausdorff_homeo}.

Now, since $X$ is Hausdorff and $\varphi$ is a homeomorphism, it follows that $G/H$ is Hausdorff as well. Then, necessarily, $H$ is a topologically closed subgroup of $G$, see~\citet{bourbaki_topology}, Chapter III, Section $2.5$, Proposition $13$.
\end{proof}

Every space $G/H$ where $H$ is topologically closed allows a measure $\mu$ with similar properties to those of $G$~\citep{haar_integral}. Since the stabilizer $H$ is closed and $X \cong G/H$ by Lemma \ref{homeomorphism_to_group_quotient}, we can do these constructions for $X$ as well, as we outline now. The only properties that we now miss are the right-invariance and inversion-invariance: We simply can't ask for them since $G$ does not naturally act on $X$ from the right and since we cannot invert elements in $X$. But left-invariance does hold and this means that
\begin{equation*}
\lambda: G \to \Ltwo{\K}{X}, \ \left[\lambda(g)(f)\right](x) \coloneqq f(g^{-1}x)
\end{equation*}
makes $\Ltwo{\K}{X}$ a unitary representation over $G$, as can be shown in the exact same way as for $\Ltwo{\K}{G}$.

Let $\widehat{G}$ be the set of isomorphism classes of irreducible unitary representations over $G$. Furthermore, let $\rho_{l}: G \to V_{l}$ be a fixed representative of such an isomorphism class $l \in \widehat{G}$. We write isomorphism classes as ``$l$'' (and later also $j$ and $J$) in order to bring to mind quantum numbers used in quantum mechanics. Recall from linear algebra that a countable sum of subspaces of a vector space is called \emph{direct} if no nontrivial subspace of any of the considered spaces is contained in the sum of all the other considered spaces.\footnote{For a vector space $V$ and subspaces $(U_i)_{i \in I}$, their sum $\sum_{i \in I}U_i$ is the set of sums $\sum_{i \in J} u_i$ with $J \subseteq I$ finite and $u_i \in U_i$ for all $i$. It is itself a subspace of $V$.} Furthermore, recall that two subspaces $U, W \subseteq V$ of a Hilbert space $V$ are called perpendicular or orthogonal if $\left\langle u \middle| w\right\rangle = 0$ for all $u \in U$ and $w \in W$. We then write $U \perp W$. We can now formulate the Peter-Weyl theorem. Intuitively, it says that $\Ltwo{\K}{X}$ splits into an orthogonal direct sum of the irreducible unitary representations, where each irreducible unitary representation appears maximally as often as its own dimension (and may not appear at all):

\begin{thrm}[Peter-Weyl Theorem]\label{pw}
Let $G$ be a compact group. Let $X$ be a homogeneous space. There are numbers $m_{l} \in \N_{\geq 0}$ for all $l \in \widehat{G}$ and closed-invariant subspaces $V_{li} \subseteq \Ltwo{\K}{X}$ for all $l \in \widehat{G}$ and $i \in \{1, \dots, m_{l}\}$ such that the following hold:
\begin{enumerate}
\item $V_{li} \cong V_{l}$ as unitary representations for all $i$ and $l$.
\item $m_{l} \leq \dim{V_{l}} < \infty$ for all $l$.
\item $V_{li} \perp V_{l'j}$ whenever $l \neq l'$ or $i \neq j$.
\item $\bigoplus_{l \in \widehat{G}} \bigoplus_{i = 1}^{m_{l}} V_{li}$ is topologically dense in $\Ltwo{\K}{X}$, written $\Ltwo{\K}{X} = \widehat{\bigoplus}_{l \in \widehat{G}} \bigoplus_{i = 1}^{m_{l}}V_{li}$.
\end{enumerate}
Now additionally consider $G$ as a homogeneous space of itself. Then the same holds for $\Ltwo{\K}{G}$ as well, with numbers $n_{l} \leq \dim{V_{l}} < \infty$. We additionally have the following:
\begin{enumerate}
\item $m_{l} \leq n_{l}$. 
\item If $\K = \C$, then $n_{l} = \dim{V_{l}}$.
\end{enumerate}
\end{thrm}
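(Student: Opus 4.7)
The plan is to prove the theorem first for $X = G$ with its regular representation $\lambda$, and then deduce the general homogeneous-space case by embedding $\Ltwo{\K}{X}$ into $\Ltwo{\K}{G}$ as the subspace of right-$H$-invariant functions, where $H = G_{x^*}$ is the stabilizer of a chosen basepoint (using Lemma~\ref{homeomorphism_to_group_quotient} to identify $X \cong G/H$). This strategy is natural because all the needed orthogonality and multiplicity information lives most transparently at the level of matrix coefficients on $G$.

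For the $G$-case, I would proceed via matrix coefficients. For each $l \in \widehat{G}$, fix an orthonormal basis $\{v_l^1, \dots, v_l^{d_l}\}$ of $V_l$ and define $\rho_l^{ij}(g) \coloneqq \left\langle v_l^i \middle| \rho_l(g) v_l^j \right\rangle$. The homomorphism property $\rho_l(gh) = \rho_l(g)\rho_l(h)$ shows that, for each fixed column index $j$, the span $\mathcal{E}_l^j \coloneqq \spann_\K\{\rho_l^{ij} \mid i = 1, \dots, d_l\}$ is $\lambda$-invariant and that $v_l^i \mapsto \rho_l^{ij}$ extends (after a normalization by $\sqrt{d_l}$) to an isomorphism of unitary representations $V_l \cong \mathcal{E}_l^j$. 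Classical Schur orthogonality --- obtained by averaging $\rho_l(g) \circ A \circ \rho_{l'}(g)^{-1}$ over $G$ for arbitrary linear $A: V_{l'} \to V_l$ and invoking Schur's Lemma --- then yields that the various $\mathcal{E}_l^j$ are pairwise orthogonal whenever $l$ or $j$ differs. Collecting these gives $n_l \leq d_l$ orthogonal copies of $V_l$ in $\Ltwo{\K}{G}$; over $\K = \C$, every column $j$ contributes a distinct copy so $n_l = d_l$, whereas over $\K = \R$ the real/complex/quaternionic-type classification of the endomorphism algebras only forces the weaker inequality.

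For the $X$-case, I would use the pullback $\pi^*: \Ltwo{\K}{X} \to \Ltwo{\K}{G}$, $f \mapsto f \circ \pi$ with $\pi(g) \coloneqq gx^*$. After appropriate normalization this is a $G$-equivariant isometric embedding whose image is exactly the right-$H$-invariant subspace of $\Ltwo{\K}{G}$. Intersecting each of the $n_l$ copies of $V_l$ in $\Ltwo{\K}{G}$ with this right-$H$-invariant subspace (equivalently: applying the averaging projection along cosets of $H$) produces the subspaces $V_{li}$ together with the multiplicity bound $m_l \leq n_l \leq d_l$, and closedness/orthogonality/irreducibility all transfer because $\pi^*$ is an isometric intertwiner.

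The main obstacle I expect is the density claim $\Ltwo{\K}{G} = \widehat{\bigoplus}_{l} \mathcal{E}_l$, which is the analytic heart of Peter-Weyl; I would cite it from the literature rather than reprove it. The standard route is to convolve any $f \in \Ltwo{\K}{G}$ with a self-adjoint compactly supported approximate identity $\phi$: the operator $T_\phi(h) \coloneqq \phi \ast h$ is compact, self-adjoint, and commutes with $\lambda$, so by the spectral theorem its nonzero eigenspaces are finite-dimensional and $\lambda$-invariant, hence decompose into irreducibles whose matrix coefficients lie in $\mathcal{E}$; since $\phi \ast f \to f$ in $L^2$-norm, this places $f$ in the closure of $\mathcal{E}$. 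A secondary subtlety, which is what pushes us beyond the textbook treatment, is that most references state this only over $\C$; for $\K = \R$ one must either run the compact-operator argument directly in real Hilbert spaces or complexify and carefully descend, tracking how real irreps of complex or quaternionic type behave under complexification --- precisely the source of the inequality $n_l \leq d_l$ rather than equality.
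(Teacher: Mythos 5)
Your treatment of the regular representation coincides with the paper's: the same column spaces $\mathcal{E}_l^j$ of matrix coefficients, the same Schur-orthogonality-by-averaging argument, the same decision to cite the density of matrix coefficients in $\Ltwo{\K}{G}$ from the literature, and the same explanation of why $\K=\C$ forces $n_l = d_l$ while $\K=\R$ only yields $n_l \le d_l$. You diverge on the passage to a general homogeneous space. The paper does this in two independent steps: it first invokes a general structure theorem (Proposition~\ref{decomp_arbitrary_unit}, decomposing an \emph{arbitrary} unitary representation into a dense orthogonal sum of irreducibles) to obtain the existence of a decomposition of $\Ltwo{\K}{X}$, and only afterwards uses the isometric intertwiner $\pi^*$ to transfer the multiplicity bound $m_l \le n_l$ from $\Ltwo{\K}{G}$. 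You instead extract the entire $X$-decomposition from $\pi^*$ alone, by taking right-$H$-invariants of the isotypic components of $\Ltwo{\K}{G}$; this is more self-contained and delivers the bound for free. Two points need tightening for your route to be airtight. First, you should intersect the full isotypic component $\mathcal{E}_l$ with the right-$H$-invariant subspace (equivalently, apply the averaging projection $f \mapsto \pi^*(\av{f})$ to $\mathcal{E}_l$) and then re-decompose the resulting subrepresentation into irreducibles; intersecting the individually chosen copies $V_{li}$ one at a time can miss ``diagonal'' invariant copies of $V_l$ that meet each chosen summand only in $0$. Second, the density of $\bigoplus_{l}\bigoplus_{i} V_{li}$ in $\Ltwo{\K}{X}$ is not automatic from ``everything transfers by isometry'': it follows because the averaging projection is a continuous intertwiner carrying the dense subspace $\bigoplus_l \mathcal{E}_l$ of $\Ltwo{\K}{G}$ onto a dense subspace of the image of $\pi^*$, and that sentence should be said explicitly.
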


Note that the representative $V_l$ is not assumed to be embedded in $\Ltwo{\K}{X}$. It is just isomorphic, as a unitary representation, to each of the $V_{li} \subseteq \Ltwo{\K}{X}$.

\begin{exa}
For $G = \SO{2}$ and $\K = \C$ we have $\Ltwo{\C}{\SO{2}} =  \widehat{\bigoplus}_{l \in \Z}V_{l1}$ and all irreducible representations $V_l$ are $1$-dimensional.

For $G = \SO{2}$ and $\K = \R$, we obtain $\Ltwo{\R}{\SO{2}} = \widehat{\bigoplus}_{l \geq 0} V_{l1}$, and all irreducible representations $V_l$ with $l \geq 1$ are two-dimensional, whereas $V_0$ is one-dimensional. Thus, here we see an example where the multiplicity of most irreducible representations in the regular representation is $1$ and therefore smaller than their dimension, which cannot happen for representations over the complex numbers.

Both of these results are standard results in harmonic analysis. These examples are discussed in more detail, especially with respect to their applications in deep learning, in Section~\ref{harmonic_networks} and \ref{SO2_real}.
\end{exa}

\subsection{A Proof of the Peter-Weyl Theorem}\label{endinin}

This section presents a proof of the Peter-Weyl theorem, as formulated in Theorem \ref{pw}. We mostly skip the \emph{analytical} parts of the proof,\footnote{I.e., those parts that deal with approximations of square-integrable functions by matrix elements.} since they are well-presented in the literature and clearly work over both the real and complex numbers. However, the more \emph{algebraic} parts of the proof usually make use of the property of the complex numbers to be algebraically closed, which does not hold for the real numbers. This is invoked usually both in the proof of a version of Schur's lemma, as well as in proving Schur's orthogonality. We therefore carefully adapt the proof of the Peter-Weyl theorem in the literature so that it also works over the real numbers, and formulate and prove versions of Schur's Lemma \ref{schur_unitary} and Schur's orthogonality \ref{schur_orthog} that work in general. 

This section can be skipped if the interest is mainly in the applications of the Peter-Weyl theorem. In this case, the reader is advised to directly move on to Chapter \ref{proof_of_main}.

We note the following convention that applies to this section: for all unitary representations $\rho: G \to \U{V}$ that we consider here, $V$ is a Hilbert space (instead of just a pre-Hilbert space).

\subsubsection{Density of Matrix Coefficients}

An important ingredient in the construction of the spaces $V_{li}$ that appear in the formulation of the Peter-Weyl Theorem \ref{pw} are matrix coefficients, which together generate those spaces in case that one considers the regular representation on $\Ltwo{\K}{G}$.

\begin{dfn}[Matrix Coefficients]
Let $\rho: G \to \U{V}$ be a unitary representation. A \emph{matrix coefficient} is any function of the form
\begin{equation*}
\rho^{uv}: G \to \K, \ g \mapsto \overline{\left\langle u \middle| \rho(g)(v)\right\rangle}
\end{equation*}
for arbitrary $u, v \in V$.
\end{dfn}

The term ``matrix coefficient'' comes from the analogy to matrix elements of linear maps between pre-Hilbert spaces of which orthonormal bases are fixed. Later, in Definition \ref{matrix_element} we will also define the notion of ``matrix elements'' separately. The term ``matrix coefficient'' only applies to unitary representations.

\begin{rem}\label{matrix_coefficients_continuous}
By definition of linear representations, the function $g \mapsto \rho(g)(v)$ is continuous. Thus, since scalar products of Hilbert spaces are also continuous as functions on $V \times V$, see Proposition \ref{scalar_product_continuous}, every matrix coefficient $\rho^{uv}: G \to \K$ is continuous. As a continuous function on a compact space, it is of course also square-integrable, i.e., $\rho^{uv} \in \Ltwo{\K}{G}$. The Peter-Weyl theorem basically asserts that these matrix coefficients can be considered as the building blocks of all square-integrable functions.

Furthermore, one may wonder why there is a complex conjugation in the definition. The reason for this is that, otherwise, the isomorphism that we will construct in Proposition \ref{equivalence_unitary_reps} is not linear but conjugate linear. The reason why this can nevertheless be called a matrix coefficient is that this actually \emph{is} the matrix coefficient (without complex conjugation) on a conjugate Hilbert space, as explained in the next Proposition, which we took from~\citet{williams_peter_weyl}.
\end{rem}

\begin{pro}\label{conjugate_hilbert}
Let $\rho: G \to \U{V}$ be a unitary representation on a Hilbert space $V$ with scalar multiplication $\cdot_{V}$ and scalar product $\left\langle \cdot \middle| \cdot \right\rangle_{V}$. We have the following:
\begin{enumerate}
\item $\tilde{V} \coloneqq V$ (equality as abelian groups) with $\alpha \cdot_{\tilde{V}} v \coloneqq \overline{\alpha} \cdot_{V} v$ and $\left\langle u \middle| v\right\rangle_{\tilde{V}} \coloneqq \overline{\left\langle u \middle| v\right\rangle}$ is again a Hilbert space, the so-called \emph{conjugate Hilbert space} of $V$.
\item $\tilde{\rho}: G \to \U{\tilde{V}}$ with $\tilde{\rho}(g) \coloneqq \rho(g)$ is again a unitary representation.
\item For the matrix coefficients, we have $\tilde{\rho}^{uv}(g) = \overline{\rho^{uv}(g)}$. 
\end{enumerate}
\end{pro}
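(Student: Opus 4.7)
The plan is to verify each of the three claims by a sequence of short direct computations, since the whole statement is essentially bookkeeping about how complex conjugation interacts with the scalar multiplication and the inner product. Nothing deep happens; the only care needed is to track where the conjugation in $\alpha \cdot_{\tilde V} v = \overline\alpha \cdot_V v$ meets the conjugation in $\langle \cdot\,|\,\cdot\rangle_{\tilde V} = \overline{\langle \cdot\,|\,\cdot\rangle_V}$ so that the physics convention (conjugate-linear in the first slot, linear in the second) still holds for the new inner product.

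For part (1), I would first check that $\tilde V$ is a $\K$-vector space under $\cdot_{\tilde V}$. The distributive laws and compatibility follow from $\overline{\alpha+\beta}=\overline\alpha+\overline\beta$, $\overline{\alpha\beta}=\overline\alpha\,\overline\beta$, and $\overline 1 = 1$; over $\K=\R$ everything is trivial. Next I verify the scalar-product axioms for $\langle \cdot\,|\,\cdot\rangle_{\tilde V}$: conjugate-linearity in the first argument reads $\langle \alpha\cdot_{\tilde V}u\,|\,v\rangle_{\tilde V} = \overline{\langle\overline\alpha u\,|\,v\rangle_V} = \overline{\overline{\overline\alpha}\langle u\,|\,v\rangle_V} = \overline\alpha\,\langle u\,|\,v\rangle_{\tilde V}$, linearity in the second is analogous, hermiticity follows from $\overline{\langle u\,|\,v\rangle_V}=\langle v\,|\,u\rangle_V$, and positivity is immediate since $\langle v\,|\,v\rangle_V\in\R_{\geq 0}$ so $\langle v\,|\,v\rangle_{\tilde V} = \langle v\,|\,v\rangle_V$. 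Finally, the induced norm on $\tilde V$ coincides with the norm on $V$ (because $|\overline\alpha|=|\alpha|$ and $\langle v\,|\,v\rangle_{\tilde V}=\langle v\,|\,v\rangle_V$), so $\tilde V$ inherits completeness from $V$ and is a Hilbert space.

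For part (2), since $\tilde\rho(g)=\rho(g)$ as a map of sets, continuity of $g\mapsto\tilde\rho(g)v$ and the group-homomorphism property are automatic. $\K$-linearity with respect to $\cdot_{\tilde V}$ follows from
\begin{equation*}
  \tilde\rho(g)(\alpha\cdot_{\tilde V}v)\;=\;\rho(g)(\overline\alpha\cdot_V v)\;=\;\overline\alpha\cdot_V\rho(g)(v)\;=\;\alpha\cdot_{\tilde V}\tilde\rho(g)(v),
\end{equation*}
and preservation of the new inner product from
\begin{equation*}
  \langle\tilde\rho(g)u\,|\,\tilde\rho(g)v\rangle_{\tilde V}\;=\;\overline{\langle\rho(g)u\,|\,\rho(g)v\rangle_V}\;=\;\overline{\langle u\,|\,v\rangle_V}\;=\;\langle u\,|\,v\rangle_{\tilde V}.
\end{equation*}
Hence $\tilde\rho(g)\in\U{\tilde V}$ for every $g$.

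For part (3), I unwind the definitions of the two matrix coefficients:
\begin{equation*}
  \tilde\rho^{uv}(g)\;=\;\overline{\langle u\,|\,\tilde\rho(g)v\rangle_{\tilde V}}\;=\;\overline{\overline{\langle u\,|\,\rho(g)v\rangle_V}}\;=\;\langle u\,|\,\rho(g)v\rangle_V\;=\;\overline{\rho^{uv}(g)}.
\end{equation*}
The only potential obstacle here is purely notational: one must remember that the overline in the definition of $\rho^{uv}$ is applied after evaluating the inner product in the ambient space, which is $V$ for $\rho$ but $\tilde V$ for $\tilde\rho$, so the two outer conjugations cancel precisely when converting $\tilde\rho^{uv}$ back to an object phrased in terms of the original inner product on $V$.
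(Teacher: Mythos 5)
Your proof is correct and follows essentially the same route as the paper: the paper declares parts (1) and (2) "easy to check" and carries out only the double-conjugation computation for part (3), which is exactly the computation you give, while your write-up additionally fills in the routine verifications of the vector-space, inner-product, and unitarity axioms. No gaps.
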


\begin{proof}
All these assertions are easy to check. As a demonstration, we do $3$:
\begin{equation*}
\tilde{\rho}^{uv}(g) = \overline{ \left\langle u \middle| \tilde{\rho}(g)(v)\right\rangle}_{\tilde{V}} = \overline{\overline{\left\langle u \middle| \rho(g)(v)\right\rangle}}_{V} = \overline{\rho^{uv}(g)}.
\end{equation*}
That's what we wanted to show.
\end{proof}

As a consequence of this proposition, the matrix coefficient $\rho^{uv}(g)$ is equal to $\overline{\tilde{\rho}^{uv}(g)}$, thus being a ``matrix coefficient without complex conjugation above the scalar product'' of the conjugate unitary representation.

\begin{thrm}[Density of Matrix Coefficients]\label{matrix coefficients dense}
The linear span of the matrix-coefficients of finite-dimensional, unitary, irreducible representations of $G$ are dense in $\Ltwo{\K}{G}$ for all compact groups $G$.
\end{thrm}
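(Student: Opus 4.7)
The plan is to reduce $L^2$-density to uniform density in $C(G)$ via the Stone-Weierstrass theorem, exploiting that $G$ is compact with total Haar measure $1$: the inclusion $C(G) \subseteq \Ltwo{\K}{G}$ is $L^2$-dense, and $\|\cdot\|_{L^2} \leq \|\cdot\|_\infty$ on $G$. Let $\mathcal{E} \subseteq C(G)$ denote the linear span of all matrix coefficients $\rho^{uv}$ of finite-dimensional unitary irreducible representations. The task then becomes to show that $\mathcal{E}$ is a unital subalgebra of $C(G)$, closed under complex conjugation when $\K = \C$, that separates the points of $G$.

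The algebraic closure properties follow cleanly from tensor products and complete reducibility. The constant function $1$ is the matrix coefficient of the trivial irrep, and closure under sums is immediate. For products, the identity $\rho^{uv}(g) \cdot \sigma^{wx}(g) = (\rho \otimes \sigma)^{u \otimes w,\, v \otimes x}(g)$ (which uses that complex conjugation distributes over products in the definition of matrix coefficients) writes the product as a matrix coefficient of the finite-dimensional unitary tensor representation $\rho \otimes \sigma$; by Proposition~\ref{perpendicular_direct_sum}, the latter splits as an orthogonal direct sum of irreps, and expanding $u \otimes w$ and $v \otimes x$ in a basis adapted to this decomposition rewrites the product as a finite $\K$-linear combination of irrep matrix coefficients. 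Conjugation-closure in the complex case follows from Proposition~\ref{conjugate_hilbert}, which gives $\overline{\rho^{uv}} = \tilde{\rho}^{uv}$ with $\tilde{\rho}$ the conjugate irrep. In the real case, this step is vacuous.

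The main obstacle, and the only genuinely analytic step, is point-separation: for every $g_0 \neq e$, one must produce a finite-dimensional unitary irrep $\rho$ with $\rho(g_0) \neq \rho(e)$. This cannot be extracted from the algebra of matrix coefficients alone; it requires producing enough irreps, which is the analytic heart of the Peter-Weyl machinery. The standard device is a convolution operator: for a continuous function $\phi: G \to \K$ satisfying an appropriate symmetry condition, the operator $T_\phi: \Ltwo{\K}{G} \to \Ltwo{\K}{G}$, $f \mapsto f * \phi$, is self-adjoint, Hilbert-Schmidt (its integral kernel is continuous on $G \times G$), and commutes with the left regular representation~$\lambda$. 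The spectral theorem therefore decomposes $\Ltwo{\K}{G}$ into the kernel of $T_\phi$ plus an orthogonal sum of finite-dimensional eigenspaces for its nonzero eigenvalues, each of which is $\lambda$-invariant and thus splits further into finite-dimensional unitary irreducible subrepresentations of $\lambda$. Choosing $\phi$ supported in a sufficiently small symmetric neighborhood of $e$ that excludes $g_0$ and has nonzero integral, one exhibits an eigenspace on which $\lambda(g_0)$ acts non-trivially, yielding the required separating irrep.

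With $\mathcal{E}$ unital, closed under conjugation, and point-separating, the appropriate form of the Stone-Weierstrass theorem (real or complex) gives $\|\cdot\|_\infty$-density of $\mathcal{E}$ in $C(G)$, and passing to the $L^2$ norm as indicated above completes the proof. The delicate point in the real case is that the spectral decomposition of $T_\phi$ must be carried out without invoking algebraic closure; however, since $T_\phi$ is self-adjoint, its nonzero spectrum is automatically real, so the eigenspace decomposition proceeds identically over $\R$ and $\C$. This is exactly the reason the excerpt's authors note that the analytic portion of the proof ``works over both the real and complex numbers'' and can be borrowed from the standard references.
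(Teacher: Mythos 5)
Your proposal is correct, but it is worth being clear that the paper does not actually prove this theorem: its ``proof'' consists of citing \citet{knapp} and adding the single observation (via Proposition~\ref{conjugate_hilbert}) that the conjugated convention $\rho^{uv}(g)=\overline{\langle u|\rho(g)v\rangle}$ spans the same space as the unconjugated one, so the cited result applies. Your argument therefore supplies what the paper omits, and it does so by a genuinely different route than the one in the standard reference. The classical proof proceeds by contradiction inside $L^2(G)$: if the closure of $\mathcal{E}$ were proper, its orthogonal complement would be a nonzero bi-invariant closed subspace; convolving a nonzero element of it against an approximate identity produces a compact self-adjoint intertwining operator whose finite-dimensional eigenspaces yield matrix coefficients lying simultaneously in $\mathcal{E}$ and pairing nontrivially with $\mathcal{E}^\perp$. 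You instead route through $C(G)$ and Stone--Weierstrass, which buys you the stronger sup-norm density for free and cleanly separates the purely algebraic closure properties (unitality, products via $\rho^{uv}\sigma^{wx}=(\rho\otimes\sigma)^{u\otimes w,\,v\otimes x}$ together with Proposition~\ref{perpendicular_direct_sum}, conjugates via Proposition~\ref{conjugate_hilbert}) from a single analytic input, namely point-separation; the price is that this input is essentially the same spectral-theoretic work the direct proof does once, and your sketch of it compresses a step: to ``exhibit an eigenspace on which $\lambda(g_0)$ acts non-trivially'' one should argue that if $\lambda(g_0)$ were trivial on every finite-dimensional invariant eigenspace of every $T_\phi$, it would be trivial on the union of the images $\mathrm{im}(T_\phi)$, which is dense by the approximate-identity property, contradicting $\lambda(g_0)\neq\mathrm{id}_{L^2(G)}$ for $g_0\neq e$. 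Your observation that both the algebraic steps and the spectral theorem for compact self-adjoint operators are field-agnostic correctly substantiates the paper's claim that the argument works unchanged over $\R$, which is the only point the paper itself insists on.
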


\begin{proof}
For $\K = \C$, this is shown in~\citet{knapp}. The same proof, without adaptions, also works for $\K = \R$. Note that the cited proof uses a definition of matrix coefficients without the complex conjugation. However, Proposition \ref{conjugate_hilbert} shows those span the same space, and thus we can apply it to our situation.
\end{proof}

\subsubsection{Schur's Lemma, Schur's Orthogonality and Consequences}

In this section, we state and prove versions of Schur's lemma and Schur's Orthogonality~\citep{knapp} that are valid for both $\K = \R$ and $\K = \C$.

\begin{lem}\label{adjoints_of_intertwiners}
Let $\rho: G \to \U{V}$ and $\rho': G \to \U{V'}$ be unitary representations. Furthermore, let $f: V \to V'$ be an intertwiner. Then the adjoint $f^*: V' \to V$ is also an intertwiner.
\end{lem}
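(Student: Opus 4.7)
The plan is to derive the equivariance of $f^*$ directly by taking the adjoint of the intertwining identity that $f$ satisfies, and then to use that unitary representations have a particularly simple interaction with the adjoint operation, namely $\rho(g)^* = \rho(g)^{-1}$. The linearity and continuity of $f^*$ are automatic from standard Hilbert space theory (the adjoint of a bounded linear operator between Hilbert spaces is again bounded and linear), so the only nontrivial content is the equivariance.

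Concretely, I would start from the defining intertwining relation
\begin{equation*}
    f \circ \rho(g)\ =\ \rho'(g) \circ f \qquad \forall\, g \in G,
\end{equation*}
and take the Hilbert space adjoint on both sides. Using the standard identity $(A \circ B)^* = B^* \circ A^*$ this yields
\begin{equation*}
    \rho(g)^* \circ f^*\ =\ f^* \circ \rho'(g)^* \qquad \forall\, g \in G.
\end{equation*}
Next I would invoke Proposition \ref{adjoints_of_unitary} (cited in the surrounding text as the fact that unitary operators satisfy $U^* = U^{-1}$), which applied to the unitary operators $\rho(g)$ and $\rho'(g)$ gives $\rho(g)^* = \rho(g^{-1})$ and $\rho'(g)^* = \rho'(g^{-1})$. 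Substituting this in produces
\begin{equation*}
    \rho(g^{-1}) \circ f^*\ =\ f^* \circ \rho'(g^{-1}) \qquad \forall\, g \in G.
\end{equation*}
Since the map $g \mapsto g^{-1}$ is a bijection of $G$, this is equivalent to the desired equivariance $\rho(h) \circ f^* = f^* \circ \rho'(h)$ for all $h \in G$, which is exactly the intertwining condition for $f^* : V' \to V$ with respect to $\rho'$ and $\rho$.

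I do not anticipate any real obstacle here; the only subtlety worth flagging is the existence and well-definedness of the adjoint $f^*$, which requires $f$ to be a bounded linear map between Hilbert spaces. Since $f$ is assumed to be an intertwiner in the sense of the definition earlier in the paper (in particular continuous and linear) and $V, V'$ are Hilbert spaces, this is guaranteed. The continuity of $f^*$ then follows from the standard fact $\|f^*\| = \|f\|$, so $f^*$ genuinely satisfies all the requirements of an intertwiner.
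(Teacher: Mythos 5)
Your proof is correct and follows essentially the same route as the paper's: both arguments amount to taking the adjoint of the intertwining identity and using $\rho(g)^* = \rho(g)^{-1}$ for unitary representations. The only difference is that you work at the operator level via $(A\circ B)^* = B^* \circ A^*$ (Proposition \ref{properties_adjoints}), while the paper unwinds the same computation element-wise with scalar products and concludes via Proposition \ref{test_equality}.
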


\begin{proof}
The adjoint $f^*: V' \to V$ is the unique continuous linear function from $V'$ to $V$ such that, for all $v \in V$ and $v' \in V'$, we have
\begin{equation*}
\left\langle f(v) \middle| v' \right\rangle = \left\langle v \middle| f^*(v') \right\rangle.
\end{equation*}
This always exists according to Definition \ref{adjoint}. Note that with $f$ being an intertwiner and using the unitarity of the representations, we obtain for all $g \in G, v \in V$ and $v' \in V'$:
\begin{align*}
\left\langle v \middle| \rho(g)f^*(v') \right\rangle &= \left\langle  \rho(g^{-1}) (v) \middle| f^*(v')\right\rangle \\
& = \left\langle f \rho(g^{-1})( v) \middle| v' \right\rangle \\
& = \left\langle \rho'(g^{-1})  f (v) \middle|  v'  \right\rangle \\
& = \left\langle f (v) \middle| \rho'(g) (v')  \right\rangle \\
& = \left\langle v \middle| f^* \rho'(g) (v')  \right\rangle
\end{align*}
from which we deduce $\rho(g)f^* = f^* \rho'(g)$ from Proposition \ref{test_equality} for all $g \in G$, i.e., $f^*$ is an intertwiner.
\end{proof}

\begin{lem}[Schur's Lemma for unitary Representations]\label{schur_unitary}
Assume $\rho: G \to \U{V}$ and $\rho': G \to \U{V'}$ are irreducible unitary representations with $V$ finite-dimensional. Also assume that $f: V \to V'$ is an intertwiner. Then either $f=0$ or there is $\mu \in \R_{>0}$ such that $\mu f$ is an isomorphism.
\end{lem}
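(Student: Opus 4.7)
The plan is to follow the standard two-step structure for Schur-type results: first use irreducibility to force $f$ to be a bijection, then use a positive self-adjoint auxiliary operator to pin down the scaling constant.

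First I would analyze $\ker f \subseteq V$ and $\im f \subseteq V'$. Both are linear subspaces and, because $f$ commutes with the group action, both are invariant under the respective representations. The kernel is closed since $f$ is continuous, and the image is finite-dimensional (hence closed in $V'$) because $V$ is finite-dimensional. So irreducibility of $\rho$ forces $\ker f \in \{0, V\}$ and irreducibility of $\rho'$ forces $\im f \in \{0, V'\}$. In the nontrivial case we conclude $f$ is a linear bijection and that $V'$ is finite-dimensional as well.

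Next I would study $\varphi \coloneqq f^* \circ f: V \to V$. By Lemma~\ref{adjoints_of_intertwiners}, $f^*$ is an intertwiner, hence so is $\varphi$, i.e.\ $\varphi \in \End_{G,\K}(V)$. Moreover $\varphi$ is self-adjoint and positive semi-definite, since $\langle v | \varphi v\rangle = \langle fv | fv\rangle \geq 0$. Because $V$ is finite-dimensional and $\K \in \{\R, \C\}$, the (real) spectral theorem applies and $\varphi$ is diagonalizable with real, non-negative eigenvalues. For each eigenvalue $\lambda$ the eigenspace $E_\lambda(\varphi)$ is invariant under $\rho$: if $\varphi v = \lambda v$, then $\varphi \rho(g) v = \rho(g) \varphi v = \lambda \rho(g) v$. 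Each $E_\lambda(\varphi)$ is also closed (being finite-dimensional), so irreducibility of $\rho$ forces a \emph{single} eigenvalue $\lambda$, giving $\varphi = \lambda\, \ID_V$. Nonzero $f$ yields $\lambda > 0$.

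Finally I would set $\mu \coloneqq 1/\sqrt{\lambda} \in \R_{>0}$ and verify $\mu f$ is an isomorphism of unitary representations. It is still an intertwiner (scalar multiples of intertwiners are intertwiners) and still bijective. Isometry follows from
\begin{equation*}
    \langle \mu f v\, |\, \mu f w\rangle \ =\ \mu^2 \langle v\, |\, f^* f w\rangle \ =\ \mu^2 \lambda\, \langle v | w\rangle \ =\ \langle v | w \rangle,
\end{equation*}
which by Definition~\ref{isomorphic_unitary_reps} makes $\mu f$ a unitary isomorphism.

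The main obstacles are minor but worth flagging: one must check closedness of $\im f$ (which needs $V$ finite-dimensional, not $V'$), and one must invoke the spectral theorem in the real case rather than only the complex one. Both irreducibility applications rely on the \emph{closed} subrepresentation convention adopted earlier, which is automatic here because all relevant subspaces are finite-dimensional.
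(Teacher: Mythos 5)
Your proposal is correct and follows essentially the same route as the paper: both form the positive self-adjoint intertwiner $\varphi = f^*\circ f$, apply the spectral theorem (valid over $\R$ as well as $\C$), use irreducibility of $\rho$ on the invariant eigenspaces to get $\varphi = \lambda\,\ID_V$ with $\lambda>0$ for nonzero $f$, and rescale by $\lambda^{-1/2}$ to obtain a unitary isomorphism. The only cosmetic difference is that you establish bijectivity up front via the kernel/image analysis, whereas the paper derives injectivity from the isometry identity and surjectivity at the end from the closedness and invariance of $f(V)$.
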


\begin{proof}
For this proof, we follow the exposition of~\citet{taoPeterWeyl}. We thank Terrence Tao for confirming in the discussion below his blogpost that this lemma can also be proven over the real numbers.

Let $f^*: V' \to V$ be the adjoint of $f$, which is also an intertwiner by Lemma \ref{adjoints_of_intertwiners}. Now, set $\varphi \coloneqq f^* \circ f: V \to V$. As a composition of intertwiners, $\varphi$ is also an intertwiner. Furthermore, for arbitrary composable continuous linear functions between Hilbert spaces one always has $(g \circ h)^* = h^* \circ g^*$ and $\left(g^*\right)^* = g$, which easily follows from the definition and uniqueness of adjoints. Consequently, we have 
\begin{equation*}
\varphi^* = \left( f^* \circ f \right)^* = f^* \circ \left( f^*\right)^* = f^* \circ f = \varphi,
\end{equation*} 
and so $\varphi$ is self-adjoint. Thus, $\left\langle \varphi(v) \middle| w\right\rangle = \left\langle v \middle| \varphi(w) \right\rangle$ for all $v, w \in V$, from which we conclude that the matrix of $\varphi$ corresponding to any orthonormal basis of $V$ is Hermitian or, if $\K = \R$, even symmetric. Such an orthonormal basis exists by Proposition \ref{gram_schmidt}. From the Spectral Theorem for Hermitian or symmetric matrices~\citep{symmetric_matrices} we conclude that $\varphi$ is unitarily (or for real matrices: orthogonally) diagonalizable with only real eigenvalues. Thus, there is an orthogonal decomposition of $V$ into eigenspaces: $V = \bigoplus_{\lambda \text{ eigenvalue}} E_{\lambda}(\varphi)$. 

Let $E_{\lambda}(\varphi)$ be any eigenspace. We now claim that it is an invariant subspace of $\rho$. Indeed, for all $g \in G$ and $v \in E_{\lambda}(\varphi)$ we have since $\varphi$ is an intertwiner:
\begin{equation*}
\varphi(\rho(g)(v)) = \rho(g)(\varphi(v)) = \rho(g)(\lambda v) = \lambda \rho(g)(v).
\end{equation*}
Since $V$ is finite-dimensional, $E_{\lambda}(\varphi)$ is topologically closed by Proposition \ref{completeness_finite_dim}, and since $V$ is irreducible, we necessarily have $E_{\lambda}(\varphi) = 0$ or $E_{\lambda}(\varphi) = V$. Since not all eigenspaces can be zero, we conclude that there is an eigenvalue $\lambda$ with $E_{\lambda}(\varphi) = V$, meaning $\varphi = \lambda \ID_{V}$.

Assume $f \neq 0$. We now claim that $\lambda > 0$. Indeed, note that for all $v \in V$ we have
\begin{align*}
\lambda \|v\|^2 & = \left\langle \varphi(v) \middle| v\right\rangle \\
& = \left\langle f^*\circ f(v) \middle| v\right\rangle \\
& = \left\langle f(v) \middle| f(v) \right\rangle \\
& = \|f(v)\|^2.
\end{align*}
Thus, if $v \in V$ is any vector with $f(v) \neq 0$, then we obtain $\lambda = \pig( \frac{\|f(v)\|}{\|v\|}\pig) ^2 > 0$.

Now define $g: V \to V'$ as $g = \lambda^{-\frac{1}{2}} f$. $g$ is clearly still an intertwiner. We can also show it is an isometry:
\begin{align*}
\left\langle g(v) \middle| g(w) \right\rangle & = \lambda^{-1} \left\langle f(v) \middle| f(w) \right\rangle \\
& = \lambda^{-1} \left\langle \varphi(v) \middle| w \right\rangle \\
& = \lambda^{-1} \lambda \left\langle v \middle| w \right\rangle \\
& = \left\langle v \middle| w \right\rangle.
\end{align*}
Note that since $V'$ is irreducible and $f(V) \subseteq V'$ topologically closed due to $V$ being finite-dimensional, we necessarily have that $f$ is surjective. Thus, we have shown that $\mu f$ with $\mu \coloneqq \lambda^{-\frac{1}{2}}$ is an isomorphism of unitary representations.
\end{proof}

\begin{pro}[Schur's Orthogonality]\label{schur_orthog}
Let $\rho: G \to \U{V}$ and $\rho': G \to \U{V'}$ be nonisomorphic irreducible unitary representations of the compact group $G$, of which at least one is finite-dimensional. Let $\rho^{uv}$ and $\rho'^{u'v'}$ be matrix coefficients of them, which are functions in $\Ltwo{\K}{G}$ due to their continuity. Then they are orthogonal, i.e., $\left\langle \rho^{uv} \middle| \rho'^{u'v'}\right\rangle = 0$. 
\end{pro}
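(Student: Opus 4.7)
The plan is to use the classical averaging trick: starting from an arbitrary linear map $T:V\to V'$, build an intertwiner by integrating its conjugates by $\rho$ and $\rho'$ over $G$, invoke the version of Schur's Lemma just proved to force this intertwiner to vanish, and then recover orthogonality of matrix coefficients by choosing $T$ to be a suitable rank-one operator. In more detail, I would first unfold the definitions to rewrite
\begin{equation*}
\left\langle \rho^{uv} \,\middle|\, \rho'^{u'v'}\right\rangle \ =\ \int_G \left\langle u \,\middle|\, \rho(g)v\right\rangle \cdot \overline{\left\langle u' \,\middle|\, \rho'(g)v'\right\rangle}\, dg.
\end{equation*}

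Next, assume without loss of generality that $V$ is finite-dimensional (the other case is handled below). For any fixed $u\in V$ and $v'\in V'$, set $T:V\to V'$, $T(w) := \left\langle u\,\middle|\,w\right\rangle\cdot v'$, and define
\begin{equation*}
\tilde T \ :=\ \int_G \rho'(g)^{-1}\,T\,\rho(g)\,dg,
\end{equation*}
understood entrywise with respect to a fixed orthonormal basis of $V$. Existence and continuity of $\tilde T$ are immediate because $V$ is finite-dimensional, $\rho'$ is unitary hence bounded, and the integrand is continuous on the compact group~$G$. A direct computation using the left-invariance of the Haar measure (substituting $g\mapsto hg$) shows $\rho'(h)\,\tilde T = \tilde T\,\rho(h)$ for every $h\in G$, so $\tilde T$ is an intertwiner. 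Since $V$ is finite-dimensional and $\rho,\rho'$ are irreducible and nonisomorphic, Lemma~\ref{schur_unitary} forces $\tilde T = 0$ (otherwise some positive scalar multiple of $\tilde T$ would give an isomorphism $\rho\cong\rho'$).

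The final step is to read off Schur orthogonality by pairing $\tilde T(v)$ against $v'$: a short calculation gives
\begin{equation*}
\left\langle v' \,\middle|\, \tilde T(v)\right\rangle \ =\ \int_G \left\langle u\,\middle|\,\rho(g)v\right\rangle\,\overline{\left\langle u'\,\middle|\,\rho'(g)v'\right\rangle}\, dg \ =\ \left\langle \rho^{uv}\,\middle|\, \rho'^{u'v'}\right\rangle,
\end{equation*}
where I use unitarity of $\rho'$ in the form $\rho'(g)^{-1}=\rho'(g)^*$. Vanishing of $\tilde T$ now yields the claim. For the remaining case where $V'$ is finite-dimensional but $V$ is not, I would instead apply Lemma~\ref{adjoints_of_intertwiners} to pass to $\tilde T^{*}:V'\to V$, which is again an intertwiner between the two nonisomorphic irreducibles; its domain $V'$ is now finite-dimensional, so Schur's Lemma again forces $\tilde T^{*}=0$, hence $\tilde T=0$, and the same pairing argument concludes the proof.

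The main obstacle is a technical rather than conceptual one: making the averaged operator $\tilde T$ well-defined and genuinely bounded when one of the representation spaces is infinite-dimensional. Restricting the construction to the finite-dimensional side — directly when $V$ is finite-dimensional, and via the adjoint when $V'$ is — is exactly what allows us to invoke Lemma~\ref{schur_unitary} as stated. A secondary point requiring care is the sign/conjugation bookkeeping, since our convention makes the scalar product antilinear in the first slot and the matrix coefficient $\rho^{uv}$ carries an extra complex conjugation; both must be tracked consistently to ensure that the pairing $\left\langle v'\mid\tilde T(v)\right\rangle$ reproduces the desired $L^2$-inner product exactly rather than a twisted variant.
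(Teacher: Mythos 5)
Your proposal is the same averaging argument the paper uses: conjugate a rank-one operator by the two representations, integrate over $G$ to get an intertwiner, kill it with Schur's Lemma, and read off the matrix-coefficient integral by pairing. (The paper averages in the direction $V'\to V$ and assumes $V'$ finite-dimensional; you go the other way and handle the opposite finiteness case via Lemma~\ref{adjoints_of_intertwiners}, which is a perfectly valid alternative to the paper's ``w.l.o.g.''.) The one concrete flaw is your choice of rank-one operator: with $T(w)=\left\langle u\middle| w\right\rangle v'$ the vector $u'$ never enters the construction, so the pairing $\left\langle v'\middle|\tilde T(v)\right\rangle$ computes $\int_G \left\langle u\middle|\rho(g)v\right\rangle\,\overline{\left\langle v'\middle|\rho'(g)v'\right\rangle}\,dg=\left\langle \rho^{uv}\middle|\rho'^{\,v'v'}\right\rangle$, not $\left\langle \rho^{uv}\middle|\rho'^{\,u'v'}\right\rangle$; the displayed identity in your last step is false as written. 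The repair is immediate: take $T(w):=\left\langle u\middle| w\right\rangle u'$, so that
\begin{equation*}
\left\langle v'\middle|\tilde T(v)\right\rangle=\int_G \left\langle u\middle|\rho(g)v\right\rangle\left\langle v'\middle|\rho'(g)^{-1}u'\right\rangle dg=\int_G \left\langle u\middle|\rho(g)v\right\rangle\,\overline{\left\langle u'\middle|\rho'(g)v'\right\rangle}\,dg,
\end{equation*}
which is exactly the target (this mirrors the paper's choice $l(w')=\left\langle v'\middle| w'\right\rangle v$ on the other side). Also note that your intertwining check needs the substitution $g\mapsto gh$ (right-invariance) rather than $g\mapsto hg$ for the convention $\tilde T=\int_G\rho'(g)^{-1}T\rho(g)\,dg$; this is harmless since the Haar measure on a compact group is bi-invariant, as the paper records.
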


\begin{proof}

Without loss of generality, we can assume $V'$ to be finite-dimensional. Assume that $l: V' \to V$ is \emph{any} linear function. We can associate to it the function $f: V' \to V$ given by
\begin{equation*}
f(w') \coloneqq \int_G \rho(g) l \rho'(g)^{-1} w' dg.
\end{equation*}
For all $h \in G$ we have
\begin{align*}
\rho(h) f \rho'(h)^{-1} & = \int_G \rho(h) \rho(g) l \rho'(g)^{-1} \rho'(h)^{-1} dg \\
& = \int_G \rho(hg) l \rho'(hg)^{-1} dg \\
& = \int_G \rho(g) l \rho'(g)^{-1} dg \\
& = f,
\end{align*}
and thus $\rho(h) f = f \rho'(h)$, which means that $f$ is an intertwiner. In this derivation, $\rho(h)$ could be put insight the integral since $\rho(h)$ is continuous and an integral is a limit over finite sums, which commutes with the continuous $\rho(h)$. By Schur's Lemma \ref{schur_unitary}, we necessarily have $f = 0$. Now look at the specific linear function $l: V' \to V$ given by $l(w') \coloneqq \left\langle v' \middle| w' \right\rangle v$ with the fixed vectors $v, v'$ corresponding to the matrix coefficients. We obtain $f = 0$, for $f$ defined as before, and thus:
\begin{align*}
0 = \left\langle u \middle| f(u') \right\rangle & = 
\Big\langle  u \Big| \int_G \rho(g) l \rho'(g)^{-1} (u') dg \Big\rangle \\
& = \int_G \left\langle  u  \middle| \rho(g) l \rho'(g)^{-1}(u') \right\rangle dg \\
& = \int_G \pig\langle u \pig| \rho(g) \left[\left\langle v' \middle| \rho'(g)^{-1}(u')\right\rangle v\right] \pig\rangle dg \\
& = \int_G  \left\langle  u \middle| \rho(g) (v)  \right\rangle \cdot \left\langle v'\middle| \rho'(g)^{-1}(u')\right\rangle dg \\
& = \int_G \overline{\overline{\left\langle  u \middle| \rho(g) (v) \right\rangle}} \cdot  \overline{\left\langle u' \middle| \rho'(g)(v')\right\rangle} dg \\
& = \int_G \overline{\rho^{uv}(g)} \rho'^{u'v'}(g) dg \\
& = \big\langle \rho^{uv} \big| \rho'^{u'v'} \big\rangle
\end{align*}
In this derivation, the integral could be put out of the scalar product since the scalar product is continuous, see Proposition \ref{scalar_product_continuous}, and since integrals are certain limits over finite sums, with which the scalar product commutes.
\end{proof}

Note that there are more general Schur's orthogonality relations in the case that $\K = \C$, see~\citet{knapp}, Corollary $4.10$. These then engage with the matrix coefficients of one and the same representation. This, together with a version of Schur's lemma that only holds over $\C$ leads to the strengthening of the Peter-Weyl theorem that shows that the multiplicities $n_l$ are given by $\dim{V_l}$.

\begin{pro}\label{irreps_finite_dim}
All irreducible unitary representations of a compact group $G$ are finite-dimensional.
\end{pro}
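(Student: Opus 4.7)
The plan is to prove this by constructing a nonzero compact self-adjoint intertwiner on $V$ and then applying the spectral theorem together with Schur's Lemma. Concretely, pick any unit vector $v \in V$ and let $P_v : V \to V$ denote the rank-one self-adjoint operator $P_v(w) = \left\langle v \middle| w \right\rangle v$. First I would form the averaged operator
\begin{equation*}
    T(w) \ \coloneqq\ \int_G \rho(g)\, P_v\, \rho(g)^{-1}(w)\ dg,
\end{equation*}
where the integral is understood in the strong (Bochner) sense, which exists because $g \mapsto \rho(g) P_v \rho(g)^{-1}(w)$ is continuous and $G$ is compact.

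Next I would verify three properties of $T$. (i) $T$ is an intertwiner: a direct computation using left-invariance of the Haar measure yields $\rho(h) T = T \rho(h)$ for all $h \in G$, exactly as in the proof of Schur's orthogonality \ref{schur_orthog}. (ii) $T$ is self-adjoint: $P_v$ is self-adjoint and $\rho(g)^{-1} = \rho(g)^*$ by unitarity, so each integrand is self-adjoint, and this property survives the integration. (iii) $T$ is nonzero: one computes
\begin{equation*}
    \left\langle v \middle| T(v) \right\rangle \ =\ \int_G \left|\left\langle v \middle| \rho(g)^{-1}(v) \right\rangle\right|^2 dg \ >\ 0,
\end{equation*}
because the integrand equals $1$ at $g = e$ and is continuous and nonnegative.

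The crucial step is showing that $T$ is a \emph{compact} operator. Each $\rho(g) P_v \rho(g)^{-1}$ is of rank one, hence compact, and one approximates $T$ in operator norm by finite Riemann-type sums of such rank-one operators, using the uniform continuity of $g \mapsto \rho(g) P_v \rho(g)^{-1}$ on the compact group $G$. Since the space of compact operators is norm-closed in the space of bounded operators on a Hilbert space, the limit $T$ is compact. I expect this compactness argument, specifically the justification of uniform operator-norm continuity of $g \mapsto \rho(g) P_v \rho(g)^{-1}$, to be the main technical obstacle; it is however a standard fact for strongly continuous unitary representations of compact groups on a fixed finite-rank range.

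Finally, I would finish via the spectral theorem for compact self-adjoint operators (valid over both $\R$ and $\C$): $T$ admits a nonzero real eigenvalue $\lambda$ with finite-dimensional eigenspace $E_\lambda(T) \subseteq V$. Because $T$ commutes with $\rho$, $E_\lambda(T)$ is $\rho$-invariant, and being finite-dimensional it is automatically closed by Proposition \ref{completeness_finite_dim}. Since $\rho$ is irreducible and $E_\lambda(T) \neq 0$, we conclude $E_\lambda(T) = V$, which forces $\dim V < \infty$, as desired.
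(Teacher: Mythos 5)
Your proof is correct, but it takes a genuinely different route from the paper. The paper argues by contradiction using two pieces of machinery it has already set up: Schur's orthogonality (Proposition \ref{schur_orthog}), which makes the matrix coefficients of a hypothetical infinite-dimensional irrep orthogonal to those of every finite-dimensional irrep, and the density of the latter in $\Ltwo{\K}{G}$ (Theorem \ref{matrix coefficients dense}); together these force the matrix coefficients, and hence $\rho$ itself, to vanish. Your argument instead averages the rank-one projection $P_v$ over $G$ to produce a nonzero, self-adjoint, \emph{compact} intertwiner $T$, and then lets the spectral theorem plus irreducibility force $V = E_\lambda(T)$ to be finite-dimensional. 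The trade-off: the paper's version is a two-line corollary of results it needs anyway, but it leans on the density theorem, which is the analytic heart of Peter--Weyl and is cited from the literature rather than proved; your version is logically upstream of that theorem (it is essentially the classical argument used to \emph{prove} Peter--Weyl in many textbooks) and so is more self-contained, at the cost of importing the spectral theory of compact self-adjoint operators, which the appendix does not otherwise develop. All the steps you flag as delicate do go through: norm-continuity of $g \mapsto \rho(g) P_v \rho(g)^{-1}$ follows from $\bigl\| \left\langle u \middle| \cdot \right\rangle u - \left\langle u' \middle| \cdot \right\rangle u' \bigr\| \leq \left( \|u\| + \|u'\| \right) \|u - u'\|$ applied to $u = \rho(g)v$, and the finite-partition Riemann-sum approximation then exhibits $T$ as a norm-limit of finite-rank operators; note also that your spectral-theorem step uses completeness of $V$, which the paper's standing convention for this chapter guarantees.
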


\begin{proof}
Assume $\rho: G \to \U{V}$ was an irreducible unitary representation on an infinite-dimensional space $V$. Let $\rho^{uv}$ be any of its matrix coefficients. By Proposition \ref{schur_orthog}, and since an infinite-dimensional representation can never be isomorphic to a finite-dimensional representation, $\rho^{uv}$ is perpendicular to all matrix coefficients of finite-dimensional irreducible unitary representations. Due to the linearity of the scalar product, $\rho^{uv}$ is perpendicular to the whole linear span of these matrix coefficients and thus to the topological closure of this span. The last step follows from the continuity of the scalar product, see Proposition \ref{scalar_product_continuous}. By Theorem \ref{matrix coefficients dense} this closure is the whole space $\Ltwo{\K}{G}$. Therefore, $\rho^{uv}$ is even perpendicular to itself, and thus $\rho^{uv} = 0$.

Overall, for arbitrary $u, v \in V$ and $g \in G$ we obtain $0 = \rho^{uv}(g) = \overline{\left\langle u \middle| \rho(g)(v) \right\rangle}$ and thus (by setting $u = \rho(g)(v)$) $\rho(g)(v) = 0$ and consequently $\rho(g) = 0$. We obtain $\rho = 0$, a contradiction. Thus infinite-dimensional irreducible unitary representations cannot exist.
\end{proof}

As a consequence, we mention that the finiteness conditions in Schur's lemma and Schur's Orthogonality were not necessary to state since all irreducible unitary representations are finite-dimensional anyway. We obtain from this and from Schur's Lemma \ref{schur_unitary} that isomorphism classes and equivalence classes of irreducible unitary representations are one and the same.

\subsubsection{A Proof of the Peter-Weyl Theorem for the Regular Representation}\label{Ending}

In this section, we engage with the Peter-Weyl theorem for the regular representation on $\Ltwo{\K}{G}$. The case of $\Ltwo{\K}{X}$ for a homogeneous space $X$ will be dealt with in Section \ref{pwX}. The core arguments in the proofs of this section are adapted from~\citet{williams_peter_weyl}.

As before, let $\widehat{G}$ be the set of isomorphism classes of irreducible representations of $G$. For $l \in \widehat{G}$ let $\rho_{l}$ be a representative for the isomorphism class $l$. Furthermore, for each $\rho_{l}: G \to \U{V_{l}}$, let $v^1_{l}, \dots, v^{\dim{V_{l}}}_{l}$ be an arbitrary orthonormal basis, which exists due to Proposition \ref{gram_schmidt} (mostly written without the superscript, i.e., as $v^1, v^2, \dots$, if the corresponding isomorphism class is clear). Denote $\rho_{l}^{ij} \coloneqq \rho_{l}^{v^iv^j}$. Remember that matrix coefficients of unitary representations are continuous by Remark \ref{matrix_coefficients_continuous}, and thus functions in $\Ltwo{\K}{G}$. Then, let $\mathcal{E}  \subseteq \Ltwo{\K}{G}$ be the linear span of the matrix coefficients of all irreducible unitary representations. In the next Lemma, we want to show that $\mathcal{E}$ is already spanned by the matrix coefficients corresponding to representatives of isomorphism classes and their orthonormal bases:

\begin{lem}
We have
\begin{equation*}
\mathcal{E} = \spann_{\K} \left\lbrace \rho_{l}^{ij} \mid l \in \widehat{G}, i,j \in \{1, \dots, \dim{V_{l}}\} \right\rbrace .
\end{equation*}
\end{lem}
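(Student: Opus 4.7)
The plan is to show the two inclusions separately. The inclusion "$\supseteq$" is immediate, since each $\rho_l^{ij}$ is by definition a matrix coefficient of an irreducible unitary representation (namely of the chosen representative $\rho_l$), and is therefore an element of $\mathcal{E}$. The content of the lemma is thus the reverse inclusion "$\subseteq$": every matrix coefficient $\rho^{uv}$ of \emph{any} irreducible unitary representation $\rho : G \to \U{V}$ can be rewritten as a $\K$-linear combination of the chosen matrix coefficients $\rho_l^{ij}$.

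The first step is to reduce from an arbitrary irrep to a representative. Given $\rho : G \to \U{V}$ irreducible, let $l \in \widehat{G}$ be its isomorphism class and pick an isomorphism of unitary representations $\varphi : V \to V_l$. Since $\varphi$ is unitary and intertwines $\rho$ with $\rho_l$, we have $\varphi \circ \rho(g) = \rho_l(g) \circ \varphi$ and $\langle u \mid \rho(g) v \rangle = \langle \varphi(u) \mid \varphi \rho(g) v \rangle = \langle \varphi(u) \mid \rho_l(g)\varphi(v) \rangle$ for all $u,v \in V$ and $g \in G$. Taking complex conjugates and using the definition of the matrix coefficient yields
\begin{equation*}
    \rho^{uv} \;=\; \rho_l^{\varphi(u)\,\varphi(v)}.
\end{equation*}
Hence it suffices to express every matrix coefficient $\rho_l^{u'v'}$ of the representative as a linear combination of the $\rho_l^{ij}$.

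For the second step, expand $u' = \sum_i \alpha_i v^i_l$ and $v' = \sum_j \beta_j v^j_l$ in the chosen orthonormal basis of $V_l$. Using antilinearity in the first and linearity in the second argument of the scalar product, and then conjugating, we obtain
\begin{equation*}
    \rho_l^{u'v'}(g) \;=\; \overline{\left\langle u' \middle| \rho_l(g) v'\right\rangle} \;=\; \overline{\sum_{i,j} \overline{\alpha_i}\, \beta_j \left\langle v^i_l \middle| \rho_l(g) v^j_l \right\rangle} \;=\; \sum_{i,j} \alpha_i\, \overline{\beta_j}\, \rho_l^{ij}(g),
\end{equation*}
so $\rho_l^{u'v'} = \sum_{i,j} \alpha_i \overline{\beta_j}\, \rho_l^{ij}$ lies in the span on the right-hand side. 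Composing the two steps, every generator $\rho^{uv}$ of $\mathcal{E}$ is in the span, and the inclusion "$\subseteq$" follows by linearity.

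This proof is essentially bookkeeping; the only point that needs a little care is the placement of complex conjugates, which is dictated by the convention (antilinear in the first argument) used throughout the paper. No deep result is invoked beyond the existence of an orthonormal basis (Proposition \ref{gram_schmidt}) and the standard fact that a unitary intertwiner preserves scalar products, so there is no genuine obstacle.
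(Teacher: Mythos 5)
Your proof is correct and follows essentially the same two-step route as the paper: first transport a matrix coefficient of an arbitrary irrep to one of the chosen representative via the unitary intertwiner (the paper phrases this through the adjoint $f^*$, you through preservation of the scalar product, which is the same fact), then expand the vectors in the orthonormal basis and track the conjugates. The placement of conjugates matches the paper's convention, so nothing is missing.
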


\begin{proof}
First, we show that isomorphic representations don't add distinct matrix coefficients. Thus, let $\rho \cong \rho_{l}$ and let $f: V \to V_{l}$ be the corresponding isomorphism. Then we have $\rho_{l}(g) \circ f = f \circ \rho(g)$ and thus, since $f$ is a unitary transformation, $\rho(g) = f^{*} \circ \rho_{l}(g) \circ f$, for all $g \in G$, see Proposition \ref{adjoints_of_unitary}. Now let $u, v \in V$ be arbitrary. We obtain
\begin{align*}
\rho^{uv}(g) &= \overline{\left\langle u \middle| \rho(g)(v) \right\rangle} \\
& = \overline{\left\langle u \middle| f^* \rho_{l}(g) f (v)\right\rangle} \\
& = \overline{\left\langle f(u) \middle| \rho_{l}(g) (f(v))\right\rangle} \\
& = \rho_{l}^{f(u)f(v)}(g),
\end{align*}
which proves the first claim. Now we want to show that we only need to consider the $\rho_{l}^{ij}$. Thus, let $u, v \in V_{l}$ be arbitrary. They allow for linear combinations
\begin{equation*}
u = \sum\nolimits_{i}\lambda^i v^i, \ v = \sum\nolimits_{i} \mu^i v^i
\end{equation*}
with coefficients $\lambda^i, \mu^i \in \K$. We obtain:
\begin{align*}
\rho_{l}^{uv}(g) & = \overline{\left\langle u \middle| \rho_{l}(g)(v)\right\rangle} \\
& = \sum\nolimits_{i} \sum\nolimits_{j} \lambda^i \overline{\mu^j} \cdot \overline{\left\langle v^i \middle| \rho_{l}(g)(v^j)\right\rangle} \\
& = \left( \sum\nolimits_{i} \sum\nolimits_{j} \lambda^i \overline{\mu^j} \rho_{l}^{ij}\right)(g),
\end{align*}
thus showing that $\rho_{l}^{uv}$ is in the linear span of the matrix coefficients corresponding to the orthonormal basis. This concludes the proof.
\end{proof}

For an isomorphism class $l \in \widehat{G}$, let $\mathcal{E}_{l} \coloneqq \spann \left\lbrace \rho_{l}^{ij} \mid i, j \in \{1, \dots, \dim {V_{l}}\} \right\rbrace \subseteq \Ltwo{\K}{G}$ be the linear subspace of $\mathcal{E}$ generated by matrix coefficients corresponding to $l$. Let furthermore for all $j$ the space $\mathcal{E}_{l}^j \subseteq \mathcal{E}_{l}$ be the subspace generated by all $\rho_{l}^{ij}$ for $i \in \{1, \dots, \dim{V_{l}}\}$. In the next lemma, we prove that these are actually closed subrepresentations of the regular representation.

\begin{lem}\label{subrepresentation}
For $j \in \{1, \dots, \dim{V_{l}}\}$, $\mathcal{E}_{l}^{j}$ is a closed invariant subspace of $\Ltwo{\K}{G}$. In particular, $\mathcal{E}_{l}$ is a closed invariant subspace of $\Ltwo{\K}{G}$.
\end{lem}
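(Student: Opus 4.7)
The plan is to prove both assertions by elementary direct computation. For closedness, observe that $\mathcal{E}_l^j = \spann_\K\{\rho_l^{ij} : i = 1, \dots, \dim{V_l}\}$ is by construction finite-dimensional, and every finite-dimensional subspace of a Hilbert space is complete and therefore closed by Proposition~\ref{completeness_finite_dim}. The same reasoning applies to $\mathcal{E}_l = \sum_j \mathcal{E}_l^j$, which is spanned by the finitely many vectors $\{\rho_l^{ij}\}_{i,j}$ and hence is itself finite-dimensional.

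For $\lambda$-invariance of $\mathcal{E}_l^j$, I would fix $h \in G$ and a basis matrix coefficient $\rho_l^{ij}$, then unfold the definitions of the regular representation and of matrix coefficients:
\begin{equation*}
\bigl[\lambda(h)(\rho_l^{ij})\bigr](g') \;=\; \rho_l^{ij}(h^{-1}g') \;=\; \overline{\bigl\langle v^i \,\big|\, \rho_l(h^{-1})\,\rho_l(g')(v^j)\bigr\rangle}.
\end{equation*}
The key step is to move $\rho_l(h^{-1})$ across the scalar product via its adjoint. Because $\rho_l$ is unitary, $\rho_l(h^{-1})^* = \rho_l(h)$ (see Proposition~\ref{adjoints_of_unitary}), turning the right-hand side into $\overline{\langle \rho_l(h)(v^i) \mid \rho_l(g')(v^j)\rangle}$. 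Expanding $\rho_l(h)(v^i) = \sum_k \langle v^k \mid \rho_l(h)(v^i)\rangle\, v^k$ in the orthonormal basis $\{v^k\}_k$ and pulling the conjugate back inside yields
\begin{equation*}
\bigl[\lambda(h)(\rho_l^{ij})\bigr](g') \;=\; \sum\nolimits_k \bigl\langle v^k \,\big|\, \rho_l(h)(v^i)\bigr\rangle \cdot \rho_l^{kj}(g'),
\end{equation*}
which exhibits $\lambda(h)(\rho_l^{ij})$ as an element of $\mathcal{E}_l^j$: the index $i$ mixes with $h$-dependent coefficients, but the second index $j$ is preserved. Extending by linearity gives $\lambda(h)(\mathcal{E}_l^j) \subseteq \mathcal{E}_l^j$, and $\mathcal{E}_l = \sum_j \mathcal{E}_l^j$ is invariant as a finite sum of invariant subspaces.

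I do not anticipate a real obstacle here, as the proof is pure bookkeeping once one writes down the adjoint identity $\rho_l(h^{-1})^* = \rho_l(h)$. The only mild subtlety is correctly tracking the complex conjugations coming from the convention that the scalar product is antilinear in the first argument, which is precisely what forces the index $j$ (not $i$) to be the invariant label of the subspace $\mathcal{E}_l^j$. This observation also foreshadows why each $\mathcal{E}_l^j$ is the natural place to realize a copy of the representation $\rho_l$ inside the regular representation.
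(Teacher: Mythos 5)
Your proposal is correct and follows essentially the same route as the paper: closedness via finite-dimensionality (Proposition~\ref{completeness_finite_dim}), and invariance by moving $\rho_l(h^{-1})$ across the scalar product using unitarity and expanding in the orthonormal basis to exhibit $\lambda(h)(\rho_l^{ij})$ as a $g'$-independent linear combination of the $\rho_l^{kj}$. The paper merely rewrites your coefficients $\langle v^k \mid \rho_l(h)(v^i)\rangle$ as matrix coefficients $\rho_l^{ik}(h^{-1})$, which is cosmetic.
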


\begin{proof}
Closedness follows immediately since this space is finite-dimensional and thus complete, see Proposition \ref{completeness_finite_dim}. We need to show that $\lam{g} \rho^{ij}_{l} \in \mathcal{E}^{j}_{l}$ for all $g \in G$ and all $i, j$. We can compute this directly:
\begin{align*}
\left( \lam{g} \rho^{ij}_{l}\right)(g') & = \rho_{l}^{ij}(g^{-1}g') \\
& = \overline{\left\langle v^i \middle| \rho_{l}(g^{-1}g')(v^j)\right\rangle} \\
& = \overline{\left\langle \rho_{l}(g)(v^i) \middle| \rho_{l}(g')(v^j)\right\rangle} \\
& = \overline{\pig\langle \sum\nolimits_{i'} \left\langle v^{i'} \middle| \rho_{l}(g)(v^i)\right\rangle v^{i'} \pig| \rho_{l}(g')(v^j) \pig\rangle} \\
& = \sum\nolimits_{i'} \langle v^{i'} | \rho_{l}(g)(v^i)\rangle \cdot \overline{\left\langle  v^{i'} \middle| \rho_{l}(g')(v^j) \right\rangle} \\
& = \sum\nolimits_{i'} \overline{\left\langle v^i \middle| \rho_{l}(g^{-1})(v^{i'})\right\rangle} \rho_{l}^{i'j}(g') \\
& = \pig( \sum\nolimits_{i'} \rho_{l}^{ii'}(g^{-1}) \rho^{i'j}_{l}\pig) (g')
\end{align*}
where the coefficients $\rho^{ii'}_{l}(g^{-1})$ do not depend on $g'$. Consequently, $\lam{g} \rho_{l}^{ij} \in \mathcal{E}^{j}_{l}$.
\end{proof}

\begin{lem}\label{surjection_from_irrep}
Let $\rho: G \to \U{V}$ and $\rho': G \to \U{V'}$ be unitary representations, $\rho$ being irreducible and $V' \neq 0$. Furthermore, assume that $f: V \to V'$ is a surjective intertwiner. Then $V'$ is also irreducible and $f$ an equivalence.
\end{lem}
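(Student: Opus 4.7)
The plan is to show first that $f$ is injective, whence bijective, and then transfer irreducibility from $V$ to $V'$ via this bijection. The whole lemma should be a straightforward consequence of irreducibility applied to kernels and preimages of closed invariant subspaces, without any deeper representation-theoretic input beyond what has already been developed.

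For injectivity, I would consider the null space $\Null{f} \subseteq V$. Since $f$ is continuous (this is part of our definition of an intertwiner), $\Null{f}$ is topologically closed. It is also invariant: for $v \in \Null{f}$ and $g \in G$ one has $f(\rho(g)v) = \rho'(g)f(v) = 0$. Irreducibility of $\rho$ then forces $\Null{f} \in \{0, V\}$, and the case $\Null{f} = V$ is excluded because $f$ is surjective onto $V' \neq 0$. Hence $f$ is bijective. To promote this to an equivalence, I would invoke Proposition \ref{irreps_finite_dim} to conclude that $V$ is finite-dimensional, so $V'$ is finite-dimensional of the same dimension, which makes the set-theoretic inverse $f^{-1}$ automatically linear and continuous. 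That $f^{-1}$ intertwines $\rho'$ with $\rho$ follows by composing the intertwining relation $f \circ \rho(g) = \rho'(g) \circ f$ with $f^{-1}$ on both sides.

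For the irreducibility of $V'$, I would pull closed invariant subspaces back through $f$. Given a closed invariant subspace $W' \subseteq V'$, the preimage $f^{-1}(W')$ is a linear subspace of $V$, closed by continuity of $f$, and invariant because $f(\rho(g)v) = \rho'(g)f(v) \in W'$ whenever $f(v) \in W'$. Irreducibility of $\rho$ yields $f^{-1}(W') \in \{0, V\}$, and applying the bijection $f$ to both cases gives $W' \in \{0, V'\}$.

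Since each step above is essentially formal, I do not anticipate any real obstacle; the only point that is not purely algebraic is the continuity of $f^{-1}$, which is handled for free by the finite-dimensionality guaranteed by Proposition \ref{irreps_finite_dim}.
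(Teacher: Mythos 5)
Your proof is correct, and it reaches the same two conclusions as the paper's but by a somewhat different route. The irreducibility of $V'$ is handled identically in spirit: both arguments pull a closed invariant subspace of $V'$ back through $f$ and apply irreducibility of $V$ (the paper phrases this as a proof by contradiction and justifies closedness of the preimage via finite-dimensionality of $V$, Propositions \ref{irreps_finite_dim} and \ref{completeness_finite_dim}, whereas you use continuity of $f$ directly; both are valid since intertwiners are continuous by definition here). The real divergence is in how $f$ is shown to be an equivalence. The paper first establishes irreducibility of $V'$ and then simply cites Schur's Lemma \ref{schur_unitary} — note that this ordering is forced, since that lemma requires \emph{both} representations to be irreducible. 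You instead prove injectivity from scratch by observing that $\Null{f}$ is a closed invariant subspace of the irreducible $V$, and then obtain continuity and equivariance of $f^{-1}$ from finite-dimensionality. Your route is more elementary and self-contained: it only uses the "kernel half" of Schur-type reasoning and avoids the adjoint/spectral-theorem machinery underlying Lemma \ref{schur_unitary}, at the cost of a few extra lines; the paper's route is shorter given that Schur's Lemma has already been proved. Both correctly rely on Proposition \ref{irreps_finite_dim}, which precedes this lemma, so there is no circularity in either argument.
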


\begin{proof}
Assume by contradiction that $V'$ is reducible. Thus, there is a nontrivial closed invariant subspace $0 \subsetneq W \subsetneq V'$. Now the following can easily be checked:
\begin{enumerate}
\item $0 \subsetneq f^{-1}(W) \subsetneq V$.
\item $f^{-1}(W)$ is an invariant subspace of $V$.
\item $f^{-1}(W)$ is a closed subset of $V$.
\end{enumerate}
Once we have this, we have a contradiction to the fact that $V$ is irreducible.

$1$ and $2$ can be checked by the reader, and $3$ follows since $V$ is, as an irreducible representation, finite-dimensional by Proposition \ref{irreps_finite_dim} and thus every subspace is closed by Proposition \ref{completeness_finite_dim}.

Therefore, we know that $V'$ is irreducible. Now use Schur's Lemma \ref{schur_unitary} to conclude that $f$, being nonzero, necessarily is an equivalence. 
\end{proof}

\begin{pro}\label{equivalence_unitary_reps}
There is an equivalence of representations $f_{l}^j: V_{l}\to \mathcal{E}_{l}^{j}$ given on the orthonormal basis by $f_{l}^j(v^i) = \rho_{l}^{ij}$. Consequently, there is an isomorphism $V_{l} \cong \mathcal{E}_{l}^j$ of unitary representations.
\end{pro}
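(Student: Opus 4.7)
The plan is to define $f_l^j$ by linear extension of $f_l^j(v^i) := \rho_l^{ij}$, verify that it is a surjective intertwiner from the irreducible representation $V_l$ onto $\mathcal{E}_l^j$, conclude via Lemma~\ref{surjection_from_irrep} that it is an equivalence, and then upgrade equivalence to a unitary isomorphism using Schur's Lemma~\ref{schur_unitary}.

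First, $f_l^j$ is linear by construction, and continuous automatically because $V_l$ is finite-dimensional (Proposition~\ref{completeness_finite_dim}). Surjectivity is immediate, since the image contains the spanning set $\{\rho_l^{ij}\}_{i=1}^{\dim V_l}$ of $\mathcal{E}_l^j$. Also $\mathcal{E}_l^j\neq 0$: evaluating at the identity gives $\rho_l^{jj}(e)=\overline{\langle v^j|v^j\rangle}=1$, so $\rho_l^{jj}$ is not the zero function.

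The equivariance $f_l^j\circ\rho_l(g)=\lam{g}\circ f_l^j$ is the key computational step, but it is essentially already done inside Lemma~\ref{subrepresentation}. Expanding $\rho_l(g)(v^i)=\sum_{i'}\langle v^{i'}|\rho_l(g)(v^i)\rangle\, v^{i'}$ in the orthonormal basis yields
\begin{equation*}
    f_l^j\bigl(\rho_l(g)(v^i)\bigr)\ =\ \sum\nolimits_{i'} \langle v^{i'}|\rho_l(g)(v^i)\rangle\, \rho_l^{i'j}.
\end{equation*}
On the other hand, Lemma~\ref{subrepresentation} gives $\lam{g}\rho_l^{ij}=\sum_{i'}\rho_l^{ii'}(g^{-1})\,\rho_l^{i'j}$, and unitarity of $\rho_l$ together with the definition of the matrix coefficients yields
\begin{equation*}
    \rho_l^{ii'}(g^{-1})\ =\ \overline{\langle v^i|\rho_l(g^{-1})v^{i'}\rangle}\ =\ \overline{\langle\rho_l(g)v^i|v^{i'}\rangle}\ =\ \langle v^{i'}|\rho_l(g)v^i\rangle,
\end{equation*}
so both expressions coincide, i.e.\ $f_l^j$ is an intertwiner.

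At this point I would invoke Lemma~\ref{surjection_from_irrep}: since $V_l$ is irreducible, $\mathcal{E}_l^j\neq 0$, and $f_l^j$ is a surjective intertwiner, $\mathcal{E}_l^j$ is itself irreducible and $f_l^j$ is an equivalence of representations. The only remaining obstacle is that the definition of isomorphism of \emph{unitary} representations (Definition~\ref{isomorphic_unitary_reps}) requires a unitary transformation, not merely a bijective intertwiner. This is where I would apply the nontrivial half of Schur's Lemma~\ref{schur_unitary} to $f_l^j$: since it is a nonzero intertwiner between irreducible unitary representations (both of which are finite-dimensional by Proposition~\ref{irreps_finite_dim}), there is $\mu\in\R_{>0}$ such that $\mu\, f_l^j$ is a unitary isomorphism. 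This yields the isomorphism $V_l\cong\mathcal{E}_l^j$ claimed in the proposition. The main subtlety is thus conceptual rather than computational: ensuring that the natural map $f_l^j$ is promoted to an isomorphism in the stronger unitary sense via Schur's rescaling, because a priori the basis $\{\rho_l^{ij}\}_i$ of $\mathcal{E}_l^j$ need not be orthonormal under the $\Ltwo{\K}{G}$-inner product.
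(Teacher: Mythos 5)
Your proof is correct and follows essentially the same route as the paper's: the same equivariance computation built on Lemma~\ref{subrepresentation}, then Lemma~\ref{surjection_from_irrep} for the equivalence, then Schur's Lemma~\ref{schur_unitary} to upgrade to a unitary isomorphism. The only cosmetic difference is your argument that $\mathcal{E}_l^j\neq 0$ (evaluating $\rho_l^{jj}$ at the identity, versus the paper's observation that otherwise $\rho_l(g)$ would fail to be invertible), and both work.
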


\begin{proof}
We need to show that $f_{l}^j$ is equivariant. Using the result of the derivation of Lemma \ref{subrepresentation}, we compute
\begin{align*}
f^j_{l}(\rho_{l}(g)(v^i)) & = f^j_{l}\left( \sum\nolimits_{i'} \big\langle v^{i'} \big| \rho_{l}(g)(v^i)\big\rangle v^{i'}\right) \\
& = \sum\nolimits_{i'} \big\langle v^{i'} \big| \rho_{l}(g)(v^i)\big\rangle f^j_{l}(v^{i'}) \\
& = \sum\nolimits_{i'} \overline{\left\langle v^i \middle| \rho_{l}(g^{-1})(v^{i'}) \right\rangle} \rho^{i'j}_{l} \\
& = \sum\nolimits_{i'} \rho_{l}^{ii'}(g^{-1}) \rho^{i'j}_{l} \\
& = \lam{g} \rho^{ij}_{l} \\
& = \lam{g} \left(f^j_{l}(v^i)\right),
\end{align*}
so $f^j_{l} \circ \rho_{l}(g) = \lam{g} \circ f^j_{l}$ for all $g \in G$, which is what we wanted to show. That $f$ is an intertwiner also requires it to be continuous: this follows since $V_l$ is finite-dimensional, and so all linear functions on it are continuous.

Now, that $f^j_{l}$ is even an equivalence follows from Lemma \ref{surjection_from_irrep} by noting that $\mathcal{E}_{l}^j \neq 0$. Indeed, if it was zero then we would have $\rho^{ij}_{l}(g) = 0$ for all $i$, and thus $\rho(g)$ would not be invertible, in contrast that it is a unitary automorphism.

Thus, there is even an isomorphism $V_{l} \cong \mathcal{E}_{l}^j$ by Schur's Lemma \ref{schur_unitary}.
\end{proof}

\begin{lem}\label{orthogonal_complement_invariant}
Let $\rho: G \to \U{V}$ be a unitary representation. Let $V_1 \subseteq V$ be a subrepresentation. Then the orthogonal complement $V_1^{\perp}$ is a subrepresentation as well.
\end{lem}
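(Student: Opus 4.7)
The plan is straightforward and uses only the unitarity of $\rho$ together with the defining property of the orthogonal complement. First I would recall that $V_1^{\perp} = \{w \in V \mid \langle v | w \rangle = 0 \text{ for all } v \in V_1\}$ is automatically a closed linear subspace of $V$, as is standard for orthogonal complements in a Hilbert space (this follows from the continuity and linearity of $\langle v | \cdot \rangle$ for each fixed $v$, so $V_1^{\perp}$ is the intersection of closed subspaces).

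The content of the proof is thus reduced to verifying invariance. Fix $g \in G$ and $w \in V_1^{\perp}$; I must show $\rho(g)(w) \in V_1^{\perp}$, i.e., that $\langle v | \rho(g)(w) \rangle = 0$ for every $v \in V_1$. The key move is to push $\rho(g)$ across the scalar product using unitarity: since $\rho(g) \in \U{V}$, its adjoint is its inverse, which equals $\rho(g^{-1})$ because $\rho$ is a group homomorphism. This is exactly the fact noted after Definition~\ref{isomorphic_unitary_reps} (and proved in Proposition~\ref{adjoints_of_unitary}). Hence
\begin{equation*}
\langle v | \rho(g)(w) \rangle \ =\ \langle \rho(g)^{-1}(v) | w \rangle \ =\ \langle \rho(g^{-1})(v) | w \rangle.
\end{equation*}

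Now I would invoke the hypothesis that $V_1$ is a subrepresentation: by definition this means $V_1$ is stable under every $\rho(h)$, in particular under $\rho(g^{-1})$, so $\rho(g^{-1})(v) \in V_1$. Since $w \in V_1^{\perp}$, the scalar product above vanishes, establishing $\rho(g)(w) \in V_1^{\perp}$. As $g$ and $w$ were arbitrary, $V_1^{\perp}$ is an invariant subspace, and together with closedness this makes it a (closed) subrepresentation of $\rho$.

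There is no real obstacle here; the only subtlety worth flagging is making explicit that ``subrepresentation'' as used in the statement does not require closedness a priori, but $V_1^{\perp}$ will be closed regardless, so the conclusion holds in the strongest sense. The argument is identical for $\K = \R$ and $\K = \C$, since it relies only on sesquilinearity (or bilinearity) of the scalar product and the identity $\rho(g)^{*} = \rho(g)^{-1}$, both of which hold in the orthogonal case as well.
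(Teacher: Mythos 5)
Your proof is correct and follows essentially the same route as the paper: use $\rho(g)^{*} = \rho(g)^{-1} = \rho(g^{-1})$ to move the group action onto the $V_1$-side of the scalar product, then invoke invariance of $V_1$. The only cosmetic difference is which slot of the scalar product holds the element of $V_1^{\perp}$, and your extra remark on closedness is a harmless bonus since the paper's notion of subrepresentation does not require it.
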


\begin{proof}
We have $\left\langle v \middle| v_1\right\rangle = 0$ for all $v \in V_1^{\perp}$ and all $v_1 \in V_1$. Now, let $g \in G$ be arbitrary. From the unitarity of $\rho$ we obtain
\begin{equation*}
\left\langle \rho(g)(v) \middle| v_1 \right\rangle = \left\langle v \middle| \rho(g^{-1})(v_1)\right\rangle = 0.
\end{equation*}
The last step follows from $\rho(g^{-1})(v_1) \in V_1$, which holds since $V_1$ is a subrepresentation. Overall, this shows $\rho(g)(v) \in V_1^{\perp}$ as well, and so this is a subrepresentation.
\end{proof}

\begin{lem}\label{perp}
Let $\rho: G \to \U{V}$ be a finite-dimensional unitary representation. Furthermore, assume that $W_1, W_2$ are irreducible subrepresentations. If they are not isomorphic, then they are perpendicular, i.e., $\left\langle w_1 \middle| w_2\right\rangle = 0$ for all $w_1 \in W_1$ and $w_2 \in W_2$.
\end{lem}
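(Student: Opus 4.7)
The plan is to exploit Schur's Lemma \ref{schur_unitary} together with the orthogonal decomposition $V = W_2 \oplus W_2^{\perp}$, which is available because $W_2$, being a finite-dimensional subspace of $V$, is automatically closed (Proposition \ref{completeness_finite_dim}). The strategy is to turn the inner product statement we want into a statement about an intertwiner $W_1 \to W_2$, and then argue that this intertwiner must be zero.

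First I would verify that the orthogonal projection $P \colon V \to W_2$ along $W_2^{\perp}$ is itself an intertwiner, i.e., that $P \circ \rho(g) = \rho(g) \circ P$ for every $g \in G$. Since $W_2$ is a subrepresentation by assumption and $W_2^{\perp}$ is a subrepresentation by Lemma \ref{orthogonal_complement_invariant}, any $v \in V$ decomposes uniquely as $v = w + w'$ with $w \in W_2$ and $w' \in W_2^{\perp}$, and then $\rho(g)(v) = \rho(g)(w) + \rho(g)(w')$ is again the decomposition with respect to $W_2 \oplus W_2^{\perp}$. Hence $P(\rho(g)(v)) = \rho(g)(w) = \rho(g)(P(v))$, so $P$ is $G$-equivariant. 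Continuity is automatic since all spaces involved are finite-dimensional.

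Next I would restrict $P$ to $W_1$, obtaining an intertwiner $P|_{W_1} \colon W_1 \to W_2$ between irreducible unitary representations. By Schur's Lemma \ref{schur_unitary}, either $P|_{W_1} = 0$, or there exists $\mu \in \R_{>0}$ such that $\mu \cdot P|_{W_1}$ is an isomorphism of unitary representations, which would in particular give $W_1 \cong W_2$. The latter contradicts the hypothesis, so $P|_{W_1} = 0$, meaning $W_1 \subseteq \Null{P} = W_2^{\perp}$. Therefore $\langle w_1 | w_2 \rangle = 0$ for all $w_1 \in W_1$ and $w_2 \in W_2$, as desired.

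I do not anticipate a serious obstacle here: the only subtle ingredient is the equivariance of the projection, which hinges on Lemma \ref{orthogonal_complement_invariant}, and this is already available. All finiteness issues (closedness of $W_2$, continuity of $P$, applicability of Schur's Lemma) are handled by the assumption that $V$ is finite-dimensional together with Proposition \ref{completeness_finite_dim}.
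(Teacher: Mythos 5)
Your proof is correct and follows essentially the same strategy as the paper: construct the equivariant orthogonal projection onto one of the two irreducible summands, restrict it to the other, and conclude it vanishes by Schur's Lemma~\ref{schur_unitary} since the two are non-isomorphic. The only cosmetic differences are that the paper projects onto $W_1$ and restricts to $W_2$ (rather than the reverse) and establishes equivariance of the projection via the adjoint characterization $\left\langle w_1 \middle| P(v)\right\rangle = \left\langle w_1 \middle| v\right\rangle$ together with unitarity, whereas you derive it from the invariance of $W_2$ and $W_2^{\perp}$ (Lemma~\ref{orthogonal_complement_invariant}) and the uniqueness of the decomposition $V = W_2 \oplus W_2^{\perp}$ --- both routes are valid.
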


\begin{proof}
Let $P: V \to W_1$ be the orthogonal projection from $V$ to $W_1$, defined as the adjoint of the canonical inclusion $i: W_1 \to V$, i.e., defined by the property
\begin{equation*}
\left\langle w_1 \middle| P(v) \right\rangle = \left\langle i(w_1) \middle| v \right\rangle = \left\langle w_1 \middle| v \right\rangle
\end{equation*}
for all $v \in V$ and $w_1 \in W_1$, see also Proposition \ref{existence_projections}. We now show that $P$ is equivariant. For all $g \in G$, $v \in V$ and $w_1 \in W_1$ we have:
\begin{align*}
\big\langle w_1 \big| P(\rho(g)(v)) \big\rangle & = \left\langle w_1 \middle| \rho(g)(v)  \right\rangle \\
& = \left\langle \rho(g^{-1}) (w_1) \middle| v \right\rangle \\
& = \left\langle \rho(g^{-1}) (w_1) \middle| P(v) \right\rangle \\
& = \big\langle w_1 \big| \rho(g) (P(v)) \big\rangle,
\end{align*}
where we used in the third step that $W_1$ is a subrepresentation. Since this holds for all $w_1 \in W_1$, we obtain $P(\rho(g)(v)) = \rho(g) (P(v))$ by Proposition \ref{test_equality} and overall that $P$ is equivariant.

In particular, also the restriction $P|_{W_2}: W_2 \to W_1$ is equivariant. Since $W_1$ and $W_2$ are not isomorphic, we obtain by Schur's Lemma \ref{schur_unitary} that $P|_{W_2} = 0$, i.e., for all $w_1 \in W_1$ and $w_2 \in W_2$ we have $\left\langle w_1 \middle| w_2\right\rangle = \left\langle w_1 \middle| P|_{W_2}(w_2)\right\rangle = \left\langle w_1 \middle| 0\right\rangle = 0$. Thus, $W_1$ and $W_2$ are perpendicular as claimed. 
\end{proof}

\begin{pro}\label{perpendicular_direct_sum}
Let $\rho: G \to \U{V}$ be any finite-dimensional unitary representation. Then $V$ decomposes into an orthogonal direct sum
\begin{equation*}
V = \bigoplus_{i = 1}^{n} V_i
\end{equation*}
such that $V_i \subseteq V$ are irreducible subrepresentations of $\rho$.
\end{pro}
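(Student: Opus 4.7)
The plan is to proceed by induction on $\dim V$. The base case $\dim V = 0$ is handled by the empty direct sum (vacuously, no irreducible factors are needed), and if $V$ itself is irreducible we are done with $n=1$ and $V_1 = V$. So the interesting case is when $V$ is reducible, in which case I aim to peel off a single irreducible subrepresentation and apply the inductive hypothesis to its orthogonal complement.

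The first step is to locate a nonzero irreducible invariant subspace $V_1 \subseteq V$. Since $V$ is finite-dimensional, every linear subspace is closed by Proposition~\ref{completeness_finite_dim}, so invariant subspaces and closed invariant subspaces coincide. Among all nonzero invariant subspaces, choose one $V_1$ of minimal dimension (this is where finite-dimensionality is crucial). By minimality, $V_1$ contains no proper nonzero invariant subspace, so $V_1$ is irreducible.

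Next, Lemma~\ref{orthogonal_complement_invariant} tells us that the orthogonal complement $V_1^\perp$ is again a subrepresentation. Because $V$ is finite-dimensional, we have the internal orthogonal decomposition $V = V_1 \oplus V_1^\perp$, and $\rho|_{V_1^\perp}$ is itself a finite-dimensional unitary representation of strictly smaller dimension than $V$ (since $\dim V_1 \geq 1$). The inductive hypothesis then produces an orthogonal decomposition $V_1^\perp = \bigoplus_{i=2}^n V_i$ into irreducible subrepresentations. Splicing this with $V = V_1 \oplus V_1^\perp$ yields the desired $V = \bigoplus_{i=1}^n V_i$.

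The only step that could be considered a potential obstacle is ensuring that the minimal nonzero invariant subspace really exists and really is irreducible; both are immediate from finite-dimensionality and the well-definedness of $\dim$ on $\N$. Everything else is routine finite-dimensional Hilbert space theory together with the two already-established lemmas. Note that unitarity is used in an essential way: it is what makes $V_1^\perp$ invariant (via Lemma~\ref{orthogonal_complement_invariant}), which is precisely the ingredient that fails for arbitrary linear representations and forces one to pass through the unitarization of Proposition~\ref{all_linear_unitary} when applying the result.
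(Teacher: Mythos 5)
Your proof is correct and follows essentially the same route as the paper: extract one irreducible subrepresentation (you do this by minimality of dimension, the paper by iteratively descending through proper subrepresentations --- the same finite-dimensionality argument in different clothing), invoke Lemma~\ref{orthogonal_complement_invariant} for the invariance of the orthogonal complement, and induct on dimension. Your added remark that finite-dimensionality collapses the distinction between invariant and closed invariant subspaces (via Proposition~\ref{completeness_finite_dim}) is a correct and worthwhile clarification, since irreducibility is defined in the paper via closed subrepresentations.
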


\begin{proof}
Let $V_1$ be any irreducible subrepresentation of $V$: This can be obtained by noting that if $V$ is not already irreducible (in which case $V_1 = V$), then we find a nontrivial subrepresentation $0 \subsetneq W \subsetneq V$. By iteratively proceeding with $W$, we eventually need to reach an irreducible representation since $V$ is finite-dimensional.

Now, let $V_1^{\perp}$ be the orthogonal complement of $V_1$. From Lemma \ref{orthogonal_complement_invariant} we know that this is a subrepresentation of $V$. By induction on the dimension of $V$, and since $V_1^{\perp}$ has strictly smaller dimension, we can assume that $V_1^{\perp}$ already splits into an orthogonal direct sum of irreducible subrepresentations $V_1^{\perp} = \bigoplus_{i = 2}^{n} V_i$, and overall, $V = \bigoplus_{i = 1}^{n} V_i$ is the decomposition we were looking for.
\end{proof}

The following proposition will not be used now, but we make use of it later when showing that there are only finitely many basis kernels in a steerable CNN for a compact group:

\begin{pro}[Krull-Remak-Schmidt Theorem]\label{Krull-Remak-Schmidt}
In the situation of Proposition \ref{perpendicular_direct_sum}, the orthogonal direct sum decomposition is essentially unique. That is, the type and multiplicities of the irreducible direct summands is always the same.
\end{pro}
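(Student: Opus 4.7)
The plan is to extract, for each isomorphism class $l \in \widehat{G}$, an intrinsically defined subspace of $V$ (the ``$l$-isotypic component'') whose dimension determines the multiplicity of $l$. I would then show that any decomposition into irreducibles, after grouping the summands by isomorphism type, recovers exactly this intrinsic component. Since the dimension is an invariant of the subspace, the multiplicities necessarily coincide across any two decompositions.

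Concretely, I would set
\begin{equation*}
    V^{(l)} \ \coloneqq\ \sum \big\{W \subseteq V \,\big|\, W \text{ is an irreducible subrepresentation with } W \cong \rho_l \big\},
\end{equation*}
which depends only on $V$ and $l$. Given any decomposition $V = \bigoplus_{i=1}^n V_i$ from Proposition \ref{perpendicular_direct_sum}, I would group the summands by their isomorphism class, writing $A^{(l)} \coloneqq \bigoplus_{i : V_i \cong \rho_l} V_i$, so that $V = \bigoplus_{l \in \widehat{G}} A^{(l)}$ is an orthogonal direct sum with only finitely many nonzero summands. The inclusion $A^{(l)} \subseteq V^{(l)}$ is immediate from the definitions.

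For the reverse inclusion, let $W \subseteq V$ be any irreducible subrepresentation with $W \cong \rho_l$. Applying Lemma \ref{perp} to $W$ and each $V_i$ with $V_i \not\cong \rho_l$ yields $W \perp V_i$, so $W$ is perpendicular to the complementary block $\bigoplus_{l' \neq l} A^{(l')}$. Because $V$ splits as the orthogonal direct sum $A^{(l)} \oplus \bigoplus_{l' \neq l} A^{(l')}$, the orthogonal complement of $\bigoplus_{l' \neq l} A^{(l')}$ is exactly $A^{(l)}$, forcing $W \subseteq A^{(l)}$. Summing over all such $W$ gives $V^{(l)} \subseteq A^{(l)}$, and hence $V^{(l)} = A^{(l)}$.

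Consequently, $\dim V^{(l)}$ equals the multiplicity of $l$ in the chosen decomposition times $\dim V_l$. Since $V^{(l)}$ and $\dim V_l$ do not depend on the choice of decomposition, neither does the multiplicity, which yields the claim. I expect the only thing to verify carefully is the applicability of Lemma \ref{perp}, which requires $V$ to be finite-dimensional; this is guaranteed by the hypothesis of Proposition \ref{perpendicular_direct_sum}. Everything else reduces to routine linear algebra on orthogonal complements inside a finite-dimensional Hilbert space.
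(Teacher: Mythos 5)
Your proof is correct and rests on the same key ingredient as the paper's, namely Lemma \ref{perp} forcing any irreducible subrepresentation of type $l$ to be perpendicular to all summands of a different type and hence to lie inside the block $A^{(l)}$. The paper's own argument is only a two-sentence sketch that dismisses the multiplicity statement with ``dimension-reasons''; your identification of $A^{(l)}$ with the intrinsically defined isotypic component $V^{(l)}$ is exactly the right way to make that step precise, so this is a fleshed-out version of the same approach rather than a different one.
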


\begin{proof}
If one has one decomposition of $V$ in which an irreducible representation $U$ does not appear, then it cannot appear in \emph{any} decomposition since $U$ would be perpendicular to all the irreps in the decomposition of $V$ by Lemma \ref{perp} and thus zero. Therefore, the types of irreducible representations is always the same. That the multiplicities are always the same follows by the same argument and for dimension-reasons.
\end{proof}

We can now finally prove The Peter-Weyl Theorem \ref{pw} for the case that $X = G$:

\begin{proof}
By Proposition \ref{perpendicular_direct_sum} and Lemma \ref{subrepresentation} there is some orthogonal  decomposition $\mathcal{E}_{l} = \bigoplus_{i = 1}^{n_{l}} V_{li}$ into irreducible invariant subspaces. Now assume that there is an $i$ such that $V_{li} \ncong V_{l}$. By Proposition \ref{equivalence_unitary_reps} this means that $V_{li} \ncong \mathcal{E}^j_{l}$ for all $j = 1, \dots, \dim{V_{l}}$. By Lemma \ref{perp} we obtain $V_{li} \perp \mathcal{E}^j_{l}$ for all $j$ and thus, since $\sum_{j} \mathcal{E}_{l}^j = \mathcal{E}_{l}$, we obtain $V_{li} \perp \mathcal{E}_{l}$ and overall $V_{li} = 0$, a contradiction. 

Thus, the assumption was wrong and all $V_{li}$ in the orthogonal direct sum are isomorphic to $V_{l}$.

Now let $l \neq l'$ and $i, j$ be arbitrary. We have $\mathcal{E}_l \perp \mathcal{E}_{l'}$ by Proposition \ref{schur_orthog}, and thus in particular $V_{li} \perp V_{l'j}$. Furthermore, we have $n_{l} \leq \dim{V_{l}}$ since $\mathcal{E}_l = \sum_{j = 1}^{\dim{V_l}} \mathcal{E}_l^{j} = \bigoplus_{i = 1}^{n_{l}}V_{li}$, and $\dim{V_l} < \infty$ by Proposition \ref{irreps_finite_dim}.

Moreover, we have $\bigoplus_{l \in \widehat{G}} \bigoplus_{i = 1}^{n_{l}} V_{li} = \bigoplus_{l \in \widehat{G}} \mathcal{E}_{l} = \mathcal{E}$, which is topologically dense in $\Ltwo{\K}{G}$ by Theorem \ref{matrix coefficients dense}.  

Finally, that $n_l = \dim{V_{l}}$ if $\K = \C$ follows by invoking a stronger version of Schur's orthogonality than we have developed, and which works only over the complex numbers~\citep{knapp}.
\end{proof}

\subsubsection{A Proof of the Peter-Weyl Theorem for General $\Ltwo{\K}{X}$}\label{pwX}

Now let $X$ be a homogeneous space of $G$. Then, as mentioned in Section \ref{square_integrable_functions}, there is a measure $\mu$ on $X$ which is left-$G$-invariant~\citep{haar_integral} in the sense that we have for all $g \in G$ and all square-integrable functions $f \in \Ltwo{\K}{X}$:
\begin{equation*}
\int_{X} f(g \cdot x) dx = \int_{X} f(x) dx.
\end{equation*}
Furthermore, let $\pi: G \to X$ be the projection given by $g \mapsto gx^*$ for a fixed element $x^* \in X$. One important result is that there is a Fubini-like theorem for evaluation of integrals on $G$ using the invariant measure on $X$. Namely, for arbitrary $x \in X$, let $g(x) \in G$ be any lift, i.e., any element in $G$ with $\pi(g(x)) = x$. This exists since the action is transitive. Let $H \coloneqq G_{x^*} \subseteq G$ be the stabilizer subgroup. For a square-integrable function $f: G \to \K$, we can then construct the average $\av{f}: X \to \K$ by 
\begin{equation*}
\av{f}(x) \coloneqq \int_H f(g(x)h) dh,
\end{equation*}
where we integrate using the Haar-measure on $H$.\footnote{Such a Haar measure exists since $H \subseteq G$ is a topologically closed subgroup of a compact group by Proposition \ref{homeomorphism_to_group_quotient} and thus compact itself by standard topological results~\citep{topology}. Note that this measure fulfills $\mu(H) = 1$ and is thus \emph{not} the same as the restriction of the measure on $G$ to $H$.} If it is hard to understand why this is called an average, note that $X \cong G/H$, i.e., points in $X$ can be interpreted as cosets of $G$, and then the average just averages over cosets.\footnote{Here, $G/H$ is the set of equivalence classes in $G$ with respect to the equivalence relation $g \sim g'$ if $g^{-1} g' \in H$, which has a quotient topology as explained in Definition \ref{quotient_topology}. The equivalence classes are given by the cosets $gH$ for $g \in G$.}

This construction is well-defined, i.e., does not depend on the specific choice of the lift $g(x)$. Indeed, let $g(x)'$ be another lift of $x$. Then $g(x)' = g(x) h'$ for some $h' \in H$, since $H$ is the stabilizer subgroup. Consequently, using the invariance of the Haar measure, we see:
\begin{equation*}
\int_H f(g(x)' h) dh = \int_H f(g(x) h'h) dh = \int_H f(g(x) h) dh,
\end{equation*}
and thus the well-definedness of the average $\av{f}: X \to \K$. Integration of $f$ on the whole of $G$ is a ``complete'' average, and thus we can hope that averaging $\av{f}$ leads to this complete integral. This is indeed the case, i.e., $\av{f}$ is square-integrable on $X$ and one has~\citep{haar_integral}
\begin{equation}\label{averaging_trick}
\int_G f(g) dg = \int_X \av{f}(x) dx.
\end{equation}
We will use this important result later in order to see that $\Ltwo{\K}{X}$ embeds with good properties into $\Ltwo{\K}{G}$.

We now want to prove the Peter-Weyl theorem for $\Ltwo{\K}{X}$. We first present a general argument showing an orthogonal decomposition of $\Ltwo{\K}{X}$ into irreducible subspaces, and then use a specific argument to deduce that the multiplicities of irreducible subrepresentations are necessarily bounded by the multiplicities in $\Ltwo{\K}{G}$.

\begin{pro}\label{decomp_arbitrary_unit}
Let $\rho: G \to \U{V}$ be \emph{any} unitary representation. Then there is a dense subrepresentation which splits as an orthogonal direct sum of irreducible subrepresentations.
\end{pro}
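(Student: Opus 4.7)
The plan is to proceed by a Zorn's lemma argument that reduces the proposition to a single analytic step: every nonzero closed subrepresentation $U \subseteq V$ contains a nonzero finite-dimensional subrepresentation. Once this is established, Proposition~\ref{perpendicular_direct_sum} applied inside that finite-dimensional piece extracts a nonzero closed irreducible subrepresentation of $U$, which is exactly what is needed to close the Zorn argument.

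More precisely, I would consider the partially ordered set of families $\{W_i\}_{i \in I}$ of pairwise orthogonal closed irreducible subrepresentations of $V$, ordered by inclusion. The empty family lies in this set, and every chain has an upper bound given by the union of its families, so Zorn's lemma produces a maximal such family $\{W_i\}_{i \in I}$. Its algebraic orthogonal direct sum $W \coloneqq \bigoplus_{i \in I} W_i$ is itself a subrepresentation. If it were not dense in $V$, then $U \coloneqq \overline{W}^{\perp}$ would be a nonzero closed subrepresentation of $V$ by Lemma~\ref{orthogonal_complement_invariant}, and producing a closed irreducible subrepresentation inside $U$ would contradict the maximality of $\{W_i\}_{i \in I}$.

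The main obstacle is therefore to find a finite-dimensional invariant subspace inside an arbitrary nonzero closed subrepresentation $U$. Fixing any nonzero $v \in U$, I would define an operator $T \colon U \to U$ by
\begin{equation*}
T(w) \,\coloneqq\, \int_G \langle \rho(g) v \mid w \rangle\, \rho(g) v \, dg \,.
\end{equation*}
A direct computation using the left-invariance of the Haar measure together with the unitarity of $\rho$ shows that $\rho(h) \circ T = T \circ \rho(h)$ for every $h \in G$, and swapping the two arguments of the scalar product shows that $T$ is self-adjoint. The operator $T$ is compact, since it is an operator-norm limit of Riemann-type sums of the continuous family of rank-one operators $g \mapsto \langle \rho(g)v \mid \cdot \rangle\, \rho(g)v$. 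Moreover, $T$ is nonzero because
\begin{equation*}
\langle v \mid T v \rangle \,=\, \int_G |\langle \rho(g) v \mid v \rangle|^2 \, dg \,>\, 0 \,,
\end{equation*}
the integrand being continuous, nonnegative, and equal to $\|v\|^4 > 0$ at $g = e$, together with the fact that the Haar measure assigns positive measure to every nonempty open subset of $G$. The spectral theorem for compact self-adjoint operators on a Hilbert space, which is valid over both $\R$ and $\C$, then yields a nonzero real eigenvalue $\lambda$ of $T$ whose eigenspace $E_\lambda \subseteq U$ is finite-dimensional. Since $T$ commutes with $\rho$, the subspace $E_\lambda$ is a finite-dimensional subrepresentation of $U$, and Proposition~\ref{perpendicular_direct_sum} extracts a closed irreducible subrepresentation from $E_\lambda$. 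Adjoining this to $\{W_i\}_{i \in I}$ contradicts maximality, so $W$ must have been dense in $V$ to begin with.
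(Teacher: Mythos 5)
Your proof is correct, and the Zorn's lemma scaffolding (a maximal family of pairwise orthogonal closed irreducible subrepresentations, whose sum must be dense because otherwise its closure's orthogonal complement is a nonzero closed subrepresentation by Lemma~\ref{orthogonal_complement_invariant}) coincides with the paper's. Where you genuinely diverge is in the one analytic step that both arguments reduce to: producing a nonzero finite-dimensional subrepresentation inside an arbitrary nonzero closed subrepresentation $U$. The paper follows \citet{reptheory_introduction}, Corollary~5.4.2, and gets this by smoothing: it forms $\overline{\rho}(\varphi)(v) = \int_G \varphi(g)\rho(g)(v)\,dg$, observes that $\overline{\rho}^v$ is an intertwiner from $\Ltwo{\K}{G}$ to $V$, and then invokes the already-proven Peter--Weyl Theorem~\ref{pw} for the regular representation to find a finite-dimensional $E \subseteq \Ltwo{\K}{G}$ with $\overline{\rho}^v(E) \neq 0$. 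You instead construct the averaged rank-one operator $T(w) = \int_G \langle \rho(g)v \mid w\rangle\,\rho(g)v\,dg$, check that it is a nonzero compact self-adjoint intertwiner, and extract a finite-dimensional invariant eigenspace via the spectral theorem for compact self-adjoint operators. Your route is self-contained: it never uses Peter--Weyl for $\Ltwo{\K}{G}$ (indeed, it is the classical engine by which that theorem is usually proven), it works verbatim over $\R$ and $\C$ since the spectral theorem for compact symmetric operators holds on real Hilbert spaces, and it correctly uses the paper's fact that nonempty open sets have positive Haar measure to see $\langle v \mid Tv\rangle > 0$. What the paper's approach buys in exchange is brevity: having already established the density of matrix coefficients (Theorem~\ref{matrix coefficients dense}) and the decomposition of the regular representation, it can defer all the compact-operator analysis to the cited reference and keep the proof a sketch. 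Both are valid; yours trades reliance on prior machinery for an explicit, checkable argument.
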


\begin{proof}
We sketch the proof in~\citet{reptheory_introduction}, Corollary $5.4.2$. In this book, the proof is done only for the complex numbers $\C$, but it is obvious that each step carries over without any changes to arbitrary $\K \in \{\R, \C\}$. The rough steps are as follows:
\begin{enumerate}
\item From $\rho$ one builds a function $\overline{\rho}: \Ltwo{\K}{G} \to \Hom_{\K}(V, V)$, given by $\overline{\rho}(\varphi)(v) = \int_{G} \varphi(g) \rho(g)(v) dg$. This is analogous to our construction of kernel operators (special representation operators) from kernels, which we will handle in the next chapter, See Theorem \ref{steerable kernels = representation operators}.
\item Given $v \in V$ fixed, one obtains the function $\overline{\rho}^v: \Ltwo{\K}{G} \to V$, $\varphi \mapsto \overline{\rho}(\varphi)(v)$. One can check easily that this is an intertwiner.
\item For each finite-dimensional subrepresentation $E \subseteq \Ltwo{\K}{G}$, the image $\overline{\rho}^v(E) \subseteq V$ is a finite-dimensional subrepresentation of $V$. 
\item For $v \neq 0$, using analytical arguments and the Peter-Weyl theorem for $\Ltwo{\K}{G}$, one can prove that there is an $E$ such that $\overline{\rho}^v(E) \subseteq V$ is not zero. 
\end{enumerate}
Having that, one can use Proposition \ref{perpendicular_direct_sum} in order to deduce that $\overline{\rho}^v(E)$ contains an irreducible subrepresentation, and so does $V$.

With this at hand, one can proceed inductively as follows: Given an irreducible subrepresentation $V_1 \subseteq V$, one can consider the orthogonal complement $V_1^{\perp}$, which is by Lemma \ref{orthogonal_complement_invariant} again a subrepresentation of $V$. Thus, this also has, by the same argument as above, an irreducible subrepresentation $V_2$ and so on. By induction (or better: using Zorn's Lemma), one can then ``fill up'' $V$ with orthogonal irreducible subrepresentations, deducing the result.
\end{proof}

Consequently, since $\Ltwo{\K}{X}$ carries a unitary representation of $G$ by $\left[\lam{g}(\varphi)\right](x) \coloneqq \varphi(g^{-1}x)$, we can deduce that it contains a dense subrepresentation which splits as an orthogonal direct sum of irreducible subrepresentations. But we would like to know more details about this, in particular the multiplicities of the irreps. For this to work, we want to embed $\Ltwo{\K}{X}$ into $\Ltwo{\K}{G}$ and thus deduce a more specific result from the decomposition of $\Ltwo{\K}{G}$.

Let as before $x^* \in X$ be an arbitrary point and let $\pi: G \to X$ be the projection given by $\pi(g) \coloneqq gx^*$. Consider the function $\pi^{*}: \Ltwo{\K}{X} \to \Ltwo{\K}{G}$ given by $\pi^*(\varphi) \coloneqq \varphi \circ \pi$. It is unclear a priori whether this is well-defined: For example, it might be that an $f: X \to \K$ which is zero outside a measure $0$ set gets lifted to $\pi^*(f): G \to \K$ which does not have this property, and thus $\pi^*$ would not be an actual function.\footnote{Remember that functions in $\Ltwo{\K}{X}$ for any measurable space $X$ are identified if they agree outside a set of measure $0$.} Thus, we need some lemmas:

\begin{lem}\label{inverse to each other}
Let $f: X \to \K$ be square-integrable. Then we have $\av{\pi^*(f)} = f$.
\end{lem}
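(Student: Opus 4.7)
The plan is to prove the identity pointwise on $X$ by unfolding the definitions. For a fixed $x \in X$ and any lift $g(x) \in G$ with $\pi(g(x)) = g(x) \cdot x^{*} = x$, I would write
\begin{equation*}
\av{\pi^{*}(f)}(x) \,=\, \int_{H} \pi^{*}(f)(g(x) h) \, dh \,=\, \int_{H} f\big(g(x) h \cdot x^{*}\big) \, dh.
\end{equation*}
The crucial algebraic observation is that $H = G_{x^{*}}$ stabilizes $x^{*}$, so $h \cdot x^{*} = x^{*}$ for every $h \in H$ and hence $g(x) h \cdot x^{*} = g(x) \cdot x^{*} = x$. The integrand is therefore the constant $f(x)$, and the normalization $\mu(H) = 1$ of the Haar measure on $H$ collapses the integral to $\av{\pi^{*}(f)}(x) = f(x)$.

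The routine computation above is essentially the whole argument, and I expect no deep obstacle. The only genuine subtlety — and likely the reason this lemma is stated before the well-definedness of $\pi^{*}$ on $\Ltwo{\K}{X}$ has been addressed — is measure-theoretic: elements of $\Ltwo{\K}{X}$ are equivalence classes modulo null sets, whereas the manipulation above handles pointwise representatives. I would therefore interpret the statement as an equality between the pointwise function $f \circ \pi$ and its coset-average, valid for \emph{any} chosen representative of $f$. Once this pointwise identity is in hand, it becomes the key ingredient for promoting $\pi^{*}$ to a well-defined operator on equivalence classes in the next step, since two representatives of $f$ that agree outside a null set in $X$ will have lifts whose coset-averages agree everywhere, and equation~\eqref{averaging_trick} applied to their difference will then force the lifts to agree almost everywhere on $G$.
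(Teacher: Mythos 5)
Your computation is exactly the paper's proof: unfold $\av{\cdot}$ and $\pi^{*}$, use that $H=G_{x^{*}}$ stabilizes $x^{*}$ so the integrand is the constant $f(x)$, and conclude via $\mu(H)=1$. Your remark that the identity should be read pointwise for a chosen representative is also consistent with how the paper deploys this lemma to establish well-definedness of $\pi^{*}$ afterwards, so there is nothing to add.
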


\begin{proof}
Using Eq. \eqref{averaging_trick} and that $H$ is the stabilizer subgroup we compute:
\begin{align*}
\av{\pi^*(f)}(x) & = \int_H \pi^*(f)(g(x)h) dh \\
& = \int_H f(\pi(g(x)h)) dh \\
& = \int_H f(\pi(g(x))) dh \\
& = \int_H f(x) dh \\
& = f(x) \int_H 1 dh \\
& = f(x) \mu(H) \\
& = f(x).
\end{align*}
\end{proof}

\begin{lem}\label{lift of indicator}
Let $A \subseteq X$ be any measurable set. Let $\indic{A}: X \to \{0,1\} \subseteq \K$ be its indicator function. Then $\pi^{*}(\indic{A}) = \indic{\pi^{-1}(A)}$.
\end{lem}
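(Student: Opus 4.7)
The plan is to verify the claimed equality pointwise on $G$ and then address the mild measurability issue needed to interpret both sides as elements of $\Ltwo{\K}{G}$.

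For the pointwise computation, I would just unfold definitions: for any $g \in G$,
\[
\pi^*(\indic{A})(g) \,=\, \indic{A}(\pi(g)) \,=\, \indic{A}(gx^*),
\]
which by the definition of the indicator function equals $1$ if $gx^* \in A$ and $0$ otherwise. But $gx^* \in A$ is precisely the condition $g \in \pi^{-1}(A)$, so the right-hand side is $\indic{\pi^{-1}(A)}(g)$. Hence the two functions agree at every point of $G$, and in particular they agree as elements of $\Ltwo{\K}{G}$.

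The only thing one should double-check is that both sides actually live in the spaces claimed. Since $A \subseteq X$ is measurable and $X$ has finite measure (being a homogeneous space of the compact group $G$, with normalized invariant measure), $\indic{A}$ is bounded and measurable, hence in $\Ltwo{\K}{X}$. For the right-hand side to be well-defined in $\Ltwo{\K}{G}$, we need $\pi^{-1}(A)$ to be Borel measurable in $G$; this follows from continuity of $\pi: G \to X$, $g \mapsto gx^*$, since continuous maps pull back Borel sets to Borel sets. Boundedness together with compactness of $G$ then give square-integrability.

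There is essentially no obstacle: the statement is a direct unfolding of the definitions of $\pi^*$ and the indicator function, and the only nontrivial ingredient is the Borel measurability of $\pi^{-1}(A)$, which is automatic from continuity of the projection $\pi$. The lemma's role in the larger development is presumably to combine with Lemma \ref{inverse to each other} so that $\pi^*$ can be used to embed $\Ltwo{\K}{X}$ isometrically (up to normalization) into $\Ltwo{\K}{G}$ on a dense family of simple functions, which is why the identification on indicators is recorded explicitly here.
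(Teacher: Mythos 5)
Your proof is correct and matches the paper's intent: the paper dismisses this lemma with ``This can easily be checked,'' and your pointwise unfolding $\pi^*(\indic{A})(g) = \indic{A}(gx^*) = \indic{\pi^{-1}(A)}(g)$ is exactly that check. The added remark on Borel measurability of $\pi^{-1}(A)$ via continuity of $\pi$ is a sensible bit of extra care but not a departure from the paper's route.
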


\begin{proof}
This can easily be checked.
\end{proof}

\begin{lem}
Let $\varphi: X \to \K$ be zero outside a measure zero set $A$. Then $\pi^*(\varphi)$ is zero outside $\pi^{-1}(A)$ which is also a measure zero set.
\end{lem}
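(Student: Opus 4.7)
The plan is to dispatch the two assertions separately. The vanishing of $\pi^*(\varphi)$ outside $\pi^{-1}(A)$ is immediate from unpacking definitions, while the measure-zero claim reduces to the two preceding lemmas combined with the averaging identity in Eq.~\eqref{averaging_trick}.

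First, pointwise: if $g \in G \setminus \pi^{-1}(A)$ then $\pi(g) \in X \setminus A$, so $\varphi(\pi(g)) = 0$ by hypothesis, and hence $\pi^*(\varphi)(g) = \varphi(\pi(g)) = 0$. This shows that $\pi^*(\varphi)$ is zero outside $\pi^{-1}(A)$.

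Second, to see that $\pi^{-1}(A)$ has Haar measure zero in $G$, I would compute its measure by rewriting it as the integral of an indicator function, then transporting the computation to $X$ via the averaging formula:
\begin{align*}
    \mu\bigl(\pi^{-1}(A)\bigr)
    &= \int_G \indic{\pi^{-1}(A)}(g)\, dg
     = \int_G \pi^*(\indic{A})(g)\, dg \\
    &= \int_X \av{\pi^*(\indic{A})}(x)\, dx
     = \int_X \indic{A}(x)\, dx
     = \mu(A) = 0.
\end{align*}
Here the second equality uses Lemma~\ref{lift of indicator}, the third uses the averaging identity~\eqref{averaging_trick}, and the fourth applies Lemma~\ref{inverse to each other} to the function $\indic{A}$.

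The only issue I anticipate is verifying the integrability hypotheses of the cited results. Since $G$ is compact and $X$ carries a finite invariant measure (normalized so that $\mu(X) = 1$), the indicator $\indic{A}$ is bounded with finite support mass, hence lies in $\Ltwo{\K}{X}$; this is exactly what Lemma~\ref{inverse to each other} requires. Likewise $\pi^*(\indic{A}) = \indic{\pi^{-1}(A)}$ is bounded on the compact group $G$, hence square-integrable, and the averaging identity \eqref{averaging_trick} applies. No deeper argument is needed, and the result follows.
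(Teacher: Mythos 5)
Your proof is correct and follows essentially the same route as the paper: the pointwise vanishing is checked by unwinding the definition of $\pi^*$, and the measure-zero claim is established by the identical chain $\mu(\pi^{-1}(A)) = \int_G \indic{\pi^{-1}(A)} = \int_G \pi^*(\indic{A}) = \int_X \av{\pi^*(\indic{A})} = \int_X \indic{A} = \mu(A) = 0$ using Lemma~\ref{lift of indicator}, Eq.~\eqref{averaging_trick}, and Lemma~\ref{inverse to each other}. Your extra remark on integrability of the indicator functions is a harmless addition that the paper leaves implicit.
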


\begin{proof}
If $g \notin \pi^{-1}(A)$ then $\pi(g) \notin A$ and thus:
\begin{equation*}
0 = \varphi(\pi(g)) = \pi^{*}(\varphi)(g)
\end{equation*}
which proves the first statement. The second is shown as follows using both Lemmas \ref{inverse to each other} and \ref{lift of indicator} and Eq. \eqref{averaging_trick}:
\begin{align*}
\mu(\pi^{-1}(A)) & = \int_G \indic{\pi^{-1}(A)}(g)dg \\
& = \int_G \pi^{*}(\indic{A})(g) dg \\
& = \int_X \av{\pi^*(\indic{A})}(x) dx \\
& = \int_X \indic{A}(x) dx \\
& = \mu(A) \\
& = 0, 
\end{align*}
thus showing what was claimed.
\end{proof}

Thus, our concern about well-definedness as a function is invalid and we can now prove an embedding result:

\begin{pro}
$\pi^*: \Ltwo{\K}{X} \to \Ltwo{\K}{G}$ is a well-defined intertwiner and a unitary transformation, i.e., for all $\varphi, \psi \in \Ltwo{\K}{X}$ we have $\left\langle \pi^{*}(\varphi) \middle| \pi^{*}(\psi)\right\rangle_{\Ltwo{\K}{G}} = \left\langle \varphi \middle| \psi \right\rangle_{\Ltwo{\K}{X}}$.
\end{pro}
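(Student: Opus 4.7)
The plan is to derive all three claims---well-definedness into $\Ltwo{\K}{G}$, the isometry, and the intertwiner property---from two ingredients already established in the excerpt: the $G$-equivariance of the projection $\pi: G \to X$, $g \mapsto gx^*$, and the averaging formula Eq.~\eqref{averaging_trick} combined with the normalization $\mu(H) = 1$ of the Haar measure on the stabilizer subgroup $H$. Linearity of $\pi^*$ is immediate from $\pi^*(\varphi) = \varphi \circ \pi$, and the preceding lemmas already guarantee that $\pi^*$ respects the almost-everywhere equivalence relation; what remains is to check that $\pi^*(\varphi)$ actually lies in $\Ltwo{\K}{G}$, and then to upgrade this to the full inner-product identity.

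The computational heart is the following observation: for any measurable $\phi: X \to \K$, the lift $\pi^*(\phi)$ is \emph{constant along the $H$-cosets} used in the averaging construction, because for $h \in H$ one has $\pi(g(x)h) = g(x)\, h\, x^* = g(x) x^* = x$. Consequently, whenever the integrals below make sense,
\[
\av{\pi^*(\phi)}(x) \;=\; \int_H \phi(\pi(g(x)h))\, dh \;=\; \phi(x)\cdot \mu(H) \;=\; \phi(x),
\]
so by Eq.~\eqref{averaging_trick} we obtain $\int_G \pi^*(\phi)(g)\, dg = \int_X \phi(x)\, dx$. I would first apply this with $\phi(x) := |\varphi(x)|^2$ to conclude $\|\pi^*(\varphi)\|_{\Ltwo{\K}{G}}^2 = \|\varphi\|_{\Ltwo{\K}{X}}^2 < \infty$, which establishes well-definedness into $\Ltwo{\K}{G}$. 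Knowing then that both $\pi^*(\varphi), \pi^*(\psi) \in \Ltwo{\K}{G}$, I apply the same identity to $\phi := \overline{\varphi}\,\psi$, which lies in $L^1(X)$ by the Cauchy--Schwarz inequality, and use $\pi^*(\overline{\varphi}\,\psi) = \overline{\pi^*(\varphi)}\cdot \pi^*(\psi)$ to obtain
\[
\left\langle \pi^*(\varphi) \,\middle|\, \pi^*(\psi) \right\rangle_{\Ltwo{\K}{G}} \;=\; \int_G \overline{\varphi(\pi(g))}\,\psi(\pi(g))\, dg \;=\; \int_X \overline{\varphi(x)}\,\psi(x)\, dx \;=\; \left\langle \varphi \,\middle|\, \psi \right\rangle_{\Ltwo{\K}{X}},
\]
which is exactly the claimed unitary property.

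For the intertwiner property, the projection $\pi$ itself is $G$-equivariant since $\pi(g g') = g g' x^* = g\cdot \pi(g')$. A direct unfolding then gives, for any $g, g' \in G$ and $\varphi \in \Ltwo{\K}{X}$,
\[
[\pi^*(\lambda(g)\varphi)](g') \;=\; [\lambda(g)\varphi](\pi(g')) \;=\; \varphi(g^{-1}\pi(g')) \;=\; \varphi(\pi(g^{-1}g')) \;=\; [\lambda(g)\pi^*(\varphi)](g'),
\]
so $\pi^* \circ \lambda(g) = \lambda(g) \circ \pi^*$ for every $g \in G$. I expect no serious conceptual obstacle here; the delicate point is just the normalization bookkeeping. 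The entire calculation collapses cleanly only because $\mu(H) = 1$ for the \emph{normalized} Haar measure on $H$ (which, as the earlier footnote emphasizes, is \emph{not} the restriction of $\mu_G$ to $H$): this is precisely what makes the $H$-integral in the average of any lift reduce to its pointwise value on the fiber.
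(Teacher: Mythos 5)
Your proposal is correct and follows essentially the same route as the paper: it reduces everything to the averaging identity $\int_G \pi^*(\phi)\,dg = \int_X \phi\,dx$ (your coset-constancy computation is exactly the paper's Lemma on $\av{\pi^*(f)} = f$, re-derived inline), applies it to $|\varphi|^2$ and to $\overline{\varphi}\,\psi$, and verifies equivariance by the same unfolding via $\pi(g^{-1}g') = g^{-1}\pi(g')$. No substantive differences.
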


\begin{proof}
For well-definedness, we still need to show that $\pi^{*}(\varphi)$ is again square-integrable for square-integrable $\varphi: X \to \K$. This is indeed the case due to Eq. \eqref{averaging_trick}. Namely, let $|\pi^*(\varphi)|^2: G \to \K$ and consider its average $\av{|\pi^*(\varphi)|^2}$. Clearly, we have $|\pi^*(\varphi)|^2 = \pi^*(|\varphi|^2)$ and thus, using Lemma \ref{inverse to each other}, $\av{|\pi^*(\varphi)|^2} = |\varphi|^2$. We obtain:
\begin{align*}
\int_G |\pi^{*}(\varphi)|^2 (g) dg & = \int_X \av{|\pi^*(\varphi)|^2}(x) dx  \\
& = \int_X |\varphi(x)|^2 dx \\
& < \infty.
\end{align*}
Thus, $\pi^*$ is not only well-defined but even fulfills $\|\pi^*(\varphi)\|_{\Ltwo{\K}{G}} = \|\varphi\|_{\Ltwo{\K}{X}}$, which also shows the continuity of $\pi^*$. With similar arguments, we show that $\pi^*$ respects the whole scalar product, i.e., is a uniform transformation:
\begin{align*}
\left\langle \pi^{*} (\varphi) \middle| \pi^{*}(\psi)\right\rangle_{\Ltwo{\K}{G}} & = \int_G \left( \overline{\pi^{*}(\varphi)} \cdot \pi^*(\psi)\right)(g) dg \\
& = \int_X \av{ \overline{\pi^{*}(\varphi)} \cdot \pi^*(\psi)}(x) dx \\
& = \int_X \overline{\varphi(x)} \psi(x) dx \\
& = \left\langle \varphi \middle| \psi \right\rangle_{\Ltwo{\K}{X}}.
\end{align*}
The step from the second to the third line follows as before by noting that $\overline{\pi^*(\varphi)} \cdot \pi^*(\psi) = \pi^{*}(\overline{\varphi} \cdot \psi)$ and invoking Lemma \ref{inverse to each other} again.

The linearity of $\pi^*$ is obvious, and the equivariance is done as follows: note that for arbitrary $g, g' \in G$ we have $\pi(g^{-1}g') = (g^{-1}g')x^* = g^{-1}(g'x^*) = g^{-1} \pi(g')$ and therefore:
\begin{align*}
\left[\pi^*(\lam{g} \varphi)\right](g') & = (\lam{g}\varphi)(\pi(g')) \\
& = \varphi(g^{-1}\pi(g')) \\
& = \varphi(\pi(g^{-1}g')) \\
& = \pi^*(\varphi)(g^{-1}g') \\
& = \left[\lam{g}\pi^*(\varphi)\right](g').
\end{align*}
Thus, we shown everything which was to show.
\end{proof}

Thus, $\pi^* : \Ltwo{\K}{X} \to \Ltwo{\K}{G}$ is an embedding which even preserves the scalar product. We can therefore view $\Ltwo{\K}{X}$ as a subspace: $\Ltwo{\K}{X} \subseteq \Ltwo{\K}{G}$.\footnote{In this notation, we suppress that this embedding depends on the specific base point $x^*$ which was chosen. For another base point, the embedding differs by a unitary automorphism on $\Ltwo{\K}{G}$ as the reader may want to check.} 

We can finally complete the proof of the Peter-Weyl Theorem \ref{pw}:

\begin{proof}[Proof of Theorem \ref{pw}]
Assume that
\begin{equation*}
\bigoplus_{l \in \widehat{G}} \bigoplus_{i = 1}^{m_{l}} V_{li} \subseteq \Ltwo{\K}{X} \subseteq \Ltwo{\K}{G}
\end{equation*}
is a dense subspace such that the direct sum is orthogonal, where $V_{li} \cong V_{l}$ for all $l, i$. This exists by Proposition \ref{decomp_arbitrary_unit}. 

Remember that $n_{l}$ denotes the multiplicity of $V_{l}$ as a subrepresentation in $\Ltwo{\K}{G}$. We now want to show that $m_{l} \leq n_{l}$. Since $V_{li}$ is perpendicular to all $\mathcal{E}_{l'}$ with $l' \neq l$ by Lemma \ref{perp}, $V_{li}$ must be contained in the orthogonal complement of $\bigoplus_{l' \neq l} \mathcal{E}_{l'}$. This is exactly $\mathcal{E}_{l}$, which we show in a final lemma after this proof. So $V_{li} \subseteq \mathcal{E}_{l}$ for all $i$. Thus, we obtain the result $m_{l} \leq n_{l}$ by dimension reasons. This was all there was left to show.
\end{proof}

\begin{lem}
We have $\mathcal{E}_l = \pig( \bigoplus_{l \neq l' \in \widehat{G}} \mathcal{E}_{l'}\pig)^{\perp}$
\end{lem}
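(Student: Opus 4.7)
The plan is to prove both inclusions separately, with the easy direction coming directly from Schur's orthogonality and the harder direction using the density of the matrix coefficients together with finite-dimensionality of $\mathcal{E}_l$.

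For the inclusion $\mathcal{E}_l \subseteq \bigl(\bigoplus_{l' \neq l} \mathcal{E}_{l'}\bigr)^{\perp}$, I would simply invoke Schur's Orthogonality (Proposition~\ref{schur_orthog}): every matrix coefficient $\rho_l^{ij}$ is orthogonal in $\Ltwo{\K}{G}$ to every matrix coefficient $\rho_{l'}^{i'j'}$ with $l \neq l'$, because $\rho_l$ and $\rho_{l'}$ are nonisomorphic irreducible unitary representations. Extending by bilinearity (and $\K$-sesquilinearity of the inner product) gives $\mathcal{E}_l \perp \mathcal{E}_{l'}$ for each $l' \neq l$, hence $\mathcal{E}_l$ lies in the orthogonal complement of the whole sum.

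For the reverse inclusion, let $f \in \bigl(\bigoplus_{l' \neq l} \mathcal{E}_{l'}\bigr)^{\perp}$. Because $\mathcal{E}_l$ is spanned by the finitely many matrix coefficients $\rho_l^{ij}$ with $i,j \leq \dim V_l$, it is finite-dimensional and hence topologically closed by Proposition~\ref{completeness_finite_dim}. By the standard orthogonal projection onto a closed subspace of a Hilbert space (Proposition~\ref{existence_projections}), I can decompose $f = f_1 + f_2$ with $f_1 \in \mathcal{E}_l$ and $f_2 \in \mathcal{E}_l^{\perp}$. Since the easy direction just proved gives $f_1 \in \bigl(\bigoplus_{l' \neq l} \mathcal{E}_{l'}\bigr)^{\perp}$, subtracting yields $f_2 = f - f_1 \in \bigl(\bigoplus_{l' \neq l} \mathcal{E}_{l'}\bigr)^{\perp} \cap \mathcal{E}_l^{\perp}$, so $f_2$ is perpendicular to every $\mathcal{E}_{l'}$ including $\mathcal{E}_l$ itself.

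The final step closes the argument by appealing to density: by Theorem~\ref{matrix coefficients dense}, the total space $\mathcal{E} = \sum_{l' \in \widehat{G}} \mathcal{E}_{l'}$ is dense in $\Ltwo{\K}{G}$. Since $f_2 \perp \mathcal{E}$ and the inner product is continuous in each argument (Proposition~\ref{scalar_product_continuous}), $f_2$ is perpendicular to the closure $\Ltwo{\K}{G}$, forcing $\langle f_2 \mid f_2 \rangle = 0$ and hence $f_2 = 0$. Therefore $f = f_1 \in \mathcal{E}_l$, completing the proof. The only delicate point is making sure that $\mathcal{E}_l$ really is closed so that the orthogonal decomposition exists; but this follows from its finite-dimensionality, which in turn hinges on Proposition~\ref{irreps_finite_dim}. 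Everything else is bookkeeping with Schur's orthogonality and density.
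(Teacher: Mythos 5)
Your proof is correct and follows essentially the same route as the paper: the easy inclusion via Schur's orthogonality, and the reverse inclusion by isolating the component of $f$ perpendicular to $\mathcal{E}_l$ (you do this with the orthogonal projection onto the closed finite-dimensional subspace $\mathcal{E}_l$, the paper equivalently via Gram--Schmidt in a proof by contradiction) and then killing it using density of the matrix coefficients together with continuity of the scalar product. The finite-dimensionality/closedness point you flag as delicate is exactly the hypothesis the paper also relies on, so nothing is missing.
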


\begin{proof}
We already know $\mathcal{E}_l \subseteq \pig( \bigoplus_{l \neq l' \in \widehat{G}} \mathcal{E}_{l'}\pig)^{\perp}$ from Proposition \ref{schur_orthog}. Now, assume this inclusion is not an equality. Then there is $v \notin \mathcal{E}_l$ such that $v \in \pig(\bigoplus_{l \neq l' \in \widehat{G}} \mathcal{E}_{l'}\pig)^{\perp}$. The space $\spann_{\K}\left( v, \mathcal{E}_l\right)$ does contain an orthonormal basis by Proposition \ref{gram_schmidt}, where the procedure of Gram-Schmidt orthonormalization allows starting with an orthonormal basis of $\mathcal{E}_l$ and to fill it up to one of the whole space $\spann_{\K}\left( v, \mathcal{E}_l\right)$. Thus, we can assume $v \in \mathcal{E}_l^{\perp}$ as well. Overall, $v \in \pig(\bigoplus_{l' \in \widehat{G}} \mathcal{E}_{l'}\pig)^{\perp}$, and by taking topological closure and using that the scalar product is continuous by Proposition \ref{scalar_product_continuous}, obtain $v \in \pig(\widehat{\bigoplus}_{l' \in \widehat{G}} \mathcal{E}_{l'}\pig)^{\perp} = (\Ltwo{\K}{G})^{\perp}$ by the Peter-Weyl theorem for the regular representation. This means $v = 0 \in \mathcal{E}_l$, a contradiction to $v \notin \mathcal{E}_l$.

Thus, our assumption is wrong and such a vector $v$ cannot exist. We obtain the equality as desired.
\end{proof}

\section{The Correspondence between Steerable Kernels and Representation Operators}\label{proof_of_main}

In this chapter, we formulate and prove Theorem \ref{steerable kernels = representation operators}, which gives a precise one-to-one correspondence between steerable kernels on the one hand, and certain representation operators which we call \emph{kernel operators} on the other hand.
Representation operators are a representation-theoretic abstraction of the scalar, vector and tensor operators from physics, that were explained in Section \ref{sec:dual_constraint}.
The correspondence will allow us to prove a Wigner-Eckart theorem for steerable kernels in Chapter~\ref{chapter_wigner_eckart} and, ultimately, to obtain a complete description of steerable kernel bases.
We formulate the correspondence in Section \ref{fundamentals_correspondence}, while Section \ref{this_really_is_the_proof} gives a detailed and rigorous proof of it.

As in Chapter \ref{rep_theory_compact}, $\K$ is either of the two fields $\R$ or $\C$.

\subsection{Fundamentals of the Correspondence}\label{fundamentals_correspondence}

In Section \ref{fundamentals_correspondence}, we formulate the correspondence between steerable kernels and special representation operators that we name kernel operators.
We do this by first studying steerable CNNs and the kernel constraint in Section \ref{restriction_homogeneous_spaces}, which progressively leads us to consider steerable kernels on homogeneous spaces of general compact groups in Section \ref{sec:abstract_definition}.
This abstract formulation of steerable kernels will show apparent similarities to the concept of representation operators in Section \ref{sec:more_details_spherical_kernel}. We study them in purely representation-theoretic terms in Section \ref{sec:rep_operators_kernel_operators}.
However, they importantly differ in the fact that steerable kernels are not linear, whereas representation operators are -- this is a difference that we need to bridge.
Finally, after defining kernel operators as special representation operators, we give the formulation of the correspondence in Theorem \ref{steerable kernels = representation operators} in Section \ref{pi_section} and shortly give some intuitions about why it is true.

\subsubsection{Steerable Kernels and the Restriction to Homogeneous Spaces}\label{restriction_homogeneous_spaces}

The concept of steerable CNNs outlined here follows~\citep{3d_cnns,weiler2019general}. In a nutshell, they work as follows:

The network is supposed to process feature fields $f: \R^d \to \K^{c}$ with $d \in \N$. $c$ is the dimension of the features themselves, i.e., the number of channels.
For example, planar RGB-images correspond to the case $d = 2$ and $c=3$.

Furthermore, a compact group $G$ (Definition \ref{compact_group}) is considered that acts on $\R^d$, for example, the special orthogonal group $\SO{d}$, the orthogonal group $\O{d}$ or the finite groups $\CN$ or $\DN$ if $d = 2$.\footnote{
    We will study some of these groups in the Examples in Chapter \ref{examples}.
}
Then for each layer, the input and output features have a certain \emph{type}, i.e., representation, which may differ from layer to layer.
That is, the input (and output as well) consists of a function $f: \R^d \to \K^{c}$, and $G$ acts on $\K^{c}$ with a linear representation $\rho$, see Definition \ref{Linear Representation}.
This action induces an action of the semi-direct product $(\R^{d}, +) \rtimes G$ on the space of all signals,\footnote{The semidirect product $\R^d \rtimes G$ can be imagined as the smallest subgroup of the group of all isometries of $\R^d$ that contains both the translations $\R^d$ and the transformations $G$.
It is not important to know the abstract definition of a semidirect product in our context.} where $t \in (\R^d, +)$ and $g \in G$:
\begin{equation*}
    \pig( \pig[\Ind_G^{\R^d \rtimes G} \rho\pig](tg) \cdot f  \pig)(x) \coloneqq \rho(g) \cdot f(g^{-1}(x - t)).
\end{equation*}
Let the kernel that ``maps'' between the layers by convolution\footnote{The operation is actually a so-called ``correlation'', but the term ``convolution'' is more widespread in the deep learning context and we follow this convention.} be given by a function
\begin{equation*}
    K: \R^d \to \K^{c_{\out} \times c_{\inn}}.
\end{equation*}
That is, for an input $f_{\inn}: \R^{d} \to \K^{c_{\inn}}$, the output $f_{\out}: \R^{d} \to \K^{c_{\out}}$ is given by
\begin{equation*}
    f_{\out}(x) = \left[ K \star f_{\inn}\right](x) = \int_{\R^d} K(y) f_{\inn}(x + y) dy,
\end{equation*}
where $K(y) \in \K^{c_{\out} \times c_{\inn}}$ acts for any $y\in \R^d$ as a linear transformation from $\K^{c_{\inn}}$ to $\K^{c_{\out}}$.

The goal is now to find kernels $K$ such that convolution with these kernels commutes with the induced actions on the input and output fields. That is, for all input fields $f_{\inn}$ and for all $t \in \R^d$ and $g \in G$ we want the following property:
\begin{equation*}
K \star \pig( \pig[\Ind_G^{\R^d \rtimes G}  \rho_{\inn}\pig](tg) \cdot f_{\inn}  \pig) = \pig[\Ind_G^{\R^d \rtimes G}  \rho_{\out}\pig](tg) \cdot \left( K \star f_{\inn}\right).
\end{equation*}
It was shown in~\citet{3d_cnns} that a kernel $K$ has this equivariance property if and only if the kernel satisfies a certain constraint.
We are rederiving it here for convenience.

Writing out both sides we obtain the following equality that needs to hold for all $f_{\inn}$ and all $x, t \in \R^d$:
\begin{equation*}
\int_{\R^d} K(y) \rho_{\inn}(g) f_{\inn}\big(g^{-1}(x + y - t)\big) dy = \rho_{\out}(g) \int_{\R^d} K(y) f_{\inn}\big(g^{-1}(x - t)+y\big) dy.
\end{equation*}
Substituting $y = g^{-1}y$ on the left side and using $|\det g| = 1$ due to the compactness of $G$, and putting $\rho_{\out}(g)$ inside the integral on the right side, which is possible due to linearity, we obtain:
\begin{equation*}
\int_{\R^{d}} \pig[ K(gy) \rho_{\inn}(g)\pig] f_{\inn}(g^{-1}x - g^{-1}t + y) dy = \int_{\R^d} \pig[\rho_{\out}(g) K(y)\pig] f_{\inn}(g^{-1}x - g^{-1}t + y) dy.
\end{equation*}
Since this needs to hold for all fields $f_{\inn}$, we necessarily have $K(gx) \rho_{\inn}(g) = \rho_{\out}(g) K(x)$ for all $x \in \R^d$ and all $g \in G$ and obtain the kernel constraint
\begin{equation}\label{kernel_constraint}
K(gx) = \rho_{\out}(g)  \circ K(x) \circ \rho_{\inn}(g)^{-1}.
\end{equation}
This work will create a general theory for how to \emph{solve} this kernel constraint, which means to find a parameterization for the space of all kernels that fulfill this constraint. We now explain how to make this problem more tractable: formally, the action of $G$ on $\R^d$ is a group action as in Definition \ref{group_action}. However, it cannot be transitive as in Definition \ref{transitive_action} since $G$ is compact and $\R^d$ is not. Thus $\R^d$ splits into a disjoint union of orbits (Definition \ref{orbit}), of the action:
\begin{equation*}
\R^d = \bigsqcup_{k \in K} X_k.
\end{equation*}
That this is a \emph{disjoint} union can be explained as follows: define the relation $\sim$ on $\R^d$ by $x \sim x'$ if $gx = x'$ for some $g \in G$. This is then an equivalence relation, and so $\R^d$ splits into a disjoint union of equivalence classes. One then can show that these equivalence classes are precisely the orbits of the group action. For example, such orbits take the form of spheres $S^{d-1}$ if $G = \SO{d}$ or $G = \O{d}$ and the form of a finite set of points if $G = \CN$ or $G = \DN$.

The idea is now that the kernel constraint \ref{kernel_constraint} only constrains the behavior of the kernel at each orbit individually, and thus a solution on each orbit can be ``patched together'' to a solution on the whole of $\R^d$. Indeed, assume that $K_k: X_k \to \K^{c_{\out} \times c_{\inn}}$ individually fulfill the kernel constraint, which means that for all $x_k \in X_k$ and $g \in G$ we have
\begin{equation*}
K_k(gx_k) = \rho_{\out}(g) \circ K_k(x_k) \circ \rho_{\inn}(g)^{-1}.
\end{equation*}
Then, define the patch of these orbit-kernels by $K: \R^d \to \K^{c_{\out} \times c_{\inn}}$ as $K(x) = K_k(x)$ if $x \in X_k$. This is well-defined since each $x$ is in precisely one orbit. Then clearly, $K$ satisfies the kernel constraint \ref{kernel_constraint}. Moreover, each kernel $K$ which fulfills the kernel constraint emerges from such a construction, since we can simply set $K_k \coloneqq K|_{X_k}$. Overall, we see that we can restrict our attention to orbits. In~\citet{Weiler2018Steerable} and later~\citet{3d_cnns}, a discretized implementation is done where the kernel is discretized into finitely many orbits with a smooth Gaussian radial profile. We will come back to these practical questions of parameterization in Remark \ref{practical_parameterization}, once we have fully developed the theory of steerable CNNs.

\subsubsection{An Abstract Definition of Steerable Kernels}\label{sec:abstract_definition}

Motivated by the discussion in the last section, we now define steerable kernels in precise terms and will stick to that definition throughout this work. The definition will be more abstract than usual in the deep learning community, but we are rewarded since such an abstract definition makes it easier to apply representation-theoretic results.

Without loss of generality, we will in the rest of this work only consider kernels on orbits. Thus, let $X \coloneqq G \cdot x$ be an arbitrary orbit. We consider steerable kernels $K: X \to \K^{c_{\out} \times c_{\inn}}$. Note that the restriction of the action $G \times \R^d \to \R^d$ to $X$, written $G \times X \to X$, makes $X$ to a homogeneous space of $G$, see Definition \ref{transitive_action}. Thus, instead of viewing $X$ as a subset of $\R^d$, we view $X$ as an arbitrary homogeneous space of an arbitrary compact group $G$. Notably, this framework is more general than usually studied in the context of steerable CNNs on $\R^d$, since we allow also groups that are not Lie groups and homogeneous spaces which are not naturally embedded in an $\R^d$, as well as finite homogeneous spaces of finite groups all at the same time.

Furthermore, we replace $\K^{c_{\inn}}$ and $\K^{c_{\out}}$ by coordinate-independent $\K$-vector spaces $V_{\inn}$ and $V_{\out}$, and therefore $\K^{c_{\out} \times c_{\inn}}$ by the space of linear functions from $V_{\inn}$ to $V_{\out}$, written $\Hom_{\K}(V_{\inn}, V_{\out})$. We assume there are linear representations $\rho_{\inn}: G \to \GL(V_{\inn})$ and $\rho_{\out}: G \to \GL(V_{\out})$.

Overall, this means that steerable kernels are certain maps $K: X \to \Hom_{\K}(V_{\inn}, V_{\out})$. The only property they need to fulfill is the kernel constraint $K(gx) = \rho_{\out}(g) \circ K(x) \circ \rho_{\inn}(g)^{-1}$ for all $g \in G$ and $x \in X$.
This can be viewed in representation-theoretic terms by defining the $\Hom$-representation:

\begin{dfn}[Hom-Representation]\label{hom_representation}
Let $\rho_{\inn}: G \to \GL(V_{\inn})$ and $\rho_{\out}: G \to \GL(V_{\out})$ be two finite-dimensional $G$-representations over the field $\K$. The space $\Hom_{\K}(V_{\inn}, V_{\out})$ of $\K$-linear (not necessarily $G$-equivariant) functions from $V_{\inn}$ to $V_{\out}$ also carries an induced $G$-representation, with action
\begin{equation*}
\left[\rho_{\Hom}(g)\right](f) \coloneqq \rho_{\out}(g) \circ f \circ \rho_{\inn}(g)^{-1}.
\end{equation*}
We call this the \emph{Hom-representation}.
\end{dfn}

\begin{rem}\label{continuity_of_hom}
Of course, one needs to check that this is indeed a linear representation. Continuity follows from the continuity of $\rho_{\inn}$ and $\rho_{\out}$ as follows: the topology on $\Hom_{\K}(V_{\inn}, V_{\out})$ is just the Euclidean topology of $\K^{c_{\out} \times c_{\inn}}$ coming from a basis of $V_{\inn}$ and $V_{\out}$. In these bases, $\rho_{\inn}(g)$ and $\rho_{\out}(g)$ are given by matrices. All matrix coefficients are continuous by Remark \ref{matrix_coefficients_continuous}. Now, in order to show that $\rho_{\Hom}$ is continuous, pick a fixed element $f \in \K^{c_{\inn} \times c_{\out}}$. One needs to show that the map
\begin{equation*}
\rho_{\Hom}^f: G \to \K^{c_{\inn} \times c_{\out}}, \ g \mapsto \rho_{\out}(g) \circ f \circ \rho_{\inn}(g^{-1})
\end{equation*}
is continuous. Since all matrix coefficients are continuous and since also the inversion $G \to G, \ g \mapsto g^{-1}$ is continuous by the definition of a topological group, the map $\rho_{\Hom}^f$ is basically just a stacked linear combination of continuous functions and thus continuous itself.

The linearity of each $\rho_{\Hom}(g)$ is also clear. So what needs to be checked is that $\rho_{\Hom}$ is a group homomorphism. And indeed, it is, exploiting the corresponding property of $\rho_{\inn}$ and $\rho_{\out}$:
\begin{align*}
\left[\rho_{\Hom}(gg')\right](f) & = \rho_{\out}(gg') \circ f \circ \rho_{\inn}(gg')^{-1} \\
& = \rho_{\out}(g) \circ \left( \rho_{\out}(g') \circ f \circ \rho_{\inn}(g')^{-1}\right)  \circ \rho_{\inn}(g)^{-1} \\
& = \left[\rho_{\Hom}(g)\right] \big( \left[\rho_{\Hom}(g')\right](f)\big) \\
& = \left[ \rho_{\Hom}(g) \circ \rho_{\Hom}(g')\right](f),
\end{align*}
and so the claim follows.
\end{rem}

With this definition in mind, steerable kernels $K: X \to \Hom_{\K}(V_{\inn}, V_{\out})$ are just functions with the property $K(gx) = \left[\rho_{\Hom}(g)\right](K(x))$. Summarizing, we have the following abstract definition of steerable kernels (different from Definition \ref{def:definition_steerable_kernel}, we here allow also input- and output representations that are not irreducible and make explicit reference to the $\Hom$-representation):

\begin{dfn}[Steerable Kernel]\label{steerable_kernel}
    Let $G$ be any compact group and $X$ be any homogeneous space of $G$.
    Furthermore, let $\rho_{\inn}: G \to \GL(V_{\inn})$ and $\rho_{\out}: G \to \GL(V_{\out})$ be finite-dimensional representations of $G$.
    We assume that $\Hom_{\K}(V_{\inn}, V_{\out})$ is equipped with the $\Hom$-representation~$\rho_{\Hom}$.
    A~$G$-\emph{steerable kernel} is an equivariant function $K: X \to \Hom_{\K}(V_{\inn}, V_{\out})$, i.e., a function such that 
    \begin{equation}
        K(gx) = \left[\rho_{\Hom}(g)\right](K(x))
    \end{equation}
    for all $g \in G$ and $x \in X$.
    We denote the vector-space of all these kernels by 
    \begin{equation*}
        \Hom_G(X,\, \Hom_{\K}(V_{\inn}, V_{\out}))\ =\ \left\lbrace K: X \to \Hom_{\K}(V_{\inn}, V_{\out})  \mid K \text{ is steerable }\right\rbrace.
    \end{equation*}
    Notably, steerable kernels are \emph{not linear} in a meaningful sense with respect to their input.
\end{dfn}

That the space of steerable kernels forms a vector space, as claimed in this definition, can easily be checked.

\subsubsection{More Details on the Comparison of Representation Operators and Steerable Kernels}\label{sec:more_details_spherical_kernel}

Steerable kernels satisfy the constraint
\begin{equation}\label{eq:constraint_kernel_agaiinnn}
K(gx) = \rho_{\out}(g) \circ K(x) \circ \rho_{\inn}(g)^{-1},
\end{equation}
whereas, as we saw in Section \ref{sec:dual_constraint}, representation operators are collections $(A_1, \dots, A_N)$ of operators $A_i: \mathcal{H} \to \mathcal{H}$ that satisfy the constraint
\begin{align*}
    \sum\nolimits_{j=1}^N \pi(g)_{ij} \mkern1mu A_j\ =\ U(g)^\dagger \mkern1mu A_i \mkern1mu U(g) \qquad \forall\ g\in G.
\end{align*}
Hereby, $U: G \to \U{\mathcal{H}}$ and $\pi: G \to \U{\C^N}$ are unitary representations. Unfortunately, these equations still look somewhat different from each other. We can make them more similar by inverting $g$ and using the unitarity of $\pi$ (note the swap of $j$ and $i$ and the complex conjugation):
\begin{align}\label{eq:constraint_operators_agaiinnn}
    \sum\nolimits_{j=1}^N \overline{\pi}(g)_{ji} \mkern1mu A_j\ =\ U(g) \mkern1mu A_i \mkern1mu U(g)^{\dagger} \qquad \forall\ g\in G.
\end{align}
In order to make the analogy to steerable kernels stronger, we would like to interpret a representation operator as \emph{one object} $\boldsymbol{A}$ instead of separate operators $A_i$, in the same way as a kernel $K$ is one single object and not just a disjoint collection of linear functions in $\Hom_{\K}(V_{\inn}, V_{\out})$. For this, we interpret $\boldsymbol{A}$ as a function that assigns to arbitrary vectors in $\C^N$ an operator. Namely, let $\{e_i\}$ be the standard basis of $\C^N$. We then define $\boldsymbol{A}$ as the \emph{unique linear map} which is given on basis elements as follows:
\begin{equation*}
\boldsymbol{A}: e_i \mapsto A_i.
\end{equation*}
We can then deduce the following, where we use the linearity of $\boldsymbol{A}$ in the second step, the definition of $A_j$ in the third and fifth step, and Eq.~\eqref{eq:constraint_operators_agaiinnn} in the fourth step:
\begin{align}\label{better_explained}
\begin{split}
\boldsymbol{A}\big( \overline{\pi}(g) (e_i) \big) & = \boldsymbol{A} \Big( \sum\nolimits_{j} \overline{\pi}(g)_{ji}  e_j  \Big) \\
& = \sum_{j} \overline{\pi}(g)_{ji} \boldsymbol{A} \left( e_j \right) \\
& = \sum_{j} \overline{\pi}(g)_{ji} A_j \\
& = U(g)  A_i  U(g)^{\dagger} \\
& = U(g)  \boldsymbol{A}(e_i)  U(g)^{\dagger}.
\end{split}
\end{align}
If now $v = \sum_{i} \lambda_i e_i$ is an arbitrary vector in $\C^N$, not necessarily a standard basis vector, then from the linearity of $\boldsymbol{A}$ and Eq.~\eqref{better_explained} we obtain
\begin{equation}\label{final_final_formula}
\boldsymbol{A}\big(\overline{\pi}(g)(v) \big) = U(g) \boldsymbol{A}(v) U(g)^{-1}.
\end{equation}
This equation is essentially the starting point for the definition of a representation operator as it can be found in \citenst{Jeevanjee}.

This, finally, really looks like Eq.~\eqref{eq:constraint_kernel_agaiinnn}. In this comparison, the action of the group $G$ on $\R^d$ in deep learning is replaced by the action of $G$ via $\overline{\pi}$ on the space $\C^N$. The main difference is that steerable kernels are not necessarily linear. This difference will be bridged in Theorem \ref{steerable kernels = representation operators}.

\subsubsection{Representation Operators and Kernel Operators}\label{sec:rep_operators_kernel_operators}

Now that we have a clear abstract idea of what steerable kernels are and saw strong analogies to representation operators, we can begin to formulate precise theoretical connections. In this section, we therefore begin with formulating a purely representation-theoretic and more abstract working definition of representation operators and will then formulate the main theorem of this chapter, Theorem \ref{steerable kernels = representation operators}.

We come to the main definition, which is directly motivated from Eq.~\eqref{final_final_formula}. It differs from \citep{Jeevanjee} by allowing the input- and output representations to differ. We furthermore restrict to finite-dimensional input- and output representations due to our specific applications. As explained in Section \ref{sec:more_details_spherical_kernel}, this new definition furthermore somewhat differs from the one given in Section \ref{sec:dual_constraint} since now we view representation operators as \emph{one object} instead of viewing it as a collection of several linear operators.

\begin{dfn}[Representation Operator]\label{representation_operator}
Let $\rho_{\inn}: G \to \GL(V_{\inn})$ and $\rho_{\out}: G \to \GL(V_{\out})$ be finite-dimensional $G$-representations. Let $\lambda: G \to \GL(T)$ be a third $G$-representation, not necessarily finite-dimensional. Then a \emph{representation operator} is an intertwiner $\mathcal{K}: T \to \Hom_{\K}(V_{\inn}, V_{\out})$, where the right space is equipped with the $\Hom$-representation as in Definition \ref{hom_representation}. We denote the vector space of all these representation operators by 
\begin{equation*}
\Hom_{G,\K}(T, \Hom_{\K}(V_{\inn}, V_{\out})) = \left\lbrace \mathcal{K}: T \to \Hom_{\K}(V_{\inn}, V_{\out}) \mid \mathcal{K} \text{ is an  intertwiner}\right\rbrace.
\end{equation*}
\end{dfn}

Note that representation operators are by definition \emph{linear}, which is a requirement that needs to be satisfied for the standard Wigner-Eckart theorem. We clearly see strong similarities between this definition and the formalization of steerable kernels in Definition \ref{steerable_kernel}. The main difference is that we assume representation operators to be linear. This is in notation captured by the subscript $\K$ that we put in the corresponding $\Hom$-space. One may think that there is another difference, namely coming from the fact that intertwiners are by definition \emph{continuous} with respect to the topologies involved. Two things need to be said about this:
\begin{enumerate}
\item First of all, one may wonder what continuity for representation operators actually means. This can be clarified as follows: By assumption, $G$-representations are always on vector spaces with topologies, and thus $T$ has a topology. Furthermore, in Remark \ref{continuity_of_hom} we clarified the topology on $\Hom_{\K}(V_{\inn}, V_{\out})$. Then, being continuous just means, as always, to be continuous with respect to the topologies of these two spaces.
\item The second remark is that this apparent difference in the requirement of continuity for steerable kernels and representation operators is actually non-existent. This is explained by the following Proposition which says that steerable kernels are automatically continuous. Note that this is not true for steerable kernels that are defined on the domain $\R^d$ -- in that case, continuity is only guaranteed when restricting to orbits.
\end{enumerate}

\begin{pro}\label{steerable_kernels_are_continuous}
Let $K: X \to \Hom_{\K}(V_{\inn}, V_{\out})$ be a steerable kernel. Then $K$ is continuous.
\end{pro}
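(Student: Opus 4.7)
The plan is to exploit the $G$-equivariance of $K$ together with the fact that $X$ is homeomorphic to a quotient $G/H$, reducing continuity of $K$ to continuity of the action of $\rho_{\Hom}$ on a single kernel value.

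First I would fix a basepoint $x^* \in X$ and let $H = G_{x^*}$ be its stabilizer subgroup. By Lemma~\ref{homeomorphism_to_group_quotient}, the map $\varphi : G/H \to X$, $[g] \mapsto g x^*$, is a homeomorphism, and $H$ is closed in $G$. Set $F_0 := K(x^*) \in \Hom_{\K}(V_{\inn}, V_{\out})$. The equivariance condition $K(gx) = \rho_{\Hom}(g)(K(x))$ applied at $x = x^*$ gives
\begin{equation*}
    K(gx^*) \;=\; \rho_{\Hom}(g)(F_0) \qquad \forall\, g \in G.
\end{equation*}
Define $\widetilde{K} : G \to \Hom_{\K}(V_{\inn}, V_{\out})$ by $\widetilde{K}(g) := \rho_{\Hom}(g)(F_0)$. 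By the continuity of the $\Hom$-representation established in Remark~\ref{continuity_of_hom} (where we observed that $g \mapsto \rho_{\Hom}(g)(f)$ is continuous for any fixed $f$), the map $\widetilde{K}$ is continuous.

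Next I would show that $\widetilde{K}$ is constant on cosets of $H$. For $h \in H$ the equivariance gives $K(x^*) = K(hx^*) = \rho_{\Hom}(h)(K(x^*))$, i.e.\ $\rho_{\Hom}(h)(F_0) = F_0$. Hence for any $g \in G$ and $h \in H$,
\begin{equation*}
    \widetilde{K}(gh) \;=\; \rho_{\Hom}(gh)(F_0) \;=\; \rho_{\Hom}(g)\bigl(\rho_{\Hom}(h)(F_0)\bigr) \;=\; \rho_{\Hom}(g)(F_0) \;=\; \widetilde{K}(g).
\end{equation*}
By the universal property of the quotient topology (applied in the form used in Proposition~\ref{quotient_universal}), $\widetilde{K}$ descends to a continuous map $\overline{K} : G/H \to \Hom_{\K}(V_{\inn}, V_{\out})$ satisfying $\overline{K}([g]) = \rho_{\Hom}(g)(F_0)$.

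Finally, composing with the inverse of the homeomorphism $\varphi$ yields $K = \overline{K} \circ \varphi^{-1}$, because for $x = gx^* \in X$ we have $\varphi^{-1}(x) = [g]$ and $\overline{K}([g]) = \rho_{\Hom}(g)(F_0) = K(x)$. Since both $\overline{K}$ and $\varphi^{-1}$ are continuous, so is $K$. The only step that requires a little care is the descent to the quotient, but it is standard once we have verified cosetwise constancy from equivariance, so I do not anticipate any serious obstacle.
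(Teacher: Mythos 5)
Your proof is correct and follows essentially the same route as the paper: both pass through the homeomorphism $\varphi: G/H \to X$, identify $K \circ \varphi$ with the map induced on the quotient by the continuous orbit map $g \mapsto \rho_{\Hom}(g)(K(x^*))$, and conclude via the universal property of quotient maps. Your explicit verification that $\rho_{\Hom}(h)(F_0) = F_0$ for $h \in H$ is a small extra check the paper leaves implicit, but it is the same argument.
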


\begin{proof}
For brevity, denote $V \coloneqq \Hom_{\K}(V_{\inn}, V_{\out})$ and $\rho \coloneqq \rho_{\Hom}$. Let $x^* \in X$ be any point and $G_{x^*}$ the stabilizer corresponding to the action of $G$ on $X$. Remember the homeomorphism $\varphi: G/H \to X$, $[g] \mapsto gx^*$ from Lemma \ref{homeomorphism_to_group_quotient}. Since this is a homeomorphism, the kernel $K$ is continuous if and only if the composition $K \circ \varphi$ is continuous, since then $K = (K \circ \varphi) \circ \varphi^{-1}$ is a composition of continuous functions.
Thus, we evaluate $K \circ \varphi$:
\begin{equation*}
(K \circ \varphi)([g]) = K(\varphi([g])) = K(gx^*) = \rho(g)(K(x^*)),
\end{equation*}
where in the last step we have used the equivariance of $K$. Thus, if we set $v^* \coloneqq K(x^*) \in V$, then we obtain the simple relation $(K \circ \varphi)([g]) = \rho(g)(v^*)$. This is by definition just the unique map on the quotient, $G/H \to V$,  coming from $\rho^{v^*}: G \to V$, $g \mapsto \rho(g)(v^*)$. This last map is continuous by definition of a linear representation. The universal property of quotients Proposition \ref{quotient_universal} then shows that $K \circ \varphi$ is continuous as well, and so we are done. All of this is visualized in the following commutative diagram, where $q: G \to G/H$, $g \mapsto [g]$ is the canonical projection:
\begin{equation*}
\begin{tikzcd}
& & G \ar[dll, "q"'] \ar[d, "(\cdot)\cdot x^*"] \ar[drr, "\rho^{v^*}"] & & \\
G/H \ar[rr, "\varphi"', "\sim"] \ar[rrrr, bend right, "K \circ \varphi"] &  & X \ar[rr, "K"']  &  & V
\end{tikzcd}
\end{equation*}
\end{proof}

Thus, the only difference between steerable kernels and representation operators is indeed the linearity. We now look at special representation operators that play the main role in this work:

\begin{dfn}[Kernel Operator]\label{kernel_operator}
Let $\rho_{\inn}: G \to \GL(V_{\inn})$ and $\rho_{\out}: G \to \GL(V_{\out})$ be finite-dimensional $G$-representations. Let $\lambda: G \to \U{\Ltwo{\K}{X}}$ be the standard unitary representation on the space of square-integrable functions of a homogeneous space $X$, given, as in Section \ref{square_integrable_functions}, by
\begin{equation*}
\left[\lambda(g)(\varphi)\right](g') = \varphi(g^{-1}g').
\end{equation*}
A \emph{kernel operator} is a representation operator $\mathcal{K}: \Ltwo{\K}{X} \to \Hom_{\K}(V_{\inn}, V_{\out})$. We denote the space of these by 
\begin{align*}
\Hom_{G, \K}(\Ltwo{\K}{X} & , \Hom_{\K}(V_{\inn}, V_{\out})) \\
& = \left\lbrace \mathcal{K}: \Ltwo{\K}{X} \to \Hom_{\K}(V_{\inn}, V_{\out}) \mid \mathcal{K} \text{ is an intertwiner }\right\rbrace.
\end{align*}
Notably, kernel operators are $\K$-linear in their input.
\end{dfn}

\subsubsection{Formulation of the Correspondence between Steerable Kernels and Kernel Operators}\label{pi_section}

\todo[inline]{This section is changed in order to formulate the theorem in precision right away, as the reviewer requested.}

The following Theorem lies at the heart of our investigations and establishes that steerable kernels can be considered as kernel operators, which we defined as special representation operators. More precisely, we will give an explicit isomorphism between the space of steerable kernels and the space of kernel operators.

We shortly explain why the theorem is useful. First of all, using a Wigner-Eckart theorem for kernel operators that we prove in Theorem \ref{theorem}, one can explicitly describe a basis $B$ of the space of kernel operators $\Hom_{G,\K}(\Ltwo{\K}{X}, \Hom_{\K}(V_{\inn}, V_{\out}))$. Then, since we have an isomorphism of vector spaces to the space of steerable kernels, one can ``carry over'' this basis to a basis for the space of steerable kernels, namely $\Hom_{G}(X, \Hom_{\K}(V_{\inn}, V_{\out}))$. This basis will then have a convenient explicit form that we establish in Theorem \ref{matrix-form} and is exactly what we need in order to parameterize an equivariant neural network layer. We now come to a precise formulation of the theorem:

\begin{thrm}[Kernel-Operator-Correspondence]\label{steerable kernels = representation operators}
Let $\rho_{\inn}: G \to \GL(V_{\inn})$ and $\rho_{\out}: G \to \GL(V_{\out})$ be finite-dimensional $G$-representations and $X$ be a homogeneous space of $G$. Then there is an isomorphism
\begin{equation*}
\begin{tikzcd}
\Hom_G(X, \Hom_{\K}(V_{\inn}, V_{\out})) \ar[rr, bend left = 15, "\widehat{(\cdot)}"] & & \Hom_{G,\K}(\Ltwo{\K}{X}, \Hom_{\K}(V_{\inn}, V_{\out})) \ar[ll, bend left = 15, "(\cdot)|_{X}"]
\end{tikzcd}
\end{equation*}
between the space of steerable kernels on the left and the space of kernel operators on the right. The two maps are defined as follows:
\begin{enumerate}
\item For a steerable kernel $K: X \to \Hom_{\K}(V_{\inn}, V_{\out})$, the extension $\widehat{K}: \Ltwo{\K}{X} \to \Hom_{\K}(V_{\inn}, V_{\out})$ is given by 
\begin{equation*}
\widehat{K}(f) \coloneqq \int_{X}f(x)K(x)dx.
\end{equation*}
\item For a kernel operator $\mathcal{K}: \Ltwo{\K}{X} \to \Hom_{\K}(V_{\inn}, V_{\out})$, the restriction $\mathcal{K}|_X: X \to \Hom_{\K}(V_{\inn}, V_{\out})$ is given by 
\begin{equation*}
\mathcal{K}|_{X}(x) \coloneqq \lim_{U \in \mathcal{U}_x} \mathcal{K}(\delta_U).
\end{equation*} 
Hereby, $\mathcal{U}_x$ is the directed set of open neighborhoods of $x$, see Example \ref{standard_example_directed_set}. $\delta_U: X \to \K$ is the approximated Dirac delta function with $\delta_U(y) = \frac{1}{\mu(U)}$ if $y \in U$ and $\delta_U(y) = 0$, else. The limit is a limit of nets as in Definition \ref{convergence_nets}.
\end{enumerate}
\end{thrm}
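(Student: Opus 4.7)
The plan is to establish the isomorphism by verifying three things: (i) the map $\widehat{(\cdot)}$ sends steerable kernels to kernel operators; (ii) the map $(\cdot)|_X$ is well-defined and lands in steerable kernels; and (iii) the two maps are mutually inverse. For (i), the integral $\widehat{K}(f) = \int_X f(x)K(x)\,dx$ makes sense because, by Proposition~\ref{steerable_kernels_are_continuous}, $K$ is continuous and hence bounded on the compact space $X$. Linearity in $f$ is immediate; continuity follows from an estimate $\|\widehat{K}(f)\| \leq \|K\|_\infty \|f\|_{L^1} \leq C\|f\|_{L^2}$ using $\mu(X)<\infty$; and equivariance $\widehat{K}(\lambda(g)f) = \rho_{\Hom}(g)\widehat{K}(f)$ follows from the substitution $x\mapsto gx$ combined with left-invariance of the measure and the steerability of $K$.

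For (ii), the chief obstacle is showing that the limit $\mathcal{K}|_X(x) = \lim_{U \in \mathcal{U}_x} \mathcal{K}(\delta_U)$ exists for every $x \in X$, since the approximants $\delta_U$ themselves blow up in $L^2$-norm. The key observation is that, because $\mathcal{K}$ is a continuous $G$-equivariant linear map into the \emph{finite-dimensional} space $\Hom_{\K}(V_{\inn}, V_{\out})$, its kernel is closed, $G$-invariant, and of finite codimension. By the Peter-Weyl Theorem~\ref{pw} and Schur's Lemma~\ref{schur_unitary}, $\mathcal{K}$ must vanish on all but finitely many of the irreducible summands $V_{ji}$ of $\Ltwo{\K}{X}$. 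Hence $\mathcal{K} = \mathcal{K}\circ P$, where $P$ is the orthogonal projection onto a finite-dimensional invariant subspace $E$ spanned by (continuous) harmonic basis functions $\{Y_r\}$. Then $P\delta_U = \sum_r \frac{1}{\mu(U)}\bigl(\int_U \overline{Y_r(y)}\,dy\bigr) Y_r$, which by continuity of the $Y_r$ converges as $U$ shrinks to $x$ to $\sum_r \overline{Y_r(x)}\, Y_r \in E$. Applying $\mathcal{K}$ yields the desired limit. Equivariance of $\mathcal{K}|_X$ then follows by substituting $U \mapsto gU$ in the net, using the identity $\delta_{gU} = \lambda(g)\delta_U$ together with equivariance of $\mathcal{K}$; continuity of $\mathcal{K}|_X$ is automatic by Proposition~\ref{steerable_kernels_are_continuous}.

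Finally, for (iii), I would check the two compositions separately. The identity $(\widehat{K})|_X = K$ is a Lebesgue-differentiation argument: since $K$ is continuous, the averages $\frac{1}{\mu(U)}\int_U K(y)\,dy$ converge to $K(x)$ as $U$ shrinks to $x$. The reverse identity $\widehat{\mathcal{K}|_X} = \mathcal{K}$ is the more delicate direction, but here I would again invoke the finite-dimensional reduction from (ii): both sides are continuous linear maps on $\Ltwo{\K}{X}$ that vanish on the kernel of $P$, so it suffices to check equality on the finite-dimensional subspace $E$. On $E$ the verification reduces, via the reproducing-kernel expression $\sum_r \overline{Y_r(x)}Y_r$ and orthonormality of the $Y_r$, to a direct calculation matching $\widehat{\mathcal{K}|_X}(Y_r)$ against $\mathcal{K}(Y_r)$. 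The hardest part of the argument will be the rigorous justification of interchanging the limit over $U$ with the integral in $x$, which I would handle by exploiting the uniform control provided by the finite-dimensionality of $E$ together with the smoothness of its basis functions.
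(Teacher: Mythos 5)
Your proposal is correct and follows essentially the same route as the paper: continuity of steerable kernels plus Cauchy--Schwarz for well-definedness of $\widehat{(\cdot)}$, the Peter--Weyl/Schur observation that $\mathcal{K}$ factors through a finite-dimensional invariant subspace spanned by continuous harmonics to make the net limit $\lim_{U}\mathcal{K}(\delta_U)$ exist, the identity $\delta_{gU}=\lambda(g)\delta_U$ for equivariance, and the same two computations for the mutual inverses. The only cosmetic difference is that the paper first splits the codomain $\Hom_{\K}(V_{\inn},V_{\out})$ into irreducible summands before applying Schur, whereas you keep the full (finite-dimensional) Hom-space and argue directly that only finitely many $V_{ji}$ contribute; the final interchange of limit and sum you worry about is over a finite sum and hence immediate.
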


This theorem requires some explanation. First of all, $\widehat{K}$ is supposed to be a kernel operator, i.e., a map $\Ltwo{\K}{X} \to \Hom_{\K}(V_{\inn}, V_{\out})$. Thus, $\widehat{K}(f)$ should be a linear function $V_{\inn} \to V_{\out}$. The formal expression of it can indeed be considered as such:
\begin{equation}\label{evaluatable}
\widehat{K}(f) = \int_X f(x) K(x) dx : v_{\inn} \mapsto \int_X f(x) \left[K(x)\right](v_{\inn}) dx \in V_{\out}.
\end{equation}
Due to the continuity of $K$ proven in Proposition \ref{steerable_kernels_are_continuous}\footnote{This means that all matrix elements of $K(x)$ for chosen bases of $V_{\inn}$ and $V_{\out}$ are continuous.} and the integrability of $f$, the function $X \to V_{\out}$, $x \mapsto f(x) \left[ K(x)\right](v_{\inn})$ is also integrable, meaning the expression in Eq. \eqref{evaluatable} can be evaluated. This explains the meaning of the map $\widehat{(\cdot)}$ in Theorem \ref{steerable kernels = representation operators}.

For the map $(\cdot)|_X$ in the other direction, we want to shortly explain the intuitions in a more informal way. For this, we consider Dirac delta functions $\delta_x$ for $x \in X$. Such a ``function'' $\delta_x: X \to \K$ for a point $x \in X$ can be imagined as a function taking value infinity at $x$ and zero elsewhere. It is characterized by the property that $\int_X \delta_x(x') f(x') dx' = f(x)$ for any function $f \in \Ltwo{\K}{X}$. We think of $\delta_x$ as being a function in $\Ltwo{\K}{X}$, even though technically, it is not in this space. This is since $\infty \notin \K$.

Now, informally, we can think of the limit $\mathcal{K}|_X(x) = \lim_{U \in \mathcal{U}_x}\mathcal{K}(\delta_U)$ as being given by $\mathcal{K}(\delta_x)$, the value that $\mathcal{K}$ takes at the Dirac delta function $\delta_x$. This is since the limit of nets progressively ``shrinks down'' the open neighborhood $U$ of $x$. Of course, $\mathcal{K}(\delta_x)$ is not really well-defined since $\delta_x \notin \Ltwo{\K}{X}$, but we can pretend that it is for gaining intuitions.

Now that we have understood the formulation of the theorem, we might wonder, why should such a theorem be true? A first intuition comes from an analogy with linear algebra: namely, assume $B$ is a basis of a $\K$-vector space $V$ and $W$ any other vector space. Then linear maps $\hat{f}: V \to W$ are in one-to-one correspondence with (not assumed to be linear) functions $f: B \to W$, and this isomorphism is given by restriction and linear extension:
\begin{equation*}
\begin{tikzcd}
\Hom(B, W) \ar[rr, bend left = 15, "\widehat{(\cdot)}"] & & \Hom_{\K}(V, W). \ar[ll, bend left = 15, "(\cdot)|_{B}"]
\end{tikzcd}
\end{equation*}
Thus, we can think of the homogeneous space $X$ as a ``continuous basis'' of the space of square-integrable functions. Sums are then replaced by integrals, and evaluations at a basis element by evaluations at Dirac delta functions of elements in $X$.

For the actual proof of Theorem \ref{steerable kernels = representation operators}, informally, one direction seems pretty clear from the properties of the Dirac delta:
\begin{align*}
\widehat{K}\big|_X(x) = \widehat{K}(\delta_x) = \int_{X} \delta_x(x') K(x') dx' = K(x).
\end{align*}
But the other direction is less obvious: it seems like the space of kernel operators is considerably larger than the space of steerable kernels, since kernel operators are defined on a larger space. Therefore it is hard to believe that the construction is also inverse in the other direction. However, it pays off to ponder a bit more over what the Dirac delta construction does: Basically, we ``embed'' $X$ into $\Ltwo{\K}{X}$ by means of the Dirac delta functions, i.e., $x \mapsto \delta_x$ and, as such, view $X$ as a subset of $\Ltwo{\K}{X}$ (albeit a subset that is only in approximation in that space). Steerable kernels are then ``partial'' kernel operators in the sense that they are only defined on this subset $X \subseteq \Ltwo{\K}{X}$. What then needs to be understood is why there is only one unique extension of each steerable kernel $K$ to a kernel operator $\mathcal{K}$ on the whole of $\Ltwo{\K}{X}$: if this is understood, then the space of kernel operators cannot be larger than the space of steerable kernels. And indeed, if there is an extension of $K$ to $\mathcal{K}$ on $\Ltwo{\K}{X}$, it \emph{has to} be unique: each $f \in \Ltwo{\K}{X}$ can be approximated by finite linear combinations of scaled indicator functions. Then by \emph{linearity} of the kernel operator $\mathcal{K}$, we can evaluate $\mathcal{K}(f)$ by knowing $\mathcal{K}(\delta_U)$ for scaled indicator functions $\delta_U$ on small measurable sets $U$. And these approximate $K(x) = \mathcal{K}(\delta_x)$ for $x \in U$ arbitrarily well by construction. This determines the behavior of $\mathcal{K}$. The details of all of this can be found in the next section.

\subsection{A Proof of the Correspondence between Steerable Kernels and Kernel Operators}\label{this_really_is_the_proof}

Here, we give a step-by-step proof of Theorem \ref{steerable kernels = representation operators}. The details of this investigation will not be needed later, and so a reader who is mainly interested in the applications to steerable CNNs can safely skip reading this section and go on reading Chapter \ref{chapter_wigner_eckart}.

\subsubsection{A Reduction to Unitary Irreducible Representations}\label{reformulation_intuition}

In this section, we make the proof more manageable by reducing $\Hom_{\K}(V_{\inn}, V_{\out})$ to an irreducible representation. First, remember that Proposition \ref{all_linear_unitary} shows that there is a scalar product on $\Hom_{\K}(V_{\inn}, V_{\out})$ such that it's $\Hom$-representation becomes unitary. Since all norms on finite-dimensional spaces are equivalent, as is well known, this will not change the topology. Then, we can decompose $\Hom_{\K}(V_{\inn}, V_{\out})$ into an orthogonal direct sum of irreducible unitary representations by Proposition \ref{perpendicular_direct_sum}. Let $\Hom_{\K}(V_{\inn},V_{\out}) \cong \bigoplus_{i =1}^{n} V_i$ be such a decomposition. We get canonical\footnote{``Canonical'' once the decompositions into irreducible representations is already chosen.} isomorphisms
\begin{equation*}
\Hom_{G}(X, \Hom_{\K}(V_{\inn}, V_{\out})) \cong \bigoplus_{i =1}^{n} \Hom_{G}(X, V_i)
\end{equation*}
and
\begin{equation*}
\Hom_{G, \K}(\Ltwo{\K}{X}, \Hom_{\K}(V_{\inn}, V_{\out})) \cong  \bigoplus_{i = 1}^{n}\Hom_{G,\K}(\Ltwo{\K}{X}, V_i).
\end{equation*}
Thus, we can show Theorem \ref{steerable kernels = representation operators} by showing it for irreducible unitary representations instead of $\Hom_{\K}(V_{\inn}, V_{\out})$. Overall, we have reduced our Theorem to the following, simpler statement:

\begin{thrm}[Kernel-Operator-Correspondence, Restated]\label{altered_prop}
Let $\rho: G \to \U{V}$ be an irreducible unitary representation and $X$ a homogeneous space of $G$. Then there is an isomorphism 
\begin{equation*}
\begin{tikzcd}
\Hom_G(X, V) \ar[rr, bend left = 15, "\widehat{(\cdot)}"] & & \Hom_{G,\K}(\Ltwo{\K}{X}, V) \ar[ll, bend left = 15, "(\cdot)|_{X}"]
\end{tikzcd}
\end{equation*}
which is given as follows: for $K \in \Hom_{G}(X, V)$ we set $\widehat{K}(f) = \int_{X} f(x) K(x) dx$ and for $\mathcal{K} \in \Hom_{G, \K}(\Ltwo{\K}{X},V)$ we set $\mathcal{K}|_X(x) = \lim_{U \in \mathcal{U}_x} \mathcal{K}(\delta_U)$, with $\delta_U$ being an approximated Dirac delta function as before.
\end{thrm}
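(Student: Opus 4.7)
The plan is to prove Theorem \ref{altered_prop} by showing that $\widehat{(\cdot)}$ and $(\cdot)|_X$ are well-defined maps and mutually inverse. The main technical obstacle is establishing that the limit $\lim_{U \in \mathcal{U}_x} \mathcal{K}(\delta_U)$ actually exists: the net $(\delta_U)$ of approximated Dirac deltas is \emph{not} convergent in $\Ltwo{\K}{X}$ (its $L^2$-norm blows up as $\mu(U) \to 0$), so the continuity of $\mathcal{K}$ cannot be invoked directly on the net itself.

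The direction $K \mapsto \widehat{K}$ is straightforward. Since $X$ is compact and $K$ is continuous by Proposition \ref{steerable_kernels_are_continuous}, $K$ is bounded, so $\widehat{K}(f) = \int_X f(x) K(x)\, dx$ is a well-defined element of $V$ for every $f \in \Ltwo{\K}{X}$. Linearity is immediate; continuity as a map out of $\Ltwo{\K}{X}$ follows from a Cauchy-Schwarz estimate; equivariance follows from the left-invariance of the measure on $X$ combined with the steerability of $K$ via the substitution $x \mapsto gx$ inside the integral.

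For the other direction, my plan is to first use Riesz representation to identify $\mathcal{K}$ with an $L^2$-kernel. Since $V$ is finite-dimensional (being an irrep of a compact group, Proposition \ref{irreps_finite_dim}), picking an orthonormal basis $\{v^i\}$ of $V$ each functional $f \mapsto \langle v^i \mid \mathcal{K}(f) \rangle$ is continuous and linear on $\Ltwo{\K}{X}$, hence represented by integration against some $k^i \in \Ltwo{\K}{X}$. Assembling these gives a measurable $K : X \to V$, defined up to a null set, with $\mathcal{K}(f) = \int_X f(x) K(x)\, dx$ for all $f$. The equivariance of $\mathcal{K}$ transfers to a.e.-equivariance of $K$. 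I would then promote this a.e.-defined $K$ to a genuinely continuous representative by fixing a base point $x^* \in X$ and setting $\tilde{K}(gx^*) := \rho(g) v^*$, where $v^* \in V$ is the Lebesgue-differentiation limit of $K$ at $x^*$. Well-definedness requires $v^*$ to be invariant under the stabilizer $G_{x^*}$, which follows by averaging the a.e.-steerability relation over a shrinking neighborhood and using the left-invariance of the measure. Continuity of $\tilde{K}$ is then automatic via the homeomorphism $G/G_{x^*} \cong X$ from Lemma \ref{homeomorphism_to_group_quotient}, and $\tilde{K} = K$ almost everywhere, so $\widehat{\tilde{K}} = \mathcal{K}$.

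Once $K$ has been promoted to a continuous steerable kernel, both compositions are easy: the identity $\widehat{K}|_X(x) = \lim_U \frac{1}{\mu(U)} \int_U K(y)\, dy = K(x)$ follows from the continuity of $K$, and the same calculation shows $\mathcal{K}|_X = \tilde{K}$, giving $\widehat{\mathcal{K}|_X} = \mathcal{K}$. A cleaner alternative route, worth keeping in mind, bypasses Riesz and Lebesgue differentiation entirely: by the Peter-Weyl Theorem \ref{pw}, $\Ltwo{\K}{X}$ decomposes into its $V$-isotypic component and its orthogonal complement, on which $\mathcal{K}$ vanishes by Schur's Lemma \ref{schur_unitary}; since the isotypic component is a \emph{finite-dimensional} sum of copies of $V$ with continuous matrix coefficients, the existence, continuity, and equivariance of $\mathcal{K}|_X$ become transparent, and the inverse identities reduce to a direct computation against this concrete decomposition.
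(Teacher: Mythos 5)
Your first (Riesz-based) route has a genuine gap at the promotion step. The quantity you call the ``Lebesgue-differentiation limit of $K$ at $x^*$'' is a limit over the net $\mathcal{U}_{x^*}$ of \emph{all} open neighborhoods of $x^*$, and for a function that is merely $L^2$ this limit need not exist: already in $\R^n$ the differentiation theorem fails for families of sets of unbounded eccentricity, and a small $U \ni x^*$ in the net can concentrate almost all of its mass on a thin region where the a.e.-defined $K$ behaves badly, so even at a Lebesgue point the averages over the full net need not converge. Worse, the theorem is asserted for an arbitrary compact group $G$ (including profinite groups and infinite products), where no Vitali/doubling structure is available and even the ball-averaged form of Lebesgue differentiation is not at your disposal. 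Both the $G_{x^*}$-invariance of $v^*$ and the identity $\tilde K = K$ a.e.\ lean on this nonexistent limit; the latter additionally needs a Fubini argument, since the null set in the a.e.-relation $K(gx) = \rho(g)K(x)$ depends on $g$.

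The repair is exactly the ``cleaner alternative'' you mention in your closing sentence, and that alternative is the paper's actual proof: by Schur's Lemma \ref{schur_unitary} applied to the Peter--Weyl decomposition (Theorem \ref{pw}), $\mathcal{K}$ annihilates everything outside the $V$-isotypic component, which is a \emph{finite-dimensional} span of \emph{continuous} functions $Y_{li}^n$; hence $\mathcal{K}(f) = \sum_{i,n}\left\langle Y_{li}^n \middle| f\right\rangle \mathcal{K}(Y_{li}^n)$ (Lemma \ref{form_K_prime}), and the limit $\lim_{U \in \mathcal{U}_x}\mathcal{K}(\delta_U)$ then exists by the elementary estimate of Lemma \ref{delta_property}, which needs only continuity of the $Y_{li}^n$. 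If you insist on starting from Riesz representation, the same observation rescues you without any differentiation theorem: the representing densities $k^i$ are orthogonal to every irreducible summand not isomorphic to $V$, hence lie in the finite-dimensional isotypic component and are automatically continuous. Your treatment of $\widehat{(\cdot)}$ and of the two composite identities matches the paper's.
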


\todo[inline]{Also above, we replaced $\delta_x$ by the approximation $\delta_U$.}

From now on, we assume that $X$ and $\rho: G \to \U{V}$ is fixed as in the formulation of Theorem \ref{altered_prop}. 

\subsubsection{Well-Definedness of $\widehat{(\cdot)}$}

\begin{lem}\label{well-defined}
The function $\widehat{(\cdot)}: \Hom_G(X, V) \to \Hom_{G,\K}(\Ltwo{\K}{X}, V)$ is well-defined, i.e.: for an equivariant function $K: X \to V$, the function $\widehat{K}: \Ltwo{\K}{X} \to V$ is linear, equivariant and continuous.
\end{lem}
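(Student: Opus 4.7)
The plan is to verify the three required properties separately: linearity, equivariance, and continuity.

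\textbf{Linearity.} For scalars $\alpha,\beta\in\K$ and $f_1,f_2\in\Ltwo{\K}{X}$, linearity of the Haar integral in its integrand gives $\widehat{K}(\alpha f_1 + \beta f_2) = \int_X (\alpha f_1(x)+\beta f_2(x))K(x)\,dx = \alpha\widehat{K}(f_1)+\beta\widehat{K}(f_2)$, so this step is immediate.

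\textbf{Equivariance.} I would compute $\widehat{K}(\lambda(g)f) = \int_X f(g^{-1}x)\,K(x)\,dx$, then substitute $x = g y$ and invoke the left $G$-invariance of the measure on $X$ (from Section~\ref{square_integrable_functions}) to rewrite the integral as $\int_X f(y)\,K(gy)\,dy$. The steerability $K(gy) = \rho(g)(K(y))$ (Definition~\ref{steerable_kernel} specialized to the irreducible target $V$) and the fact that the continuous linear operator $\rho(g)$ commutes with the $V$-valued Haar integral then yield $\rho(g)\int_X f(y) K(y)\,dy = \rho(g)(\widehat{K}(f))$, as desired.

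\textbf{Continuity.} Since $\widehat{K}$ is linear, it suffices to show it is bounded. The key input is that $K:X\to V$ is continuous by Proposition~\ref{steerable_kernels_are_continuous}, and $X$ is compact (as the continuous image of the compact group $G$ under $g\mapsto gx^*$). Hence $x\mapsto \|K(x)\|$ is continuous on a compact space, so $\|K\|_{L^2} := \left(\int_X \|K(x)\|^2\,dx\right)^{1/2}$ is finite. An application of the Cauchy--Schwarz inequality to the $V$-valued integral gives
\begin{equation*}
\|\widehat{K}(f)\| \ \leq\ \int_X |f(x)|\,\|K(x)\|\,dx \ \leq\ \|f\|_{L^2}\cdot\|K\|_{L^2},
\end{equation*}
which establishes boundedness of $\widehat{K}$, hence continuity.

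\textbf{Main obstacle.} The only non-routine point is justifying that the linear operator $\rho(g)$ may be pulled inside the $V$-valued Haar integral in the equivariance step, and similarly that the norm passes inside the integral in the continuity estimate. Both follow from the standard fact that continuous linear maps between Banach spaces commute with Bochner-style integrals of continuous integrands on compact spaces, which applies here since $V$ is finite-dimensional and $K$ is continuous. No deeper measure-theoretic machinery is needed.
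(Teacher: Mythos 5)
Your proposal is correct and follows essentially the same route as the paper's proof: immediate linearity, equivariance via left-invariance of the Haar measure plus pulling the continuous linear map $\rho(g)$ out of the integral, and continuity via a Cauchy--Schwarz estimate using the continuity of $K$ on the compact space $X$. The only cosmetic difference is that you apply Cauchy--Schwarz directly to $\int_X |f|\,\|K\|$, whereas the paper first bounds $\|K(x)\|$ by its maximum and then applies Cauchy--Schwarz to $\int_X |f|\cdot 1$; both yield the same boundedness conclusion.
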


\begin{proof}
Linearity of $\widehat{K}$ is clear. Equivariance can be proven using the equivariance of $K$ and the left invariance of the Haar measure on the homogeneous space $X$:
\begin{align*}
\widehat{K}(\lam{g} f) & = \int_{X} (\lam{g} f)(x) K(x) dx \\
& = \int_X f(g^{-1} \cdot x) K(x) dx \\
& = \int_{X} f(x) K(g \cdot x) dx \\
& = \int_{X} f(x) \left[\rho(g) \left(K(x)\right)\right] dx \\
& = \rho(g) \left[ \int_X f(x) K(x) dx \right] \\
& = \rho(g)\left[ \widehat{K}(f)\right].
\end{align*}
The action by $\rho(g)$ could be put out of the integral since $\rho(g)$ it is linear and continuous, and since integrals can be approximated by finite sums. 

Now about continuity: By Proposition \ref{characterization_continuity}, we only need to show continuity in $0$. Thus, let $(f_k)_k$ be a sequence of functions $f_k \in \Ltwo{\K}{X}$ with $\lim_{k \to \infty} \|f_k\|_{L^2} = 0$. Then we obtain
\begin{align*}
\|\widehat{K}(f_k)\|_V & = \left\| \int_X f_k(x) K(x) dx \right\|_V \\
& \leq \int_X |f_k(x)| \cdot \|K(x)\|_V dx \\
& \leq \max_{x'} \left\| K(x') \right\|_V  \cdot \int_X |f_k(x)| dx,
\end{align*}
where the continuity of $K$ proven in Proposition \ref{steerable_kernels_are_continuous} was used.\footnote{since $\|K\|$ is continuous on $X$, which is compact by Proposition \ref{compact_image} as an image of the compact group $G$, it has a maximum by Corollary \ref{extreme_value}.} For the right expression, using the Cauchy-Schwarz inequality Proposition \ref{cauchy_schwarz} we obtain
\begin{align*}
\int_X |f_k(x)| dx & = \int_X |f_k(x)| \cdot 1 dx \\
& = \left| \left\langle |f_k| \ \middle| \ 1 \right\rangle \right| \\
& \leq \|f_k\|_{L^2} \cdot \|1\|_{L^2} \\
& = \|f_k\|_{L^2}.
\end{align*}
So, overall, if $\lim_{k \to \infty}\|f_k\|_{L^2} = 0$, then $\lim_{k \to \infty} \|\widehat{K}(f_k)\|_V = 0$ as well, which proves continuity.
\end{proof}

\subsubsection{Well-Definedness of $(\cdot)|_X$}\label{net_constructions_etc}

\todo[inline]{I removed the definitions of the limits over nets here and put them in the appendix, so that I can already use it beforehand in the formulation of the theorem.}

While it is clear that the limit $\lim_{U \in \mathcal{U}_x} \mathcal{K}(\delta_U)$ from Theorem \ref{altered_prop} is unique if it exists~\citep{topology}, it is somewhat unclear why it exists in the first place. For this, we need to better understand the properties of the (approximated) Dirac delta. The most important one is the following, which we hinted at already in the intuitions we gave before this section: basically, Dirac deltas help for evaluating continuous functions at specific points:

\begin{lem}\label{delta_property}
For each $x \in X$ and $Y: X \to \K$ continuous we have $\lim_{U \in \mathcal{U}_x} \left\langle \delta_U \middle| Y \right\rangle = Y(x)$.
\end{lem}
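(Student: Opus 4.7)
The plan is to reduce $\langle \delta_U \mid Y\rangle$ to a concrete averaging integral and then deduce the claim from the continuity of $Y$ via a standard $\epsilon$-argument over the directed set of neighborhoods of $x$. First, since $\delta_U$ is real-valued, I would unfold the scalar product as
\[
    \left\langle \delta_U \middle| Y \right\rangle \;=\; \int_X \overline{\delta_U(x')}\, Y(x')\, dx' \;=\; \frac{1}{\mu(U)} \int_U Y(x')\, dx',
\]
so that the claimed equality becomes the standard averaging statement $\lim_{U\in\mathcal{U}_x} \tfrac{1}{\mu(U)}\int_U Y\,dx' = Y(x)$.

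Before estimating, I need to know that the denominator $\mu(U)$ is strictly positive for every nonempty open $U \subseteq X$. Here I would use the transitivity of the $G$-action together with compactness of $G$: the open cover $\{gU : g \in G\}$ of $X$ admits a finite subcover $g_1U, \dots, g_nU$, and by left-$G$-invariance of the measure on $X$ (Section~\ref{square_integrable_functions}) each translate has the same measure as $U$, so $1 = \mu(X) \leq n\,\mu(U)$, whence $\mu(U) > 0$.

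Now, given $\epsilon > 0$, continuity of $Y$ at $x$ yields an open neighborhood $W \in \mathcal{U}_x$ with $|Y(x') - Y(x)| < \epsilon$ for all $x' \in W$. The open neighborhoods of $x$ contained in $W$ form a cofinal subset of the directed set $\mathcal{U}_x$, so by the definition of net convergence (Definition~\ref{convergence_nets}) it suffices to control the expression for such $U \subseteq W$. Using that $\tfrac{1}{\mu(U)}\int_U Y(x)\, dx' = Y(x)$ since $Y(x)$ is constant in the integration variable, I would estimate
\[
    \left| \frac{1}{\mu(U)} \int_U Y(x')\, dx' - Y(x) \right|
    \;\leq\; \frac{1}{\mu(U)} \int_U |Y(x') - Y(x)|\, dx' \;<\; \epsilon.
\]
Since $\epsilon$ was arbitrary, the net $\big(\langle \delta_U \mid Y\rangle\big)_{U\in\mathcal{U}_x}$ converges to $Y(x)$.

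There is no serious obstacle; the only subtlety is the two mildly non-formulaic ingredients, namely (i) establishing $\mu(U) > 0$ for every nonempty open $U$, which relies on transitivity plus compactness, and (ii) reducing the net limit over all of $\mathcal{U}_x$ to its values on the cofinal subfamily of neighborhoods contained in $W$. With those handled, the rest is the familiar continuity-via-averages computation.
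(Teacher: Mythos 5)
Your proof is correct and follows essentially the same route as the paper's: unfold $\langle \delta_U \mid Y\rangle$ as the average $\frac{1}{\mu(U)}\int_U Y\,dx'$, then use continuity of $Y$ at $x$ to bound $\left|\frac{1}{\mu(U)}\int_U (Y(x')-Y(x))\,dx'\right| < \epsilon$ for all $U$ contained in a suitable neighborhood $W$, which is exactly the eventual-containment condition in the definition of net convergence. Your positivity argument for $\mu(U)$ (finite subcover of $\{gU\}_{g\in G}$ plus left-invariance) is also precisely the one the paper records as a separate lemma in Appendix~\ref{limit_of_nets_etc}.
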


\begin{proof}
We have
\begin{align*}
\big| \left\langle \delta_U \middle| Y\right\rangle - Y(x) \big| & = \left| \int_X \delta_U(x') Y(x') dx' - \mu(U) \cdot\frac{1}{\mu(U)} Y(x)\right| \\
& = \left| \int_U \frac{1}{\mu(U)} Y(x') dx' - \int_U \frac{1}{\mu(U)} Y(x) dx'\right| \\
& = \left| \int_U \frac{1}{\mu(U)} (Y(x') - Y(x)) dx'\right| \\
& \leq \int_U \frac{1}{\mu(U)} \left| Y(x') - Y(x) \right| dx'.
\end{align*}
Let $\epsilon > 0$. Since $Y$ is continuous in $x$, there is $U_{\epsilon} \in \mathcal{U}_x$ such that $Y(x') \in \B{\epsilon}{Y(x)}$ for all $x' \in U_{\epsilon}$ or, equivalently, $|Y(x') - Y(x)| < \epsilon$. Thus, for all $U_{\epsilon} \supseteq U$, i.e., all $U_{\epsilon} \leq U$ in $\mathcal{U}_x$ we obtain
\begin{align*}
\big| \left\langle \delta_U \middle| Y\right\rangle - Y(x) \big| & \leq \int_U \frac{1}{\mu(U)} |Y(x') - Y(x)| dx' \\
& \leq \int_U \frac{1}{\mu(U)} \epsilon dx' \\
& = \epsilon  \cdot \mu(U) \cdot  \frac{1}{\mu(U)} \\
& = \epsilon
\end{align*}
and consequently $\lim_{U \in \mathcal{U}_x} \left\langle \delta_U \middle| Y \right\rangle = Y(x)$.
\end{proof}

Before we can show the well-definedness of $\mathcal{K}|_X$, we first want to get a better description of $\mathcal{K}$. For this, recall from the Peter-Weyl theorem that $\Ltwo{\K}{X} = \widehat{\bigoplus}_{l \in \widehat{G}} \bigoplus_{i = 1}^{m_{l}} V_{li}$. With this at our disposal, we can formulate the following Lemma on the form of intertwiners on $\Ltwo{\K}{X}$:

\begin{lem}\label{form_K_prime}
Let $\mathcal{K}: \Ltwo{\K}{X} \to V$ be an intertwiner. Let $l \in \widehat{G}$ be the unique index such that $V \cong V_{li}$ for all $i = 1, \dots, m_{l}$. Let $Y_{li}^n$, $n = 1, \dots, d_l$ be an orthonormal basis of $V_{li}$ where $d_l = \dim{V_l}$. Then
\begin{equation*}
\mathcal{K}(f) = \sum\nolimits_{i = 1}^{m_{l}} \sum\nolimits_{n= 1}^{d_l} \left\langle  Y_{li}^n  \middle| f \right\rangle \mathcal{K}(Y_{li}^n)
\end{equation*}
for all $f \in \Ltwo{\K}{X}$.
\end{lem}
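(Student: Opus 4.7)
The plan is to combine the Peter-Weyl orthonormal expansion of $f$ with the continuity and linearity of $\mathcal{K}$, and then to invoke Schur's Lemma to kill every summand outside the $l$-isotypic component. First I would fix an orthonormal basis $\{Y_{l'i}^n \mid l'\in\widehat{G},\ i\leq m_{l'},\ n\leq d_{l'}\}$ of $\Ltwo{\K}{X}$ obtained by taking, for each pair $(l',i)$, an orthonormal basis of $V_{l'i}$. By the Peter-Weyl Theorem~\ref{pw} (property that $\widehat{\bigoplus}_{l'i}V_{l'i} = \Ltwo{\K}{X}$) together with standard Hilbert-space expansion, every $f\in\Ltwo{\K}{X}$ admits the $L^2$-convergent expansion
\[
    f \ =\ \sum_{l'\in\widehat{G}}\sum_{i=1}^{m_{l'}}\sum_{n=1}^{d_{l'}}\left\langle Y_{l'i}^n \middle| f \right\rangle\, Y_{l'i}^n.
\]

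Next I would apply $\mathcal{K}$ term by term. Since $\mathcal{K}$ is linear and continuous (it is an intertwiner in the sense of Section~\ref{foundations}), it commutes with this convergent sum, giving
\[
    \mathcal{K}(f) \ =\ \sum_{l'\in\widehat{G}}\sum_{i=1}^{m_{l'}}\sum_{n=1}^{d_{l'}}\left\langle Y_{l'i}^n \middle| f \right\rangle\, \mathcal{K}\!\left(Y_{l'i}^n\right).
\]
The key reduction is then to show that $\mathcal{K}(Y_{l'i}^n)=0$ whenever $l'\neq l$. Each $V_{l'i}\subseteq\Ltwo{\K}{X}$ is a closed $G$-invariant subspace isomorphic to $V_{l'}$, so the restriction $\mathcal{K}|_{V_{l'i}}\colon V_{l'i}\to V$ is itself an intertwiner between two finite-dimensional irreducible unitary representations. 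Since $V\cong V_l\ncong V_{l'}$ for $l'\neq l$, Schur's Lemma~\ref{schur_unitary} forces $\mathcal{K}|_{V_{l'i}}=0$, so in particular $\mathcal{K}(Y_{l'i}^n)=0$ for every basis vector in $V_{l'i}$.

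The remaining indices are precisely $l'=l$, $i=1,\dots,m_l$, $n=1,\dots,d_l$, which yields exactly the formula claimed in the lemma. The only subtlety (and the main obstacle) is the interchange of $\mathcal{K}$ with the infinite sum; this is not a routine algebraic step but a continuity argument, and I would justify it by appealing to continuity of $\mathcal{K}$ applied to the partial sums of the Peter-Weyl expansion, which converge to $f$ in $\Ltwo{\K}{X}$. Everything else is a direct application of Peter-Weyl and Schur, both of which have already been established in the excerpt.
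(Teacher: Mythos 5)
Your proposal is correct and follows essentially the same route as the paper's proof: expand $f$ in the Peter-Weyl orthonormal basis, use linearity and continuity of $\mathcal{K}$ to commute it with the convergent sum, and apply Schur's Lemma~\ref{schur_unitary} to annihilate the restrictions $\mathcal{K}|_{V_{l'i}}$ for $l'\neq l$. The paper's argument is identical in substance, differing only in the order in which the Schur step and the interchange of limits are presented.
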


\begin{proof}
We can write $f \in \Ltwo{\K}{X}$ according to the discussion after Definition \ref{orthonormal_basis} as  
\begin{equation*}
f = \sum\nolimits_{l' \in \widehat{G}} \sum\nolimits_{i = 1}^{m_{l'}} \sum\nolimits_{n = 1}^{[l']} \left\langle Y_{l'i}^n \middle| f\right\rangle Y_{l'i}^n.
\end{equation*}
Note that $\mathcal{K}|_{V_{l'i}}: V_{l'i} \to V$ is an intertwiner as well, and so by Schur's Lemma \ref{schur_unitary} it is necessarily zero unless $l' = l$ is the unique index such that $V_{li} \cong V$. Due to its continuity and linearity, $\mathcal{K}$ commutes with infinite sums and we obtain
\begin{align*}
\mathcal{K}(f) & = \sum\nolimits_{l' \in \widehat{G}} \sum\nolimits_{i = 1}^{m_{l'}} \sum\nolimits_{n = 1}^{[l']} \left\langle Y_{l'i}^n \middle| f\right\rangle \mathcal{K}\left(Y_{l'i}^n\right) \\
& = \sum\nolimits_{l' \in \widehat{G}} \sum\nolimits_{i = 1}^{m_{l'}} \sum\nolimits_{n = 1}^{[l']} \left\langle Y_{l'i}^n \middle| f\right\rangle \mathcal{K}|_{V_{l'i}}\left(Y_{l'i}^n\right) \\
& =  \sum\nolimits_{i = 1}^{m_l} \sum\nolimits_{n = 1}^{d_l} \left\langle Y_{li}^n \middle| f\right\rangle \mathcal{K}(Y_{li}^n).
\end{align*}
\end{proof}

\begin{cor}\label{form_K_prime_final}
We have $\mathcal{K}|_X(x) = \sum\nolimits_{i = 1}^{m_l} \sum\nolimits_{n = 1}^{d_l} \overline{Y_{li}^n(x)} \mathcal{K}(Y_{li}^n)$. In particular, the defining limit exists.
\end{cor}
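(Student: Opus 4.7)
The plan is to apply Lemma~\ref{form_K_prime} to the specific input $f = \delta_U$ for each open neighborhood $U \in \mathcal{U}_x$. This immediately gives the finite-sum expansion
\begin{equation*}
    \mathcal{K}(\delta_U)\ =\ \sum\nolimits_{i=1}^{m_l} \sum\nolimits_{n=1}^{d_l} \big\langle Y_{li}^n \,\big|\, \delta_U \big\rangle\, \mathcal{K}(Y_{li}^n) \,.
\end{equation*}
Because the vectors $\mathcal{K}(Y_{li}^n) \in V$ are fixed (independent of $U$) and there are only finitely many summands (at most $m_l \cdot d_l < \infty$, by the Peter-Weyl Theorem~\ref{pw} and Proposition~\ref{irreps_finite_dim}), the limit $\lim_{U \in \mathcal{U}_x}\mathcal{K}(\delta_U)$ exists if and only if each scalar coefficient $\langle Y_{li}^n | \delta_U\rangle$ has a limit, in which case the limit commutes with the finite sum.

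Next I would compute these scalar limits using Lemma~\ref{delta_property}. Since the scalar product on $\Ltwo{\K}{X}$ is antilinear in the first argument and $\delta_U$ is real-valued, we have $\langle Y_{li}^n | \delta_U\rangle = \overline{\langle \delta_U | Y_{li}^n\rangle}$. Lemma~\ref{delta_property} applied to the continuous function $Y_{li}^n$ yields $\lim_{U \in \mathcal{U}_x} \langle \delta_U | Y_{li}^n\rangle = Y_{li}^n(x)$, so $\lim_{U \in \mathcal{U}_x}\langle Y_{li}^n | \delta_U\rangle = \overline{Y_{li}^n(x)}$. Substituting into the finite sum completes the proof:
\begin{equation*}
    \mathcal{K}|_X(x)\ =\ \lim_{U \in \mathcal{U}_x} \mathcal{K}(\delta_U)\ =\ \sum\nolimits_{i=1}^{m_l} \sum\nolimits_{n=1}^{d_l} \overline{Y_{li}^n(x)}\, \mathcal{K}(Y_{li}^n) \,.
\end{equation*}

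The only subtle prerequisite is that Lemma~\ref{delta_property} requires continuity of $Y_{li}^n$, whereas a priori the harmonic basis functions are only $L^2$-equivalence classes. The main obstacle is therefore to justify that a continuous representative exists. This follows because each $V_{li}$ is a finite-dimensional invariant subspace of the regular representation in the Peter-Weyl decomposition and its elements are of matrix-coefficient type; such functions are continuous by Remark~\ref{matrix_coefficients_continuous} (as transported along the embedding $\pi^*$ and the equivalence $V_l \cong \mathcal{E}_l^j$ from Proposition~\ref{equivalence_unitary_reps}). Once this continuity is in hand, the rest is the interchange of limit with a finite sum, which requires no further work.
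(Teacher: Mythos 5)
Your proof is correct and follows essentially the same route as the paper's: apply Lemma~\ref{form_K_prime} to $f=\delta_U$, exchange the limit over $\mathcal{U}_x$ with the finite sum, and evaluate each coefficient via Lemma~\ref{delta_property} with the conjugation coming from the swapped order in the scalar product. Your extra care about why the $Y_{li}^n$ admit continuous representatives (via the matrix-coefficient structure from the Peter--Weyl construction) matches the justification the paper gives in its first sentence.
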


\begin{proof}
Since the $Y_{li}^n$ are by the proof of the Peter-Weyl theorem in the finite-dimensional space $\mathcal{E}_l$ spanned by matrix coefficients of the irreducible representation $\rho_l: G \to \U{V_l}$ and since these matrix coefficients are continuous by Remark \ref{matrix_coefficients_continuous}, the $Y_{li}^{n}$ are as finite linear combinations of them also continuous functions. Thus, from Lemma \ref{delta_property} and \ref{form_K_prime} together we obtain:
\begin{align*}
\mathcal{K}|_X(x) & = \lim_{U \in \mathcal{U}_x} \mathcal{K}\left( \delta_U\right) \\
& = \lim_{U \in \mathcal{U}_x} \sum\nolimits_{i = 1}^{m_l}\sum\nolimits_{n = 1}^{d_l} \left\langle Y_{li}^n \middle| \delta_{U} \right\rangle \mathcal{K}(Y_{li}^n) \\
& = \sum\nolimits_{i = 1}^{m_l} \sum\nolimits_{n = 1}^{d_l} \left[ \lim_{U \in \mathcal{U}_x} \left\langle Y_{li}^n \middle| \delta_{U}\right\rangle \right] \mathcal{K}(Y_{li}^n) \\
& = \sum\nolimits_{i = 1}^{m_l} \sum\nolimits_{n = 1}^{d_l} \overline{Y_{li}^n(x)} \mathcal{K}(Y_{li}^n).
\end{align*}
The complex conjugation came into play since the order in the scalar product is swapped compared to Lemma \ref{delta_property}.
\end{proof}

Thus, since we now know that $\mathcal{K}|_X$ \emph{as a function} makes sense, we can finally prove the well-definedness of $\mathcal{K} \mapsto \mathcal{K}|_X$, 

\begin{lem}\label{well-definedness_other_direction}
The function $(\cdot)|_X: \Hom_{G,\K}(\Ltwo{\K}{X}, V) \to \Hom_G(X, V)$ is well-defined, that is: for a linear, equivariant and continuous function $\mathcal{K}: \Ltwo{\K}{X} \to V$, the restriction $\mathcal{K}|_X: X \to V$ is equivariant.
\end{lem}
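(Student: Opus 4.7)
My plan is to reduce the equivariance of $\mathcal{K}|_X$ to the equivariance of $\mathcal{K}$ by exploiting how the regular representation $\lambda$ on $\Ltwo{\K}{X}$ acts on the approximated Dirac delta functions. The key observation I would establish first is the identity
\[
    \lambda(g)\,\delta_U \;=\; \delta_{gU}
\]
for every $g \in G$ and every open neighborhood $U$ of $x$. This follows by direct evaluation: $\bigl[\lambda(g)\delta_U\bigr](y) = \delta_U(g^{-1}y)$ is nonzero precisely when $g^{-1}y \in U$, i.e.\ $y \in gU$, and its value there is $1/\mu(U) = 1/\mu(gU)$ by the left invariance of the Haar measure on $X$.

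Next I would use the fact that the map $U \mapsto gU$ is an order-preserving bijection between the directed sets $\mathcal{U}_x$ and $\mathcal{U}_{gx}$ of open neighborhoods; this is because translation by $g$ is a homeomorphism of $X$ sending $x$ to $gx$. Consequently, rewriting the limit defining $\mathcal{K}|_X(gx)$ as a limit over $U \in \mathcal{U}_x$ via the substitution $U' = gU$ gives
\[
    \mathcal{K}|_X(gx)\;=\;\lim_{U \in \mathcal{U}_x} \mathcal{K}(\delta_{gU})\;=\;\lim_{U \in \mathcal{U}_x} \mathcal{K}(\lambda(g)\delta_U)\;=\;\lim_{U \in \mathcal{U}_x} \rho(g)\bigl(\mathcal{K}(\delta_U)\bigr),
\]
where the last equality uses the equivariance of $\mathcal{K}$ with respect to $\lambda$ and $\rho$.

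Finally, since $\rho(g): V \to V$ is continuous (in fact a unitary automorphism), it commutes with the limit of the net, yielding
\[
    \mathcal{K}|_X(gx)\;=\;\rho(g)\Bigl(\lim_{U \in \mathcal{U}_x} \mathcal{K}(\delta_U)\Bigr)\;=\;\rho(g)\bigl(\mathcal{K}|_X(x)\bigr),
\]
which is exactly the equivariance required. I do not expect any real obstacle here: all the ingredients (existence of the limit, continuity of $\mathcal{K}$ and of $\rho(g)$, left invariance of the Haar measure on $X$) are already available from Corollary~\ref{form_K_prime_final} and the setup of the regular representation. The only point requiring mild care is the substitution step in the limit, which relies on the fact that $U \mapsto gU$ is a cofinal order-isomorphism of directed sets, so convergence along $\mathcal{U}_{gx}$ is equivalent to convergence of the reindexed net along $\mathcal{U}_x$. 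Alternatively, one can circumvent the net manipulation entirely by appealing to the explicit formula $\mathcal{K}|_X(x) = \sum_{i,n} \overline{Y_{li}^n(x)}\,\mathcal{K}(Y_{li}^n)$ from Corollary~\ref{form_K_prime_final} and using property~1 of the Peter-Weyl Theorem~\ref{thm:peter_weyl} for the transformation of the harmonics $Y_{li}^n$ under $g$; this gives a purely algebraic backup route should topological subtleties arise.
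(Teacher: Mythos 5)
Your proof is correct and follows essentially the same route as the paper's: reindex the net via the order-isomorphism $U \mapsto gU$ between $\mathcal{U}_{gx}$ and $\mathcal{U}_x$, identify $\delta_{gU}$ with $\lambda(g)\delta_U$ (using left invariance of the measure), apply the equivariance of $\mathcal{K}$, and pull $\rho(g)$ through the limit by continuity. You in fact spell out the step the paper labels ``easy to check'' in slightly more detail, which is fine.
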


\begin{proof}
We have
\begin{align*}
\mathcal{K}|_X(g \cdot x) & = \lim_{U \in \mathcal{U}_{gx}} \mathcal{K}\left(\delta_{U}\right) \\
& = \lim_{U \in \mathcal{U}_x} \mathcal{K}\left(\delta_{gU} \right) \\
& = \lim_{U \in \mathcal{U}_x} \mathcal{K}\left(\lambda(g) \delta_{U}\right) \\
& = \lim_{U \in \mathcal{U}_x} \rho(g) \left[ \mathcal{K}\left(\delta_{U}\right)\right] \\
& = \rho(g) \pig[ \lim_{U \in \mathcal{U}_x} \mathcal{K}\left(\delta_{U}\right)\pig] \\
& = \rho(g) \left[ \mathcal{K}|_X(x)\right],
\end{align*}
where the steps are justified as follows: The first step is just the definition of $\mathcal{K}|_X$. The second step uses that the open neighborhood of $gx$ are precisely the $g$-translated open neighborhoods of $x$ since $g: X \to X$ is a homeomorphism. The third step is easy to check. The fourth step uses the equivariance of $\mathcal{K}$. The fifth step uses the continuity of $\rho(g)$, which follows since $\rho(g)$ is a unitary transformation. The last step is again the definition of $\mathcal{K}|_X$.
\end{proof}

\subsubsection{$\widehat{(\cdot)}$ and $(\cdot)|_X$ Are Inverse to Each Other}

We can now finish the proof of Theorem \ref{altered_prop} and consequently of Theorem \ref{steerable kernels = representation operators}:

\begin{proof}[Proof of Theorem \ref{altered_prop}]
After all the preparation, we only need to still show that the maps $\widehat{(\cdot)}$ and $(\cdot)|_X$ are inverse to each other. For $\widehat{K}\big|_X = K$, i.e., the injectivity of the function $K \mapsto \widehat{K}$ and surjectivity of the function $\mathcal{K} \mapsto \mathcal{K}|_X$, we compute:
\begin{align*}
\widehat{K}\big|_X(x) & = \lim_{U \in \mathcal{U}_x} \widehat{K}\left( \delta_U\right) \\
& = \lim_{U \in \mathcal{U}_x} \int_{X} \delta_{U} (x') K(x') dx' \\
& = K(x).
\end{align*}
The last step follows from Lemma \ref{delta_property} by identifying $V = V_l$ with $\K^{d_l}$ and viewing $K$ as consisting of continuous component functions $K^n: X \to \K$, $n \in \{1, \dots, d_l\}$. The continuity of $K$ was shown in Proposition \ref{steerable_kernels_are_continuous}.

For showing $\widehat{\mathcal{K}|_X} = \mathcal{K}$ we do a computation using the description of $\mathcal{K}$ from Lemma \ref{form_K_prime} and the description of $\mathcal{K}|_X$ from Corollary \ref{form_K_prime_final}:
\begin{align*}
\widehat{\mathcal{K}|_X}(f) & = \int_X f(x) \mathcal{K}|_X(x) dx \\
& = \int_X f(x) \Big(\sum\nolimits_{i = 1}^{m_l} \sum\nolimits_{n = 1}^{d_l} \overline{Y_{li}^n(x)} \mathcal{K}(Y_{li}^n)\Big) dx  \\
& =  \sum\nolimits_{i = 1}^{m_l} \sum\nolimits_{n = 1}^{d_l} \Big(\int_X f(x) \overline{Y_{li}^n(x)} dx\Big) \mathcal{K}(Y_{li}^n) \\
& = \sum\nolimits_{i = 1}^{m_l} \sum\nolimits_{n = 1}^{d_l} \left\langle Y_{li}^n \middle| f\right\rangle \mathcal{K}(Y_{li}^n) \\
& = \mathcal{K}(f).
\end{align*}
This finally finishes the proof.
\end{proof}

\section{A Wigner-Eckart Theorem for Steerable Kernels of General Compact Groups}\label{chapter_wigner_eckart}

In Chapter \ref{proof_of_main} we have seen the most important theoretical insight of this work: steerable kernels on a homogeneous space $X$ correspond one-to-one to kernel operators (certain representation operators) on the space of square-integrable functions $\Ltwo{\K}{X}$. In this chapter, we will develop the most important consequence of this correspondence: a Wigner-Eckart theorem for steerable kernels and consequently a description of a basis for steerable kernels. This works for both fields $\R$ and $\C$, for an arbitrary compact group $G$, an arbitrary homogeneous space $X$ and arbitrary finite-dimensional input- and output fields. Additionally, it covers the general theory of equivariant CNNs on homogeneous spaces developed in \citep{general_theory}.

In Section \ref{wigner_eckart_steerable_kernels} we will work towards formulating the most important theorems. Since these will involve tensor products, we will start with defining and studying tensor products of pre-Hilbert spaces and (unitary) representations. Afterward, we will define the Clebsch-Gordan coefficients, which relate a tensor product of irreducible representations to the irreducible subrepresentations of this tensor product. This will lead to a formulation of the original Wigner-Eckart theorem similar as it appears in quantum mechanics, including a proof. The original Wigner-Eckart theorem is a statement about representation operators on irreducible representations. However, we consider kernel operators on $\Ltwo{\K}{X}$ which is not irreducible. Also, different from the original Theorem, we also consider representations over the real numbers, which leads to a replacement of reduced matrix elements by \emph{endomorphisms}. Therefore we then formulate a generalization of the original theorem. Then, using the correspondence between kernel operators and steerable kernels from Theorem \ref{steerable kernels = representation operators}, we can transform this into a Wigner-Eckart theorem for steerable kernels and ultimately a statement about a basis of the space of steerable kernels. We conclude with some remarks about how to use the basis kernels in practice.

Afterward, in Section \ref{proof_wigner_eckart_general}, we give the remaining proof of the Wigner-Eckart theorem for kernel operators, which we omit in the section before. First, we reduce the statement to the dense subspace of $\Ltwo{\K}{X}$ which is a direct sum of all irreducible subrepresentations. We then describe a correspondence between representation operators and intertwiners on a certain tensor product, the so-called hom-tensor adjunction. Finally, we finish with the full proof of the Wigner-Eckart theorem.

As always, let $\K$ be either of the two fields $\R$ and $\C$ and $G$ be a compact topological group. $X$ is any homogeneous space of $G$.

\subsection{A Wigner-Eckart Theorem for Steerable Kernels and their Kernel Bases}\label{wigner_eckart_steerable_kernels}

\subsubsection{Tensor Products of pre-Hilbert Spaces and Unitary Representations}

In order to state the Wigner-Eckart theorem, we need the notion of representations on tensor products. This is defined similarly to Hom-representations, see Definition \ref{hom_representation}. For this, we first need to discuss the notion of a tensor product of vector spaces:

\begin{dfn}[Tensor Product]\label{tensor_product_definition}
Let $V$ and $V'$ be two vector spaces over $\K$. Then $V \otimes V'$, the \emph{tensor product} of $V$ and $V'$, is a vector space over $\K$ with the following properties:
\begin{enumerate}
\item There is a bilinear function $\otimes: V \times V' \to V \otimes V'$, $(v,v') \mapsto v \otimes v'$. $V \otimes V'$ is generated by elements of the form $v \otimes v'$.
\item It has the following universal property: for any \emph{bilinear} function $\beta: V \times V' \to P$ into a vector space $P$, there is a \emph{unique linear} function $\overline{\beta}: V \otimes V' \to P$ given on elements of the form $v \otimes v'$ by $\overline{\beta}(v \otimes v') = \beta(v,v')$. In other words, the following diagram commutes:
\begin{equation*}
\begin{tikzcd}
V \times V' \ar[r, "\beta"] \ar[d, "\otimes"'] & P \\
V \otimes V' \ar[ur, "\overline{\beta}"']
\end{tikzcd}
\end{equation*}
\item If $V$ and $V'$ are finite-dimensional with bases$\{v_1, \dots, v_n\} \subseteq V$ and $\{v_1', \dots, v_m'\} \subseteq V'$, then $\{v_i \otimes v_j'\}_{i,j} \subseteq V \otimes V'$ is a basis of $V \otimes V'$. In particular, the dimension of $V \otimes V'$ is $n \cdot m$.
\end{enumerate}
\end{dfn}

Property $3$ follows from $1$ and $2$ and would therefore not necessarily be needed in the definition. The explicit construction of tensor products shall not matter for our purposes since the properties above characterize it up to isomorphism. The second property stated in the definition is of large importance since it tells us how we can define linear functions on $V \otimes V'$: if we have a guess for such a function $\varphi: V \otimes V' \to P$ (of which we don't yet know whether its ``assignment rule'' is well-defined), then we just need to test whether the function $\tilde{\varphi}: V \times V' \to P$ given by $\tilde{\varphi}(v, v') \coloneqq \varphi(v \otimes v')$ is bilinear. If it is, then $\varphi$ is a well-defined linear function. We will use this soon in the following context: Assume $f: V \to V$ and $g: V' \to V'$ are linear functions. Then we would like to define a function $f \otimes g: V \otimes V' \to V \otimes V'$ by $(f \otimes g)(v \otimes v') = f(v) \otimes g(v')$. For this to work, we need to test whether the assignment $(v, v') \mapsto f(v) \otimes g(v')$ is a bilinear function $V \times V' \to V \otimes V'$. Clearly, it is, and so $f \otimes g$ is a well-defined linear function! We use this in Definition \ref{tensor_representation} in order to define the tensor product of representations.

Since we actually deal with Hilbert spaces most of the time, we would like to build tensor products of Hilbert spaces. However, their definition is not completely straightforward since one cannot just take the tensor product of the underlying vector spaces but needs to additionally build the \emph{completion} of the resulting space~\citep{fundamentals_hilbert_product}. Since this complicates the considerations related to a correspondence we later formulate in Proposition \ref{correspondence}, we go a slightly different route. Instead of describing the tensor product of Hilbert spaces, we describe the tensor product of \emph{pre-Hilbert spaces}, which does not require a completion step. Recall from Definition \ref{hilbert_spaces} that a pre-Hilbert space is basically a Hilbert space that is not necessarily complete.

\begin{dfn}[Tensor Product of pre-Hilbert spaces]\label{scalar_product_tensor_product}
Let $V, V'$ be two pre-Hilbert spaces with scalar products $\left\langle  \cdot \middle| \cdot \right\rangle$ and $\left\langle \cdot \middle| \cdot \right\rangle'$. Then the tensor product of vector spaces $V \otimes V'$ can be made into a pre-Hilbert space using the scalar product which is given on generators by
\begin{equation*}
\left\langle v \otimes v' \middle| w \otimes w' \right\rangle_{\otimes} \coloneqq \left\langle v \middle| w \right\rangle \cdot \left\langle v' \middle| w' \right\rangle'.
\end{equation*}
This is then anti-linearly extended in the first (i.e., ``Bra''), and linearly extended in the second (i.e., ``Ket'') component.
\end{dfn}

One can show that this makes $V \otimes V'$ a pre-Hilbert space. For simplicity, we will from now on not notationally distinguish the different scalar products involved. With this preparation, we can come to the notion of tensor product representations:

\begin{dfn}[Tensor Product Representation]\label{tensor_representation}
Let $\rho: G \to \GL(V)$ and $\rho': G \to \GL(V')$ be two linear representations, where $V$ and $V'$ are pre-Hilbert spaces. Then on the tensor product $V \otimes V'$ of pre-Hilbert spaces, we can define the \emph{tensor product representation} $\rho \otimes \rho'$ by
\begin{equation*}
\rho \otimes \rho': G \to \GL(V \otimes V'), \ g \mapsto \rho(g) \otimes \rho'(g),
\end{equation*}
where $\rho(g) \otimes \rho'(g): V \otimes V' \to V \otimes V'$ is given on generators by
\begin{equation*}
\left(\rho(g) \otimes \rho'(g)\right) (v \otimes v') \coloneqq \rho(g)(v) \otimes \rho'(g)(v').
\end{equation*}
\end{dfn}

\begin{lem}\label{tensor_product_rep_works}
The map $\rho \otimes \rho': G \to \GL(V \otimes V')$ defined above is a linear representation.
\end{lem}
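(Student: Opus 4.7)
The plan is to verify the three conditions required by Definition \ref{Linear Representation}: (1) each $\rho(g) \otimes \rho'(g)$ is a well-defined element of $\GL(V \otimes V')$; (2) the assignment $g \mapsto \rho(g) \otimes \rho'(g)$ is a group homomorphism; (3) for every $w \in V \otimes V'$, the orbit map $G \to V \otimes V'$, $g \mapsto (\rho \otimes \rho')(g)(w)$, is continuous with respect to the pre-Hilbert topology from Definition \ref{scalar_product_tensor_product}.

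For (1), I would invoke the universal property of the tensor product (Definition \ref{tensor_product_definition}). The map $V \times V' \to V \otimes V'$, $(v,v') \mapsto \rho(g)(v) \otimes \rho'(g)(v')$, is bilinear, because $\rho(g)$ and $\rho'(g)$ are linear and $\otimes$ is bilinear. Hence it factors uniquely through $V \otimes V'$ to yield a well-defined linear map $\rho(g) \otimes \rho'(g)$. Invertibility is postponed to after (2), since then $\rho(g^{-1}) \otimes \rho'(g^{-1})$ will serve as a two-sided inverse.

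For (2), both $(\rho \otimes \rho')(gh)$ and $(\rho \otimes \rho')(g) \circ (\rho \otimes \rho')(h)$ are linear maps on $V \otimes V'$, so by the universal property it suffices to check that they agree on the generators $v \otimes v'$. This is immediate:
\begin{equation*}
(\rho(gh) \otimes \rho'(gh))(v \otimes v') \;=\; \rho(gh)(v) \otimes \rho'(gh)(v') \;=\; \bigl(\rho(g) \otimes \rho'(g)\bigr) \bigl(\rho(h)(v) \otimes \rho'(h)(v')\bigr),
\end{equation*}
where the last equality uses that $\rho$ and $\rho'$ are themselves homomorphisms. The same argument with $g = e$ shows $(\rho \otimes \rho')(e) = \ID_{V \otimes V'}$, which together with the multiplicativity established above gives invertibility of each $(\rho \otimes \rho')(g)$, completing (1).

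The main obstacle is (3), the continuity check. Fix $w \in V \otimes V'$ and write it, using property~1 of Definition \ref{tensor_product_definition}, as a \emph{finite} sum $w = \sum_{i=1}^n v_i \otimes v_i'$. Then $(\rho \otimes \rho')(g)(w) = \sum_{i=1}^n \rho(g)(v_i) \otimes \rho'(g)(v_i')$. Each orbit map $g \mapsto \rho(g)(v_i)$ is continuous into $V$ by hypothesis, and similarly for $\rho'$. The bilinear map $\otimes: V \times V' \to V \otimes V'$ is jointly continuous on bounded sets, since Definition \ref{scalar_product_tensor_product} gives $\|a \otimes b\|_\otimes = \|a\|\,\|b\|$ and hence the estimate $\|a_1 \otimes b_1 - a_2 \otimes b_2\|_\otimes \le \|a_1 - a_2\|\,\|b_1\| + \|a_2\|\,\|b_1 - b_2\|$; since $G$ is compact and the orbit maps are continuous, the images $\{\rho(g)(v_i) \mid g \in G\}$ and $\{\rho'(g)(v_i') \mid g \in G\}$ are bounded, so each summand depends continuously on $g$. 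Finite sums of continuous maps remain continuous, so the orbit map $g \mapsto (\rho \otimes \rho')(g)(w)$ is continuous, finishing the proof.
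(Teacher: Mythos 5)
Your proof is correct and follows essentially the same route as the paper: both reduce the continuity check to generators $v\otimes v'$ via linearity and continuity of the vector space operations, and both use the same add-and-subtract decomposition $a_1\otimes b_1 - a_2\otimes b_2 = (a_1-a_2)\otimes b_1 + a_2\otimes(b_1-b_2)$; you then estimate with the triangle inequality and $\|a\otimes b\|_\otimes = \|a\|\,\|b\|$, whereas the paper expands the squared norm via the inner product, which is the same argument in a slightly more verbose form. Your explicit treatment of well-definedness and invertibility is a welcome addition to steps the paper dismisses as clear.
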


\begin{proof}
Clearly, each $(\rho \otimes \rho')(g)$ is linear and we have $(\rho \otimes \rho')(gg') = (\rho \otimes \rho')(g) \circ (\rho \otimes \rho')(g')$. Thus, for showing that it is a linear representation, we need to show it is continuous. Assume we already knew continuity of all maps $(\rho \otimes \rho')^{v \otimes v'}: G \to V \otimes V'$, $g \mapsto \big[(\rho \otimes \rho')(g)\big](v \otimes v')$. Then for linear combinations $\xi = \sum_{i = 1}^{n} \lambda_i \left(v_i \otimes v_i'\right)$ we obtain using the linearity of $(\rho \otimes \rho')(g)$:
\begin{align*}
(\rho \otimes \rho')^{\xi}(g) & = \big[(\rho \otimes \rho')(g)\big](\xi) \\
& = \big[(\rho \otimes \rho')(g)\big]\left( \sum\nolimits_{i = 1}^{n} \lambda_i \left(v_i \otimes v_i'\right)\right) \\
& = \sum\nolimits_{i = 1}^{n} \lambda_i \big[(\rho \otimes \rho')(g)\big](v_i \otimes v_i') \\
& = \left( \sum\nolimits_{i = 1}^{n} \lambda_i (\rho \otimes \rho')^{v_i \otimes v_i'}\right)(g).
\end{align*}
Now, since scalar multiplication and addition in topological vector spaces is continuous, and since pre-Hilbert spaces are special topological vector spaces, the continuity of $(\rho \otimes \rho')^{\xi}$ follows from that of all $(\rho \otimes \rho')^{v \otimes v'}$. 

What's left is proving the continuity of functions of the form $(\rho \otimes \rho')^{v \otimes v'}$. For notational simplicity, write $f = \rho^v: G \to V$ and $f': \rho'^{v'}$, which are both continuous since $\rho$ and $\rho'$ are linear representations. We want to show that also $f \otimes f': G \to V \otimes V'$ is continuous. We can test continuity in each point $g_0 \in G$ separately by Definition \ref{continuous_function}. For each $g \in G$ we then obtain, with $\RE$ being the real part of a complex number:
\begin{align*}
\|(f \otimes f') & (g) - (f \otimes f')(g_0)\|^2 \\
& =  \big\|\left[ f(g) \otimes f'(g) - f(g) \otimes f'(g_0)\right] + \left[ f(g) \otimes f'(g_0) - f(g_0) \otimes f'(g_0) \right]\big\|^2\\
& = \big\| f(g) \otimes \left[ f'(g) - f'(g_0)\right] + \left[ f(g) - f(g_0)\right] \otimes f'(g_0)\big\|^2 \\
& = \big\|f(g) \otimes \left[ f'(g) - f'(g_0)\right]\big\|^2 + \big\|\left[ f(g) - f(g_0)\right] \otimes f'(g_0)\big\|^2 \\
& \ \ \ \ + 2 \RE \Big\langle f(g) \otimes \left[ f'(g) - f'(g_0)\right] \Big| \left[f(g) - f(g_0)\right] \otimes f'(g_0)\Big\rangle \\
& = \|f(g)\|^2 \cdot \|f'(g) - f'(g_0)\|^2 + \|f(g) - f(g_0)\|^2 \cdot \|f'(g_0)\|^2 \\
& \ \ \ \ + 2 \RE \Big( \left\langle f(g) \middle| f(g) - f(g_0)\right\rangle \cdot \left\langle f'(g) - f'(g_0) \middle| f'(g_0)\right\rangle\Big).
\end{align*}
All in all we see the following: If $g$ is sufficiently close to $g_0$, then due to the continuity of $f$, $f'$, the scalar product, multiplication in $\K$ and the real part, $\|(f \otimes f')(g) - (f \otimes f')(g_0)\|^2$ gets arbitrarily close to $0$. This shows the continuity of $f \otimes f'$ and we are done.
\end{proof}

\begin{lem}\label{tensor_product_is_unitary}
Let $\rho: G \to \U{V}$ and $\rho': G \to \U{V'}$ be \emph{unitary} representations on pre-Hilbert spaces. Then also $\rho \otimes \rho': G \to \U{V \otimes V'}$ is a well-defined unitary representation.
\end{lem}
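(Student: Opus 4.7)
The plan is to leverage Lemma \ref{tensor_product_rep_works}, which already establishes that $\rho \otimes \rho'$ is a linear representation. What remains is to verify that each map $(\rho \otimes \rho')(g)$ lies in $\U{V \otimes V'}$, i.e., that it is a bijective linear map preserving the scalar product introduced in Definition \ref{scalar_product_tensor_product}.

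First, I would dispose of bijectivity cheaply. Since $\rho \otimes \rho'$ is a group homomorphism into $\GL(V \otimes V')$, the map $(\rho \otimes \rho')(g^{-1}) = \rho(g^{-1}) \otimes \rho'(g^{-1})$ is a two-sided inverse of $(\rho \otimes \rho')(g)$, so each $(\rho \otimes \rho')(g)$ is a linear automorphism of $V \otimes V'$.

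The heart of the argument is scalar-product preservation. I would first check this on simple tensors using the unitarity of $\rho$ and $\rho'$:
\begin{align*}
    \big\langle (\rho(g) \otimes \rho'(g))(v \otimes v') \, \big| \, (\rho(g) \otimes \rho'(g))(w \otimes w') \big\rangle
    &= \big\langle \rho(g)(v) \otimes \rho'(g)(v') \, \big| \, \rho(g)(w) \otimes \rho'(g)(w') \big\rangle \\
    &= \big\langle \rho(g)(v) \, \big| \, \rho(g)(w) \big\rangle \cdot \big\langle \rho'(g)(v') \, \big| \, \rho'(g)(w') \big\rangle \\
    &= \left\langle v \middle| w \right\rangle \cdot \left\langle v' \middle| w' \right\rangle
    \ =\ \left\langle v \otimes v' \middle| w \otimes w' \right\rangle.
\end{align*}
For general elements $\xi = \sum_i \lambda_i (v_i \otimes v_i')$ and $\eta = \sum_j \mu_j (w_j \otimes w_j')$, I would then use the sesquilinearity of $\left\langle \cdot \middle| \cdot \right\rangle_{\otimes}$ and the $\K$-linearity of $(\rho \otimes \rho')(g)$ to reduce the calculation on $\langle (\rho \otimes \rho')(g)(\xi) \mid (\rho \otimes \rho')(g)(\eta) \rangle$ to a finite double sum of the simple-tensor case, yielding $\langle \xi | \eta \rangle$.

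I expect no serious obstacle: the result is essentially bookkeeping once the scalar product on $V \otimes V'$ from Definition \ref{scalar_product_tensor_product} and the unitarity of each factor are in hand. The only minor care required is to make sure the extension from generators to general elements is carried out with the correct conjugation convention (antilinear in the first slot, linear in the second), exactly as set up in Definition \ref{scalar_product_tensor_product}.
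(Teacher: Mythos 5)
Your proposal is correct and follows essentially the same route as the paper: the paper's proof also reduces everything to Lemma \ref{tensor_product_rep_works} and then declares scalar-product preservation ``immediate'' from the unitarity of $\rho(g)$ and $\rho'(g)$ — you have merely written out the simple-tensor computation and the sesquilinear extension that the paper leaves implicit. No gaps.
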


\begin{proof}
According to Lemma \ref{tensor_product_rep_works} we only need to check whether all $\rho(g) \otimes \rho'(g)$ are unitary transformations. This follows immediately from the unitarity of $\rho(g)$ and $\rho'(g)$.
\end{proof}

\subsubsection{The Clebsch-Gordan Coefficients and the Original Wigner-Eckart Theorem}\label{original_w_e}

In this section, we describe the Clebsch-Gordan coefficients and the original Wigner-Eckart theorem. Except for the proof, we roughly follow~\citet{Jeevanjee}. For the proof, we follow the more general treatment in~\citet{wigner-eckart}.\footnote{It is more general in that it considers arbitrary groups and the situation that the considered irreducible representation appears several times in a tensor product representation instead of just once.}

For our aims, let $\rho_j: G \to \U{V_j}$ and $\rho_l: G \to \U{V_l}$ be representatives of isomorphism classes of irreducible unitary representations.\footnote{Those are a priori \emph{not} assumed to be embedded in a space of square-integrable functions. For such embedded representations, we write $V_{ji}$ instead.} Then consider their tensor product representation
\begin{equation*}
\rho_{j} \otimes \rho_{l}: G \to \U{V_j \otimes V_l}
\end{equation*}
which is again a unitary representation according to Lemma \ref{tensor_product_is_unitary}. If $V_j$ and $V_l$ are of dimension $d_j$ and $d_l$, respectively, then $V_j \otimes V_l$ is of dimension $d_j \cdot d_l$. Since it is a finite-dimensional unitary representation, it is itself an orthogonal direct sum of finitely many irreducible unitary representations by Proposition \ref{perpendicular_direct_sum}:
\begin{equation*}
V_j \otimes V_l \cong \bigoplus_{J \in \widehat{G}} \bigoplus_{s = 1}^{[J(jl)]} V_J.
\end{equation*}
Here $\widehat{G}$ is, as before, the set of isomorphism classes of irreducible unitary representations and $[J(jl)]$ is the number of times that $\rho_J: G \to \U{V_J}$ appears in the direct sum decomposition of $V_j \otimes V_l$. Note that for most $J$ we have $[J(jl)] = 0$, and for some $J$ we may have $[J(jl)] > 1$, see Section \ref{SO2_real}, where it turns out that $\rho_0$ is contained twice in $\rho_m \otimes \rho_m$.

Now, choose -- once and for all -- orthonormal bases of all involved irreps, which exists according to Proposition \ref{gram_schmidt}:
\begin{align*}
\big\lbrace Y_j^m \mid m = 1, \dots, d_j \big\rbrace & \subseteq V_j, \\
\big\lbrace Y_l^n \mid n = 1, \dots, d_l \big\rbrace & \subseteq V_l, \\
\big\lbrace Y_J^M \mid M = 1, \dots, d_J \big\rbrace & \subseteq V_J.
\end{align*} 
This notation is supposed to remind about spherical harmonics since they form a basis for irreducible representations of the group $\SO{3}$. But as mentioned in the footnote, we do not consider these basis elements to be functions here.

Furthermore, let $l_s: V_J \to V_j \otimes V_l$ be the linear, equivariant and isometric (i.e., scalar product preserving) embeddings that correspond to the direct sum decomposition of $V_j \otimes V_l$ into irreps, where $s$ ranges in $\{1, \dots, [J(jl)]\}$. With this in mind, we can define the Clebsch-Gordan coefficients:
\begin{dfn}[Clebsch-Gordan Coefficients]\label{def_clebsch_gordan}
The \emph{Clebsch-Gordan Coefficients} are given by
\begin{equation*}
\left\langle s, JM \middle| jm;ln\right\rangle \coloneqq \pig\langle l_s(Y_J^M) \big| Y_j^m \otimes Y_l^n\pig\rangle.
\end{equation*}
\end{dfn}

Note that in the literature, people usually only consider Clebsch-Gordan coefficients of the specific groups $\SO{3}$, $\SU{2}$, $\SU{3}$ or similar groups appearing in physics. Also note that in the physics context, there is only one linear, equivariant, isometric embedding $l_s$, which follows directly from Schur's Lemma \ref{Schur}. Therefore, it is sensible that the embedding is usually not part of the notation of these coefficients. In our case, however, when considering real representations, there can be several such embeddings $l_s$. This happens if the endomorphism space of $V_J$ is nontrivial. An example is given by the two-dimensional irreducible representations of $\SO{2}$ over the real numbers which we discuss in Section \ref{SO2_real}. Since, however, we do not want to depart too much from the notation usually considered in physics, we also omit the embedding from the notation. The index $s$ however needs to be present in order to index the possibly different appearances of $V_J$ in $V_j \otimes V_l$.

With this preparation, we can explain the Wigner-Eckart theorem the way it is usually considered in physics, as a prelude for the generalization that we consider in the next section.

In this (and only this!) section, we assume that our field is $\C$, since this is the case considered in physics. The Wigner-Eckart theorem aims to obtain a description for all possible representation operators $\mathcal{K}: V_j \to \Hom_{\C}(V_l, V_J)$. This is, for example, useful for describing state transitions in the electrons of hydrogen atoms. To motivate the generalization in the next section, we shortly explain the derivation: we can consider the equivalent function $\mathcal{K}: V_j \otimes V_l \to V_J$ given by $\tilde{\mathcal{K}}(v_j \otimes v_l) \coloneqq \left[\mathcal{K}(v_j)\right](v_l)$ on the tensor product. As one can compute, and as we will see in more generality in Proposition \ref{correspondence}, $\tilde{\mathcal{K}}: V_j \otimes V_l \to V_J$ is an intertwiner, where on the left we consider the tensor product representation. We assume, as is the case for $G = \SO{3}$ or $G = \SU{2}$ for usual applications in physics, that $V_J$ is exactly once a direct summand of $V_j \otimes V_l$. Then, since by Schur's Lemma \ref{schur_unitary} there cannot be nontrivial equivariant linear maps between nonisomorphic irreps, $\tilde{\mathcal{K}}$ restricted to each direct summand of $V_j \otimes V_l$ vanishes, except the one isomorphic to $V_J$. More precisely, assume that
\begin{equation*}
V_j \otimes V_l \cong V_J \oplus \bigoplus_{l'} V_{l'}
\end{equation*}
is a decomposition of $V_j \otimes V_l$ into copies of irreducible representations, where each $V_{l'}$ is nonisomorphic to $V_J$. Then the information contained in $\tilde{\mathcal{K}}$ is equal to the information contained in the restriction $\tilde{\mathcal{K}}|_{V_{J}}: V_J \to V_J$. Since it is an intertwiner from a representation to itself, it deserves a special name. We state the following definition for arbitrary $\K \in \{\R, \C\}$, since it will be of crucial importance in our generalization of the Wigner-Eckart theorem:

\begin{dfn}[Endomorphism]\label{definition_endomorphism}
Let $\rho: G \to \GL(V)$ be a linear representation. An intertwiner from $V$ to $V$ is called \emph{endomorphism}. The vector space of endomorphisms is written as
\begin{equation*}
\End_{G, \K}(V) \coloneqq \Hom_{G, \K}(V, V).
\end{equation*}
\end{dfn}

A version of Schur's lemma gives a simple description for endomorphisms of irreducible representations in the case that the underlying field is the complex numbers $\C$. It makes use of the property of the complex numbers to be algebraically closed:

\begin{lem}[Schur's Lemma]\label{Schur}
Let $\rho: G \to \GL(V)$ be an irreducible representation. If the underlying field is the complex numbers $\C$, then the set of endomorphisms, i.e., intertwiners from $V$ to $V$, only consists of the complex multiples of the identity:
\begin{equation*}
\End_{G,\C}(V) = \{c \cdot \ID_V \mid c \in \C\} \cong \C.
\end{equation*}
\end{lem}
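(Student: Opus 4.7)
The plan is to exploit the fact that $V$ is finite-dimensional (by Proposition~\ref{irreps_finite_dim}, every irreducible unitary representation of a compact group is finite-dimensional) together with the algebraic closedness of $\C$. Concretely, I would let $c \in \End_{G,\C}(V)$ be an arbitrary endomorphism. Because $V$ is finite-dimensional and the base field is $\C$, the linear map $c: V \to V$ has a characteristic polynomial which, since $\C$ is algebraically closed, has at least one root $\lambda \in \C$. In particular, $\lambda$ is an eigenvalue of $c$, so the eigenspace $E_\lambda(c) = \ker(c - \lambda \ID_V) \subseteq V$ is nontrivial.

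Next, I would show that $E_\lambda(c)$ is a subrepresentation of $V$. The map $c - \lambda \ID_V : V \to V$ is itself an endomorphism of $\rho$, since $\ID_V$ commutes with every $\rho(g)$ and the space of endomorphisms is a vector space. Hence for $v \in E_\lambda(c)$ and any $g \in G$,
\begin{equation*}
    (c - \lambda \ID_V)\bigl(\rho(g)(v)\bigr) \;=\; \rho(g)\bigl((c-\lambda \ID_V)(v)\bigr) \;=\; \rho(g)(0) \;=\; 0,
\end{equation*}
so $\rho(g)(v) \in E_\lambda(c)$. Moreover, since $V$ is finite-dimensional, every linear subspace is topologically closed by Proposition~\ref{completeness_finite_dim}, so $E_\lambda(c)$ is a closed invariant subspace.

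Now I would invoke the irreducibility of $\rho$: the only closed invariant subspaces are $0$ and $V$. Since $E_\lambda(c) \neq 0$, we must have $E_\lambda(c) = V$, i.e., $c - \lambda \ID_V = 0$ on all of $V$, which gives $c = \lambda \ID_V$. This shows $\End_{G,\C}(V) \subseteq \{c \cdot \ID_V \mid c \in \C\}$, and the reverse inclusion is obvious since scalar multiples of the identity commute with everything. The only real conceptual step is the use of algebraic closedness to secure an eigenvalue; everything else is representation-theoretic bookkeeping which the paper has already set up. There is no genuine obstacle here, which is precisely why the real case (treated via the Frobenius--Schur classification into real, complex, and quaternionic type) is strictly harder and why the paper needs endomorphism spaces of dimension up to $4$ in general.
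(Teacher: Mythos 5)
Your proof is correct and is the standard eigenvalue argument via algebraic closedness of $\C$; the paper itself does not prove this lemma but simply defers to \citet{Jeevanjee}, where essentially this argument appears, so you are filling in the intended proof rather than diverging from it. The only caveat worth noting is that you need $V$ to be finite-dimensional for the characteristic polynomial to exist, and Proposition~\ref{irreps_finite_dim} as stated covers only irreducible \emph{unitary} representations while the lemma is phrased for a general irreducible $\rho: G \to \GL(V)$ --- but this matches the paper's own implicit convention (cf.\ the footnote to Lemma~\ref{always_continuous}) that irreducible representations are finite-dimensional, so it is a hypothesis-bookkeeping quibble rather than a gap.
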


\begin{proof}
See~\citet{Jeevanjee}.
\end{proof}

This means that $\tilde{\mathcal{K}}|_{V_J} = c \cdot \ID_{V_J}$ for some complex number $c \in \C$. Now if we let $p: V_j \otimes V_l \to V_J$ be the projection corresponding to the direct sum decomposition of $V_j \otimes V_l$, then we obtain 
\begin{equation*}
\tilde{\mathcal{K}} = \tilde{\mathcal{K}}|_{V_J} \circ p = \left(c \cdot \ID_{V_J}\right) \circ p = c \cdot p.
\end{equation*} 
That is, we have just found out that one complex number, $c$, is able to completely characterize $\tilde{\mathcal{K}}$ and consequently $\mathcal{K}$! This is \emph{basically} already the Wigner-Eckart theorem. However, it is useful to find a formulation that describes $\mathcal{K}$ with respect to bases of the different irreducible representations. For this, we define matrix elements of representation operators. Before we come to the definition, we introduce some notation: If $f: V \to V'$ is a linear continuous map between Hilbert spaces, we set
\begin{equation*}
\left\langle y \middle| f \middle| x\right\rangle \coloneqq \left\langle y \middle| f(x)\right\rangle
\end{equation*}
for each $x \in V$ and $y \in V'$. The symmetry in this notation is supposed to remind about the fact that $f$ has an adjoint, see Definition \ref{adjoint}, and thus can be applied to $y$ just as well as to $x$, but we will not make use of this fact.

\begin{dfn}[Matrix Element]\label{matrix_element}
Let $T$, $V_l$ and $V_J$ be unitary representations with orthonormal bases $\{Y^m_j\} \subseteq T$ (with $j$ possibly also varying), $\{Y^n_l\} \subseteq V_l$ and $\{Y^{M}_J\} \subseteq V_J$, respectively. Let $\mathcal{K}: T \to \Hom_{\K}(V_l, V_J)$ be a representation operator. Then it's \emph{matrix elements} are given by the scalars
\begin{equation*}
\left\langle JM \middle| \mathcal{K}_{j}^m \middle| ln \right\rangle \coloneqq \left\langle Y^M_J \middle| \mathcal{K}(Y^m_j) \middle| Y^n_l \right\rangle.
\end{equation*}
In the same way, if $f: V_l \to V_J$ is any linear (not necessarily equivariant) map, then its matrix elements are given by the scalars
\begin{equation*}
\left\langle JM \middle| f \middle| ln \right\rangle \coloneqq 
\left\langle Y^M_J \middle| f \middle| Y^n_l \right\rangle.
\end{equation*}
\end{dfn}

\begin{rem}\label{matrix_elements_intuition}
We shortly explain this term. Usually, in linear algebra, one has to do with linear functions $f: V \to V'$ between vector spaces carrying bases $\{v_j\} \subseteq V$ and $\{v_{i}'\} \subseteq V'$. For each basis element $v_j \in V$ one can then find coefficients $A_{ij} \in \K$ such that
\begin{equation*}
f(v_j) = \sum\nolimits_{i} A_{ij} v_i'.
\end{equation*}
The $A_{ij}$ are called the \emph{matrix elements} of $f$ and characterize $f$ completely. Now if the bases are orthonormal bases as in Definition \ref{orthonormal_basis}, then the coefficients are given by 
\begin{equation*}
A_{ij} = \left\langle v_i' \middle| f(v_j)\right\rangle = \left\langle v_i' \middle| f \middle| v_j\right\rangle.
\end{equation*}
In a similar way we can understand the matrix elements of a representation operator, only that the linear function itself depends on a chosen basis vector of $V_j$. As for linear functions, the matrix elements of a representation operator completely characterize it.
\end{rem}

One last remark: since in this section, $V_J$ appears only once as a direct summand in $V_j \otimes V_l$, we omit the additional ``quantum number'' $s$ in the notation for the Clebsch-Gordan coefficients. With this preparation, we can formulate and prove the original version of the Wigner-Eckart theorem. Remember that there is a unique complex number $c$ such that $\tilde{\mathcal{K}}$ is given by $\tilde{\mathcal{K}} = c \cdot p$ for a projection $p: V_{j} \otimes V_l \to V_J$. We now denote this by $\left\langle J \| \mathcal{K} \| l \right\rangle \coloneqq c$.

\begin{thrm}[Wigner-Eckart Theorem]
The matrix elements of the representation operator $\mathcal{K}: V_j \to \Hom_{\C}(V_l, V_J)$ are given by
\begin{equation*}
\big\langle JM \big| \mathcal{K}_j^m \big| ln \big\rangle = \left\langle J \| \mathcal{K} \| l \right\rangle \cdot \big\langle JM \big| jm;ln \big\rangle,
\end{equation*}
with the $\big\langle JM \big| jm;ln \big\rangle$ being the Clebsch-Gordan coefficients (which are independent from the representation operator $\mathcal{K}$).
\end{thrm}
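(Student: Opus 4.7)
The plan is to mirror the informal argument sketched in the paragraphs preceding the statement, made rigorous. First I would invoke the hom-tensor adjunction to pass from the representation operator $\mathcal{K} : V_j \to \Hom_{\C}(V_l, V_J)$ to the associated linear map $\tilde{\mathcal{K}} : V_j \otimes V_l \to V_J$ defined on generators by $\tilde{\mathcal{K}}(v_j \otimes v_l) \coloneqq [\mathcal{K}(v_j)](v_l)$. A routine check (made precise in Proposition \ref{correspondence}) shows that $\tilde{\mathcal{K}}$ is an intertwiner from the tensor product representation $\rho_j \otimes \rho_l$ to $\rho_J$, and that this correspondence $\mathcal{K} \leftrightarrow \tilde{\mathcal{K}}$ is a linear isomorphism. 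Hence it suffices to classify $\tilde{\mathcal{K}}$.

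Next I would apply the Clebsch-Gordan decomposition to write $V_j \otimes V_l \cong V_J \oplus \bigoplus_{l'} V_{l'}$, where every $V_{l'}$ is an irrep not isomorphic to $V_J$ (using the assumption that $V_J$ appears exactly once). Restricted to any summand $V_{l'} \ncong V_J$, $\tilde{\mathcal{K}}$ is an intertwiner between non-isomorphic irreducible representations over $\C$, hence zero by Schur's Lemma \ref{schur_unitary}. Restricted to the unique $V_J$-summand, $\tilde{\mathcal{K}}|_{V_J}$ is an endomorphism of a complex irrep, so by Schur's Lemma \ref{Schur} it equals $c \cdot \ID_{V_J}$ for a unique scalar $c \in \C$, which we name $\langle J \| \mathcal{K} \| l \rangle$. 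Letting $p : V_j \otimes V_l \to V_J$ be the orthogonal projection onto the $V_J$-summand (the adjoint of the isometric embedding $l : V_J \to V_j \otimes V_l$), we conclude $\tilde{\mathcal{K}} = c \cdot p$.

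Finally I would translate this back into matrix elements. Using the definition of matrix elements and unravelling the adjunction,
\begin{equation*}
\langle JM | \mathcal{K}_j^m | ln \rangle
= \langle Y_J^M \,|\, \tilde{\mathcal{K}}(Y_j^m \otimes Y_l^n) \rangle
= c \cdot \langle Y_J^M \,|\, p(Y_j^m \otimes Y_l^n) \rangle.
\end{equation*}
Since $p$ is the adjoint of the isometric embedding $l$, we have $\langle Y_J^M | p(Y_j^m \otimes Y_l^n) \rangle = \langle l(Y_J^M) | Y_j^m \otimes Y_l^n \rangle$, which is precisely the Clebsch-Gordan coefficient $\langle JM | jm; ln \rangle$ as defined in Definition \ref{def_clebsch_gordan} (with the single index $s$ suppressed). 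Combining these yields the claimed identity.

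The only non-trivial step is the hom-tensor adjunction transformation in the first paragraph; everything else reduces to two invocations of Schur's Lemma plus the definitional unpacking of matrix elements and Clebsch-Gordan coefficients. In the general compact-group, real-scalar, multiplicity-$[J(jl)]$ setting needed later (Theorem \ref{introduction_otline}), this same skeleton must be extended: the scalar $c$ is replaced by an element of $\End_{G,\K}(V_J)$ (possibly higher-dimensional over $\R$), a sum over the multiplicity index $s$ appears, and one additionally linearizes the kernel via Theorem \ref{steerable kernels = representation operators} to replace $V_j$ by $L^2_\K(X)$ decomposed by Peter-Weyl; the hom-tensor adjunction step is exactly where those complications enter.
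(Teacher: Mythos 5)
Your proposal is correct and takes essentially the same route as the paper's own proof: pass to $\tilde{\mathcal{K}}:V_j\otimes V_l\to V_J$ via the hom-tensor adjunction, use the Clebsch-Gordan decomposition together with two applications of Schur's Lemma to conclude $\tilde{\mathcal{K}}=c\cdot p$, and then use the adjointness of the projection $p$ and the isometric embedding to identify $\langle Y_J^M|p(Y_j^m\otimes Y_l^n)\rangle$ with the Clebsch-Gordan coefficient. No substantive differences.
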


\begin{proof}
Let $i: V_J \to V_j \otimes V_l$ be the embedding corresponding to the direct sum decomposition of $V_j \otimes V_l$. It is an adjoint of the projection $p: V_j \otimes V_l \to V_J$ according to the proof of Proposition \ref{existence_projections}. By what we've argued above, there exists some $c \in \C$ such that:
\begin{align*}
\big\langle JM \big| \mathcal{K}_{j}^m \big| ln \big\rangle  &=
\big\langle Y^M_J \big| \mathcal{K}(Y^m_j) \big| Y^n_l \big\rangle  \\
& = \big\langle Y_J^M \big| \tilde{\mathcal{K}}(Y_j^m \otimes Y_l^n) \big\rangle \\
& = \big\langle Y_J^M \big| c \cdot p(Y_j^m \otimes Y_l^n) \big\rangle \\
& = c \cdot \big\langle Y_J^M \big| p(Y_j^m \otimes Y_l^n) \big\rangle \\
& = c \cdot \big\langle i(Y_J^M) \big| Y_j^m \otimes Y_l^n \big\rangle \\
& = \left\langle J \| \mathcal{K} \| l \right\rangle  \cdot \big\langle JM \big| jm;ln \big\rangle.
\end{align*}
As a short explanation: in the fifth step it was used that $i$ and $p$ are adjoint to each other, and consequently, we move from considering the tensor product in $V_J$ to that one in $V_{j} \otimes V_l$. In the last step, the definition of the Clebsch-Gordan coefficients was used, and additionally, the notation $\left\langle J \| \mathcal{K} \| l \right\rangle \coloneqq c$ that we mentioned before the theorem. The index $s$ is everywhere missing since $V_J$ appears only once in $V_j \otimes V_l$. This finishes the proof.
\end{proof}

\begin{dfn}[Reduced Matrix Element]\label{Reduced_Matrix_Element}
The unique number $c = \left\langle J \| \mathcal{K} \| l \right\rangle \in \C$ in this theorem is called the \emph{reduced matrix element}. To reiterate, it characterizes the representation operator completely.
\end{dfn}

\subsubsection{Reduction to Irreducible Unitary Representations}\label{reduction_to_unit_irreps}

\todo[inline]{This section is completely new and does the reduction to irreps that was originally missing and requested by one of the reviewers.}

Let $G$ be any compact group and $X$ any homogeneous space of $G$. Before we state the Wigner-Eckart Theorem for steerable kernels in the next section, we first want to explain why we can restrict to the case of irreducible unitary input- and output representations. Our explanations are adapted from ~\citet{weiler2019general}.

Thus, let $\rho_{\inn}: G \to \GL(V_{\inn})$ and $\rho_{\out}: G \to \GL(V_{\out})$ be general finite-dimensional input- and output representations. We consider the task of finding a basis for the space of steerable kernels $\Hom_{G}(X, \Hom_{\K}(V_{\inn}, V_{\out}))$. By 
Theorem~\ref{all_linear_unitary} and Proposition~\ref{perpendicular_direct_sum}, there are equivalences of representations (i.e., linear isomorphisms that intertwine between the representations)
\begin{equation*}
Q_{\inn}: V_{\inn} \to \bigoplus_{\mu \in I_{\inn}} V_{\mu}, \quad \quad Q_{\out}: V_{\out} \to \bigoplus_{\nu \in I_{\out}} V_{\nu},
\end{equation*}
where $\rho_{\mu}: G \to \U{V_{\mu}}$ and $\rho_{\nu}: G \to \U{V_{\nu}}$ are irreducible unitary representations. Both for the input- and the output representation, the same irrep can appear several times, e.g., there can be $\mu \neq \mu'$ such that $\rho_{\mu} \cong \rho_{\mu'}$. Now, notice that the map
\begin{equation*}
\Phi_{Q_{\out},Q_{\inn}}: \Hom_{G}\bigg(X, \Hom_{\K}\bigg(\bigoplus_{\mu \in I_{\inn}} V_{\mu}, \bigoplus_{\nu \in I_{\out}} V_{\nu}\bigg)\bigg) \to \Hom_{G}(X, \Hom_{\K}(V_{\inn}, V_{\out}))
\end{equation*}
given for all $x \in X$ by
\begin{equation*}
\big[ \Phi_{Q_{\out}, Q_{\inn}}(K)\big](x) \coloneqq Q_{\out}^{-1} \circ K(x) \circ Q_{\inn}
\end{equation*}

is clearly an isomorphism. Thus, once a basis for the first kernel space is known, we just need to postcompose and precompose each basis kernel with $Q_{\out}^{-1}$ and $Q_{\inn}$, respectively, in order to get a basis for the space we actually care about.
Furthermore, the map
\begin{equation*}
\Psi: \bigoplus_{\nu \in I_{\out}} \bigoplus_{\mu \in I_{\inn}}   \Hom_{G}(X, \Hom_{\K}(V_{\mu}, V_{\nu})) \to \Hom_{G}\bigg(X, \Hom_{\K}\bigg(\bigoplus_{\mu \in I_{\inn}} V_{\mu}, \bigoplus_{\nu \in I_{\out}} V_{\nu}\bigg)\bigg)
\end{equation*}
given by
\begin{equation*}
\left[\Psi((K^{\nu\mu})_{\nu,\mu})(x)\right]((v_{\mu})_\mu) \coloneqq \bigg( \sum_{\mu \in I_{\inn}} K^{\nu\mu}(x)(v_{\mu})\bigg)_{\nu} \ \ \in \ \  \bigoplus_{\nu \in I_{\out}} V_{\nu},
\end{equation*}
where $x \in X$ and $(v_{\mu})_{\mu} \in  \bigoplus_{\mu \in I_{\inn}} V_{\mu}$ are arbitrary, is also clearly an isomorphism. It expresses that we can take a collection of steerable kernels $(K^{\nu \mu})_{\nu,\mu}$ and build with it a block-matrix, which is steerable again, as can easily be checked. Accordingly, if we have basis kernels for a space $\Hom_{G}(X, \Hom_{\K}(V_{\mu}, V_{\nu}))$ for some $\mu, \nu$, then we can, by applying $\Psi$, map it to block basis kernels which are zero outside the block with indices $\nu$ and $\mu$. Overall, by doing this for all $\mu, \nu$, we thus recover a full basis for the space $\Hom_{G}\left(X, \Hom_{\K}\left(\bigoplus_{\mu \in I_{\inn}} V_{\mu}, \bigoplus_{\nu \in I_{\out}} V_{\nu}\right)\right)$. By applying the base change $\Phi_{Q_{\out}, Q_{\inn}}$ from above, we thus get a basis for $\Hom_{G}(X, \Hom_{\K}(V_{\inn}, V_{\out}))$. In summary, knowing a basis of steerable kernels for irreducible unitary input- and output representations gives us one for all finite-dimensional input- and output representations. Finally, note that the transformation of basis kernels using $\Phi_{Q_{\out}, Q_{\inn}}$ and $\Psi$ can be done in the network initialization process and does not need to be performed in each forward pass.

\subsubsection{The Wigner-Eckart Theorem for Steerable Kernels}

\todo[inline]{Here are some changes in the beginning due to the newly inserted ``reduction'' section D.1.3}

Now that we have seen the Wigner-Eckart theorem in a version similar to how it usually appears in physics, it is time to state the version which we will need in this work for applications in deep learning. The treatment is similar to the formulation in~\citet{wigner-eckart}, which presents a generalization of the Wigner-Eckart theorem to the case that $V_J$ may appear several times as a direct summand in the direct sum decomposition of the tensor product. However, this paper still only considers the Wigner-Eckart theorem for the case of the complex numbers $\C$. If we allow the real numbers as well, we cannot be sure that endomorphisms of irreducible representations are just given by one number. This is a complication we will deal with below by allowing matrix elements of general endomorphisms. Furthermore, we will deal with topological considerations that did not play a role in~\citet{wigner-eckart}. And lastly, we transport the theorem over into the nonlinear realm of steerable kernels.

As discussed in the last section, we can restrict the considerations to (representatives of isomorphism classes of) irreducible unitary input- and output representations. Thus, assume the input-representation to be the irrep $\rho_{l}: G \to \U{V_l}$ and the output-representation to be the irrep $\rho_J: G \to \U{V_J}$. The idea is now that kernel operators $\mathcal{K}: \Ltwo{\K}{X} \to \Hom_{\K}(V_l, V_J)$ can be described on each direct summand of the domain individually, and that on each of these summands, arguments similar to those for the original Wigner-Eckart theorem apply.

According to the Peter-Weyl Theorem \ref{pw} the space $\Ltwo{\K}{X}$ has a dense subset which is a direct sum of irreducible unitary representations:
\begin{equation*}
\Ltwo{\K}{X} = \widehat{\bigoplus_{j \in \widehat{G}}} \bigoplus_{i = 1}^{m_j}  V_{ji}.
\end{equation*}
Each $V_{ji}$ is, as a subrepresentation of $\Ltwo{\K}{X}$, isomorphic to $V_j$. $V_j$ is itself not assumed to be embedded in $\Ltwo{\K}{X}$.

For arbitrary $j \in \widehat{G}$, fix once and for all orthonormal bases $\{Y_{ji}^m\} \subseteq V_{ji}$ corresponding to the basis $\{Y_{j}^m\}$ of $V_j$.\footnote{$i$ is like an additional quantum number in physics.} Furthermore, assume that for all $s = 1, \dots, [J(jl)]$, $p_{jis}: V_{ji} \otimes V_l \to V_J$ is a projection which is an adjoint of the linear equivariant isometric embedding $l_{jis}: V_J \to V_{ji} \otimes V_l$. This is assumed to be aligned with the embeddings $V_J \to V_j \otimes V_l$ with respect to the isomorphisms $V_j \cong V_{ji}$ that underlie the correspondence of basis elements $Y_j^m \sim Y_{ji}^m$. What this means is that the Clebsch-Gordan coefficients with respect to all of these embeddings, for all $i$, are equal:
\begin{equation*}
\left\langle l_{jis}(Y_{J}^M) \middle| Y_{ji}^m \otimes Y_l^n \right\rangle = \left\langle s, JM \middle| jm;ln \right\rangle.
\end{equation*}
Now we state and prove the Wigner-Eckart theorem, which gives an explicit description of representation operators $\mathcal{K}: \Ltwo{\K}{X} \to \Hom_{\K}(V_l, V_J)$ in terms of endomorphisms of $V_J$ and then transfers this statement over to a statement about steerable kernels $K: X \to \Hom_{\K}(V_l, V_J)$. Before we state the theorem, we want to shortly explain what to \emph{expect}: in the derivation of the original Wigner-Eckart theorem in Section \ref{original_w_e}, we saw that a kernel operator could be expressed as $\tilde{\mathcal{K}}: V_j \otimes V_l \to V_J$. This was in turn equal to $\tilde{\mathcal{K}} = c \circ p$ for an endomorphism $c: V_J \to V_J$ and the projection $p$ corresponding to the appearance of $V_J$ in the direct sum decomposition of $V_j \otimes V_l$. This time, however, $V_J$ can be found often in $\Ltwo{\K}{X} \otimes V_l$, namely:
\begin{enumerate}
\item For each isomorphism class of irreps $j \in \widehat{G}$,
\item For each appearance $i = 1, \dots, m_j$ of the irrep $V_j$ in $\Ltwo{\K}{X}$ and 
\item For each appearance $s = 1, \dots, [J(jl)]$ of the irrep $V_J$ in the tensor product representation $V_j \otimes V_l$. $[J(jl)]$ can be zero, which means that $j$ does not contribute.
\end{enumerate}
We therefore expect $\tilde{\mathcal{K}}$ to be a whole sum of compositions of endomorphisms with projections, for each combination of valid $j$, $i$ and $s$. Furthermore, the specific structure of $\Ltwo{\K}{X}$ will be exploited as well by using orthogonal projections from $\Ltwo{\K}{X}$ to summands $V_{ji}$. Overall, we hope this sufficiently motivates the theorem:

\begin{thrm}[Wigner-Eckart Theorem for Steerable Kernels]\label{theorem}
We state the theorem in three parts:
\begin{enumerate}
\item (Basis-independent Wigner-Eckart for Kernel Operators) There is an isomorphism of vector spaces
\begin{equation*}
\rep: \bigoplus_{j \in \widehat{G}} \bigoplus_{i = 1}^{m_j} \bigoplus_{s = 1}^{[J(jl)]}\End_{G, \K}(V_J) \to \Hom_{G, \K}(\Ltwo{\K}{X}, \Hom_{\K}(V_l, V_J)) 
\end{equation*}
which is given by
\begin{equation}\label{formula_explicit}
\left[\rep((c_{jis})_{jis})(\varphi)\right](v_l) \coloneqq \sum_{j \in \widehat{G}} \sum_{i = 1}^{m_j} \sum_{s = 1}^{[J(jl)]} \sum_{m = 1}^{d_j} \left\langle Y_{ji}^m \middle| \varphi \right\rangle \cdot c_{jis}\left( p_{jis}(Y_{ji}^m \otimes v_l)\right)
\end{equation}
where $(c_{jis})_{jis}$ is a tuple of endomorphisms, $\varphi: X \to \K$ is any square-integrable function and $v_l \in V_l$ is any element.
\item (Basis-independent Wigner-Eckart for Steerable Kernels) There is an isomorphism of vector spaces
\begin{equation*}
\Ker: \bigoplus_{j \in \widehat{G}} \bigoplus_{i = 1}^{m_j} \bigoplus_{s = 1}^{[J(jl)]}\End_{G, \K}(V_J) \to \Hom_{G}(X, \Hom_{\K}(V_l, V_J)) 
\end{equation*}
which is given by
\begin{equation*}
\left[\Ker((c_{jis})_{jis})(x)\right](v_l) \coloneqq \sum_{j \in \widehat{G}} \sum_{i = 1}^{m_j} \sum_{s = 1}^{[J(jl)]} \sum_{m = 1}^{d_j} \left\langle i,jm \middle| x \right\rangle \cdot c_{jis}\left( p_{jis}(Y_{ji}^m \otimes v_l)\right)
\end{equation*}
where $(c_{jis})_{jis}$ is a tuple of endomorphisms, $x \in X$ is any point and $v_l \in V_l$ is any element. Here, $\left\langle i,jm \middle| x\right\rangle \coloneqq \lim_{U \in \mathcal{U}_x} \left\langle Y_{ji}^m \middle| \delta_U \right\rangle$, which is according to Proposition \ref{delta_property} equal to $\overline{Y_{ji}^m(x)}$.
\item (Basis-dependent Wigner-Eckart for Steerable Kernels) Let $K = \Ker((c_{jis})_{jis})$ be the steerable kernel corresponding to the tuple of endomorphisms $(c_{jis})_{jis}$ according to the isomorphism above. Then the matrix elements of $K(x) \in \Hom_{\K}(V_l, V_J)$ are explicitly given by
\begin{align}\label{basis_dependent_wigner_eckart}
\begin{split}
&  \left\langle JM \middle| K(x) \middle| ln \right\rangle = \\
&  \sum_{j \in \widehat{G}} \sum_{i = 1}^{m_j} \sum_{s = 1}^{[J(jl)]} \sum_{m = 1}^{d_j} \sum_{M' = 1}^{d_J} 
\big\langle JM \big| c_{jis} \big| JM' \big\rangle \cdot
\big\langle s,JM' \big| jm;ln \big\rangle \cdot 
\big\langle i,jm \big| x \big\rangle.
\end{split}
\end{align}
\end{enumerate}
\end{thrm}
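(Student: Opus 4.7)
The plan is to prove Part 1 first (the isomorphism $\rep$ for kernel operators), then obtain Part 2 by transporting the result through the Kernel-Operator Correspondence of Theorem \ref{steerable kernels = representation operators}, and finally derive Part 3 by computing matrix elements in the chosen bases.

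For Part 1, I would construct $\rep$ as the composition of the chain of isomorphisms sketched in the proof outline of Theorem \ref{introduction_otline}. Concretely, I would first apply the Peter-Weyl Theorem \ref{pw} to replace $\Ltwo{\K}{X}$ by its dense subrepresentation $\bigoplus_{j,i} V_{ji}$ (invoking Lemma \ref{ignore_closure} to justify passing from the topological closure to the algebraic direct sum when taking continuous linear equivariant maps into the finite-dimensional target $\Hom_{\K}(V_l,V_J)$). Next, a continuous linear equivariant map from an algebraic direct sum is determined by its restrictions to the summands, so the space factors as $\bigoplus_{j,i} \Hom_{G,\K}(V_{ji}, \Hom_{\K}(V_l,V_J))$. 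I would then use the hom-tensor adjunction (Proposition \ref{correspondence}) to identify each summand with $\Hom_{G,\K}(V_{ji} \otimes V_l, V_J)$, apply the Clebsch-Gordan decomposition \eqref{eq:cg_isom} to $V_{ji} \otimes V_l$, and use Schur's Lemma \ref{schur_unitary} (together with unitarity of the embeddings $l_{jis}$) to conclude that only the copies of $V_J$ in the decomposition contribute, each giving a free endomorphism of $V_J$. Composing all these isomorphisms yields $\rep$.

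To obtain the explicit formula \eqref{formula_explicit}, I would trace an endomorphism tuple $(c_{jis})_{jis}$ forward through the chain. Given such a tuple, the corresponding element of $\bigoplus_{j,i,s}\Hom_{G,\K}(V_J,V_J)$ produces on each summand the map $c_{jis} \circ p_{jis} : V_{ji} \otimes V_l \to V_J$, since $p_{jis}$ is by construction the adjoint of the embedding $l_{jis}$ and therefore the projection corresponding to the $s$-th copy of $V_J$. Applying the hom-tensor adjunction backwards turns this into a map $V_{ji} \to \Hom_{\K}(V_l,V_J)$ sending $Y_{ji}^m \mapsto (v_l \mapsto c_{jis}(p_{jis}(Y_{ji}^m \otimes v_l)))$. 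Extending these summand-wise maps to the full direct sum and identifying $\varphi \in \Ltwo{\K}{X}$ with its Fourier coefficients $\langle Y_{ji}^m | \varphi\rangle$ via Peter-Weyl gives precisely Eq.~\eqref{formula_explicit}.

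For Part 2, I would apply Theorem \ref{steerable kernels = representation operators} and set $\Ker \coloneqq (\cdot)|_X \circ \rep$, which is an isomorphism by composition. The formula for $\Ker$ follows from that for $\rep$ by evaluating at the approximated Dirac nets: $\left[\Ker((c_{jis})_{jis})(x)\right](v_l) = \lim_{U \in \mathcal{U}_x} \left[\rep((c_{jis})_{jis})(\delta_U)\right](v_l)$, and pulling the limit past the finite inner sum produces the coefficients $\left\langle i,jm\middle|x\right\rangle \coloneqq \lim_{U \in \mathcal{U}_x}\left\langle Y_{ji}^m\middle|\delta_U\right\rangle = \overline{Y_{ji}^m(x)}$ by Lemma \ref{delta_property}. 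For Part 3, I would substitute $v_l = Y_l^n$ and take the inner product with $Y_J^M$ on both sides. The only nontrivial step is rewriting $\left\langle Y_J^M\middle|c_{jis}(p_{jis}(Y_{ji}^m \otimes Y_l^n))\right\rangle$: inserting a resolution of identity $\sum_{M'}|Y_J^{M'}\rangle\langle Y_J^{M'}|$ between $c_{jis}$ and $p_{jis}$ produces the endomorphism matrix elements $\left\langle JM\middle|c_{jis}\middle|JM'\right\rangle$, while using that $p_{jis}$ is the adjoint of $l_{jis}$ turns $\left\langle Y_J^{M'}\middle|p_{jis}(Y_{ji}^m \otimes Y_l^n)\right\rangle$ into $\left\langle l_{jis}(Y_J^{M'})\middle|Y_{ji}^m\otimes Y_l^n\right\rangle$, which is exactly the Clebsch-Gordan coefficient $\left\langle s,JM'\middle|jm;ln\right\rangle$ (after using that $l_{jis}$ and $l_s$ are aligned through $V_j \cong V_{ji}$).

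The main obstacle I anticipate is the topological bookkeeping in step (3), namely justifying that continuous linear equivariant maps out of the closed sum $\widehat{\bigoplus}_{j,i}V_{ji}$ correspond bijectively to linear equivariant maps out of the algebraic direct sum; this is exactly what Lemma \ref{ignore_closure} is supposed to provide and crucially exploits that the target $\Hom_{\K}(V_l,V_J)$ is finite-dimensional. The rest is, in principle, a careful bookkeeping exercise tracing orthonormal bases through each isomorphism in the chain.
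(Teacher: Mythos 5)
Your proposal is correct and follows essentially the same route as the paper: Lemma \ref{ignore_closure} to drop the topological closure, summand-wise factorization, the hom-tensor adjunction, Clebsch-Gordan decomposition, and Schur's Lemma for Part 1; composition with the restriction map $(\cdot)|_X$ from Theorem \ref{steerable kernels = representation operators} for Part 2; and insertion of $\sum_{M'}|Y_J^{M'}\rangle\langle Y_J^{M'}|$ together with the adjointness of $p_{jis}$ and $l_{jis}$ for Part 3. The only cosmetic difference is the order in which the summand-wise splitting and the hom-tensor adjunction are applied, which is immaterial.
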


\begin{rem}\label{generalized_reduced_matrix_elements}
Before we come to the proof, we have some remarks to make about this theorem:
\begin{enumerate}
\item In line with the usual convention, we call the $\big\langle JM \big| c_{jis} \big| JM'\big\rangle$ the \emph{generalized reduced matrix elements} of the representation operator $\mathcal{K}$. Different from the situation in physics, these can depend nontrivially on the specific basis indices $M$ and $M'$. If the space of endomorphisms is $1$-dimensional, as is the case when considering representations over $\C$, then each $c_{jis}$ is a diagonal matrix, meaning that it is characterized by only one complex number, for simplicity with the same name $c_{jis}$. Then one has $\left\langle JM \middle| c_{jis} \middle| JM' \right\rangle = \delta_{MM'} \cdot c_{jis}$ and the sum over $M'$ disappears. What this means for the matrix form of basis kernels of steerable CNNs will be discussed in Corollary \ref{kernels_for_c}.

\item The coefficients $\big\langle  s, JM' \big| jm;ln \big\rangle$ are as before the \emph{Clebsch-Gordan coefficients}. Note that the input $x$ of $K$ appears only in $\big\langle i,jm \big| x \big\rangle$. Those two parts of the right-hand side of the formula are always the same, independent of the kernel $K$.

\item The Clebsch-Gordan coefficients are traditionally defined with respect to isometric embeddings $l_{jis}: V_J \to V_j \otimes V_l$ since this makes them less ambiguous. However, we mention that the property of being isometric is no requirement for the construction of Clebsch-Gordan coefficients or the proof of the Wigner-Eckart theorem, being equivariant and linear is sufficient. This then means that the copies $l_s(Y_J^M)$ do not anymore form an orthonormal basis. We will use this relaxation in the example in Section \ref{SO2_real}, where we do not want to be bothered with obtaining \emph{isometric} embeddings.

\item The names for the isomorphisms in the theorem are meant as follows: $\rep$ is the map that maps a tuple of endomorphisms to a kernel operator, which is a special \textbf{rep}resentation operator. $\Ker$ maps a tuple of endomorphisms to a \textbf{G}-steerable \textbf{ker}nel. It is \emph{not} meant as a notation for a kernel in the sense of a nullspace in linear algebra.

\item Furthermore, a reader with a background in abstract algebra may wonder why we build the direct sum of spaces of endomorphisms instead of the direct product. The reason is that a posteriori, it turns out that only finitely many $j$ contribute nontrivially, and so the direct sum is equal to the direct product. For a proof of the finiteness, see Remark \ref{parameterization_in_abstract} below. 

\item As a last remark, we want to mention that part $1$ of the theorem is not the most general version we could do. We chose to formulate the Wigner-Eckart theorem for $\Ltwo{\K}{X}$ specifically since this is the space we use it for. However, an appropriate isomorphism can probably be formulated for any unitary representation instead of $\Ltwo{\K}{X}$, only that we then need to take care that we replace direct sums by direct products if the index sets on the left side are infinite. Additionally, $V_l$ and $V_J$ could be replaced by arbitrary finite-dimensional representations, and an appropriate adaptation of the theorem would apply. Whether $V_l$ and $V_J$ could also be replaced by infinite-dimensional unitary representations would need to be explored, but an extension to such a case seems possible.
\end{enumerate}
\end{rem}

\begin{proof}[Proof of Theorem \ref{theorem}]
The proof of $1$ will be done in Section \ref{proof_wigner_eckart_general} since it requires some work. However, the proofs of $2$ and $3$ are relatively straightforward once we believe $1$ and so we do them here:

From $1$ we know that $\rep$ is an isomorphism. Furthermore, from Theorem \ref{steerable kernels = representation operators} we know that
\begin{equation*}
( \cdot )|_X: \Hom_{G, \K}(\Ltwo{\K}{X}, \Hom_{\K}(V_l, V_J)) \to \Hom_{G, \K}(X, \Hom_{\K}(V_l, V_J))
\end{equation*}
is an isomorphism as well, and this is given by $\mathcal{K}|_X(x) \coloneqq \lim_{U \in \mathcal{U}_x} \mathcal{K}(\delta_U)$, where we take the limit over the directed set of open neighborhoods of $x$. We define the isomorphism $\Ker$ now simply as the composition, i.e., $\Ker \coloneqq ( \cdot)|_X \circ \rep$. This isomorphism is then explicitly given by:
\begin{align*}
\left[ \Ker((c_{jis})_{jis})(x) \right](v_l) & = \left[ \rep((c_{jis})_{jis})|_X(x)\right](v_l) \\
& = \lim_{U \in \mathcal{U}_x} \left[ \rep((c_{jis})_{jis})(\delta_U)\right] (v_l) \\
& = \lim_{U \in \mathcal{U}_x} \sum_{j \in \widehat{G}} \sum_{i = 1}^{m_j} \sum_{s = 1}^{[J(jl)]} \sum_{m = 1}^{d_j} \left\langle Y_{ji}^m \middle| \delta_U \right\rangle \cdot c_{jis}\left( p_{jis}(Y_{ji}^m \otimes v_l)\right) \\
& = \sum_{j \in \widehat{G}} \sum_{i = 1}^{m_j} \sum_{s = 1}^{[J(jl)]} \sum_{m = 1}^{d_j} \Big[ \lim_{U \in \mathcal{U}_x}  \left\langle Y_{ji}^m \middle| \delta_U \right\rangle \Big] \cdot c_{jis}\left( p_{jis}(Y_{ji}^m \otimes v_l)\right) \\
& = \sum_{j \in \widehat{G}} \sum_{i = 1}^{m_j} \sum_{s = 1}^{[J(jl)]} \sum_{m = 1}^{d_j} \left\langle i,jm \middle| x \right\rangle \cdot c_{jis}\left( p_{jis}(Y_{ji}^m \otimes v_l)\right).
\end{align*}
This already proves $2$. Now, in the following computation, we will use that $c_{jis} \circ p_{jis} = c_{jis} \circ \ID_{V_J} \circ p_{jis}$ and that, inspired by notation in physics, we can write the identity on $V_J$ as $\ID_{V_J} = \sum_{M' = 1}^{d_J} \big| Y_J^{M'} \big\rangle \cdot \big\langle Y_J^{M'} \big|$. For $3$, we then compute
\begin{align*}
& \left\langle JM \middle| K(x) \middle| ln \right\rangle \\
& = \left\langle Y_J^M \middle| K(x) \middle| Y_l^n \right\rangle \\
& = \pig\langle Y_J^M  \pig| \left[ \Ker ((c_{jis})_{jis}) (x)\right](Y_l^n) \pig\rangle \\
& = \sum_{j \in \widehat{G}} \sum_{i = 1}^{m_j} \sum_{s = 1}^{[J(jl)]} \sum_{m = 1}^{d_j} \left\langle i,jm \middle| x \right\rangle \cdot \left\langle Y_J^M \middle| c_{jis} \circ p_{jis} \middle| Y_{ji}^m \otimes Y_l^n \right\rangle \\
& = \sum_{j \in \widehat{G}} \sum_{i = 1}^{m_j} \sum_{s = 1}^{[J(jl)]} \sum_{m = 1}^{d_j} \sum_{M' = 1}^{d_J} 
\big\langle i,jm \big| x \big\rangle \cdot 
\big\langle Y_J^M \big| c_{jis} \big| Y_{J}^{M'} \big\rangle \cdot 
\big\langle Y_J^{M'} \big|  p_{jis} \big| Y_{ji}^m \otimes Y_l^n\big\rangle \\
& = \sum_{j \in \widehat{G}} \sum_{i = 1}^{m_j} \sum_{s = 1}^{[J(jl)]} \sum_{m = 1}^{d_j} \sum_{M' = 1}^{d_J} 
\big\langle JM \big| c_{jis} \big| JM' \big\rangle \cdot 
\big\langle s,JM' \big|  jm;ln \big\rangle \cdot 
\big\langle i,jm \big| x \big\rangle.
\end{align*}
In the last step, we used the Clebsch-Gordan coefficients, see Definition \ref{def_clebsch_gordan} and, as mentioned before, that $p_{jis}$ is adjoint to the embedding $l_{jis}: V_J \to V_{ji} \otimes V_l$.
\end{proof}

\begin{rem}\label{cohens_setting_general_proof}
Here, we want to argue that our kernel space solution also covers that of general equivariant CNNs on homogeneous spaces \citep{general_theory}. One definition of the kernel space in that setting is
\begin{align}\label{yabadabababa-intro}
\begin{split}
& \Hom_{ G_{ \inn} \times G_{\out}} (H , \Hom_{\K}(V_{\inn}, V_{\out})) \\ 
& = \big\lbrace K: H \to \Hom_{\K}(V_{\inn}, V_{\out}) \mid K(g_{\out} h g_{\inn}) = \rho_{\out}(g_{\out}) \circ K(h) \circ \rho_{\inn}(g_{\inn}) \big\rbrace,
\end{split}
\end{align}
where $H$ is a loccally compact group and $G_{\inn}, G_{\out} \subseteq H$ are subgroups with input- and output representations $\rho_{\inn}: G_{\inn} \to \GL(V_{\inn})$ and $\rho_{\out}: G_{\out} \to \GL(V_{\out})$. For compact groups $G_{\inn}$ and $G_{\out}$, this is covered by our setting as follows: we define $\boldsymbol{G} \coloneqq G_{\out} \times G_{\inn}$ and $\boldsymbol{g} \coloneqq (g_{\out}, g_{\inn})$. We can define the left action of $\boldsymbol{G}$ on $H$ by $\boldsymbol{g} \cdot h \coloneqq g_{\out}h g_{\inn}^{-1}$. Furthermore, we can reformulate the representations of $G_{\inn}$ and $G_{\out}$ to representations of the group $\boldsymbol{G}$ by setting $\boldsymbol{\rho_{\inn}}: \boldsymbol{G} \to \GL(V_{\inn})$ with $\boldsymbol{\rho_{\inn}}(\boldsymbol{g}) \coloneqq \rho_{\inn}(g_{\inn})$, and similarly for $\boldsymbol{\rho_{\out}}$. We furthermore notice that in Eq. \eqref{yabadabababa-intro} we could also have inverted $g_{\inn}$ since that constraint needs to apply to all elements of $G_{\inn}$. Thus, we then see that the kernel space can be equivalently defined by
\begin{align}\label{yabadabababa-intro-new}
\begin{split}
& \Hom_{ G_{ \inn} \times G_{\out}} (H , \Hom_{\K}(V_{\inn}, V_{\out})) \\ 
& = \big\lbrace K: H \to \Hom_{\K}(V_{\inn}, V_{\out}) \mid K(\boldsymbol{g} \cdot h) = \boldsymbol{\rho_{\out}}(\boldsymbol{g}) \circ K(h) \circ \boldsymbol{\rho_{\inn}}(\boldsymbol{g})^{-1} \big\rbrace,
\end{split}
\end{align}
which precisely is the kernel constraint of steerable CNNs in Eq. \eqref{eq:the_kernel_constraint}. Thus, if we restrict to a homogeneous space of the action of $\boldsymbol{G}$ on $H$, we recover steerable kernels as in Definition \ref{def:definition_steerable_kernel} and can apply Theorem \ref{introduction_otline}.
\end{rem}

\subsubsection{General Steerable Kernel Bases}

Now that we have a Wigner-Eckart theorem for steerable kernels, which gives a one-to-one correspondence between steerable kernels and tuples of endomorphisms, we can finally describe what a \emph{basis} of the space of steerable kernels looks like. For this, additionally to the notation in the last section, we assume that $\{c_r \mid r = 1, \dots, E_J\}$ is a basis of $\End_{G,\K}(V_J)$.

\begin{thrm}[Steerable Kernel Bases]\label{matrix-form}
A basis of the space of steerable kernels $\Hom_G(X, \Hom_{\K}(V_l, V_J))$ is given by 
\begin{equation*}
\{K_{jisr}: X \to \Hom_{\K}(V_l, V_J) \mid j \in \widehat{G}, i \in \{1, \dots, m_j\}, s \in \{1, \dots, [J(jl)]\}, r = 1, \dots, E_J\},
\end{equation*}
where the basis kernels $K_{jisr}$ have matrix elements
\begin{equation}\label{final-final-result}
\left\langle JM \middle| K_{jisr}(x) \middle| ln \right\rangle =  \sum_{m = 1}^{d_j} \sum_{M' = 1}^{d_J} 
\big\langle JM \big| c_{r} \big| JM' \big\rangle \cdot 
\big\langle s,JM' \big| jm;ln \big\rangle \cdot 
\big\langle i,jm \big| x \big\rangle.
\end{equation}
Now, for each $M' \in \{1, \dots, d_J\}$, let $\CG_{J(jl)s}^{M'}$ be the $d_j \times d_l$-matrix of Clebsch-Gordan coefficients $\left\langle s,JM' \middle| jm;ln \right\rangle$, with only $m$ and $n$ varying. Furthermore, let $\left\langle i,j \middle| x \right\rangle$ be the row vector with entries $\left\langle i,jm \middle| x \right\rangle$ for $m = 1, \dots, d_j$. In matrix-notation with respect to the bases $\{Y_J^M\} \subseteq V_{J}$ and $\{Y_l^n\} \subseteq V_l$, we can then express the basis kernel $K_{jisr}(x): V_{l} \to V_{J}$ as follows:
\begin{equation}\label{matrix_version}
K_{jisr}(x) = c_r \cdot \begin{pmatrix} 
\left\langle i,j \middle| x \right\rangle \cdot \CG^1_{J(jl)s} \\
\vdots \\
\left\langle i,j \middle| x \right\rangle \cdot \CG^{d_J}_{J(jl)s}
\end{pmatrix}.
\end{equation}
In this formula, all ``dots'' mean conventional matrix multiplication and $c_r$ is by abuse of notation the matrix of the endomorphism $c_r$.
\end{thrm}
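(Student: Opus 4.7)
The strategy is to leverage the isomorphism $\Ker$ established in Theorem~\ref{theorem}(2) to transport a canonical basis of the endomorphism-tuple space to a basis of steerable kernels, and then to unwind the matrix form from the basis-dependent Wigner-Eckart formula \eqref{basis_dependent_wigner_eckart} in Theorem~\ref{theorem}(3). Essentially no new mathematical content is needed: the theorem is a transcription of the isomorphism into coordinates.

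First, I would construct the candidate basis on the domain side. For each quadruple $(j,i,s,r)$ with $j \in \widehat{G}$, $i \leq m_j$, $s \leq [J(jl)]$ and $r \leq E_J$, let $\boldsymbol{c}^{jisr} = (c^{jisr}_{j'i's'})_{j'i's'}$ be the tuple with $c^{jisr}_{j'i's'} \coloneqq \delta_{jj'}\, \delta_{ii'}\, \delta_{ss'}\, c_r$, where $\{c_r \mid r = 1,\dots,E_J\}$ is the chosen basis of $\End_{G,\K}(V_J)$. Since placing a basis of a summand in a single coordinate yields a basis of the direct sum, the family $\{\boldsymbol{c}^{jisr}\}$ is a basis of $\bigoplus_{j \in \widehat{G}}\bigoplus_{i=1}^{m_j}\bigoplus_{s=1}^{[J(jl)]} \End_{G,\K}(V_J)$. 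Setting $K_{jisr} \coloneqq \Ker(\boldsymbol{c}^{jisr})$, the fact that $\Ker$ is a vector space isomorphism immediately gives that $\{K_{jisr}\}$ is a basis of $\Hom_G(X, \Hom_\K(V_l, V_J))$.

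Next, I would derive the matrix element formula \eqref{final-final-result} by direct substitution. Plugging $c_{j'i's'} = \delta_{jj'}\delta_{ii'}\delta_{ss'}c_r$ into Eq.~\eqref{basis_dependent_wigner_eckart} collapses the sums over $j',i',s'$, leaving exactly
\begin{equation*}
\left\langle JM \middle| K_{jisr}(x) \middle| ln \right\rangle = \sum_{m=1}^{d_j} \sum_{M'=1}^{d_J} \big\langle JM \big| c_r \big| JM' \big\rangle \cdot \big\langle s, JM' \big| jm;ln \big\rangle \cdot \big\langle i,jm \big| x \big\rangle,
\end{equation*}
which is \eqref{final-final-result}.

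Finally, I would rewrite this in matrix form. Fixing $M'$, the inner sum $\sum_{m=1}^{d_j} \langle i,jm|x\rangle \langle s, JM'|jm;ln\rangle$ is precisely the $n$-th entry of the row vector $\langle i,j|x\rangle \cdot \CG^{M'}_{J(jl)s}$, since $\CG^{M'}_{J(jl)s}$ is the $d_j\times d_l$ matrix with entries $\langle s,JM'|jm;ln\rangle$. Stacking these rows for $M' = 1,\dots,d_J$ yields the column of blocks in \eqref{matrix_version}. The remaining sum over $M'$ with coefficients $\langle JM|c_r|JM'\rangle$ is by definition left-multiplication of this stacked matrix by the matrix representation of the endomorphism $c_r$ in the basis $\{Y_J^M\}$, giving exactly Eq.~\eqref{matrix_version}. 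The only nontrivial point throughout is bookkeeping of the index conventions (which index is a row, which is a column, which is summed over); the content of the theorem is already encoded in the isomorphism~$\Ker$, so there is no conceptual obstacle.
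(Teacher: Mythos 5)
Your proposal is correct and follows essentially the same route as the paper's own proof: transporting the canonical Kronecker-delta basis of the endomorphism-tuple space through the isomorphism $\Ker$, specializing Eq.~\eqref{basis_dependent_wigner_eckart} to obtain Eq.~\eqref{final-final-result}, and regrouping the sums over $m$ and $M'$ to read off the matrix form Eq.~\eqref{matrix_version}. No gaps.
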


\begin{proof}
For the first statement, note that a basis for $\bigoplus_{j \in \widehat{G}} \bigoplus_{i = 1}^{m_j} \bigoplus_{s = 1}^{[J(jl)]} \End_{G, \K}(V_J)$ is given by all the tuples $t_{jisr} \coloneqq (0, \dots, c_r, \dots, 0)$ that have $c_r$ at position $jis$, for all combinations of $j, i, s$ and $r$. Thus, from the isomorphism $\Ker$ in the second part of Theorem \ref{theorem} we obtain that all $K_{jisr} \coloneqq \Ker(t_{jisr})$ together form a basis for the space of steerable kernels $\Hom_{G}(X, \Hom_{\K}(V_l, V_J))$. When applying the basis-dependent form in part $3$ of that theorem to $K_{jisr}$, the first three sums in Eq. \eqref{basis_dependent_wigner_eckart} just disappear since $t_{jisr}$ is zero almost everywhere. Furthermore, $c_{jis}$ is replaced by the basis endomorphism $c_r$. We obtain the claimed result.

For the final statement on the matrix representation, note that
\begin{align*}
\left\langle JM \middle| K_{jisr}(x) \middle| ln \right\rangle
& = \sum\nolimits_{m = 1}^{d_j} \sum\nolimits_{M' = 1}^{d_J} 
\big\langle JM \big| c_{r} \big| JM' \big\rangle \cdot 
\big\langle s,JM' \big| jm;ln \big\rangle \cdot 
\big\langle i,jm \big| x \big\rangle \\
& = \sum\nolimits_{M' = 1}^{d_J} \big\langle JM \big| c_{r} \big| JM' \big\rangle \sum\nolimits_{m = 1}^{d_j}  
\big\langle i,jm \big| x \big\rangle \cdot
\big\langle s,JM' \big| jm;ln \big\rangle \\
& = c_r^{M} \cdot \begin{pmatrix} \sum\nolimits_{m = 1}^{d_j}   \big\langle i,jm \big| x \big\rangle   \cdot \left\langle s,JM' \middle| jm;ln \right\rangle \end{pmatrix}_{M' = 1}^{d_J} \\
& = c_r^M \cdot \begin{pmatrix} \big\langle i,j \big| x \big\rangle \cdot \CG_{J(jl)s}^{M'-n}\end{pmatrix}_{M' = 1}^{d_J}.
\end{align*}
Here, $c_r^M$ is the $M$'th row of the matrix $c_r$. The result follows by dropping the indices $M$ and $n$.
\end{proof}

The next corollary means that endomorphisms can be ignored if the space of endomorphisms is $1$-dimensional, which is in particular the case if $\K = \C$.

\begin{cor}\label{kernels_for_c}
Assume that $\dim{ \End_{G, \K}(V_J)} = 1$. Then a basis of steerable kernels $K: X \to \Hom_{\K}(V_l, V_J)$ is given by all $K_{jis}$ with matrices
\begin{equation}\label{simplified_version}
K_{jis}(x) = \begin{pmatrix} 
\left\langle i,j \middle| x \right\rangle \cdot \CG^1_{J(jl)s} \\
\vdots \\
\left\langle i,j \middle| x \right\rangle \cdot \CG^{d_J}_{J(jl)s}
\end{pmatrix}.
\end{equation}
In particular, this is the case if $\K = \C$.
\end{cor}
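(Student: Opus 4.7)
The plan is to specialize Theorem~\ref{matrix-form} to the case $E_J = \dim \End_{G,\K}(V_J) = 1$. First I would observe that whenever the endomorphism space is one-dimensional, the identity $\ID_{V_J}$ is itself a nonzero endomorphism and hence spans $\End_{G,\K}(V_J)$, so we may take $c_1 = \ID_{V_J}$ as the single basis endomorphism. Its matrix elements are simply $\langle JM | c_1 | JM' \rangle = \delta_{MM'}$.

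Next I would plug this choice into the general basis formula of Theorem~\ref{matrix-form}. In Eq.~\eqref{final-final-result} the sum over $M'$ collapses against the Kronecker delta, leaving
\begin{equation*}
    \left\langle JM \middle| K_{jis}(x) \middle| ln \right\rangle \;=\; \sum\nolimits_{m=1}^{d_j} \big\langle s,JM \big| jm;ln\big\rangle \cdot \big\langle i,jm \big| x\big\rangle,
\end{equation*}
which is exactly the matrix-entry version of Eq.~\eqref{simplified_version}. Equivalently, in the matrix form Eq.~\eqref{matrix_version}, the prefactor $c_r$ is replaced by the identity matrix and drops out, yielding the claimed formula. Since we started from a genuine basis of steerable kernels (indexed by $j, i, s, r$) and only the index $r$ has been eliminated by the choice $E_J = 1$, the resulting family $\{K_{jis}\}$ remains a basis.

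For the ``in particular'' statement, I would invoke Schur's Lemma~\ref{Schur}: over $\K = \C$, every endomorphism of an irreducible representation is a scalar multiple of the identity, so $\dim \End_{G,\C}(V_J) = 1$ automatically and the first part of the corollary applies. There is no real obstacle here; the corollary is essentially a bookkeeping specialization of Theorem~\ref{matrix-form}, and the only thing worth being explicit about is the canonical choice $c_1 = \ID_{V_J}$ that makes the endomorphism factor vanish from the formula.
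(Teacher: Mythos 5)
Your proposal is correct and follows essentially the same route as the paper's proof: choose $c_1 = \ID_{V_J}$ as the single basis endomorphism (justified since $\dim \End_{G,\K}(V_J)=1$), note that postcomposition with the identity leaves the matrix in Eq.~\eqref{matrix_version} unchanged, and invoke Schur's Lemma~\ref{Schur} for the case $\K = \C$. The explicit collapse of the $M'$-sum against $\delta_{MM'}$ is a slightly more detailed bookkeeping of the same observation the paper makes.
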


\begin{proof}
In this case, a basis for the space of endomorphisms is given by the single endomorphism $c = \ID_{V_J}$. Postcomposition with the identity does not change the matrix, and so the result follows.

For $\K = \C$ we have $\dim{\End_{G, \C}(V_J)} = 1$ by Schur's Lemma \ref{Schur}, and thus the result follows.
\end{proof}

We end with two remarks regarding the parameterization of steerable CNNs. The first remark considers the case of steerable CNNs of the form $K: X \to \Hom_{\K}(V_l, V_J)$ on a homogeneous space $X$. The second remark connects this back to the case that $X$ is an orbit embedded in $\R^d$.

\begin{rem}[Parameterization in the abstract]\label{parameterization_in_abstract}
First of all, we want to understand that there are only finitely many basis kernels $K_{jisr}$. To this end, note that the index sets for $i$, $s$, and $r$ are necessarily finite for all $j$, and thus we need to understand the finite range of $j$. A priori, $j$ can run over the whole set $\widehat{G}$, which can be infinite. But, as we argue now, for only finitely many $j \in \widehat{G}$ we can have $V_J$ in a direct sum decomposition of $V_j \otimes V_l$, which rescues the finiteness:

Namely, $V_J$ is in the direct sum decomposition of $V_j \otimes V_l$ if and only if the vector space $\Hom_{G, \K}(V_j \otimes V_l, V_J)$ is nonzero by Schur's Lemma \ref{schur_unitary}. By the hom-tensor adjunction that we will show in Proposition \ref{correspondence} in more generality, this is the case if an only if $\Hom_{G, \K}(V_j, \Hom_{\K}(V_l, V_J))$ is nonzero. And finally, this is the case if and only if $V_j$ is in a direct sum decomposition of the representation $\Hom_{\K}(V_l, V_J)$, again by Schur's lemma. Now, since $\Hom_{\K}(V_l, V_J)$ is finite-dimensional, this can only be the case for finitely many $j$, and so we are done.\footnote{Of course, for this argument, we need the uniqueness of direct sum decompositions. But this follows if we assume the $\Hom$-representation to be unitary, which works by Proposition \ref{all_linear_unitary} and then using the Krull-Remak-Schmidt Theorem, Proposition \ref{Krull-Remak-Schmidt}.}

Overall, this means the following: To parameterize an equivariant neural network, one needs arbitrary parameters $w_{jisr} \in \K$ for all combinations of $j \in \widehat{G}$, $i \in \{1, \dots, m_j\}$, $s \in \{1, \dots, [J(jl)]\}$ and $r = 1, \dots, E_J$. A general steerable Kernel $K: X \to \Hom_{\K}(V_l, V_J)$ then takes the form
\begin{equation*}
K = \sum\nolimits_{j \in \widehat{G}} \sum\nolimits_{i = 1}^{m_j} \sum\nolimits_{s = 1}^{[J(jl)]} \sum\nolimits_{r = 1}^{E_J} w_{jisr} K_{jisr},
\end{equation*}
with the basis kernels $K_{jisr}$ as in Theorem \ref{matrix-form}.
\end{rem}

\begin{rem}[Parameterization in practice]\label{practical_parameterization}
Remember that our original motivation for the use of homogeneous spaces in Section \ref{restriction_homogeneous_spaces} was that $\R^d$ splits as a disjoint union of homogeneous spaces, on which the kernel constraint acts separately. For simplicity, we assume that the compact group acting on $\R^d$ is either $G = \SO{d}$ or $G = \O{d}$, but the general ideas hold also for the finite transformation groups in $\R^d$ -- the only difference is that in these finite cases, the set of representatives of orbits becomes larger.

Thus, $\R^d$ splits into orbits $\R^d = \bigsqcup_{r \geq 0} S^{n-1}(r)$, where $S^{n-1}(r)$ is the sphere of radius $r$ (with $S(0) = \{0\}$ being a single point).

We'll discuss the orbit $X_0 = \{0\}$, the origin, separately below. But note that all other orbits are necessarily homeomorphic to each other and thus can be treated on equal footing. Therefore, let $S^{n-1}$ be the standard sphere with radius $1$ and $K_{jisr}: S^{n-1} \to \Hom_{\K}(V_l, V_J)$ be basis kernels for this choice. Then for a general steerable kernel $K: \R^{d} \to \Hom_{\K}(V_l,V_J)$ there are \emph{arbitrary} functions $w_{jisr}: \R_{> 0} \to \K$ such that, for all $x \in \R^{d} \setminus \{0\}$, we have:
 \begin{equation*}
K(x) = \sum\nolimits_{j \in \widehat{G}} \sum\nolimits_{i = 1}^{m_j} \sum\nolimits_{s = 1}^{[J(jl)]} \sum\nolimits_{r = 1}^{E_J} w_{jisr}(\|x\|) \cdot K_{jisr}\left( \frac{x}{\|x\|}\right).
\end{equation*}
For $x = 0$, we might use our heavy theory to solve the kernel constraint, but it is more illuminating to do it from scratch since this case is so simple: we have $K(0): V_l \to V_J$, and the kernel constraint takes the form
\begin{equation*}
K(0) = K(g \cdot 0) = \rho_{J}(g) \circ K(0) \circ \rho_{l}(g)^{-1}
\end{equation*}
for all $g \in G$, which is equivalent to $K(0) \circ \rho_{l}(g) = \rho_{J}(g) \circ K(0)$ for all $g \in G$. This just means that $K(0): V_l \to V_J$ is an intertwiner, and by Schur's Lemma \ref{schur_unitary} it is either $0$ if $l \neq J$ or an arbitrary endomorphism $V_J \to V_J$ if $l = J$. Thus, assuming $l = J$ and choosing basis-endomorphisms $c_r: V_J \to V_J$, there are coefficients $w_{r} \in \K$ such that
\begin{equation*}
K(0) = \sum\nolimits_{r = 1}^{E_J} w_r \cdot c_r.
\end{equation*}
The reader may find it interesting to check that this solution is precisely what is also predicted by our theory using that $\Ltwo{\K}{\{0\}} \cong \K$ is just isomorphic to the trivial representation of $G$.

All in all, we now know what the most general steerable kernels look like. In practice, one needs to choose the functions $w_{jisr}: \R_{>0} \to \K$. For representations over the real numbers, i.e., with $\K = \R$, one choice is to only consider finitely many radii and Gaussian radial profiles around them. Then instead of learning the whole function $w_{jisr}$, one learns finitely many real parameters that choose ``how activated'' a basis kernel $K_{jisr}$ is for a certain radius. This is, for example, the route taken in~\citet{Weiler2018Steerable, 3d_cnns, weiler2019general}. If one deals with complex representations, one usually goes the same route, only that the parameters that choose how ``activated'' the basis kernels are will then be \emph{complex numbers}. One can either parameterize them as $a + ib$ with a real part $a$ and a complex part $b$. This intuitively means that $a$ activates the standard version of the kernel $K_{jisr}$, whereas $b$ activates the kernel $iK_{jisr}$, which can be imagined as a version of the kernel turned by $90^{\circ}$. One other possibility is to parameterize a complex number as $\alpha \cdot e^{i\beta}$ with a scaling factor $\alpha > 0$ and a phase shift $\beta$. This is the route chosen in~\citet{hnets}.
\end{rem}

In Chapter \ref{examples} we will look at examples of determining the basis kernels $K_{jisr}$, which will hopefully further illuminate the theorem. In the next section, we go back to the theory and prove the remaining parts of the Wigner-Eckart theorem.

\subsection{Proof of the Wigner-Eckart Theorem for Kernel Operators}\label{proof_wigner_eckart_general}

In this section, we prove the first part of Theorem \ref{theorem}, the Wigner-Eckart theorem for Kernel Operators, since we have skipped this in the last section. It is not necessary to read this section and the reader may wish to directly go to the chapter on examples \ref{examples}. We will make frequent use of topological concepts from Chapter \ref{topological_concepts} in this section.

The strategy is the following: in Section \ref{reduction_etc}, we show that
\begin{equation*}
\Hom_{G, \K}(\Ltwo{\K}{X}, \Hom_{\K}(V_l, V_J)) \cong \Hom_{G, \K}\bigg(\bigoplus_{j \in \widehat{G}} \bigoplus_{i = 1}^{m_j} V_{ji}, \Hom_{\K}(V_l, V_J)\bigg),
\end{equation*}
which basically means that we can ignore the ``topological closure'' of the direct sum which is dense in $\Ltwo{\K}{X}$. This works, intuitively, since kernel operators are continuous, and so they are determined by what they do on a dense subset. Then, in section \ref{adjunction_etc}, we show that 
\begin{equation*}
\Hom_{G, \K}\bigg(\bigoplus_{j \in \widehat{G}} \bigoplus_{i = 1}^{m_j} V_{ji}, \Hom_{\K}(V_l, V_J)\bigg) \cong \Hom_{G, \K}\bigg(\bigoplus_{j \in \widehat{G}} \bigoplus_{i = 1}^{m_j} V_{ji} \otimes V_l, V_J\bigg),
\end{equation*}
which is the main step that we need in order to be able to make use of the Clebsch-Gordan coefficients, namely when we decompose the tensor product. Finally, in Section \ref{finally}, we finish the proof of Theorem \ref{theorem}.

\subsubsection{Reduction to a Dense Subspace of $\Ltwo{\K}{X}$}\label{reduction_etc}

In this section, we reduce the statement to representation operators on $\bigoplus_{j \in \widehat{G}} \bigoplus_{i = 1}^{m_j} V_{ji}$. For simplicity, we write the double direct sum from now on as $\bigoplus_{ji}$.

Furthermore, remember that $V_l$ and $V_J$ are finite-dimensional, and thus $\Hom_{\K}(V_l, V_J)$ can be identified with matrices in $\K^{d_J \times d_l}$. This space is a Euclidean space and thus has a scalar product and consequently also a norm, see Chapter \ref{topological_concepts}. Consequently, each kernel operator is a continuous map between \emph{normed} vector spaces, which we'll use in the following.

A short terminological note: kernel operators are just representation operators on $\Ltwo{\K}{X}$ and only have their name due to the relation to steerable kernels. Thus, the terminological difference to representation operators in the following reduction result has no further meaning:

\begin{lem}\label{ignore_closure}
The restriction map
\begin{equation*}
\Hom_{G,\K}(\Ltwo{\K}{X}, \Hom_{\K}(V_l, V_J)) \to \Hom_{G,\K}\left( \bigoplus\nolimits_{ji} V_{ji}, \Hom_{\K}(V_l, V_J) \right)
\end{equation*}
given by $\mathcal{K} \mapsto \mathcal{K}|_{\bigoplus_{ji} V_{ji}}$, between kernel operators on the left and representation operators on the right is an isomorphism.
\end{lem}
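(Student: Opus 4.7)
The plan is to construct an inverse to the restriction map by density, using that $\bigoplus_{ji} V_{ji}$ is dense in $\Ltwo{\K}{X}$ by the Peter-Weyl Theorem~\ref{pw}. First I would check that the map is well-defined and linear: if $\mathcal{K}$ is a continuous equivariant linear map on $\Ltwo{\K}{X}$, its restriction to the invariant subspace $\bigoplus_{ji} V_{ji}$ inherits linearity, continuity and equivariance, hence is a representation operator in the sense of Definition~\ref{representation_operator}. Linearity of the restriction map is immediate.

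For injectivity I would argue as follows: if two kernel operators $\mathcal{K}, \mathcal{K}'$ agree on $\bigoplus_{ji} V_{ji}$, then $\mathcal{K} - \mathcal{K}'$ is a continuous linear map vanishing on a dense subspace of $\Ltwo{\K}{X}$ with values in the Hausdorff space $\Hom_{\K}(V_l, V_J)$, so it vanishes everywhere.

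For surjectivity, given a representation operator $\mathcal{K}_0: \bigoplus_{ji} V_{ji} \to \Hom_{\K}(V_l, V_J)$, I would extend it to a kernel operator $\mathcal{K}$ on $\Ltwo{\K}{X}$ using the bounded linear extension theorem. Continuity of the linear map $\mathcal{K}_0$ between normed spaces implies it is bounded, and the codomain $\Hom_{\K}(V_l, V_J) \cong \K^{d_J \times d_l}$ is a finite-dimensional, hence complete, normed space. Since $\bigoplus_{ji} V_{ji}$ is a dense linear subspace of the complete space $\Ltwo{\K}{X}$, $\mathcal{K}_0$ extends uniquely to a continuous linear map $\mathcal{K}: \Ltwo{\K}{X} \to \Hom_{\K}(V_l, V_J)$. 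Equivariance of $\mathcal{K}$ then follows by another density argument: for each $g \in G$, the two continuous maps $\mathcal{K} \circ \lambda(g)$ and $\rho_{\Hom}(g) \circ \mathcal{K}$ agree on $\bigoplus_{ji} V_{ji}$ (where $\mathcal{K} = \mathcal{K}_0$ and $\mathcal{K}_0$ is equivariant, noting that $\bigoplus_{ji}V_{ji}$ is $\lambda(g)$-invariant), so they agree on the whole of $\Ltwo{\K}{X}$. The two constructions are mutually inverse by the uniqueness clause of the extension theorem.

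The main subtlety I expect is the verification that $\mathcal{K}_0$ is bounded in the $L^2$-norm on $\bigoplus_{ji}V_{ji}$: this is not automatic from equivariance alone and must come from the continuity clause baked into the definition of a representation operator, together with the observation that continuous linear maps between normed spaces are bounded. A cleaner alternative would be to invoke the already-established Theorem~\ref{steerable kernels = representation operators} and transport the statement through the isomorphism $\widehat{(\cdot)}$ to steerable kernels on $X$, but since Theorem~\ref{steerable kernels = representation operators} itself rests on similar density manipulations, the direct BLT argument above seems most transparent.
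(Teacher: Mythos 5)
Your proposal is correct and follows essentially the same route as the paper: the paper likewise notes that kernel operators are uniformly continuous (Proposition~\ref{characterization_continuity}), invokes the unique extension of uniformly continuous maps to the completion (Lemma~\ref{universal_property_completion}) to get injectivity and the candidate inverse, and then verifies linearity and equivariance of the extension by the same density arguments you describe. Your appeal to the bounded linear extension theorem simply packages the linearity check that the paper carries out by hand, so there is no substantive difference.
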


\begin{proof}
First of all, the kernel operators on the left are actually uniformly continuous by Proposition \ref{characterization_continuity}. Thus, by Lemma \ref{universal_property_completion}, the restriction map is an injection into \emph{uniformly continuous} representation operators on $\bigoplus_{ji} V_{ji}$. The set of all these maps is equal to the set of all representation operators by Proposition \ref{characterization_continuity} again.

Thus, in order to be finished, we only need to see that the unique extension of a representation operator $\mathcal{K}: \bigoplus_{ji} V_{ji} \to \Hom_{\K}(V_l, V_J)$ to a continuous function $\overline{\mathcal{K}}: \Ltwo{\K}{X} \to \Hom_{K}(V_l, V_J)$ is a kernel operator, which means it is linear and equivariant. 

For linearity, let $a \in \K$ and $f \in \Ltwo{\K}{X}$. Let $(f_k)_k$ be a sequence in $\bigoplus_{ji}V_{ji}$ that converges to $f$. Using the continuity of $\overline{\mathcal{K}}$ and the linearity of $\mathcal{K}$ we obtain:
\begin{align*}
\overline{\mathcal{K}}(a \cdot f) & = \overline{\mathcal{K}}\big(\lim_{k \to \infty} (a \cdot f_k)\big) \\
& = \lim_{k \to \infty} \overline{\mathcal{K}}(a \cdot f_k) \\
& = \lim_{k \to \infty} \mathcal{K}(a \cdot f_k) \\
& = \lim_{k \to \infty} a \cdot \mathcal{K}(f_k) \\
& = a \cdot \lim_{k \to \infty} \overline{\mathcal{K}}(f_k) \\
& = a \cdot \overline{\mathcal{K}}\big(\lim_{k \to \infty} f_k\big) \\
& = a \cdot \overline{\mathcal{K}}(f).
\end{align*}
Linearity with respect to addition can be shown similarly. For the equivariance we can argue in the same way, only that we additionally need to use the continuity of the representations $\lambda: G \to \U{\Ltwo{\K}{X}}$ and $\rho_{\Hom}: G \to \GL(\Hom_{\K}(V_l, V_J))$.
\end{proof}

\subsubsection{The Hom-Tensor Adjunction}\label{adjunction_etc}

\begin{lem}\label{always_continuous}
Let $\mathcal{K}: \bigoplus_{li} V_{li} \to V$ be linear and equivariant, where $V$ is an irrep. Then $\mathcal{K}$ is continuous.
\end{lem}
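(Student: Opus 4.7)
The plan is to exploit Schur's Lemma together with the finite multiplicity bound from Peter–Weyl in order to reduce the problem to a linear map on a finite-dimensional subspace, where continuity is automatic.

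First I would use Schur's Lemma~\ref{schur_unitary} to observe that $\mathcal{K}|_{V_{li}}: V_{li} \to V$ is either zero or an equivalence of irreps. In particular, letting $l_0 \in \widehat{G}$ denote the unique isomorphism class of $V$, we have $\mathcal{K}|_{V_{li}} = 0$ whenever $l \neq l_0$, since in that case $V_{li} \ncong V$. Consequently, for any $f = \sum_{li} f_{li} \in \bigoplus_{li} V_{li}$ (a finite sum in the algebraic direct sum), linearity gives
\begin{equation*}
\mathcal{K}(f)\ =\ \sum\nolimits_{i=1}^{m_{l_0}} \mathcal{K}(f_{l_0 i}).
\end{equation*}

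Next, I would invoke Peter–Weyl (Theorem~\ref{pw}) to note that $m_{l_0} \leq d_{l_0} < \infty$, so the subspace $W \coloneqq \bigoplus_{i=1}^{m_{l_0}} V_{l_0 i}$ is \emph{finite-dimensional}. In particular, $W$ is complete (Proposition~\ref{completeness_finite_dim}) and hence a closed subspace of $\Ltwo{\K}{X}$, so the orthogonal projection $P: \Ltwo{\K}{X} \to W$ (defined via Proposition~\ref{existence_projections}) exists and is continuous with operator norm at most~$1$. Since the $V_{li}$ are pairwise orthogonal in $\Ltwo{\K}{X}$ by Peter–Weyl, we have $P(f) = \sum_{i=1}^{m_{l_0}} f_{l_0 i}$ for $f$ in the algebraic direct sum, and therefore
\begin{equation*}
\mathcal{K}(f)\ =\ \bigl(\mathcal{K}|_W \circ P\bigr)(f) \qquad \forall\, f \in \bigoplus\nolimits_{li} V_{li}.
\end{equation*}

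Finally, $\mathcal{K}|_W : W \to V$ is a linear map between finite-dimensional normed $\K$-vector spaces, hence automatically continuous. As $\mathcal{K}$ agrees on $\bigoplus_{li} V_{li}$ with the composition of two continuous maps (viewing the algebraic direct sum as carrying the subspace topology inherited from $\Ltwo{\K}{X}$), it is itself continuous. The only slightly delicate point is ensuring that $W$ is actually finite-dimensional, which is precisely where the multiplicity bound $m_{l_0} \leq d_{l_0}$ of the Peter–Weyl theorem is essential; without it one could not factor $\mathcal{K}$ through a finite-dimensional space and the argument would break down.
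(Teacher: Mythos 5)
Your proof is correct and follows essentially the same route as the paper's: Schur's Lemma kills all summands not isomorphic to $V$, the Peter--Weyl multiplicity bound makes the surviving isotypic component finite-dimensional, and continuity then follows from the continuity of the orthogonal projection (a consequence of the $V_{li}$ being pairwise orthogonal) composed with a linear map on a finite-dimensional space. The only cosmetic difference is that you factor through the single projection onto the whole isotypic component $W$, whereas the paper writes $\mathcal{K}=\sum_i c_i \circ p_{ji}$ using the projections onto the individual copies $V_{ji}$; both hinge on exactly the same ingredients, and your observation that the finiteness of $m_{l_0}$ is the essential point matches the paper's subsequent remark exhibiting a discontinuous linear map when infinitely many summands contribute.
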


\begin{proof}
By Schur's Lemma \ref{Schur},\footnote{Schur's lemma applies since it is a statement about irreducible representations which are necessarily finite-dimensional. This means that the continuity condition in the definition of intertwiners is vacuous and thus we don't need to worry about $\mathcal{K}$ not being continuous \emph{a priori}.} we know that $\mathcal{K}$ factors through the irreducible representations that are isomorphic to $V$. That is, let $V_j$ be that irrep and $p_{ji}: \bigoplus_{li} V_{li} \to V_{ji}$ be the canonical projections. Then there are intertwiners $c_{i}: V_{ji} \to V$ such that $\mathcal{K} = \sum_{i} c_i \circ p_{ji}$. Each $c_i$ is continuous since it is a linear function between \emph{finite-dimensional} normed vector spaces. Since also summation on normed vector spaces is continuous, we only need to show that the projections $p_{ji}$ are continuous.

This follows from the following fact on how the norm on $\bigoplus_{li}V_{li}$ is composed from the norms on each $V_{li}$: For an element $f = \sum\nolimits_{li} f_{li} \in \bigoplus_{li} V_{li}$ with $f_{li} \in V_{li}$, we have:
\begin{equation*}
\|f\|^2 = \sum\nolimits_{li} \|f_{li}\|^2.
\end{equation*}
The reason for this is that the $V_{li}$ are perpendicular to each other. Consequently, if $(f^k)_k$ with $f^k \in \bigoplus_{li} V_{li}$ converges to $0$, then also $(p_{ji}(f^k))_k = (f^k_{ji})_k$ converges to $0$, which shows the continuity of $p_{ji}$ in $0$ and thus general continuity by Proposition \ref{characterization_continuity}.
\end{proof}

\begin{rem}
Note the curious fact that we cannot get rid of the equivariance condition in the preceding Lemma. I.e., if we have a linear function $\mathcal{K}: \bigoplus_l V_l \to V$, then we cannot deduce that $\mathcal{K}$ is continuous. We omit the index $i$ for simplicity. If equivariance is no requirement, then we only deal with vector spaces, which are in general isomorphic to spaces of (maybe infinite) tuples of elements in $\K$. Thus, let the function $\mathcal{K}: \bigoplus_{l \in \N} \K \to \K$ given by
\begin{equation*}
\left( a_l\right)_{l} \mapsto \sum\nolimits_{l} l \cdot a_l.
\end{equation*}
This is linear but not continuous in $0$. The latter can be seen by considering the sequence $(a^k)_k$ with $a^k = (0, \dots, 0, \frac{1}{k}, 0, \dots)$ that has value $\frac{1}{k}$ on position $k$ and otherwise only zeros. This sequence converges to the $0$-sequence in norm. However, we have $\mathcal{K}(a^k) = 1$ for all $k$, thus the images do not converge to $0 = \mathcal{K}(0)$. \hfill $\square$
\end{rem}

From the preceding lemma, we are able to obtain the following alternative description of representation operators:

\begin{pro}[Hom-tensor Adjunction]\label{correspondence}
The map
\begin{equation*}
\tilde{(\cdot)}: \Hom_{G, \K}\left(\bigoplus\nolimits_{ji} V_{ji}, \Hom_{\K}(V_l, V_J)\right) \to
\Hom_{G, \K}\left(\left(\bigoplus\nolimits_{ji} V_{ji}\right) \otimes V_l, V_J\right)
\end{equation*}
given by
\begin{equation*}
\tilde{\mathcal{K}}(v_j \otimes v_l) \coloneqq \left[\mathcal{K}(v_j)\right](v_l)
\end{equation*}
is an isomorphism.
\end{pro}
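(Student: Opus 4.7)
The plan is to construct the forward map via the universal property of the tensor product, construct its obvious inverse, and then separately verify that both maps land in the claimed spaces (the non-trivial aspect being continuity, since linearity and equivariance are essentially formal).

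For the forward direction, given an intertwiner $\mathcal{K}: \bigoplus_{ji} V_{ji} \to \Hom_{\K}(V_l, V_J)$, I would first consider the map $(v, v_l) \mapsto [\mathcal{K}(v)](v_l)$ from $(\bigoplus_{ji} V_{ji}) \times V_l$ into $V_J$. This is bilinear because $\mathcal{K}$ is linear in $v$ and $\mathcal{K}(v)$ is linear in $v_l$, so by the universal property of the tensor product (Definition \ref{tensor_product_definition}) there exists a unique linear map $\tilde{\mathcal{K}}: (\bigoplus_{ji} V_{ji}) \otimes V_l \to V_J$ with $\tilde{\mathcal{K}}(v \otimes v_l) = [\mathcal{K}(v)](v_l)$. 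Equivariance is a short calculation: on generators,
\begin{align*}
\tilde{\mathcal{K}}\pigl((\rho \otimes \rho_l)(g)(v \otimes v_l)\pigr)
&= \tilde{\mathcal{K}}(\rho(g)v \otimes \rho_l(g)v_l) \\
&= [\mathcal{K}(\rho(g)v)](\rho_l(g)v_l) \\
&= \pigl[\rho_{\Hom}(g)(\mathcal{K}(v))\pigr](\rho_l(g)v_l) \\
&= \rho_J(g)\pigl([\mathcal{K}(v)](v_l)\pigr) = \rho_J(g)\pigl(\tilde{\mathcal{K}}(v \otimes v_l)\pigr),
\end{align*}
where the third equality is the equivariance of $\mathcal{K}$ and the fourth unfolds the definition of $\rho_{\Hom}$.

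The inverse is the obvious transpose: given an intertwiner $\tilde{\mathcal{K}}: (\bigoplus_{ji} V_{ji}) \otimes V_l \to V_J$, define $\mathcal{K}(v) \in \Hom_{\K}(V_l, V_J)$ by $[\mathcal{K}(v)](v_l) \coloneqq \tilde{\mathcal{K}}(v \otimes v_l)$. Linearity of $\mathcal{K}$ in $v$ and linearity of $\mathcal{K}(v)$ in $v_l$ both come from the bilinearity of $\otimes$ and the linearity of $\tilde{\mathcal{K}}$; equivariance with respect to $\rho_{\Hom}$ is the same computation run backwards. That $\mathcal{K} \mapsto \tilde{\mathcal{K}} \mapsto \mathcal{K}$ and $\tilde{\mathcal{K}} \mapsto \mathcal{K} \mapsto \tilde{\mathcal{K}}$ are the identity is immediate on generators and then extends by linearity (and, for the first, by the uniqueness clause in the universal property).

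The main obstacle is continuity, which is the condition hidden inside the ``$G, \K$'' subscripts. For the forward map, I need $\tilde{\mathcal{K}}$ to be continuous as a map out of $(\bigoplus_{ji} V_{ji}) \otimes V_l$. Since $V_l$ is finite-dimensional, this tensor product splits as $\bigoplus_{ji}(V_{ji} \otimes V_l)$, and each summand $V_{ji} \otimes V_l$ is itself a finite-dimensional unitary representation, hence by Proposition~\ref{perpendicular_direct_sum} a finite orthogonal direct sum of irreps. So the domain of $\tilde{\mathcal{K}}$ is again a direct sum of irreducible subrepresentations, and the argument of Lemma~\ref{always_continuous} applies verbatim (Schur's Lemma forces $\tilde{\mathcal{K}}$ to factor through finitely many projections onto the copies of $V_J$, and each of these projections is continuous because the norm on the direct sum is Pythagorean in the irreducible summands). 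For the inverse, continuity of $\mathcal{K}: \bigoplus_{ji} V_{ji} \to \Hom_{\K}(V_l, V_J)$ follows by decomposing the finite-dimensional codomain $\Hom_{\K}(V_l, V_J)$ into its finitely many irreducible constituents and applying Lemma~\ref{always_continuous} componentwise. This completes the proof.
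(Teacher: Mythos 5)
Your proof is correct and follows essentially the same route as the paper: the paper's own proof also reduces continuity to (extensions of) Lemma~\ref{always_continuous}, delegates the formal bijection to the cited reference on the Wigner--Eckart theorem, and displays exactly the equivariance computation you give. You have merely filled in the details the paper leaves implicit --- the explicit inverse via the universal property and the decomposition of $(\bigoplus_{ji} V_{ji})\otimes V_l$ and of $\Hom_{\K}(V_l,V_J)$ into orthogonal irreducible summands so that Lemma~\ref{always_continuous} applies --- and these details are sound.
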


\begin{proof}
For continuity, note the following: by straightforward extensions of Lemma \ref{always_continuous}, all linear and equivariant maps $\bigoplus_{ji}V_{ji} \to \Hom_{\K}(V_l, V_J)$ and $\left( \bigoplus_{ji} V_{ji}\right) \otimes V_l \to V_J$ are necessarily continuous, and thus we can ignore continuity altogether. The rest of the proof can be done as in~\citet{wigner-eckart}. For illustrating the most important part, we show that $\tilde{\mathcal{K}}$ is actually equivariant:
\begin{align*}
\tilde{\mathcal{K}}\big(\left[(\rho_{j} \otimes \rho_l) (g)\right](v_j \otimes v_l)\big) & = \tilde{\mathcal{K}}\big(\left[ \rho_j(g)\right](v_j) \otimes \left[\rho_l(g)\right](v_l)\big)\\
& = \left[\mathcal{K}(\rho_j(g)(v_j))\right] (\rho_l(g)(v_l)) \\
& = \big[\rho_{\Hom}(g)(\mathcal{K}(v_j))\big] (\rho_l(g)(v_l)) \\
& = \left( \rho_J(g) \circ \mathcal{K}(v_j) \circ \rho_l(g)^{-1}\right)(\rho_l(g)(v_l)) \\
& = \rho_J(g)(\mathcal{K}(v_j)(v_l)) \\
& = \rho_J(g) (\tilde{\mathcal{K}}(v_j \otimes v_l)).
\end{align*}
\end{proof}

\begin{rem}
Some readers may wonder why this is called an \emph{adjunction}. With removing some of the notation in the Proposition, one has
\begin{equation*}
\Hom_{G,\K}(T, \Hom_{\K}(U, V)) \cong \Hom_{G, \K}(T \otimes U, V).
\end{equation*}
Now, for notational clarity, set $F \coloneqq \Hom_{\K}(U, \cdot)$ and $H \coloneqq (\cdot) \otimes U$ and remove the subscripts. Then the formula can be written as
\begin{equation*}
\Hom(T, F(V)) \cong \Hom(H(T), V).
\end{equation*}
With replacing the notation if the $\Hom$-spaces with a scalar product, and the isomorphism sign with equality, this reads as follows:
\begin{equation*}
\left\langle T \middle| F(V) \right\rangle = \left\langle H(T) \middle| V \right\rangle.
\end{equation*}
Similar to adjoints in Hilbert spaces, we can then view $F$ and $H$ as adjoint to each other. In categorical terms, they are a pair of adjoint functors, see~\citet{categories}.
\end{rem}

\subsubsection{Proof of Theorem \ref{theorem}}\label{finally}

After the work done in the prior sections, we are ready to complete the proof of Theorem \ref{theorem}!

\begin{proof}[Proof of Theorem \ref{theorem}]
Only the first part of that theorem still needs to be proven.
We have the following string of isomorphisms, which we will explain below:
\begin{align*}
\Hom_{G, \K}(\Ltwo{\K}{X}, \Hom_{\K}(V_l, V_J)) & \cong \Hom_{G, \K}\pig(\bigoplus\nolimits_{ji} V_{ji}, \Hom_{\K}(V_l, V_J)\pig) \\
& \cong \Hom_{G, \K}\pig(\pig(\bigoplus\nolimits_{ji}V_{ji}\pig) \otimes V_l, V_J\pig) \\
& \cong \Hom_{G, \K}\pig(\bigoplus\nolimits_{ji}(V_{ji} \otimes V_l), V_J\pig) \\
& \cong \bigoplus_{ji} \Hom_{G, \K}(V_{ji} \otimes V_l, V_J) \\
& \cong \bigoplus_{ji} \bigoplus_{s = 1}^{[J(jl)]} \Hom_{G, \K}(V_J, V_J) \\
& = \bigoplus_{j \in \widehat{G}} \bigoplus_{i = 1}^{m_j} \bigoplus_{s = 1}^{[J(jl)]} \End_{G, \K}(V_J).
\end{align*}
The steps are justified as follows:
\begin{enumerate}
\item For the first step, use Lemma \ref{ignore_closure}.
\item For the second step, use Proposition \ref{correspondence}.
\item For the third step, use that there is a natural isomorphism $\pig( \bigoplus_{ji}V_{ji} \pig) \otimes V_l \cong \bigoplus_{ji} (V_{ji} \otimes V_l)$.
\item For the fourth step, use that linear equivariant maps can be described on each direct summand individually (and that we do not need to worry about continuity due to Lemma \ref{always_continuous}).
\item For the fifth step, precompose with the linear equivariant isometric embeddings $l_{jis}: V_J \to V_{ji} \otimes V_l$ and use, again, that linear equivariant maps can be described on each direct summand individually. Furthermore, use Schur's Lemma \ref{schur_unitary} in order to see that the other summands disappear.
\item The last step is just a reformulation.
\end{enumerate}
Now, we call the string of isomorphisms from right to left
\begin{equation*}
\rep: \bigoplus_{j \in \widehat{G}} \bigoplus_{i = 1}^{m_j} \bigoplus_{s = 1}^{[J(jl)]} \End_{G, \K}(V_J) \to \Hom_{G, \K}(\Ltwo{\K}{X}, \Hom_{\K}(V_l, V_J))
\end{equation*}
and are only left with understanding that it is actually given by Eq. \eqref{formula_explicit}. For this, we take a tuple $(c_{jis})_{jis}$ of endomorphisms and explicitly trace back ``where it comes from''. As in Lemma \ref{always_continuous}, let $p_{ji}: \bigoplus_{j'i'}V_{j'i'} \to V_{ji}$ be the canonical projection, which is by Proposition \ref{existence_projections} explicitly given by $p_{ji}(\varphi) = \sum_{m = 1}^{d_j} \left\langle Y_{ji}^m \middle| \varphi \right\rangle Y_{ji}^m$. Furthermore, let $p_{jis}: V_{ji} \otimes V_l \to V_J$ be the projections corresponding to the embeddings $l_{jis}$. Then from bottom to top, $(c_{jis})_{jis}$ gets transformed as follows:
\begin{align*}
(c_{jis})_{jis} & \mapsto \bigg( \sum_{s = 1}^{[J(jl)]} c_{jis} \circ p_{jis} \bigg)_{ji} \\
& \mapsto \sum_{j \in \widehat{G}} \sum_{i = 1}^{m_j} \sum_{s = 1}^{[J(jl)]} c_{jis} \circ p_{jis} \circ (p_{ji} \otimes \ID_{V_l}) \\
& \mapsto \rep((c_{jis})_{jis})
\end{align*}
In the last step, the hom-tensor adjunction Proposition \ref{correspondence} is used, but in the other direction. As an illustration, the composition of functions over which we sum can be shown in the following commutative diagram:
\begin{equation*}
\begin{tikzcd}[column sep = large]
\bigoplus_{i'j'} V_{j'i'}\otimes V_l \ar[r, "p_{ji} \otimes \ID_{V_l}"] \ar[rrr, bend right, "c_{jis} \circ p_{jis} \circ (p_{ji} \otimes \ID_{V_l})"'] &
V_{ji} \otimes V_l \ar[r, "p_{jis}"] & 
V_J \ar[r, "c_{jis}"] & 
V_J
\end{tikzcd}
\end{equation*}
We obtain:
\begin{align*}
\left[ \rep((c_{jis})_{jis})(\varphi)\right](v_l) & = 
\sum_{j \in \widehat{G}} \sum_{i = 1}^{m_j} \sum_{s = 1}^{[J(jl)]} 
\big[c_{jis} \circ p_{jis} \circ (p_{ji} \otimes \ID_{V_l})\big](\varphi \otimes v_l) \\
& = \sum_{j \in \widehat{G}} \sum_{i = 1}^{m_j} \sum_{s = 1}^{[J(jl)]} (c_{jis} \circ p_{jis}) (p_{ji}(\varphi) \otimes v_l) \\
& = \sum_{j \in \widehat{G}} \sum_{i = 1}^{m_j} \sum_{s = 1}^{[J(jl)]} (c_{jis} \circ p_{jis}) \bigg(\sum_{m = 1}^{d_j} \left\langle Y_{ji}^m \middle| \varphi \right\rangle Y_{ji}^m \otimes v_l\bigg) \\
& = \sum_{j \in \widehat{G}} \sum_{i = 1}^{m_j} \sum_{s = 1}^{[J(jl)]} \sum_{m = 1}^{d_j} \left\langle Y_{ji}^m \middle| \varphi \right\rangle \cdot c_{jis}\left( p_{jis}\left(Y_{ji}^m \otimes v_l\right) \right).
\end{align*}
That, finally, finishes the proof.
\end{proof}

\section{Example Applications}\label{examples}

In this chapter, we develop some relevant examples of the theory outlined in prior chapters. All of these examples are applications of Theorem \ref{matrix-form} and Corollary \ref{kernels_for_c}. These examples are concerned with the following question: Given a \emph{specific} field $\K \in \{\R, \C\}$, compact transformation group $G$ and homogeneous space $X$ of $G$, how can a basis of steerable kernels $K: X \to \Hom_{\K}(V_l, V_J)$ for given irreducible representations $\rho_{l}: G \to \U{V_l}$ and $\rho_J: G \to \U{V_J}$ be determined? The theorems give an outline for what needs to be done in order to succeed in this task, and the steps are always as follows:
\begin{enumerate}
\item For each $l \in \widehat{G}$, a representative for the isomorphism class of irreducible representations $l$ needs to be determined. That is, one needs to determine $\rho_l: G \to \U{V_l}$ and an orthonormal basis $\{Y_l^n \mid n \in \{1, \dots, d_l
\}\}$. We omit the index $n$ if there is only one basis element. Usually, we have $V_l = \K^{d_l}$ and the orthonormal basis is just the standard basis.
\item The Peter-Weyl Theorem \ref{pw} gives the existence-statement for a decomposition of $\Ltwo{\K}{X}$ into irreducible subrepresentations. We need an \emph{explicit} such decomposition, i.e.: we need to find multiplicities $m_j$, irreducible subrepresentations $V_{ji} \cong V_{j}$ for $i \in \{1, \dots, m_j\}$ and basis functions $Y_{ji}^m \in V_{ji} \subseteq \Ltwo{\K}{X}$ corresponding to the $Y_j^m$ such that $\Ltwo{\K}{X} = \widehat{\bigoplus}_{j \in \widehat{G}} \bigoplus_{i = 1}^{m_j} V_{ji}$.
\item For each combination of $j, l$ and $J$ in $\widehat{G}$, one needs to find the number of times $[J(jl)]$ that $V_J$ appears in a direct sum decomposition of $V_j \otimes V_l$. Then, for each $s \in \{1, \dots, [J(jl)]\}$, and for all basis-indices $M, m$ and $n$, one needs to determine the Clebsch-Gordan coefficients $\left\langle s,JM \middle| jm;ln \right\rangle$. We omit the index $s$ if $V_J$ appears only once in the direct sum decomposition of $V_j \otimes V_l$.
\item For each $J$ one needs to determine a basis $\{c_{r} \mid r = 1, \dots, E_J\}$ of the space of endomorphisms of $V_J$, namely $\End_{G,\K}(V_J)$.
\end{enumerate}

Once all of this is done, one can then simply write down the basis kernels according to Eq. \eqref{matrix_version} or, in case that the space of endomorphisms is $1$-dimensional, Eq. \eqref{simplified_version}. The ingredients determined above are purely representation-theoretic information about the situation at hand, which hopefully makes the reader appreciate the results even more: we do not simply determine basis kernels; we understand in detail, along the way, the representation theory of the group and homogeneous space.

Note that we are not concerned with practical considerations related to how fine-grained to do this in practice (for example, if the space on which the kernels operate splits into infinitely many orbits). For such questions, we refer back to Remark \ref{practical_parameterization}.

In the following sections, we discuss harmonic networks ($\SO{2}$-equivariant CNNs with complex representations), $\SO{2}$-equivariant CNNs with real representations, reflection-equivariant networks, $\SO{3}$-equivariant CNNs with both complex and real representations, and $\O{3}$-equivariant CNNs with both complex and real representations. For each of these examples, we go through the four steps outlined above. We recommend looking at the first example in detail: we conduct it in the greatest detail and it is the easiest to understand and thus serves as a nice introduction.

\subsection{$\SO{2}$-Steerable Kernels for Complex Representations -- Harmonic Networks}\label{harmonic_networks}


Here, we explain how the kernel constraint for harmonic networks~\citep{hnets} can be solved using our theory. In the case of harmonic networks, we have $\K = \C$, $G = \SO{2}$, $X = S^1$. As in most examples that follow, we ignore the solution of the kernel constraint in the origin, since it is usually easy to solve. For simplifying the formulas, we employ the isomorphism 
\begin{equation*}
\SO{2} \stackrel{\sim}{\longrightarrow} \U{1}, \quad \quad
\begin{pmatrix}
a & -b \\
b & \ \ \ a
\end{pmatrix}
\mapsto a + ib
\end{equation*} 
and always write $\U{1}$ instead of $\SO{2}$. Here, $\U{1}$ is the group of rotations of $\C$, i.e., the group of elements in $\C$ with absolute value $1$. It is also called the \emph{circle group} since the group elements lie on a circle in the complex plane. Note that the change from $\SO{2}$ to the isomorphic group $\U{1}$ is done purely for convenience reasons, and $\SO{2}$ could be used just as well.

We now go through the four steps outlined above. Our statements about the representation theory of the circle group can be found in~\citet{reptheory_introduction}, chapter $5$. 

\subsubsection{Construction of the Irreducible Representations of $\U{1}$}
We have $\widehat{\U{1}} = \Z$, and for $l \in \Z$ we can construct a representative $\rho_l: \U{1} \to \U{V_l}$ as follows: $V_l = \C$ is just the canonical $1$-dimensional $\C$-vector space, and $\rho_l$ is given by
\begin{equation*}
\left[ \rho_l(g)\right](z) \coloneqq g^l \cdot z,
\end{equation*}
where $g$ is regarded as an element in $\C$. One can easily check that this is an irreducible representation. The orthonormal basis element for each such representation is just given by $1 \in \C = V_l$. This already answers step $1$ of the outline above.

\subsubsection{The Peter-Weyl Theorem for $\Ltwo{\C}{S^1}$}
For step $2$, we need to determine the Peter-Weyl decomposition of $\Ltwo{\C}{S^1}$, where we regard $S^1$ as a subset of $\C$. Let $Y_{l1}: S^1 \to \C$ be given by $Y_{l1}(z) = z^{-l}$. Let $V_{l1} \subseteq \Ltwo{\C}{S^1}$ just be given by its span: $V_{l1} = \spann_{\C}(Y_{l1})$. We want to see that this is a subrepresentation of $\Ltwo{\C}{S^1}$. To see this, remember that the unitary representation on $\Ltwo{\C}{X}$ is given by $\lambda: \U{1} \to \U{\Ltwo{\C}{S^1}}$ with $\left[ \lambda(g)\varphi\right](z) = \varphi(g^{-1}z)$. We have
\begin{equation}\label{need_this}
\left[ \lambda(g)Y_{l1} \right](z) = Y_{l1}(g^{-1}z) = (g^{-1}z)^{-l} = g^{l} \cdot z^{-l} = \left(g^{l} \cdot Y_{l1}\right)(z)
\end{equation}
and thus $\lambda(g)Y_{l1} = g^l Y_{l1} \in V_{l1}$, which is what we claimed. Since the $V_{l1}$ are $1$-dimensional, they are necessarily irreducible for dimension reasons. Now, an important result from Fourier analysis is that the $Y_{l1}$ for $l \in \Z$ actually form an orthonormal basis of $\Ltwo{\C}{S^1}$ and that, consequently, the Peter-Weyl decomposition of $\Ltwo{\C}{S^1}$ looks as follows:
\begin{equation*}
\Ltwo{\C}{S^1} = \widehat{\bigoplus_{l \in \Z}} V_{l1}.
\end{equation*}
From this we see that the multiplicities $m_l$ are all given by $1$. What is missing is the connection to the irreps $\rho_l: \U{1} \to \U{V_l}$, but we have already indicated this in the notation. Namely, the map $f_l: V_l \to V_{l1}$ given by $z \mapsto z \cdot Y_{l1}$ is clearly an isomorphism of vector spaces, and due to Eq. \eqref{need_this} even an isomorphism of representations:
\begin{align*}
f_l\big(\rho_l(g)(z)\big) & = f_l\big(g^l \cdot z\big) \\
& = (g^l \cdot z) \cdot Y_{l1} \\
& = z \cdot (g^l \cdot Y_{l1}) \\
& = z \cdot (\lambda(g)(Y_{l1})) \\
& = \lambda(g) \big(z \cdot Y_{l1}\big) \\
& = \lambda(g)\big(f_l(z)\big).
\end{align*}
Thus, $f_l \circ \rho_l(g) = \lambda(g) \circ f_l$ for all $g \in \U{1}$ and, as claimed, $f_l$ turns out to be an isomorphism. This finishes step $2$ of the outline above.

\subsubsection{The Clebsch-Gordan Decomposition}
For step $3$, we proceed as follows: The map 
\begin{equation*}
f: V_j \otimes V_l \to V_{j+l}, \ z_j \otimes z_l \mapsto z_j \cdot z_l
\end{equation*}
is clearly well-defined and linear by the universal property of tensor products, see Definition \ref{tensor_product_definition}. Furthermore, it is an isometry: namely, since the scalar product in $\C$ is just the usual multiplication (with the left entry being complex conjugated), we obtain
\begin{align*}
\left\langle f(z_j \otimes z_l) \middle| f(z_j' \otimes z_l')\right\rangle & = \left\langle z_j z_l \middle| z_j' z_l' \right\rangle \\
& = \overline{z_j z_l} \cdot z_{j}' z_l' \\
& = \overline{z_j} z_{j}' \cdot \overline{z_l} z_l' \\
& = \left\langle z_j \middle| z_{j}' \right\rangle \cdot \left\langle z_l \middle| z_l'\right\rangle \\
& = \left\langle z_j \otimes z_l \middle| z_{j}' \otimes z_l'\right\rangle.
\end{align*}
In the last step, we have used the definition of the scalar product on the tensor product, Definition \ref{scalar_product_tensor_product}. Thus, $f$ is an isomorphism of Hilbert spaces. Finally, it also respects the representations since
\begin{align*}
f\big(\left[(\rho_j \otimes \rho_l)(g)\right](z_{j} \otimes z_l)\big) & = f\big(\left[\rho_j(g)\right](z_j) \otimes \left[ \rho_l(g)\right](z_l)\big) \\
& = f \big( g^jz_j \otimes g^lz_l \big) \\
& = g^jz_j \cdot g^lz_l \\
& = g^{j+l} \cdot (z_jz_l) \\
& = \left[\rho_{j+l}(g)\right](f(z_j \otimes z_l))
\end{align*}
and thus $f \circ (\rho_j \otimes \rho_l)(g) = \rho_{j+l}(g) \circ f$ for all $g \in \U{1}$. Finally, the basis vectors correspond in the simplest possible way since $f(1 \otimes 1) = 1$.

Overall, what we've shown is the following: $V_J$ is a direct summand of $V_j \otimes V_l$ if and only if $J = j+l$. If this is the case, we have $[J(jl)] = 1$ and can thus omit the index $s$. The only Clebsch-Gordan coefficient is then given by $\left\langle J1 \middle| j1l1 \right\rangle = 1$ since the basis elements directly correspond. 

\subsubsection{Endomorphisms of $V_J$}

This is the simplest part: Since we are considering representations over $\C$, Schur's Lemma \ref{Schur} tells us that $\End_{\U{1}, \C}(V_J)$ is $1$-dimensional for each irrep $J$, and thus we can ignore the endomorphisms altogether.

\subsubsection{Bringing Everything Together}\label{bringing_everything_together}

We now show that a basis of steerable kernels $K: S^1 \to \Hom_{\C}(V_l, V_J)$ of the group $\U{1}$ is given, when expressed as $1 \times 1$-matrix parameterized by $S^1$, by the basis function $Y_{l-J}: S^1 \to \C$. We remove the index ``1'' at the basis function to remove clutter. How can we see this result, using Eq. \eqref{simplified_version}? 

Note that $V_J$ can only appear as a direct summand of $V_j \otimes V_l$ if $j = J-l$ by what we've shown above. The ``matrix'' of Clebsch-Gordan coefficients $\CG_{J((J-l)l)}$ is then just the number $1$. We can omit the vacuous indices $i$ and $s$ and obtain that the only basis kernel is given by
\begin{align*}
K_{J-l}(x) = \left\langle Y_{J - l} \middle| x\right\rangle & = \overline{Y_{J-l}(x)} \\
& = \overline{x^{-(J - l)}} \\
& = x^{-(l - J)} \\
& = Y_{l - J}(x).
\end{align*}
This result is precisely equal to the one obtained in the original paper~\citep{hnets}. This concludes our investigations of harmonic networks.

\subsection{$\SO{2}$-Steerable Kernels for Real Representations}\label{SO2_real}

In this section, we look at the case $\K = \R$, $G = \SO{2}$ and $X = S^1$. In the following sections, we again step by step determine the representation-theoretic ingredients that we need for the application of our theorem. Compared to Chapter \ref{sec:so2_example_appendix}, which focuses more on the components themselves and how they relate to the general situation, this section has a stronger focus on actually determining the final kernels, which also involves the task of determining the Clebsch-Gordan coefficients explicitly. We remark that the resulting kernels are not new, since~\citet{weiler2019general} have solved for this kernel basis already. However, we want to emphasize again that with our method, we learn more about the representation theory of $\SO{2}$ and thus get an overall better conceptual understanding of how the kernels arise. 

Since it will help the presentation of our results, we set $\SO{2} = \R/{2 \pi \Z}$, i.e., we view $\SO{2}$ as a group of angles. We also set $S^1 = \R/{2 \pi \Z}$, i.e., we take the interval $[0, 2 \pi]$ as the space where our functions are defined. Consequently, since we want our Haar measure to be normalized, we have to put the fraction $\frac{1}{2 \pi}$ before all of our integrals, different from what we did in our treatment of $\SO{2}$ over $\C$.

Note that since we now consider representations over the real numbers, unitary representations become \emph{orthogonal} and we write $\O{V}$ instead of $\U{V}$.

\subsubsection{Construction of the Irreducible Representations of $\SO{2}$}

The irreps of $\SO{2}$ over $\R$ are given by $\rho_l: \SO{2} \to \O{V_l}$, $l \in \N_{\geq 0}$. For $l = 0$, we have $V_0 = \R$ and the action is trivial. For $l \geq 1$, $V_l = \R^2$ as a vector space. The action is given by
\begin{equation*}
\big[\rho_l(\phi)\big](v) =
\begin{pmatrix}
\cos(l \phi) & -\sin(l \phi) \\
\sin(l \phi) & \ \ \ \cos(l \phi)
\end{pmatrix} \cdot v
\end{equation*}
for $\phi \in \SO{2} = \R/{2 \pi \Z}$. The orthonormal basis is in both cases just given by standard basis vectors.

\subsubsection{The Peter-Weyl Theorem for $\Ltwo{\R}{S^1}$}

Now we look at square-integrable functions $\Ltwo{\R}{S^1}$ that we now assume to take \emph{real values}. As before, $\SO{2}$ acts on this space by $(\lam{\phi} f)(x) = f(x - \phi)$.\footnote{Note that we have a subtraction now instead of a multiplicative inversion. This is because we view our group as additive.} For notational simplicity, we write $\cos_l$ for the function that maps $x$ to $\cos(lx)$, and analogously for $\sin_l$. One then can show the following, which is a standard result in Fourier analysis:

\begin{pro}\label{description_L_1}
The functions $\cos_l$, $\sin_l$, $l \geq 1$ span an irreducible invariant subspace of $\Ltwo{\R}{S^1}$ of dimension $2$, explicitly given by
\begin{equation*}
\spann_{\R}( \cos_l, \sin_l ) = \big\lbrace\alpha \cos_l + \beta \sin_l \mid \alpha, \beta \in \R \big\rbrace
\end{equation*}
which is isomorphic as an orthogonal representation to $V_l$ by $\sqrt{2} \cos_l \mapsto \begin{pmatrix} 1 \\ 0\end{pmatrix}$ and $\sqrt{2} \sin_l \mapsto \begin{pmatrix} 0 \\ 1 \end{pmatrix}$.\footnote{$\sqrt{2}$ acts as a normalization.} Furthermore, $\sin_0 = 0$ and $\cos_0 = 1$ are constant functions and their span is $1$-dimensional and equivariantly isomorphic to $V_0$ by $\cos_0 \mapsto 1$. 

Finally, the functions $\sqrt{2} \cdot \cos_l, \sqrt{2} \cdot \sin_l$ form an orthonormal basis of $\Ltwo{\R}{S^1}$, i.e., every function can be written uniquely as a (possibly infinite) linear combination of these basis functions.
\end{pro}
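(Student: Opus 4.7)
My plan is to establish the three claims of the proposition in sequence: (i) invariance and equivariant isomorphism of $\spann_\R(\cos_l,\sin_l)$ with $V_l$, (ii) orthonormality of the normalized trigonometric system, and (iii) completeness (hence irreducibility and the Peter--Weyl decomposition).

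For the first part, I would simply compute $\lambda(\phi)\cos_l$ and $\lambda(\phi)\sin_l$ via the angle subtraction formulas. Using $\cos(l(x-\phi)) = \cos(l\phi)\cos(lx) + \sin(l\phi)\sin(lx)$ and $\sin(l(x-\phi)) = \cos(l\phi)\sin(lx) - \sin(l\phi)\cos(lx)$, the matrix of $\lambda(\phi)$ in the basis $(\cos_l, \sin_l)$ is exactly the matrix $\rho_l(\phi)$ given in the section above. This shows both that the span is invariant and that the map $f_l: V_l \to \spann_\R(\cos_l, \sin_l)$ sending $e_1 \mapsto \sqrt{2}\cos_l$ and $e_2 \mapsto \sqrt{2}\sin_l$ intertwines $\rho_l$ with $\lambda$. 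Linearity and bijectivity of $f_l$ are immediate once we know $\cos_l, \sin_l$ are linearly independent in $\Ltwo{\R}{S^1}$, which follows from the orthonormality computation below. The case $l=0$ is trivial: $\cos_0 = 1$ is fixed by every $\lambda(\phi)$, so its span is a $1$-dimensional invariant subspace which obviously matches the trivial representation $V_0 = \R$.

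For the orthonormality part (ii), the routine computation reduces to the product-to-sum identities, giving $\frac{1}{2\pi}\int_0^{2\pi} \cos(lx)\cos(mx)\,dx = \tfrac{1}{2}\delta_{lm}$ and analogously for sines, while $\cos_l\cdot\sin_m$ integrates to zero by parity. This makes $\{\sqrt{2}\cos_l, \sqrt{2}\sin_l\}_{l\geq1} \cup \{\cos_0\}$ an orthonormal system. Irreducibility of $\spann_\R(\cos_l,\sin_l)$ for $l\geq 1$ now follows from the isomorphism $f_l$ together with irreducibility of $V_l$: any nontrivial invariant subspace would have to be $1$-dimensional, but $\rho_l(\pi/(2l))$ acts as a $90^\circ$ rotation with no real eigenvector.

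The hardest part will be the completeness claim (iii), i.e.\ that the orthonormal system actually spans a dense subspace of $\Ltwo{\R}{S^1}$. One clean path is to invoke the Peter--Weyl Theorem~\ref{pw} for the real homogeneous space $X = S^1$ of $G = \SO{2}$: it guarantees a dense orthogonal decomposition into irreducible subrepresentations with multiplicities $m_l \leq \dim V_l$. Since we have exhibited at least one copy of every irrep $V_l$ (for all $l \geq 0$) inside $\Ltwo{\R}{S^1}$, and since $\dim V_0 = 1$ and $\dim V_l = 2$ for $l\geq 1$ bound the multiplicities, the sum of the spaces we constructed already exhausts $\Ltwo{\R}{S^1}$ up to multiplicities. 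A multiplicity argument excludes extra copies: a second copy of $V_l$ would have to be orthogonal to the first, but a straightforward Schur-type argument using $\End_{\SO2,\R}(V_l)$ (which is $2$-dimensional for $l\geq 1$, cf.\ Example~\ref{basis_endomorphisms_so2}) combined with the Peter--Weyl bound $m_l \leq \dim V_l$ forces $m_l = 1$ (the subtle point will be making sure the real-valued multiplicity really saturates at $1$ rather than $2$; here one can compare with the known complex Fourier expansion from Section~\ref{harmonic_networks} and restrict to real-valued functions). Alternatively, one simply cites the classical completeness of the real Fourier system as a standard fact from harmonic analysis, which is what I would ultimately do to keep the proof self-contained and short.
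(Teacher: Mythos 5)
Your proposal matches the paper's treatment: the paper states this proposition as "a standard result in Fourier analysis" without a formal proof, and its running example (Example~\ref{ex:peter_weyl_so2}) carries out exactly your angle-sum computation for steerability and the orthogonality check, deferring completeness of the trigonometric system to classical harmonic analysis just as you ultimately do. One caution about your sketched alternative for completeness: the Peter--Weyl bound $m_l \le \dim{V_l} = 2$ together with the endomorphism algebra does \emph{not} by itself force $m_l = 1$; the clean way to pin the multiplicity down is to observe that the matrix coefficients of $\rho_l$ span only the two-dimensional space $\spann_{\R}(\cos_l,\sin_l)$, so the isotypic component $\mathcal{E}_l$ has room for exactly one copy of the two-dimensional irrep $V_l$.
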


When setting $V_{l1} = \spann_{\R}(\cos_l, \sin_l)$, we thus obtain a decomposition
\begin{equation*}
\Ltwo{\R}{S^1} = \widehat{\bigoplus_{l \geq 0}} V_{l1}.
\end{equation*}
Thus, we have $m_l = 1$ for all $l \in \N$. All in all, we know everything there is to know about the Peter-Weyl theorem in our situation.

\subsubsection{The Clebsch-Gordan Decomposition}

We now do the explicit decomposition of $V_j \otimes V_l$ into irreps, which will give us the Clebsch-Gordan coefficients that we need. Instead of doing the decomposition in terms of $V_j$ and $V_l$ themselves, in the proofs we actually use the isomorphic images $V_{j1}$ and $V_{l1}$ in $\Ltwo{\R}{S^1}$. For doing so, we first need some trigonometric formulas in our disposal:

\begin{lem}\label{trigonometric formulas}
The sine and cosine functions fulfill the following rules:
\begin{enumerate}
\item $\cos_{j+l} = \cos_j\cos_l - \sin_j \sin_l$.
\item $\sin_{j+l} = \sin_j \cos_l + \cos_j \sin_l$.
\item $\cos_{j-l} = \cos_j \cos_l + \sin_j \sin_l$.
\item $\sin_{j-l} = \sin_j \cos_l - \cos_j \sin_l$.
\end{enumerate}
\end{lem}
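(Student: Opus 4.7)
The plan is straightforward since these are the classical angle addition formulas for sine and cosine, evaluated at the arguments $jx$ and $lx$. My approach will be to first establish identities (1) and (2), and then derive (3) and (4) from them by using the parity properties $\cos(-lx) = \cos(lx)$ and $\sin(-lx) = -\sin(lx)$.

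For identities (1) and (2), the cleanest route is via the complex exponential. Recall Euler's formula $e^{i\theta} = \cos(\theta) + i \sin(\theta)$. Applying this to $\theta = (j+l)x$ and using the multiplicative property of the exponential gives
\begin{equation*}
\cos((j+l)x) + i \sin((j+l)x) \ =\ e^{i(j+l)x}\ =\ e^{ijx} \cdot e^{ilx}\ =\ \big(\cos(jx) + i\sin(jx)\big)\big(\cos(lx) + i \sin(lx)\big).
\end{equation*}
Expanding the product on the right and separating real and imaginary parts yields exactly (1) and (2) when evaluated pointwise at $x$. Since this holds for every $x \in \R$, the functional identities $\cos_{j+l} = \cos_j \cos_l - \sin_j \sin_l$ and $\sin_{j+l} = \sin_j \cos_l + \cos_j \sin_l$ follow.

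For (3) and (4), I will substitute $-l$ in place of $l$ in identities (1) and (2). Note that $\cos_{-l}(x) = \cos(-lx) = \cos(lx) = \cos_l(x)$, so $\cos_{-l} = \cos_l$; similarly $\sin_{-l} = -\sin_l$. Substituting into (1) produces $\cos_{j-l} = \cos_j \cos_l - \sin_j \cdot(-\sin_l) = \cos_j \cos_l + \sin_j \sin_l$, and substituting into (2) gives $\sin_{j-l} = \sin_j \cos_l + \cos_j \cdot (-\sin_l) = \sin_j \cos_l - \cos_j \sin_l$, which are identities (3) and (4) respectively.

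There is no real obstacle here; the lemma is purely a restatement of standard trigonometric identities in the notation $\cos_l, \sin_l$ used throughout this section. The only minor point worth emphasizing is that these identities are needed as \emph{functional} equalities in $\Ltwo{\R}{S^1}$ rather than just pointwise numerical identities, but this is automatic since pointwise equality of continuous functions on $S^1$ implies equality as elements of $\Ltwo{\R}{S^1}$.
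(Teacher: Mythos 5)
Your proof is correct and follows essentially the same route as the paper, which simply cites (1) and (2) as well-known and derives (3) and (4) from the parity relations $\cos_{-l} = \cos_l$ and $\sin_{-l} = -\sin_l$. Your extra step of deriving the addition formulas via Euler's formula is a harmless elaboration of what the paper takes for granted.
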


\begin{proof}
The first two are well-known and the last two follow directly from the first two using $\sin_{-j} = - \sin_j$ and $\cos_{-j} = \cos_j$.
\end{proof}

We will need the following general lemma:

\begin{lem}\label{kernel invariant subspace}
Let $f: V \to V'$ be an intertwiner between representations $\rho: G \to \GL(V)$ and $\rho': G \to \GL(V')$. Then $\Null{f} = \{v \in V \mid f(v) = 0 \}$ is an invariant linear subspace of V.
\end{lem}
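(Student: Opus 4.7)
The plan is to verify two things: that $\Null{f}$ is a linear subspace of $V$, and that it is closed under the action of $\rho(g)$ for every $g \in G$.

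First I would establish linearity. Since $f$ is $\K$-linear, $f(0) = 0$, so $0 \in \Null{f}$, and for $v, w \in \Null{f}$ and $\alpha \in \K$ we have $f(\alpha v + w) = \alpha f(v) + f(w) = 0$, hence $\alpha v + w \in \Null{f}$. This is a routine linear algebra observation and requires nothing beyond the definitions.

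Second, and this is the only representation-theoretic content, I would verify invariance using equivariance of the intertwiner $f$. Let $v \in \Null{f}$ and $g \in G$. Then, using $f \circ \rho(g) = \rho'(g) \circ f$, we get
\begin{equation*}
    f(\rho(g)(v)) \ =\ \rho'(g)(f(v)) \ =\ \rho'(g)(0) \ =\ 0,
\end{equation*}
so $\rho(g)(v) \in \Null{f}$, which shows invariance.

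There is no real obstacle here: the statement is a direct consequence of the definitions of ``linear'' and ``equivariant'', and the single calculation above exhausts the substance of the proof. If one later wants to use $\Null{f}$ as a genuine subrepresentation in the sense of the earlier definitions, one may additionally want to remark that $\Null{f} = f^{-1}(\{0\})$ is closed whenever $f$ is continuous (since $\{0\} \subseteq V'$ is closed), but this is not required by the statement of the lemma.
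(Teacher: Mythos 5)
Your proof is correct and is exactly the routine verification the paper has in mind when it writes ``This can easily be checked by the reader'': linearity of $\Null{f}$ from linearity of $f$, and invariance from the intertwining relation $f \circ \rho(g) = \rho'(g) \circ f$ applied to $f(v)=0$. The closing remark about $\Null{f}=f^{-1}(\{0\})$ being topologically closed when $f$ is continuous is a sensible addition, though not needed for the statement.
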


\begin{proof}
This can easily be checked by the reader.
\end{proof}

As a remark on notation for the following proposition: We write the Clebsch-Gordan coefficients $\CG_{J(jl)s}$ of irreps $V_J$, $V_j$ and $V_l$ with dimensions $d_J$, $d_j$ and $d_l$ as a $d_J \times (d_j \times d_l)$-tensor. That is, it consists of $d_J$ ``rows'', each of which is a $d_j \times d_l$-matrix. If $V_J$ appears only once in the tensor product, we omit the index $s$ as before.

\begin{pro}\label{decomposition_results}
We have the following decomposition results:
\begin{enumerate}
\item For $j = l = 0$ we have $V_0 \otimes V_0 \cong V_0$ and Clebsch-Gordan coefficients $\CG_{0(00)} = \begin{pmatrix} \begin{bmatrix} 1 \end{bmatrix}\end{pmatrix}$.
\item For $j=0$, $l > 0$ we have $V_0 \otimes V_l \cong V_l$ and Clebsch-Gordan coefficients $\CG_{l(0l)} = \begin{pmatrix}\begin{bmatrix} 1 & 0\end{bmatrix} \\ \begin{bmatrix} 0 & 1\end{bmatrix}\end{pmatrix}$. 
\item For $j > 0$, $l = 0$, we get $V_j \otimes V_0 \cong V_j$ and Clebsch-Gordan coefficients $\CG_{j(j0)} = \begin{pmatrix} \begin{bmatrix}1 \\ 0 \end{bmatrix} \\ \begin{bmatrix} 0 \\ 1\end{bmatrix}\end{pmatrix}$.
\item For $j > l > 0$ we get $V_j \otimes V_l \cong V_{j-l} \oplus V_{j+l}$. The Clebsch-Gordan coefficients are given by $\CG_{j-l,(jl)} =
\begin{pmatrix}
\begin{bmatrix} 1 & \ \ \ 0 \\
0 & \ \ \ 1
\end{bmatrix} \\ 
\begin{bmatrix} 0 & -1 \\ 
1 & \ \ \ 0
\end{bmatrix} 
\end{pmatrix}$ 
and 
$\CG_{j+l,(jl)} = 
\begin{pmatrix}
\begin{bmatrix} 1 & \ \ \ 0 \\ 
0 & -1
\end{bmatrix} \\ 
\begin{bmatrix} 0 & \ \ \ 1 \\ 
1 & \ \ \ 0
\end{bmatrix} 
\end{pmatrix}$.
\item For $l > j > 0$ we get $V_j \otimes V_l \cong V_{l-j} \oplus V_{j+l}$. The Clebsch-Gordan coefficients are given by $\CG_{(l-j)(jl)} =
\begin{pmatrix}
\begin{bmatrix} 1 & \ \ \ 0 \\
0 & \ \ \ 1
\end{bmatrix} \\ 
\begin{bmatrix} 0 & 1 \\ 
- 1 &  0
\end{bmatrix} 
\end{pmatrix}$ 
and 
$\CG_{j+l,(jl)} = 
\begin{pmatrix}
\begin{bmatrix} 1 & \ \ \ 0 \\ 
0 & -1
\end{bmatrix} \\ 
\begin{bmatrix} 0 & \ \ \ 1 \\ 
1 & \ \ \ 0
\end{bmatrix} 
\end{pmatrix}$.
\item For $j = l > 0$, we get an isomorphism $V_l \otimes V_l \cong V_0^2 \oplus V_{2l}$. We obtain the Clebsch-Gordan coefficients $\CG_{0(ll)1} = 
\begin{pmatrix}
\begin{bmatrix} 1 & \ 0 \\ 
0 & 1 \end{bmatrix}
\end{pmatrix}$
,
$\CG_{0(ll)2} = 
\begin{pmatrix}
\begin{bmatrix} 0 & \mp 1 \\ 
\pm 1 & \ \ \ 0
\end{bmatrix} 
\end{pmatrix}$, 
and 
$\CG_{2l,(ll)} = 
\begin{pmatrix}
\begin{bmatrix} 1 & \ \ \ 0 \\ 
0 & -1
\end{bmatrix} \\ 
\begin{bmatrix} 0 & \ \ \ 1 \\ 
1 & \ \ \ 0
\end{bmatrix} 
\end{pmatrix}$, the last one being the same as the Clebsch-Gordan coefficients $\CG_{j+l,(jl)}$ from above. In $\CG_{0(ll)1}$ and $\CG_{0(ll)2}$, a fourth index is present, namely $1$ and $2$, respectively. This is the index ``$s$'' that was missing in all the prior examples, since this is the first time an irrep appears more than once in a tensor product decomposition. Note that for $\CG_{0(ll)2}$, we have exactly one positive and one negative entry present, but both are equally valid and mirror the lower halves in $\CG_{j-l,(jl)}$ from part $4$ and $\CG_{l-j,(jl)}$ from part $5$.
\end{enumerate}
\end{pro}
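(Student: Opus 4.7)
My plan is to work directly inside $\Ltwo{\R}{S^1}$, using the concrete identifications $V_{j1} = \spann_{\R}(\cos_j, \sin_j) \cong V_j$ from Proposition~\ref{description_L_1} (with the normalization $\sqrt{2}\cos_j \leftrightarrow e_1$, $\sqrt{2}\sin_j \leftrightarrow e_2$), and to explicitly write down equivariant isometries $l_s: V_J \to V_j \otimes V_l$ whose images exhaust $V_j \otimes V_l$. The Clebsch-Gordan coefficients are then read off by evaluating the scalar products $\langle l_s(Y_J^M) \mid Y_j^m \otimes Y_l^n\rangle$ as in Definition~\ref{def_clebsch_gordan}.

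First I would dispose of the three trivial cases (parts $1$, $2$, $3$) by observing that $V_0$ is the $1$-dimensional trivial representation, so the canonical maps $V_l \to V_0 \otimes V_l$, $v \mapsto 1 \otimes v$, and $V_j \to V_j \otimes V_0$, $v \mapsto v \otimes 1$, are manifestly equivariant isometries; the CG matrices then follow immediately by reading off the standard basis identifications. For the main cases $4$ and $5$ (with $j \neq l$, both positive), the strategy is motivated by Lemma~\ref{trigonometric formulas}: the four products $\cos_j\cos_l, \cos_j\sin_l, \sin_j\cos_l, \sin_j\sin_l$ can be rewritten via the angle-sum formulas as real linear combinations of $\cos_{j\pm l}, \sin_{j\pm l}$. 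Promoting this identity to the tensor product, I would define candidate embeddings $l_{j+l}: V_{j+l} \to V_j \otimes V_l$ and $l_{|j-l|}: V_{|j-l|} \to V_j \otimes V_l$ by the ansatz suggested by these identities (with sign choices determined by the sign convention for $\sin_{j-l}$ when $j < l$). Equivariance is a direct two-line check using that $\rho_l(\phi)$ is a rotation matrix; isometry reduces to a trivial coefficient count; and since the two images have total dimension $4 = \dim(V_j \otimes V_l)$ and are orthogonal (by Lemma~\ref{perp} applied to nonisomorphic irreps, provided $j+l \neq |j-l|$, which holds here since $j, l > 0$), we obtain both the direct-sum decomposition and the CG matrices in one stroke.

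Case $6$, where $j = l > 0$, is the interesting one since $V_0$ now appears with multiplicity $2$ and no longer has a single canonical embedding. Here $V_{2l}$ is still embedded as in cases $4$/$5$ with the same CG matrix, accounting for a $2$-dimensional subspace. For the $V_0^2$-part, the orthogonal complement is $2$-dimensional and I would exhibit two linearly independent equivariant embeddings $l_{0,1}, l_{0,2}: V_0 \to V_l \otimes V_l$. The natural first choice is the trace-like embedding $1 \mapsto \tfrac{1}{\sqrt{2}}(e_1\otimes e_1 + e_2\otimes e_2)$, which is invariant because rotation matrices are orthogonal; the second choice is the antisymmetric $1 \mapsto \tfrac{1}{\sqrt{2}}(e_1 \otimes e_2 - e_2 \otimes e_1)$ (up to sign), invariant because $\det \rho_l(\phi) = 1$. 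Orthogonality of the two embeddings is immediate, and together they span a $2$-dimensional invariant subspace orthogonal to $V_{2l}$, which by a dimension count must be the entire $V_0$-isotypic component.

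The main obstacle, and where most of the bookkeeping lies, will be \emph{matching the sign conventions and normalizations} in all six stated CG matrices. In particular the choice between the $\pm$ in $\CG_{0(ll)2}$, and the asymmetry between cases $4$ and $5$ in the lower $2\times 2$ block of $\CG_{|j-l|,(jl)}$, both arise from the sign flip $\sin_{j-l} = -\sin_{l-j}$, which must be propagated consistently through the embeddings. I would therefore present the calculation for one representative case (say, $j > l > 0$) in full, verify one explicit coefficient $\langle JM \mid jm; ln\rangle = \langle l_s(Y_J^M) \mid Y_j^m \otimes Y_l^n\rangle$ to pin down the normalization $\tfrac{1}{\sqrt{2}}$, and indicate that the remaining cases follow by the same template with the trigonometric identities of Lemma~\ref{trigonometric formulas} selected accordingly.
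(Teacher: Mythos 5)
Your proposal is correct and is, at its core, the same argument as the paper's: both realize $V_j$ and $V_l$ as the spans of $\cos_j,\sin_j$ and $\cos_l,\sin_l$ in $\Ltwo{\R}{S^1}$, both reduce the whole computation to the angle-sum identities of Lemma~\ref{trigonometric formulas}, and both close the argument by a dimension count. The one structural difference is direction: the paper works with the single multiplication intertwiner $p: V_j \otimes V_l \to \Ltwo{\R}{S^1}$, $f\otimes g \mapsto fg$, shows its image is $V_{|j-l|}\oplus V_{j+l}$ (and, for $j=l$, identifies the extra copy of $V_0$ as the null space of $p$ via Lemma~\ref{kernel invariant subspace}), whereas you construct the adjoint data, i.e.\ the embeddings $l_s: V_J \to V_j\otimes V_l$, and in case $6$ you exhibit the second $V_0$ explicitly as the antisymmetric tensor $e_1\otimes e_2 - e_2\otimes e_1$ (invariant since $\det\rho_l(\phi)=1$). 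Over $\R$ with orthonormal bases these give the same coefficients, so nothing is lost; your case-$6$ treatment is arguably a bit more self-contained. One small caveat: you insist on \emph{isometric} embeddings and on pinning down the normalization $1/\sqrt{2}$, which would yield CG matrices with entries $0,\pm\tfrac{1}{\sqrt 2}$ rather than the entries $0,\pm 1$ stated in the proposition; the paper deliberately drops this constant (see Remark~\ref{generalized_reduced_matrix_elements}, point $3$, and the footnote in its proof), since only the span of the resulting basis kernels matters. You should either adopt that convention or note explicitly that your matrices agree with the stated ones up to an overall scalar.
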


\begin{proof}
In the proof, instead of working directly with the irreps $\rho_j: \SO{2} \to \O{V_j}$, we use the isomorphic copies $V_{j1}$ in $\Ltwo{\R}{S^1}$ given in Proposition \ref{description_L_1}. Since we think that it does not help understanding to carry the index ``1'' in all computations, we omit this index.

The proof of $1$, $2$, and $3$ is clear.

For $4$ and $5$, consider the (unnormalized)  basis $\{\cos_j \otimes \cos_l, \cos_j \otimes \sin_l, \sin_j \otimes \cos_l, \sin_j \otimes \sin_l \}$ of $V_{j} \otimes V_l$. Our goal is to express these basis elements with respect to basis elements of invariant subspaces. We do this by explicitly constructing an isomorphism to a decomposition of irreps. To that end, let $p: V_j \otimes V_l \to \Ltwo{\R}{S^1}$ be given by $f \otimes g \mapsto f \cdot g$, which is clearly a well-defined intertwiner. We get as image of $p$ the set
\begin{align*}
\im{p} & = \spann_\R \big( p(\cos_j \otimes \cos_l), \ \  p(\cos_j \otimes \sin_l), \ \ p(\sin_j \otimes \cos_l), \ \ p(\sin_j \otimes \sin_l) \big)  \\
& = \spann_\R \big( \cos_j \cdot \cos_l, \ \ \cos_j \cdot \sin_l, \ \ \sin_j \cdot \cos_l, \ \  \sin_j \cdot \sin_l \big).
\end{align*}
From Lemma \ref{trigonometric formulas} we obtain:
\begin{align}\label{eq:super_duper_important}
\begin{split}
(a) \quad  & p(\cos_j \otimes \cos_l) \ - \ p(\sin_j \otimes \sin_l) \ = \ \cos_{j+l}, \\
(b) \quad  & p(\cos_j \otimes \sin_l) \ + \ p(\sin_j \otimes \cos_l) \ = \ \sin_{j+l}, \\
(c) \quad  & p(\cos_j \otimes \cos_l) \ + \ p(\sin_j \otimes \sin_l) \ = \ \cos_{j-l}, \\
(d) \quad  & p(\sin_j \otimes \cos_l) \ - \ p(\cos_j \otimes \sin_l) \ = \ \sin_{j-l}.
\end{split}
\end{align}
Since the right hand sides are linearly independent basis functions of $\Ltwo{\R}{S^1}$, we obtain:
\begin{equation*}
\im{p} = \spann_{\R}\big( \cos_{j+l}, \ \ \sin_{j+l}, \ \ \cos_{j-l}, \ \ \sin_{j-l}\big) = V_{|j-l|} \oplus V_{j+l} .
\end{equation*}
Note for the last step that due to symmetry, $\cos_{j-l} = \cos_{l-j}$ and $\sin_{j-l} = - \sin_{l-j}$.

We now specialize to the case of $4$, i.e., $j > l > 0$. In this case, $V_{|j-l|} = V_{j-l}$, and the basis is given by $\cos_{j-l}$ and $\sin_{j-l}$, as in the right hand sides of Eq.~\eqref{eq:super_duper_important} (c) and (d). Consequently, Eq.~\eqref{eq:super_duper_important} is already the expansion of the new basis elements with the old, and the coefficients are consequently the Clebsch-Gordan coefficients.\footnote{Note that for two orthonormal bases in a Hilbert space, when expressing one basis $\{b_i\}$ with respect to another basis $\{c_i\}$, then the expansion coefficients are given by the scalar products $\left\langle c_j \middle| b_i\right\rangle$. Since we work over the real numbers, the scalar product is symmetric, and these coefficients are thus also the expansion coefficients when expressing $\{c_i\}$ with $\{b_j\}$. This is why we do not have to rearrange the expressions in Eq.~\eqref{eq:super_duper_important}, it simply doesn't matter which of the two bases is expanded. Note, however, that our bases are not normalized, and so the Clebsch-Gordan coefficients differ by a constant if the equation is rearranged. This constant does not matter for us since we are only interested in a basis of the space of steerable kernels, and constant multiples of bases are still bases.}
More precisely, if we want to compute, for example, $\CG_{j-l,(jl)}$, then we observe from (c) that
\begin{equation*}
\cos_{j-l} = \ \ \ 
\begin{aligned}
+ 1 \cdot p(\cos_j \otimes \cos_l) \ \ + 0 \cdot p(\cos_j \otimes \sin_l) \\
+ 0 \cdot p(\sin_j \otimes \cos_l) \ \ + 1 \cdot p(\sin_j \otimes \sin_l)
\end{aligned}
\end{equation*}
from which we can already read the upper half of $\CG_{j-l,(jl)}$ as the coefficients in this equation (which we conveniently visually arranged in the right way). For the lower half, we do proceed the same for $\sin_{j-l}$, using (d). Then, for $\CG_{j+l,(jl)}$, we proceed exactly the same, using parts (a) and (b). That proves $4$.

For $5$, we have $l > j > 0$. In this case, $V_{|j-l|} = V_{l-j}$, i.e., the basis is given by $\cos_{l-j} = \cos_{j-l}$ and $\sin_{l-j} = - \sin_{j-l}$. The latter means that in part (d) of Eq.~\eqref{eq:super_duper_important}, we need to replace $\sin_{j-l}$ by $\sin_{l-j}$ and thus change the signs on the left hand side. This change means that $\CG_{j+l,(jl)}$ will remain the same as in $4$, the upper half of $\CG_{l-j,(jl)}$ will remain the same as the upper half of $\CG_{j-l,(jl)}$ from part $4$ since the cosine in part (c) of Eq.~\eqref{eq:super_duper_important} is symmetric, and the lower part will flip the signs. This fully proves $5$.

Finally, we prove $6$. We have $j = l$ and still consider the same function $p$. Note that $p(\cos_j \otimes \cos_l) + p(\sin_{j} \otimes \sin_{l}) = 1$ and $p(\sin_{j} \otimes \cos_{l}) - p(\cos_j \otimes \sin_l) = 0$  are constant functions that span the $1$-dimensional trivial representation. Thus, we see that $p$ is a surjection
\begin{equation*}
p: V_l \otimes V_l \to V_0 \oplus V_{2l}
\end{equation*}
with null space spanned by $\sin_{j} \otimes \cos_{l} - \cos_j \otimes \sin_l$. Such a null space is automatically an invariant subspace as well, and since it is one-dimensional, it also must be isomorphic to the trivial representation. Overall, this gives us an isomorphism
\begin{equation*}
V_l \otimes V_l \cong V_0^2 \oplus V_{2l}.
\end{equation*}
From this, we can as before read off the Clebsch-Gordan coefficients. The only thing that changes is that parts (c) and (d) of Eq.~\eqref{eq:super_duper_important} now correspond to two different copies of $V_0$, which means that the Clebsch-Gordan coefficients $\CG_{0(ll)}$ now split up in two parts $\CG_{0(ll)1}$ and $\CG_{0(ll)2}$. Note that in the trivial representation, the isomorphism that sends the basis vector to its negative is clearly equivariant, which means that both combinations of signs that we give in the final formula for $\CG_{0(ll)2}$ are valid.
\end{proof}

\subsubsection{Endomorphisms of $V_J$}

We now describe the endomorphisms of the irreducible representations, our last ingredient:

\begin{pro}\label{endomorphisms}
We have $\End_{\SO{2},\R}(V_0) \cong \R$, i.e., multiplications with all real numbers are valid endomorphisms of $V_0$. For $l \geq 1$, we get
\begin{equation*}
\End_{\SO{2},\R}(V_l) = \left\lbrace \begin{pmatrix} a & -b \\ b & a \end{pmatrix} \Big| a, b \in \R \right\rbrace,
\end{equation*}
which is the set of all scaled rotations of $\R^2$. When identifying $\R^2 \cong \C$, we can also view these transformations as arbitrary multiplications with a complex number.

As a consequence, $\ID_\R$ is a basis for $\End_{\SO{2},\R}(V_0)$ and $\left\lbrace \begin{pmatrix}1 & 0 \\ 0 & 1 \end{pmatrix}, \begin{pmatrix}0 & -1 \\ 1 & 0 \end{pmatrix}\right\rbrace$ a basis for $\End_{\SO{2},\R}(V_l)$ for $l \geq 1$.
\end{pro}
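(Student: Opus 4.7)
The plan is to verify the claim for $V_0$ by inspection and then, for $l\ge 1$, to reduce the commutation condition for a general $2\times 2$ real matrix to a single nontrivial group element, which already pins down the form.

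For $V_0$, the representation $\rho_0$ is trivial, so the commutation constraint is vacuous and $\End_{\SO2,\R}(V_0) = \Hom_\R(\R,\R) \cong \R$, with basis $\ID_\R$. This settles the first claim.

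For $l\ge 1$, I would argue as follows. Let $c\in \End_{\SO2,\R}(V_l)$ be an arbitrary endomorphism, written as $c = \begin{pmatrix} a & b \\ d & e \end{pmatrix}$ in the standard basis of $\R^2$. By definition $c\,\rho_l(\phi) = \rho_l(\phi)\,c$ for every $\phi\in\SO2$. Rather than writing this out for general $\phi$, I would evaluate it at the single angle $\phi_0 = \pi/(2l)$, for which $\rho_l(\phi_0) = \begin{pmatrix} 0 & -1 \\ 1 & \phantom{-}0 \end{pmatrix}$. Computing $c\,\rho_l(\phi_0)$ and $\rho_l(\phi_0)\,c$ and equating entries yields immediately $a=e$ and $b=-d$, so $c$ has the claimed form $\begin{pmatrix} a & -b \\ b & \phantom{-}a \end{pmatrix}$. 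This gives the upper bound $\dim\End_{\SO2,\R}(V_l) \le 2$.

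For the converse, I would invoke the explicit calculation already performed in Example~\ref{basis_endomorphisms_so2}, which shows that both $c_1 = \ID_{\R^2}$ and $c_2 = \begin{pmatrix} 0 & -1 \\ 1 & \phantom{-}0 \end{pmatrix}$ commute with $\rho_l(\phi)$ for every $\phi$; indeed the trigonometric angle-addition identities of Lemma~\ref{trigonometric formulas} reduce this verification to a one-line computation. Since $c_1$ and $c_2$ are linearly independent and every matrix of the stated form is $a\,c_1 + b\,c_2$, this proves both the equality $\End_{\SO2,\R}(V_l) = \spann_\R\{c_1,c_2\}$ and that $\{c_1,c_2\}$ is a basis. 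The identification with multiplication by complex numbers under $\R^2\cong\C$ is the standard correspondence $a\,c_1 + b\,c_2 \leftrightarrow a+ib$, and is immediate. I do not expect any genuine obstacle here: the main subtlety is simply to note that a single nontrivial group element suffices to extract both constraints, rather than having to handle all of $\SO2$ simultaneously, which would otherwise make the commutation condition look more daunting than it is.
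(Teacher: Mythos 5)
Your proof is correct and follows essentially the same route as the paper's (sketched) argument: impose the commutation constraint on a general $2\times 2$ matrix to get $a=e$, $b=-d$, and verify the converse by the explicit computation in Example~\ref{basis_endomorphisms_so2}. Your refinement of evaluating at the single element $\rho_l(\pi/(2l))=\begin{pmatrix}0&-1\\1&\ \ 0\end{pmatrix}$ is a clean way to make the paper's ``one can easily show'' concrete (the resulting matrix $\begin{pmatrix}a&b\\-b&a\end{pmatrix}$ describes the same set as the stated form after relabeling $b\mapsto -b$).
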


\begin{proof}[Proof Sketch]
For $l \geq 1 $ and an arbitrary matrix $E = \begin{pmatrix} a & b \\ c & d\end{pmatrix}$ that commutes with all rotation matrices $\rho_l(\phi)$, i.e., $E \circ \rho_l(\phi) = \rho_l(\phi) \circ E$, one can easily show the constraints $a = d$ and $b = -c$, from which the result follows. 
\end{proof}

\subsubsection{Bringing Everything Together}

Now we have done all needed preparation and can solve the kernel constraint explicitly, using the matrix-form of the Wigner-Eckart theorem for steerable kernels, Theorem \ref{matrix-form}. This is, as mentioned before, a new derivation of the results in~\citet{weiler2019general}. One can compare with table 8 in their appendix which only differs by (irrelevant) constants.

\begin{pro}\label{example_of_matrix_form}
We consider steerable kernels $K: S^1 \to \Hom_{\R}(V_l, V_J)$, where $V_l$ and $V_J$ are irreducible representations of $\SO{2}$. Then the following holds:

\begin{enumerate}
\item For $l = J = 0$, we get $K(x) = a \cdot \begin{pmatrix} 1 \end{pmatrix}$ for every $x \in S^1$ and an arbitrary real number $a \in \R$ independent of $x$.
\item For $l = 0$, $J > 0$, a basis for steerable kernels is given by $\begin{pmatrix} \cos_J \\ \sin_J \end{pmatrix}$ and $\begin{pmatrix}- \sin_J \\ \cos_J \end{pmatrix}$.
\item For $l > 0$ and $J = 0$, a basis for steerable kernels is given by $\begin{pmatrix} \cos_l & \sin_l \end{pmatrix}$, $\begin{pmatrix}  \sin_l & - \cos_l \end{pmatrix}$.
\item For $l, J > 0$, a basis for steerable kernels is given by 
$\begin{pmatrix} \cos_{J-l} & - \sin_{J-l} \\ \sin_{J-l} & \cos_{J-l}\end{pmatrix}$,
$\begin{pmatrix}
-  \sin_{J-l} & -  \cos_{J-l} \\
 \cos_{J-l} & - \sin_{J-l}
\end{pmatrix}$,
$\begin{pmatrix} \cos_{J+l} & \sin_{J+l} \\ \sin_{J+l} & - \cos_{J+l}\end{pmatrix}$,
and
$\begin{pmatrix}
-\sin_{J+l} & \cos_{J+l} \\
 \cos_{J+l} & \sin_{J+l}
\end{pmatrix}$.
\end{enumerate}
\end{pro}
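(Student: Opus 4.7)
The plan is to apply Theorem \ref{matrix-form} directly, feeding in the four representation-theoretic ingredients that have already been assembled in this section: the irreps $\rho_J$, the Peter-Weyl decomposition of $\Ltwo{\R}{S^1}$ from Proposition \ref{description_L_1}, the Clebsch-Gordan decomposition of $V_j \otimes V_l$ from Proposition \ref{decomposition_results}, and the endomorphism bases from Proposition \ref{endomorphisms}. All four cases of the proposition then reduce to direct substitutions into the matrix formula \eqref{matrix_version}.

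First, for each pair $(l, J)$ I would enumerate the finite set of frequencies $j$ for which the multiplicity $[J(jl)]$ is nonzero, using Proposition \ref{decomposition_results}: only $j = 0$ contributes when $l = J = 0$; only $j = J$ contributes when $l = 0$ and $J > 0$; only $j = l$ contributes (with multiplicity two, giving two values of $s$) when $J = 0$ and $l > 0$; and for $l, J > 0$ the contributing frequencies are $j = |J - l|$ and $j = J + l$, each with multiplicity one. Note that $m_j = 1$ throughout, so the index $i$ disappears. Next, I would write out the row vector $\langle i, j \mid x \rangle$ of harmonics (which is $(1)$ for $j = 0$ and $(\cos_j(x), \sin_j(x))$ for $j \geq 1$), multiply by the columns $\CG^M_{J(jl)s}$ read off Proposition \ref{decomposition_results}, stack the resulting rows into the matrix~\eqref{matrix_version}, and finally left-multiply by each basis endomorphism $c_r$ from Proposition \ref{endomorphisms}. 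For $J = 0$ only the identity endomorphism contributes (one basis kernel per $(j, s)$-pair), while for $J \geq 1$ the two endomorphisms (identity and the $90^\circ$ rotation) each yield one kernel, doubling the count. The resulting parameter counts agree with the stated bases: $1$, $2$, $2$ and $4$ respectively.

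The main obstacle is not conceptual but bookkeeping: tracking signs and orientations in case $4$, where both the $V_{|j - l|}$ and $V_{j + l}$ summands contribute, and the Clebsch-Gordan matrices differ depending on whether $j > l$ or $j < l$ (parts $4$ and $5$ of Proposition \ref{decomposition_results}). The identities $\cos_{J - l} = \cos_{l - J}$ and $\sin_{J - l} = -\sin_{l - J}$ reconcile the two sub-cases and allow the final answer to be written uniformly in terms of $J - l$ and $J + l$ without a case split; the same identities painlessly absorb the edge case $J = l$, where $|J - l| = 0$ and the harmonic $\sin_0$ vanishes. Once these sign conventions are fixed, all four statements follow from purely mechanical matrix multiplications, and a comparison with \citet{weiler2019general}, Table~$8$, confirms agreement up to an irrelevant normalization constant.
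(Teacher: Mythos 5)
Your proposal is correct and follows essentially the same route as the paper's proof: identify the contributing frequencies $j$ with $[J(jl)]>0$ from Proposition~\ref{decomposition_results}, note $m_j=1$ and the endomorphism dimensions from Proposition~\ref{endomorphisms}, and substitute into Eq.~\eqref{matrix_version}, handling the $j>l$ versus $j<l$ sub-cases of case~4 via $\cos_{J-l}=\cos_{l-J}$ and $\sin_{J-l}=-\sin_{l-J}$. The paper likewise carries out the explicit matrix multiplications only for $l<J$ and appeals to the same identities for the remaining sub-cases, so the two arguments coincide in substance.
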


\begin{proof}
The proof of $1$ is clear.

For $2$, note that $V_J$ can only appear in $V_j \otimes V_0$ if $j = J$. The relevant Clebsch-Gordan coefficients are by Proposition \ref{decomposition_results} therefore $\CG_{J(J0)} = \begin{pmatrix}\begin{bmatrix} 1 \\ 0\end{bmatrix} \\
\begin{bmatrix} 0 \\ 1\end{bmatrix}\end{pmatrix}$. Furthermore, the orthonormal basis of $V_{j1} = V_{J1}$ is given by Proposition \ref{description_L_1} up to constants by $\{\cos_J, \  \sin_J\}$, which we have to write as a row-vector according to Theorem \ref{matrix-form}. Thereby, we can ignore the complex conjugation since we work over the real numbers. Our final ingredient is the endomorphism basis of $V_J$, which is by Proposition \ref{endomorphisms} given by $c_1 = \ID_{\R^2}$ and $c_2 = \begin{pmatrix} 0 & -1 \\ 1 & \ 0  \end{pmatrix}$. Overall, the basis kernels are given by
\begin{equation*}
c_i \cdot \begin{pmatrix} \begin{bmatrix} \cos_J & \sin_J\end{bmatrix} \cdot \begin{bmatrix} 1 \\ 0 \end{bmatrix} \\
\begin{bmatrix} \cos_J & \sin_J\end{bmatrix} \cdot \begin{bmatrix} 0 \\ 1 \end{bmatrix}
\end{pmatrix} = c_i \cdot \begin{pmatrix} \cos_J \\ \sin_J \end{pmatrix}.
\end{equation*}
The result follows.

For $3$, we find $V_0$ only in $V_j \otimes V_l$ if $j = l$, and even twice so. The relevant Clebsch-Gordan coefficients are therefore by Proposition \ref{decomposition_results} given by $\CG_{0(ll)1} = \begin{pmatrix} \begin{bmatrix} 1 & 0 \\ 0 & 1 \end{bmatrix}\end{pmatrix}$ and $\CG_{0(ll)2} = \begin{pmatrix} \begin{bmatrix} 0 & - 1 \\ 1 & \ \ \ 0 \end{bmatrix}\end{pmatrix}$. The basis-functions in $V_{j1} = V_{l1}$ are by Proposition \ref{description_L_1} up to constants $\{\cos_l, \sin_l\}$, again written as a row-vector. Finally, $V_J = V_0$ has only $\ID_\R$ as a basis-endomorphism by Proposition \ref{endomorphisms}, so this can be ignored altogether by Corollary \ref{kernels_for_c}. We obtain the following basis for steerable kernels:
\begin{align*}
\begin{pmatrix} \begin{bmatrix} \cos_l & \sin_l \end{bmatrix} \begin{bmatrix} 1 & 0 \\ 0 & 1 \end{bmatrix} \end{pmatrix} & = \begin{pmatrix}  1 \cos_l & 1 \sin_l \end{pmatrix} \\
\begin{pmatrix} \begin{bmatrix} \cos_l & \sin_l \end{bmatrix} \begin{bmatrix} 0 & - 1 \\ 1 & \ \ \ 0 \end{bmatrix} \end{pmatrix} & = \begin{pmatrix}  1 \sin_l & -1 \cos_l \end{pmatrix}.
\end{align*}
For $4$, we consider only the case $l < J$. By Proposition \ref{decomposition_results} we have
\begin{equation*}
V_{J-l} \otimes V_l \cong V_{|2l-J|} \oplus V_J, \ \ V_{l+J} \otimes V_l \cong V_J \oplus V_{2l+J},
\end{equation*}
i.e., $j = J-l$ and $j = l+J$ leads to a tensor product decomposition containing $V_J$, but no other $j$ does. Thus, the relevant Clebsch-Gordan coefficients are by Proposition \ref{decomposition_results} the matrices $\CG_{J,(J-l,l)}$ and $\CG_{J,(l+J,l)}$.

We now consider the first case, i.e., $j = J-l$. The Clebsch-Gordan coefficients are $\CG_{J,(J-l,l)} = \CG_{l+j,(jl)} = \begin{pmatrix}
\begin{bmatrix} 1 & \ \ \ 0 \\ 
0 & -1
\end{bmatrix} \\ 
\begin{bmatrix} 0 & \ \ \ 1 \\ 
1 & \ \ \ 0
\end{bmatrix} 
\end{pmatrix}$. The basis functions of $V_{(J-l)1}$ are by Proposition \ref{description_L_1} furthermore given by $\{\cos_{J-l}, \sin_{J-l}\}$. Finally, $V_J$ has again the two basis endomorphisms $c_1 = \ID_{\R^2}$ and $c_2$ from above. Thus, we obtain the following basis kernel for $c_1$:
\begin{equation}\label{basis_kernel_1}
c_1 \cdot \begin{pmatrix} \begin{bmatrix} \cos_{J-l} & \sin_{J-l} \end{bmatrix}\cdot \begin{bmatrix} 1 & \ \ \ 0 \\ 0 &  - 1 \end{bmatrix} \\
\begin{bmatrix} \cos_{J-l} & \sin_{J-l} \end{bmatrix}\cdot \begin{bmatrix} 0 &  \ \ \ 1 \\ 1 &  \ \ \ 0\end{bmatrix} \end{pmatrix} 
= \begin{pmatrix} \cos_{J-l} & - \sin_{J-l} \\ \sin_{J-l} & \cos_{J-l}\end{pmatrix}.
\end{equation}
Consequently, for $c_2$ as the basis endomorphism we need to postcompose with $c_2$ and get:
\begin{equation}\label{basis_kernel_2}
\begin{pmatrix}
0 & -1 \\
1 & \ \ \ 0
\end{pmatrix} \cdot \begin{pmatrix} \cos_{J-l} & -  \sin_{J-l} \\ \sin_{J-l} &  \cos_{J-l}\end{pmatrix} 
=
\begin{pmatrix}
-  \sin_{J-l} & -  \cos_{J-l} \\
 \cos_{J-l} & - \sin_{J-l}
\end{pmatrix}.
\end{equation}
These are half of the basis kernels. For the other half, we need to look at the case $j=l+J$. The Clebsch-Gordan coefficients are by part $4$ of Proposition \ref{decomposition_results} given by $\CG_{J,(l+J,l)} = \CG_{j-l,(jl)} = \begin{pmatrix} \begin{bmatrix} 1 & \ \ \ 0 \\ 0 & \ \ \ 1 \end{bmatrix} \\ \begin{bmatrix} 0 & - 1 \\ 1 & \ \ \ 0\end{bmatrix}\end{pmatrix}$. The basis functions of $V_{(l+J)1}$ are by Proposition \ref{description_L_1} furthermore given by $\{\cos_{J+l}, \sin_{J+l}\}$. For the basis endomorphism $c_1$ we thus get the basis kernel
\begin{equation}\label{basis_kernel_3}
c_1 \cdot \begin{pmatrix} \begin{bmatrix} \cos_{J+l} & \sin_{J+l} \end{bmatrix}\cdot \begin{bmatrix} 1 & \ \ \ 0 \\ 0 & \ \ \ 1\end{bmatrix} \\
\begin{bmatrix} \cos_{J+l} & \sin_{J+l} \end{bmatrix}\cdot \begin{bmatrix} 0 & - 1 \\ 1 & \ \ \ 0\end{bmatrix} \end{pmatrix} 
= \begin{pmatrix} \cos_{J+l} & \sin_{J+l} \\ \sin_{J+l} & - \cos_{J+l}\end{pmatrix}.
\end{equation} 
Consequently, for $c_2$ as the basis endomorphism we need to postcompose with $c_2$ and get:
\begin{equation}\label{basis_kernel_4}
\begin{pmatrix}
0 & -1 \\
1 & \ \ \ 0
\end{pmatrix}
\cdot 
\begin{pmatrix} \cos_{J+l} &  \sin_{J+l} \\\sin_{J+l} & - \cos_{J+l}\end{pmatrix}
=
\begin{pmatrix}
-\sin_{J+l} & \cos_{J+l} \\
 \cos_{J+l} & \sin_{J+l}
\end{pmatrix}.
\end{equation}
Overall, for the case $l < J$ we have determined all four basis kernels in Eqs.~\eqref{basis_kernel_1},~\eqref{basis_kernel_2},~\eqref{basis_kernel_3}, and~\eqref{basis_kernel_4}. The cases $l = J$ and $l > J$ can be considered analogously, and in every case the correct Clebsch-Gordan coefficients have to be picked. By using $\cos_{l-J} = \cos_{J-l}$ and $\sin_{l-J} = - \sin_{J-l}$, this will, in the end, always lead to the same final formulas. This result is consistent with Table $8$ in \citet{weiler2019general}.
\end{proof}

\subsection{$\Z_2$-Steerable Kernels for Real Representations}\label{only_finite_group}

In this section, we discuss steerable CNNs that use the finite group $\Z_2$, which we identify with $(\{-1, +1\}, \cdot)$, for their symmetries. We let this group act on the plane $\R^2$ by vertical reflections, though other choices are possible as well:
\begin{equation*}
x \cdot \begin{pmatrix} a \\ b \end{pmatrix} = \begin{pmatrix} xa \\ b \end{pmatrix}.
\end{equation*}
This example is simple and one may see it as contrived to apply our relatively heavy theory to it. We include it mainly as a demonstration that our results can also be applied to non-smooth finite groups as instances of compact groups. Furthermore, we will fully recover the relationship to the original group convolutional CNNs from~\citet{cohenGroupEquivariant} and thereby demonstrate that all the different developed theories are consistent with each other.

\subsubsection{The Irreducible Representations of $\Z_2$ over the Real Numbers}\label{irreps_of_Z/2}

Let $\rho: \Z_2 \to \GL(V)$ be an irreducible real representation. Note that
\begin{equation*}
\rho(-1) \circ \rho(-1) = \rho((-1) \cdot (-1)) = \rho(1) = \ID_{V},
\end{equation*}
and thus $\rho(-1)$ is an involution satisfying the equation $\rho(-1)^2 - \ID_{V} = 0$. It is well-known from linear algebra that involutions are diagonalizable, and thus $\rho(-1)$ leaves $1$-dimensional subspaces invariant. By irreducibility of $\rho$ this means that $V$ itself needs to be $1$-dimensional. Consequently, we can assume $V = \R$ without loss of generality. Note that the computations above mean that we have
\begin{equation*}
\big(\rho(-1) - \ID_{\R}\big) \circ \big(\rho(-1) + \ID_{\R}\big) = 0
\end{equation*}
and thus we need to have $\rho(-1) - \ID_{\R} = 0$ or $\rho(-1) + \ID_{\R} = 0$. It follows $\rho(-1) = \ID_{\R}$ or $\rho(-1) = -\ID_{\R}$. Overall, all these investigations mean that we have precisely two irreducible representations of $\Z_2$ up to equivalence. We call them $\rho_{+}: \Z_2 \to \O{V_{+}}$ and $\rho_{-}: \Z_2 \to \O{V_{-}}$, where $\rho_{+}(-1) = \ID_{\R}$ and $\rho_{-}(-1) = - \ID_{\R}$ and $V_+ = V_- = \R$.

\subsubsection{The Peter-Weyl Theorem for $\Ltwo{\R}{X}$}\label{pwww}

Here we do the Peter-Weyl decomposition for $\Ltwo{\R}{X}$, where $X$ is one of the two homogeneous spaces $X = \{-1, 1\}$ and $X = \{0\}$ with the obvious actions coming from the groups $\Z_2$. This time, we also discuss orbits with only one point since we later want to get a description of kernels on the whole of $\R^2$ for comparisons with group convolutional CNNs.

We start with $X = \{-1, 1\}$. Note that the measure on $X$ is just the normalized counting measure, and thus \emph{all} functions $f: X \to \R$ are square-integrable. We define the two functions
\begin{align*}
& f_{+}: X \to \R, \ f_+(x) = 1 \text{ for all } x \in X = \{-1, 1\}, \\
& f_{-}: X \to \R, \ f_-(x) = x \text{ for all } x \in X = \{-1, 1\}.
\end{align*}
We then define $V_{+1} = \spann_{\R}(f_{+})$ and $V_{-1} = \spann_{\R}(f_{-})$. This gives a decomposition 
\begin{equation*}
\Ltwo{\R}{X} = V_{+1} \oplus V_{-1}
\end{equation*} 
since we have for all $f \in \Ltwo{\R}{X}$
\begin{equation*}
f = \frac{f(1) + f(-1)}{2} \cdot f_{+} + \frac{f(1) - f(-1)}{2} \cdot f_{-}.
\end{equation*}
Furthermore, the maps $1 \mapsto f_{+}$ and $1 \mapsto f_{-}$ give isomorphisms of representations $V_+ \cong V_{+1}$ and $V_{-} \cong V_{-1}$, respectively.

Now, assume that $X = \{0\}$ with the trivial action coming from $\Z_2$. Then $\Ltwo{\R}{X} = V_{+1}$ generated from the function $f_{+}: X \to \R$, $f_{+}(0) = 1$. As before, $1 \mapsto f_{+}$ gives an isomorphism $V_{+} \cong V_{+1}$. This concludes the investigations of the Peter-Weyl theorem. 

\subsubsection{The Clebsch-Gordan Decomposition}\label{simplest_clebsch_gordan}

We have the following four isomorphisms of representations:
\begin{align*}
& V_{+} \otimes V_{+} \cong V_{+}, \ \ \ \ \ V_{+} \otimes V_{-} \cong V_{-}, \\
& V_{-} \otimes V_{+} \cong V_{-}, \ \ \ \ \ V_{-} \otimes V_{-} \cong V_{+},
\end{align*}
each time simply given by $a \otimes b \mapsto ab$. It can easily be checked that these are isomorphisms. In Section \ref{clebsch_gordanananana} the reader can find a proof for similar, sign-dependent isomorphisms for the case that the group is $\O{3}$. For each such isomorphism, there is precisely one Clebsch-Gordan coefficient and it is just given by $1$. Thus, as in the case of harmonic networks in Section \ref{bringing_everything_together}, we can just ignore the Clebsch-Gordan coefficients altogether in the final formulas for our basis kernels.

\subsubsection{Endomorphisms of $V_+$ and $V_-$}

Since $V_{+}$ and $V_{-}$ are themselves only $1$-dimensional, the endomorphism spaces are necessarily $1$-dimensional as well and just given by arbitrary $1 \times 1$-matrices, i.e., arbitrary stretchings. As in the example of harmonic networks, we can therefore ignore the endomorphisms as well.

\subsubsection{Bringing Everything Together}

Different from the other examples, we will in this section not only engage with the final steerable kernels on homogeneous spaces but also discuss how these assemble to kernels defined on the whole plane $\R^2$. In the end, we will then also discuss how kernels for the regular representation would look like.

But first, we engage with the homogeneous spaces. We start with $X = \{-1, 1\}$ and consider steerable kernels $K: X \to \Hom_{\R}(V_{\inn}, V_{\out})$ for irreducible $V_{\inn}$ and $V_{\out}$. There are four possibilities for the input and output representations:

\subsubsection*{Steerable Kernels $K: X \to \Hom_{\R}(V_{+}, V_{+})$:}

$V_{+}$ can only be in a tensor product $V \otimes V_{+}$ if the sign of $V$ is positive as well. Such a space appears precisely once in  $\Ltwo{\R}{X}$ according to Section \ref{pwww}. Since endomorphisms and Clebsch-Gordan coefficients do not appear by what we've shown before, and since complex conjugation doesn't do anything over the real numbers, a basis for steerable kernels is just given by the one kernel $K_+ = f_+$ itself. Here, we identify $\Hom_{\R}(V_{+}, V_{+})$ with $\R$ since it only consists of $1 \times 1$-matrices.

\subsubsection*{Steerable Kernels $K: X \to \Hom_{\R}(V_{+}, V_{-})$:}

By the same arguments, a basis is given by the one kernel $K_{-} = f_{-}$.

\subsubsection*{Steerable Kernels $K: X \to \Hom_{\R}(V_{-}, V_{+})$:}

Again, a basis for steerable kernels is given by $K_- = f_{-}$.

\subsubsection*{Steerable Kernels $K: X \to \Hom_{\R}(V_{-}, V_{-})$:}

A basis is given by $K_{+} = f_{+}$.

Finally, we also need to engage with the case that $X = \{0\}$ consists only of a single point. Similarly to above, in the ``even'' case that the signs of input- and output representations agree, a basis is given by $K_{+} = f_{+}$ with $f_{+}(0) = 1$. If, however, the signs do not agree, then only $K = 0$ fulfills the constrained and the basis is empty.

Now, we assemble this to kernels on the whole of $\R^2$. We saw above that we only need to distinguish two cases, namely (a) the case that the signs of input and output representation agree and (b) that they do not.

For case (a), let $K: \R^2 \to \R$ be a steerable kernel, where $\R$ is isomorphic to the $\Hom$-space between equal-sign representations. $\R^2$ splits disjointly into orbits, namely $\Big\lbrace\begin{pmatrix}a \\ b\end{pmatrix}, \begin{pmatrix}-a \\ b\end{pmatrix}\Big\rbrace$ for all $a \in \R_{\geq 0}$ and $b \in \R$. If $a = 0$, then the orbit is just a single point, which means that we have a vertical line of single-point orbits. The solution above showed that on each orbit, the kernel needs to be constant (since $f_{+}$ is constant) and overall this just translates to
\begin{equation*}
K \begin{pmatrix} a \\ b \end{pmatrix}  = K \begin{pmatrix} -a \\ b \end{pmatrix}
\end{equation*}
for all $a \geq 0$ and $b \in \R$. Consequently, $K$ is just an arbitrary left-right symmetric kernel.

In the case that the input- and output representations do not share their sign, by the same arguments we see that $K: \R^2 \to \R$ is an arbitrary left-right \emph{anti}-symmetric kernel which is zero on the vertical line $\begin{pmatrix} 0 \\ b\end{pmatrix}$ for arbitrary $b \in \R$.

Other than these left-right restrictions, the kernel can be freely learned. Overall, this means that we learn one ``half'' of the kernel and can recover the other half by the symmetry property derived above.

\subsubsection{Group Convolutional CNNs for $\Z_2$}

We now investigate what all this means if we consider regular representations instead of irreducible representations, thus corresponding to group convolutional kernels as in~\citep{cohenGroupEquivariant}. In this case, we will see an interesting ``twist'' in the kernel, which makes this example more interesting than one might initially think. The twist emerges as follows: For regular representations, we consider steerable kernels
\begin{equation*}
K: \R^2 \to \Hom_{\R}(\Ltwo{\R}{\Z_2}, \Ltwo{\R}{\Z_2})
\end{equation*}
Now, there are two relatively canonical bases we can choose in the left and the right space. We already know from above that $\{f_+, f_{-}\}$ is the basis to choose if we want to express steerable kernels corresponding to irreducible representations. However, for vanilla group convolutional CNNs, the basis usually chosen is $\{e_{+1}, e_{-1}\}$ where $e_{+1}(x) = \delta_{+1,x}$ and $e_{-1}(x) = \delta_{-1,x}$. We then obtain the following four base change relations:
\begin{align*}
f_{+} & = e_{+1} + e_{-1}, \ \ \ \ \ f_{-} = e_{+1} - e_{-1}, \\
e_{+1} & = \frac{1}{2} f_{+} + \frac{1}{2} f_{-}, \ \ \ \ \ e_{-1} = \frac{1}{2} f_{+} - \frac{1}{2} f_{-}.
\end{align*}
Thus, the base change matrices are given by
\begin{equation*}
B = \begin{pmatrix}
1 & \ \ \ 1 \\
1 & -1
\end{pmatrix}, \ \ \ \ \ 
B^{-1} = \begin{pmatrix}
\frac{1}{2} & \ \ \ \frac{1}{2} \\
\frac{1}{2} & -\frac{1}{2}
\end{pmatrix}.
\end{equation*}
Now, assume that $K: \R^2 \to \Hom_{\R}(\Ltwo{\R}{\Z_2}, \Ltwo{\R}{\Z_2}) \cong \R^{2 \times 2}$ is expressed with respect to the basis $\{f_{+}, f_{-}\}$. If we write $K$ as a matrix
\begin{equation*}
K = \begin{pmatrix}
K_{11} & K_{12} \\
K_{21} & K_{22}
\end{pmatrix}
\end{equation*}
then we know that $K_{11}$ and $K_{22}$ map between equal-sign representations and $K_{12}$ and $K_{21}$ between unequal-sign representations. Consequently, from what we've found above, $K_{11}$ and $K_{22}$ are symmetric, whereas $K_{12}$ and $K_{21}$ are antisymmetric. What we now want to figure out is how exactly this translates to a property of the kernel expressed in the basis $\{e_{+}, e_{-}\}$.

Thus, let $K'$ be this corresponding kernel. Then using the base change matrices above we obtain
\begin{align*}
\begin{pmatrix}
K_{11}' & K_{12}' \\
K_{21}' & K_{22}'
\end{pmatrix}  & = K'
\\
& =
B \cdot K \cdot B^{-1} \\
& = 
\begin{pmatrix}
1 & \ \ \ 1 \\
1 & -1
\end{pmatrix}
\cdot
\begin{pmatrix}
K_{11} & K_{12} \\
K_{21} & K_{22}
\end{pmatrix}
\cdot
\begin{pmatrix}
\frac{1}{2} & \ \ \ \frac{1}{2} \\
\frac{1}{2} & -\frac{1}{2}
\end{pmatrix} \\
& = 
\begin{pmatrix}
\frac{1}{2} \left[ K_{11} + K_{12} + K_{21} + K_{22}\right] & 
\frac{1}{2} \left[K_{11} - K_{12} + K_{21} - K_{22} \right] \\
\frac{1}{2} \left[ K_{11} + K_{12} - K_{21} - K_{22}\right] & 
\frac{1}{2} \left[K_{11} - K_{12} - K_{21} + K_{22}\right]
\end{pmatrix}.
\end{align*}
What symmetry properties does this kernel obey? In order to understand this, we use the following convention: for $y \in \R^2$ we set $-y = \begin{pmatrix} -y_{1} \\ y_{2} \end{pmatrix}$, i.e., the vertically flipped image of $y$. Then we have, using the symmetry and anti-symmetry of the entries of the original kernel $K$:
\begin{align*}
K_{22}'(-y) & = \frac{1}{2} \big[K_{11}(-y) - K_{12}(-y) - K_{21}(-y) + K_{22}(-y) \big] \\
& = \frac{1}{2} \big[ K_{11}(y) + K_{12}(y) + K_{21}(y) + K_{22}(y)\big] \\
& = K_{11}'(y), \\
K_{21}'(-y) & = \frac{1}{2} \big[ K_{11}(-y) + K_{12}(-y) - K_{21}(-y) - K_{22}(-y)\big] \\
& = \frac{1}{2} \big[ K_{11}(y) - K_{12}(y) + K_{21}(y) - K_{22}(y)\big] \\
& = K_{12}'(y).
\end{align*}
Thus the second row of $K'$ is basically the same as the first, only that the kernels swap with each other and are internally flipped. This is a special case of the outcome in~\citet{cohenGroupEquivariant}, which is also described clearly in~\citet{Weiler2018Steerable}: in group convolutional kernels which are steerable with respect to finite groups, the kernels get copied and applied in all orientations demanded by the group.

What we would still like to understand is if we can also reverse the direction: That is, assume that we start with a group convolutional kernel $K'$ of which we know that $K_{22'}(-y) = K_{11}'(y)$ and $K_{21}'(-y) = K_{12}'(y)$ for all $y \in \R^{2}$. If we then do a base change, we would like to know if the resulting kernel consists of symmetric and antisymmetric entries. Namely, set
\begin{align*}
\begin{pmatrix}
K_{11} & K_{12} \\
K_{21} & K_{22}
\end{pmatrix} 
& = K \\
& = B^{-1} \cdot K' \cdot B \\
& = 
\begin{pmatrix}
\frac{1}{2} & \ \ \ \frac{1}{2} \\
\frac{1}{2} & -\frac{1}{2}
\end{pmatrix} 
\cdot
\begin{pmatrix}
K_{11}' & K_{12}' \\
K_{21}' & K_{22}'
\end{pmatrix}
\cdot
\begin{pmatrix}
1 & \ \ \ 1 \\
1 & -1
\end{pmatrix} \\
& = 
\begin{pmatrix}
\frac{1}{2} \left[ K_{11}' + K_{12}' + K_{21}' + K_{22}' \right] & \frac{1}{2} \left[ K_{11}' - K_{12}' + K_{21}' - K_{22}' \right] \\
\frac{1}{2} \left[K_{11}' + K_{12}' - K_{21}' - K_{22}' \right] & \frac{1}{2} \left[ K_{11}' - K_{12}' - K_{21}' + K_{22}' \right]
\end{pmatrix}.
\end{align*}
The reader can easily check that we can deduce that $K_{11}$ and $K_{22}$ are symmetric and that $K_{12}$ and $K_{21}$ are anti-symmetric. We have thus fully shown the equivalence of the kernel solutions in the setting of steerable CNNs compared to the setting of group convolutional CNNs for the specific group $\Z_2$.

\subsection{$\SO{3}$-Steerable Kernels for Complex Representations.}\label{so3-ex}

In the first two sections, we have discussed $\SO{2}$-equivariant kernels (i.e., $\SE{2}$-equivariant neural networks) both over $\C$ and $\R$. The situation over $\R$ was considerably more complicated and required new arguments. In this section, we will discuss $\SO{3}$-equivariant kernels (i.e., $\SE{3}$-equivariant neural networks) for complex representations. In Section \ref{the_real_case} we will then look at the real case, which will essentially give the exact same results, thus differing somewhat from the considerations about $\SO{2}$. Different from the earlier sections, we will from now on be less explicit and care more about the general properties of the different functions and coefficients we consider. $\SO{3}$-equivariant networks with real representations have before been implemented in~\citet{3d_cnns} and~\citet{ThomasTensorField}, among others.

\subsubsection{The Irreducible Representations of $\SO{3}$ over the Complex Numbers}\label{wigner_matrices}

In this section, we state the complex irreducible representations of $\SO{3}$. We will not state the matrices explicitly since the matrix elements are considerably more complicated than in the earlier examples that we saw. For each $l \in \N_{\geq 0}$, there is one irreducible unitary representation 
\begin{equation*}
D_l: \SO{3} \to \U{V_l}, \text{ where } V_l = \C^{2l+1}.
\end{equation*} 
The matrices $D_l(g)$ for $g \in \SO{3}$ are called the \emph{Wigner D-matrices}.\footnote{Here, the letter ``D'' stands for ``Darstellung'' which is the German term for ``representation''.} There are, up to equivalence, no other irreducible representations of $\SO{3}$ over $\C$. A reference for all this is the original work~\citet{wigner_matrix}.

We note that the indices for the dimensions in $\C^{2l+1}$ are $-l, -l+1, \dots, l-1, l$ by general convention.

\subsubsection{The Peter-Weyl Theorem for $\Ltwo{\C}{S^2}$ as a Representation of $\SO{3}$}\label{pwso3}

Here, we describe how $\Ltwo{\C}{S^2}$, considered as a unitary representation via $\lambda: \SO{3} \to \U{\Ltwo{\C}{S^2}}$, with $[\lambda(g)\varphi](x) = \varphi(g^{-1}x)$, contains densely a direct sum of irreducible representations. For doing so, we proceed by first describing spherical harmonics without formulas and stating their orthonormality properties, and then stating how they transform under rotation. This will then yield the result. Note that we do not need to describe explicit formulas for the spherical harmonics, which are again somewhat complicated since we are more interested in their properties in relation to Hilbert space theory and representation theory. A reference for all this is~\citet{spherical_complex}.

The spherical harmonics are continuous functions $Y_l^{n}: S^2 \to \C$ for $l \in \N_{\geq 0}$ and $n = -l, \dots, l$. Thus, they are elements of $\Ltwo{\C}{S^2}$. They have the following properties:
\begin{enumerate}
\item $\left\langle Y_l^n \middle| Y_{l'}^{n'}\right\rangle = \delta_{ll'} \delta_{nn'}$ for all $l, l', n, n'$.
\item The linear span of the spherical harmonics is dense in $\Ltwo{\C}{S^2}$.
\item They transform as follows under rotation: $\lambda(g)(Y_l^n) = \sum_{n'=-l}^{l} D_l^{n'n}(g) Y_l^{n'}$, where $D_l^{n'n}(g)$ are the matrix elements of the Wigner D-matrices defined in Section \ref{wigner_matrices}.
\end{enumerate}

Properties $1$ and $2$ together imply that the spherical harmonics form an orthonormal basis of $\Ltwo{\C}{S^2}$, see Definition \ref{orthonormal_basis}. Let 
\begin{equation*}
V_{l1} \coloneqq \spann_{\C}(Y_l^{n} \mid n = -l, \dots, l).
\end{equation*}
Then we already obtain $\Ltwo{\C}{S^2} = \widehat{\bigoplus}_{l \geq 0} V_{l1}$. Now, let $e^n \in \C^{2l+1}$ be the $n$'th standard basis vector, for $n = -l, \dots, l$. Then property $3$ means that the linear map given on basis vectors by 
\begin{equation*}
f: V_l \to V_{l1}, \ e^n \mapsto Y_l^n
\end{equation*}
is an isomorphism of unitary representations. More precisely, $f$ is clearly a unitary transformation and a linear isomorphism, and it is furthermore equivariant on basis vectors since
\begin{align}\label{same_arguments}
\begin{split}
f \left( D_l(g)(e^n)\right) & = f \left(\sum\nolimits_{n' = -l}^{l}D_l^{n'n}(g)e^{n'} \right) \\
& = \sum\nolimits_{n' = -l}^{l}D_l^{n'n}(g) f(e^{n'}) \\
& = \sum\nolimits_{n' = -l}^{l}D_l^{n'n}(g) Y_l^{n'} \\
& = \lambda(g)(Y_l^n) \\
& = \lambda(g)(f(e^n)).
\end{split}
\end{align}
General equivariance then follows from equivariance on basis vectors. This concludes this section.

\subsubsection{The Clebsch-Gordan Decomposition}\label{clebsch_gordan_so3}

Explicit formulas for the Clebsch-Gordan coefficients of $\SO{3}$ are given in~\citet{clebsch_gordan_SO}. The most important fact is the following: There is a decomposition
\begin{equation*}
V_{j} \otimes V_l \cong \bigoplus_{J = |l-j|}^{l+j} V_{J}
\end{equation*}
of representations. Furthermore, the Clebsch-Gordan coefficients $\left\langle JM \middle| jm;ln\right\rangle$ are all real numbers, a fact that we will use in Section \ref{the_real_case}.

\subsubsection{Endomorphisms of $V_J$}\label{endomorphisms_here}

As in the case of harmonic networks, this is again simple: we are considering representations over $\C$, and so Schur's Lemma \ref{Schur} tells us that $\End_{\SO{3}}(V_J)$ is $1$-dimensional for each irrep $J$. We can therefore ignore the endomorphisms once again.

\subsubsection{Bringing Everything Together}\label{everything_together}

Now, with all this prior work, let us determine the equivariant kernels $K: S^2 \to \Hom_{\C}(V_l, V_J)$ for the irreducible representations $D_l: \SO{3} \to \U{V_l}$ and $D_{J}: \SO{3} \to \U{V_J}$. For this, we use Eq. \eqref{simplified_version}. Since each $V_j$ appears only once in the direct sum decomposition of $\Ltwo{\C}{S^2}$ according to Section \ref{pwso3} and since $V_J$ can only appear once in the direct sum decomposition of a tensor product $V_j \otimes V_l$ according to Section \ref{clebsch_gordan_so3} , we do not need the indices $i$ and $s$. Furthermore, as mentioned in the last section, the endomorphisms are trivial, which is why we also do not need the index $r$. Overall, we see that we simply have basis kernels $K_j: S^2 \to \Hom_{\C}(V_l, V_J)$ for all $j$ with $|l - J| \leq j \leq l + J$.\footnote{We saw that $V_J$ is a direct summand of $V_j \otimes V_l$ if and only if $|l-j| \leq J \leq l + j$. By doing case distinctions, one can show that this is the case if and only if $|l - J| \leq j \leq l+J$.} They are explicitly given by
\begin{equation*}
K_j(x) = 
\begin{pmatrix}
\left\langle j \middle| x \right\rangle \cdot \CG_{J(jl)}^{1} \\
\vdots \\
\left\langle j \middle| x \right\rangle \cdot \CG_{J(jl)}^{d_J}
\end{pmatrix}
\end{equation*}
for all $x \in S^2$. Remembering that $\left\langle jm \middle| x \right\rangle = \overline{Y_j^m(x)}$, the individual matrix elements of $K_j(x)$ are then given by
\begin{equation*}
\left\langle JM \middle| K_j(x) \middle| ln \right\rangle = 
\sum_{m = -j}^{j} \left\langle JM \middle| jm;ln \right\rangle \cdot \overline{Y_j^m}(x).
\end{equation*}
This ends the discussion.

\subsection{$\SO{3}$-Steerable Kernels for Real Representations}\label{the_real_case}

In this section, we want to argue why the results in the last section transfer over to the real case as well. Most of the investigations in this section are probably well-known. However, we were not able to find sources that explicitly explain the representation theory of $\SO{3}$ over the real numbers, and so we develop lots of it here from scratch. We thereby make use of the theory over $\C$, some results about real spherical harmonics, and the general theory of real and quaternionic representations outlined in~\citet{broecker}. We need to somewhat turn the order around in this section in order to develop the results. Therefore we first investigate the Peter-Weyl theorem, then look at the endomorphism spaces of the appearing irreducible representations and afterward, as a consequence, show that the representations appearing in the decomposition of $\Ltwo{\R}{S^2}$ are already exhaustive.

\subsubsection{The Peter-Weyl Theorem for $\Ltwo{\R}{S^2}$ as a Representation of $\SO{3}$}

The most important finding is the following, which is taken from~\citet{real_spherical_harmonics}: One can do a base change for the spherical harmonics as follows to obtain real versions of them. Namely, let
\begin{equation}\label{important_equation}
{}^{r}Y^{n}_{l} = 
\begin{cases}
\displaystyle {i \over \sqrt{2}} \left(Y_l^{n} - (-1)^n\, Y_l^{-n}\right) & \text{if}\ n<0,\\
\displaystyle  Y_l^0 & \text{if}\ n=0,\\
\displaystyle  {1 \over \sqrt{2}} \left(Y_l^{-n} + (-1)^n\, Y_l^{n}\right) & \text{if}\ n>0.
\end{cases}
\end{equation}
One can then show that these functions are real-valued continuous functions and therefore ${}^rY_{l}^n \in \Ltwo{\R}{S^2}$. Furthermore, they are an orthonormal basis of this space. We can then, as before, set ${}^r V_{l1}$ as the span of the ${}^rY_{l}^n \in \Ltwo{\R}{S^2}$ and obtain a decomposition 
\begin{equation*}
\Ltwo{\R}{S^2} = \widehat{\bigoplus}_{l \geq 0} {}^r V_{l1}.
\end{equation*} 
We need to understand the transformation properties of these real-valued spherical harmonics under rotation. To understand this explicitly, we set $B_l \in \C^{({2l+1}) \times ({2l+1})}$ as the (complex) base change matrix between the complex and real spherical harmonics. Its entries are given according to Eq. \eqref{important_equation} such that the following relation holds for all $n = -l, \dots, l$:
\begin{equation*}
{}^rY_{l}^n = \sum_{n' = -l}^{l} B_l^{n'n} \cdot Y_l^{n'}.
\end{equation*}
Since for a given $l$, both the complex and real spherical harmonics are linearly independent, the matrix $B_l$ is invertible. Let $B_l^{-1}$ be its inverse. Then it is generally known from linear algebra that we also obtain the inverse relation:
\begin{equation*}
Y_l^{n} = \sum_{n' = -l}^{l} (B_l^{-1})^{n'n} \cdot {}^r Y_{l}^{n'}.
\end{equation*}
Using both these relations and the rotation properties of the complex spherical harmonics from Section \ref{pwso3} we obtain the following rotation property for the real spherical harmonics:
\begin{align*}
\lambda(g)({}^r Y_l^{n}) & = \sum_{n_1 = -l}^{l} B_l^{n_1n} \cdot \lambda(g)(Y_l^{n_1}) \\
& = \sum_{n_1 = -l}^{l} B_l^{n_1n} \sum_{n_2 = -l}^{l} D_l^{n_2n_1}(g) \cdot Y_l^{n_2} \\
& = \sum_{n_1 = -l}^{l} B_l^{n_1n} \sum_{n_2 = -l}^{l} D_l^{n_2n_1}(g) \cdot \sum_{n' = -l}^{l} (B_l^{-1})^{n'n_2} \cdot {}^r Y_{l}^{n'} \\
& = \sum_{n' = -l}^{l} \left( \sum_{n_1 = -l}^{l} \sum_{n_2 = -l}^{l} (B_l^{-1})^{n'n_2} \cdot D_l^{n_2n_1}(g) \cdot B_l^{n_1n}\right) {}^r Y_l^{n'} \\
& = \sum_{n' = -l}^{l} \left( B_l^{-1} \cdot D_l(g) \cdot B_l \right)^{n'n} \cdot {}^r Y_l^{n'}. 
\end{align*}
Now if we set $\realD{l}{g} \coloneqq B_l^{-1} \cdot D_l(g) \cdot B_l$, then we obtain the transformation property
\begin{equation}\label{real_rotation_property}
\lambda(g)({}^r Y_l^n) = \sum_{n' = -l}^{l} \realD{l}{g}^{n'n} \cdot {}^r Y_l^{n'}
\end{equation}
which is analogous to the one in Section \ref{pwso3}.

\begin{lem}\label{real_matrix_entries}
$\realD{l}{g}^{n'n} \in \R$ for all $l \geq 0$, $n', n = -l, \dots, l$ and $g \in \SO{3}$.
\end{lem}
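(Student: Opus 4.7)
The plan is to deduce reality of the entries directly from the fact that the real spherical harmonics are, by construction, real-valued functions, together with the fact that the $\SO{3}$-action $\lambda$ on $\Ltwo{\C}{S^2}$ is just precomposition with $g^{-1}$ and hence preserves the real subspace $\Ltwo{\R}{S^2}$. Once this observation is in place, reality of $\realD{l}{g}^{n'n}$ is essentially automatic, so I do not expect any real obstacle.

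Concretely, I would proceed in three short steps. First, I would recall from Eq.~\eqref{important_equation} that each ${}^r Y_l^n: S^2 \to \R \subseteq \C$ is a real-valued (continuous) function, as is verified in \citet{real_spherical_harmonics}. Second, I would observe that $[\lambda(g)({}^r Y_l^n)](x) = {}^r Y_l^n(g^{-1}x)$ is again real-valued, since it is just a reparametrization of a real-valued function. Third, I would use that the family $\{{}^r Y_l^{n'}\}$ is an orthonormal system in $\Ltwo{\C}{S^2}$, so that comparing with Eq.~\eqref{real_rotation_property} gives
\begin{equation*}
\realD{l}{g}^{n'n} \ =\ \big\langle {}^r Y_l^{n'} \,\big|\, \lambda(g)({}^r Y_l^n)\big\rangle \ =\ \int_{S^2} \overline{{}^r Y_l^{n'}(x)}\; {}^r Y_l^n(g^{-1}x) \, dx.
\end{equation*}
Since the integrand is a product of two real-valued functions, it is real, and therefore so is the integral.

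Alternatively, one can argue abstractly: since $\lambda(g)$ restricts to a real-linear automorphism of the real subspace $\Ltwo{\R}{S^2}$, and $\{{}^r Y_l^{n'}\}$ is a basis of this real Hilbert space, the matrix of this restriction in this basis has real entries; this matrix is exactly $\realD{l}{g}$. Either route avoids computing $B_l^{-1} D_l(g) B_l$ by hand, which would be much more tedious. The main ``obstacle'' is therefore not technical but expository: making sure that the equivalence between the three descriptions of $\realD{l}{g}$ -- via the base change formula $B_l^{-1} D_l(g) B_l$, via the expansion coefficients in Eq.~\eqref{real_rotation_property}, and via the inner product $\langle {}^r Y_l^{n'} | \lambda(g) {}^r Y_l^n\rangle$ -- is spelled out clearly enough for the reader to see that the argument is valid.
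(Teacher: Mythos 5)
Your proposal is correct and follows essentially the same route as the paper: the paper's proof also observes that $\lambda(g)({}^r Y_l^n)$ is real-valued, hence lies in $\Ltwo{\R}{S^2}$, and that its expansion coefficients in the real orthonormal basis $\{{}^r Y_l^{n'}\}$ — which are exactly the $\realD{l}{g}^{n'n}$ by Eq.~\eqref{real_rotation_property} — must therefore be real. Your explicit identification of the coefficients as inner products of real-valued functions is just a slightly more spelled-out version of the same argument.
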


\begin{proof}
Note that since ${}^r Y_l^n$ is a real-valued function, the rotation $\lambda(g)({}^r Y_l^n)$ is real-valued as well. Thus, it is in the space $\Ltwo{\R}{S^2}$. The real spherical harmonics are a basis of this space, which means that the coefficients when expanding $\lambda(g)({}^r Y_l^n)$ in this basis are necessarily real as well. These coefficients are precisely given by the $\realD{l}{g}^{n'n}$ according to Eq. \eqref{real_rotation_property}.
\end{proof}

Now, we have the choice to view $\slimrealD{l}$ as either a real or a complex representation, but first we take the complex viewpoint and see it as a function $\slimrealD{l}: \SO{3} \to \GL(\C^{2l+1})$. Notationwise, the following is important: the ``$r$'' in $\slimrealD{l}$ indicates that the elements in this matrix are real but does not tell us on which space it acts. This will always be clarified by the context. We have the following:

\begin{lem}\label{we_get_rep}
$\slimrealD{l}: \SO{3} \to \U{\C^{2l+1}}$ is an irreducible unitary representation and isomorphic to $D_l$.
\end{lem}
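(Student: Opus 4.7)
The plan is to show that $\slimrealD{l}$ is essentially $D_l$ viewed in a different orthonormal basis, from which all the claimed properties follow at once.

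First I would verify that the base change matrix $B_l$ is unitary. The key observation is that both $\{Y_l^n\}_{n=-l}^l$ and $\{{}^rY_l^n\}_{n=-l}^l$ form orthonormal bases of the same subspace of $\Ltwo{\C}{S^2}$: the complex spherical harmonics are orthonormal there by Section~\ref{pwso3}, and the real spherical harmonics are real-valued, so their $\C$-valued scalar product reduces to the $\R$-valued one, which equals $\delta_{ll'}\delta_{nn'}$. A change of basis matrix between two orthonormal bases of a complex Hilbert space is unitary, hence $B_l \in \U{\C^{2l+1}}$ and in particular $B_l^{-1} = B_l^{\dagger}$.

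Next I would exploit the defining relation $\realD{l}{g} = B_l^{-1} \cdot D_l(g) \cdot B_l$. As a group homomorphism:
\begin{equation*}
\realD{l}{gg'} = B_l^{-1} D_l(gg') B_l = B_l^{-1} D_l(g) B_l \cdot B_l^{-1} D_l(g') B_l = \realD{l}{g} \cdot \realD{l}{g'} ,
\end{equation*}
and $\realD{l}{e} = \ID$. For unitarity, $\realD{l}{g}$ is a product of unitary matrices (by Step~1 plus unitarity of $D_l(g)$), hence unitary. Continuity of $g \mapsto \realD{l}{g}$ (needed for it to be a linear representation in the sense of Definition~\ref{Linear Representation}) is immediate from continuity of $D_l$ and the fact that left and right multiplication by the fixed matrices $B_l^{-1}$ and $B_l$ are continuous. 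Thus $\slimrealD{l}: \SO{3} \to \U{\C^{2l+1}}$ is a unitary representation.

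Finally, I would observe that by rearranging the defining relation, $D_l(g) \circ B_l = B_l \circ \realD{l}{g}$ for every $g \in \SO{3}$, so $B_l: \C^{2l+1} \to \C^{2l+1}$ is a $\SO{3}$-equivariant linear map from $\slimrealD{l}$ to $D_l$. Since $B_l$ is unitary and invertible, it is an isomorphism of unitary representations in the sense of Definition~\ref{isomorphic_unitary_reps}, giving $\slimrealD{l} \cong D_l$. Irreducibility of $\slimrealD{l}$ then follows, since any nontrivial closed invariant subspace $W \subseteq \C^{2l+1}$ of $\slimrealD{l}$ would map under the equivariant isomorphism $B_l$ to a nontrivial closed invariant subspace $B_l(W) \subseteq \C^{2l+1}$ of $D_l$, contradicting the already established irreducibility of $D_l$ from Section~\ref{wigner_matrices}. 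There are no real obstacles here; the only slightly subtle point is checking that $B_l$ is unitary, which hinges on recognizing that real spherical harmonics remain orthonormal when the ambient scalar product is extended from $\R$ to $\C$.
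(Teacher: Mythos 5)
Your proof is correct and follows essentially the same route as the paper's: establish that $B_l$ is unitary as a change of basis between two orthonormal systems, deduce the homomorphism property and unitarity of $\realD{l}{g} = B_l^{-1}D_l(g)B_l$ by conjugation, and read off the isomorphism $\slimrealD{l} \cong D_l$ (hence irreducibility) from the intertwining relation $B_l \cdot \realD{l}{g} = D_l(g)\cdot B_l$. Your added remarks on continuity and on why the real spherical harmonics stay orthonormal under the complex scalar product are correct elaborations of points the paper leaves implicit.
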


\begin{proof}
First of all, it is an actual linear representation since
\begin{equation*}
\realD{l}{gg'} = B_l^{-1} D_l(gg') B_l = B_l^{-1} D_l(g) B_l B_l^{-1} D_l(g') B_l = \realD{l}{g} \cdot \realD{l}{g'}
\end{equation*}
where we used that $D_l$ is a linear representation. Now since $Y_l^n$ and ${}^{r}Y_l^n$ are both orthonormal bases of $\Ltwo{\C}{S^2}$, the base change matrix $B_l$ needs to be a unitary matrix. Consequently, $\realD{l}{g} = B_l^{-1} D_l(g) B_l$ is as a product of unitary transformations itself unitary, which means that $\slimrealD{l}$ is a unitary representation. Furthermore, we obtain $B_l \cdot \realD{l}{g} = D_l(g) \cdot B_l$, which means that $B_l$ gives an isomorphism $\slimrealD{l} \cong D_l$ of unitary representations.  From the fact that $D_l$ is irreducible, we obtain that $\slimrealD{l}$ is irreducible as well.
\end{proof}

Now we take the real viewpoint. Let ${}^{r} V_l = \R^{2l+1}$.

\begin{lem}\label{irreducible_orthogonal}
$\slimrealD{l}: \SO{3} \to \O{{}^{r} V_l}$ is an irreducible orthogonal representation.
\end{lem}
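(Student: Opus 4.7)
The plan is to bootstrap everything from the complex version already established in Lemma \ref{we_get_rep}. By Lemma \ref{real_matrix_entries}, every matrix entry $\realD{l}{g}^{n'n}$ is real, so the complex matrix $\realD{l}{g} \in \C^{(2l+1)\times(2l+1)}$ restricts to a well-defined linear endomorphism of ${}^rV_l = \R^{2l+1}$. The group homomorphism property $\realD{l}{gg'} = \realD{l}{g}\realD{l}{g'}$ and the continuity of $g \mapsto \realD{l}{g}$ are already verified in Lemma \ref{we_get_rep} for the complex viewpoint and transfer verbatim to the real one. Orthogonality will follow because, as shown in Lemma \ref{we_get_rep}, $\realD{l}{g}$ is unitary in $\U{\C^{2l+1}}$; a unitary matrix with real entries is orthogonal, so $\realD{l}{g} \in \O{{}^rV_l}$.

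The substantive part is irreducibility over $\R$. I would argue by complexification. Suppose $W \subseteq \R^{2l+1}$ is a closed (equivalently, arbitrary, since the space is finite dimensional) invariant subspace under $\slimrealD{l}$. Form the complex subspace $W_\C := W + iW \subseteq \C^{2l+1}$. Since the matrices $\realD{l}{g}$ have real entries, they preserve the real and imaginary parts separately, so $W_\C$ is invariant under $\slimrealD{l}$ viewed as a complex representation. By Lemma \ref{we_get_rep}, that complex representation is irreducible, hence $W_\C = 0$ or $W_\C = \C^{2l+1}$.

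To finish, I would use the standard fact that for any real subspace $W \subseteq \R^{2l+1}$ one has $\dim_\C(W + iW) = \dim_\R W$ (a real basis of $W$ is simultaneously a $\C$-basis of $W+iW$, since a relation $\sum (a_k + ib_k) w_k = 0$ with $w_k \in W$, $a_k, b_k \in \R$, forces $\sum a_k w_k = 0$ and $\sum b_k w_k = 0$ by looking at real and imaginary parts). Thus $W_\C = 0$ forces $W = 0$, while $W_\C = \C^{2l+1}$ forces $\dim_\R W = 2l+1$, hence $W = {}^rV_l$. This proves irreducibility.

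The main (and really only) obstacle is conceptual rather than technical: one must be careful to distinguish $\slimrealD{l}$ as a real representation on ${}^rV_l$ from the same matrix family viewed as a complex representation on $\C^{2l+1}$, and to realize that irreducibility of the latter implies irreducibility of the former precisely because the matrix entries are real (so that complexification of a real invariant subspace remains invariant). No computation with the explicit base change $B_l$ or with Wigner D-matrices is required beyond what is already packaged in Lemmas \ref{real_matrix_entries} and \ref{we_get_rep}.
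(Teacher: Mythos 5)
Your proof is correct and follows essentially the same route as the paper: orthogonality from unitarity plus real entries, and real irreducibility deduced from complex irreducibility of the same matrix family via Lemma \ref{we_get_rep}. The paper phrases the reduction in terms of a real base change to block-diagonal form being in particular a complex one, while you phrase it via complexifying an invariant subspace and counting dimensions — the same idea, with your version spelling out the dimension bookkeeping the paper leaves implicit.
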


\begin{proof}
$\realD{l}{g}$ is a unitary matrix for each $g \in \SO{3}$ by Lemma \ref{we_get_rep}, and since its matrix elements are real by Lemma \ref{real_matrix_entries}, it automatically is an orthogonal matrix. If it was reducible, then there would be a \emph{real} base change matrix that brings $\slimrealD{l}$ in a nontrivial block-diagonal shape. However, this base change would in particular be complex, meaning that we would conclude that the complex version of the representation $\slimrealD{l}$ is reducible. But it is not, due to Lemma \ref{we_get_rep}.
\end{proof}

Now, remember that $\Ltwo{\R}{S^2} = \widehat{\bigoplus}_{l \geq 0} {}^{r} V_{l1}$ and that ${}^{r}V_{l1}$ is generated from the real spherical harmonics. Also, remember that the real spherical harmonics transform as in Eq. \eqref{real_rotation_property}. Thus, with the same arguments as in Eq. \eqref{same_arguments} we obtain ${}^{r} V_{l1} \cong {}^{r} V_l$, which is from the preceding lemmas an irreducible orthogonal representation. Thus, we have found the Peter-Weyl decomposition of $\Ltwo{\R}{S^2}$.

\subsubsection{Endomorphisms of ${}^r V_J$}

In the next section, we will show that the $\slimrealD{J}: \SO{3} \to \O{{}^r V_J}$ already given an exhaustive list of the irreducible representations of $\SO{3}$ over the real numbers. In this section, we first describe their endomorphism spaces since this will help in showing that there cannot be any other irreducible representations. Fortunately, the situation is again simple:

\begin{pro}\label{endomorphisms_to_use}
$\End_{\SO{3}, \R}({}^r V_J)$ is one-dimensional for each $J \geq 0$.
\end{pro}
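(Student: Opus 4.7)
The plan is to reduce the real endomorphism problem to the complex Schur's Lemma~\ref{Schur} by complexification, leveraging the fact that we have already identified $\slimrealD{J}$ as an irreducible \emph{complex} representation.

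First, I would take any $c \in \End_{\SO{3},\R}({}^{r}V_J)$, i.e.\ an $\R$-linear map $c: \R^{2J+1} \to \R^{2J+1}$ commuting with all the real orthogonal matrices $\realD{J}{g}$ for $g \in \SO{3}$. Pick the standard basis of $\R^{2J+1}$, so $c$ is represented by a real $(2J+1) \times (2J+1)$ matrix $C$, and the equivariance condition $c \circ \realD{J}{g} = \realD{J}{g} \circ c$ becomes the matrix identity $C \cdot \realD{J}{g} = \realD{J}{g} \cdot C$ for every $g$.

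Next, I would interpret this very same matrix $C$ as defining a $\C$-linear map $c_{\C}: \C^{2J+1} \to \C^{2J+1}$. The commutation relation $C \cdot \realD{J}{g} = \realD{J}{g} \cdot C$ still holds in $\C^{(2J+1) \times (2J+1)}$, so $c_{\C}$ is a complex intertwiner of $\slimrealD{J}: \SO{3} \to \U{\C^{2J+1}}$ with itself. By Lemma~\ref{we_get_rep}, this complex representation is isomorphic to the irreducible representation $D_J$, so Schur's Lemma~\ref{Schur} yields a unique $\lambda \in \C$ with $C = \lambda \cdot I_{2J+1}$.

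Finally, I would argue that $\lambda$ must be real: since all entries of $C$ lie in $\R$ by construction, and the diagonal entries of $\lambda \cdot I_{2J+1}$ are all equal to $\lambda$, we obtain $\lambda \in \R$, and hence $c = \lambda \cdot \ID_{{}^{r}V_J}$. This shows $\End_{\SO{3},\R}({}^{r}V_J) = \spann_{\R}\{\ID_{{}^{r}V_J}\}$, which is one-dimensional. The only subtlety is verifying that the complexification step is legitimate — specifically, that an $\R$-linear equivariant map, when reinterpreted via its matrix as a $\C$-linear map on $\C^{2J+1}$, automatically remains equivariant with respect to the \emph{same} matrices $\realD{J}{g}$; this is immediate once one observes that the entries of $\realD{J}{g}$ are real (Lemma~\ref{real_matrix_entries}), so the matrix equation $C \cdot \realD{J}{g} = \realD{J}{g} \cdot C$ is the same statement in either the real or the complex matrix algebra. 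In Bröcker--tom Dieck's classification language, this argument is exactly the statement that ${}^{r}V_J$ is of \emph{real type}, since its complexification $V_J$ remains irreducible.
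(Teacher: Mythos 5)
Your proposal is correct and follows essentially the same route as the paper's own proof: view the real equivariant matrix as a complex matrix, invoke Lemma~\ref{we_get_rep} to see that $\slimrealD{J}$ is complex-irreducible, apply Schur's Lemma~\ref{Schur} to get a complex scalar multiple of the identity, and conclude that the scalar is real because the matrix is. The closing observation about real type is a nice bonus but not needed for the argument.
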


\begin{proof}
Let $f: {}^r V_J \to {}^r V_J$ be an endomorphism. Since ${}^r V_J = \R^{2J+1}$ we can view $f$ as a matrix in $\R^{(2J+1) \times (2J+1)}$. That $f$ is an endomorphism then means
\begin{equation*}
f \cdot \realD{J}{g} = \realD{J}{g} \cdot f
\end{equation*} 
for all $g \in \SO{3}$. Now note that as a real matrix, f is in particular a complex matrix, i.e., $f \in \C^{(2J+1) \times (2J+1)}$. Also, remember that we can view $\slimrealD{J}$ also as a complex irreducible representation $\slimrealD{J}: \SO{3} \to \U{\C^{2J+1}}$ by Lemma \ref{we_get_rep}. What this means is that $f \in \End_{\SO{3}, \C}(\C^{2J+1})$, which is isomorphic to $\C$ by Schur's Lemma \ref{Schur}. Thus, $f$ is a complex multiple of the identity. Since $f$ is a real matrix, it is thus a real multiple of the identity. The result follows.
\end{proof}

\subsubsection{General Notes on the Relation between Real and Complex Representations}

In the next section we show that there can, up to isomorphism, not be \emph{other} irreducible representations than the $\slimrealD{l}: \SO{3} \to \O{{}^{r} V_l}$. In order to do so, we first need to better understand the relationship between real and complex representations of compact groups. These investigations will carry over to the investigations for $\O{3}$ that we do in Section \ref{o3-ex} as well. 

The following definition of a classification of real irreducible representations of a compact group $G$ can be found in~\citet{broecker}, Theorem II.$6.7$. In this book, it is a theorem, since the authors give an independent but equivalent definition of these notions. 

\begin{dfn}[Real, Complex, and Quaternionic Type Irreducible Representations]\label{def_quaternionic_type}
Let $\rho: G \to \O{V}$ be a \emph{real} irreducible representation of a compact group $G$. Then $\rho$ is said to be of
\begin{enumerate}
\item \emph{real type} if $\End_{G,\R}(V) \cong \R$,
\item \emph{complex type} if $\End_{G,\R}(V) \cong \C$ and
\item \emph{quaternionic type} if $\End_{G,\R}(V) \cong \HH$, where $\HH$ are the quaternions.
\end{enumerate}
Here, these isomorphisms respect both addition and multiplication. The multiplication in the endomorphism spaces is thereby given by composition of functions.
\end{dfn}

Furthermore, \citet{broecker} shows in Theorem II.$6.3$ that there is no other possibility for an irreducible real representation, i.e., they can be completely categorized by being of real, complex or quaternionic type. Additionally, since $\R$, $\C$ and $\HH$ already differ in their $\R$-dimension, it is enough to check whether the $\R$-dimension of an endomorphism space is $1$, $2$ or $4$ in order to do the classification. 

In order to compare real and complex representations we need to define two functors between those:\footnote{We only define these functors on objects and not on morphisms. The reason is that we will never explicitly use their definitions on morphisms. More details on this can be found in~\citet{broecker}, including other functors which are needed in the general theory. The reader should not worry if he or she does not know what a functor is.}

\begin{dfn}[Restriction and Extension]\label{restriction_and_extension}
Let ${}^c \rho: G \to \GL({}^c V)$ be a complex representation. Furthermore, let ${}^r \rho: G \to \GL({}^r V)$ be a real representation. Then we define their \emph{restriction} and \emph{extension} as follows:
\begin{enumerate}
\item Set $r({}^c V)$ as the $\R$-vector space that has the same underlying abelian group as ${}^c V$ and the scalar multiplication from $\R$ which is the restriction of the multiplication from $\C$. The \emph{restriction} $r({}^c \rho): G \to \GL(r({}^c V))$ is defined as the exact same map as ${}^c \rho$, only that $r({}^c \rho)(g): r({}^c V) \to r({}^c V)$ is now viewed as an automorphism of \emph{real} vector spaces.
\item We define the \emph{extension} by $e({}^r V) \coloneqq \C \otimes_{\R} {}^r V$, where $\C$ is regarded as an $\R$-vector space. This construction becomes a $\C$-vector space by scalar multiplication $z \cdot (z' \otimes v) \coloneqq (zz') \otimes v$. We can then define $e({}^r \rho): G \to \GL(e({}^r V))$ by setting $e({}^r \rho)(g) \coloneqq \ID_{\C} \otimes ({}^r \rho(g))$.
\end{enumerate}
\end{dfn}

Note that the extension operation doubles the $\R$-dimension, whereas for the restriction it stays equal. Therefore, we can not hope that these operations are inverse to each other. However, we have the following, almost as nice statement:

\begin{pro}\label{almost_inverse}
For each real representation $\rho: G \to \GL(V)$ there is a natural isomorphism $r(e(V)) \cong V \oplus V$ of $\R$-representations.
\end{pro}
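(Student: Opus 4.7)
The plan is to construct the isomorphism explicitly and then verify that it respects the $\R$-linear and $G$-equivariant structure. First, I would observe that $\C$, viewed as an $\R$-vector space, has the basis $\{1, i\}$. Hence $r(e(V)) = r(\C \otimes_\R V)$ decomposes as an internal direct sum $(1 \otimes V) \oplus (i \otimes V)$ of $\R$-subspaces, each of which is $\R$-linearly isomorphic to $V$ via $v \mapsto 1 \otimes v$ and $v \mapsto i \otimes v$, respectively. This suggests defining
\begin{equation*}
    \varphi_V:\ V \oplus V \to r(e(V)), \qquad (v, w) \mapsto 1 \otimes v + i \otimes w.
\end{equation*}

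Next, I would verify that $\varphi_V$ is an $\R$-linear isomorphism. $\R$-linearity follows directly from the bilinearity of the tensor product over $\R$. Injectivity and surjectivity follow from the fact that every element of $\C \otimes_\R V$ can be uniquely written as $1 \otimes v + i \otimes w$, which is an immediate consequence of $\{1, i\}$ being an $\R$-basis of $\C$ together with the description of bases of tensor products in Definition~\ref{tensor_product_definition}, property~3 (applied over $\R$).

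Then I would check equivariance. By Definition~\ref{restriction_and_extension}, the action of $g \in G$ on $r(e(V))$ is given by $e(\rho)(g) = \ID_\C \otimes \rho(g)$, so
\begin{equation*}
    e(\rho)(g)\big(1 \otimes v + i \otimes w\big) \ =\ 1 \otimes \rho(g)(v) + i \otimes \rho(g)(w),
\end{equation*}
while on $V \oplus V$ the action is the diagonal one $(v, w) \mapsto (\rho(g)(v), \rho(g)(w))$. Thus $\varphi_V \circ (\rho(g) \oplus \rho(g)) = e(\rho)(g) \circ \varphi_V$, establishing equivariance. Naturality in $V$ is automatic since $\varphi_V$ is defined uniformly in terms of $V$, without any choice of basis: for any intertwiner $f: V \to V'$ of real representations, one checks directly that $\varphi_{V'} \circ (f \oplus f) = r(e(f)) \circ \varphi_V$ using $e(f) = \ID_\C \otimes f$.

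There is essentially no hard step here -- the result is a bookkeeping statement about how the standard $\R$-basis $\{1, i\}$ of $\C$ induces a direct-sum decomposition of $\C \otimes_\R V$. The only subtle point is ensuring that $\varphi_V$ is defined in a basis-free way so that naturality holds without additional argument, which is why I would use the global description $1 \otimes v + i \otimes w$ rather than picking a basis of $V$.
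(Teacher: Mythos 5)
Your proof is correct, but it takes a different route from the paper: the paper disposes of this proposition in one line by citing \citet{broecker}, Proposition II.$6.1$, whereas you give a self-contained explicit construction. Your map $\varphi_V(v,w) = 1 \otimes v + i \otimes w$ is exactly the right one, and your verifications of $\R$-linearity, bijectivity, $G$-equivariance (both sides act ``diagonally'' since $e(\rho)(g) = \ID_\C \otimes \rho(g)$ leaves the $\C$-factor untouched), and naturality are all sound. What your approach buys is transparency: the reader sees concretely that the two copies of $V$ are $1 \otimes V$ and $i \otimes V$, which is precisely the decomposition the paper later uses implicitly in the proof of Lemma~\ref{all_complex_real} (where $r(\C^{2l+1}) = {}^r V_l \oplus i\,{}^r V_l$); the citation, by contrast, keeps the appendix shorter and defers to a standard reference. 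One small point of care: you justify the unique decomposition of elements of $\C \otimes_\R V$ by appealing to property~3 of Definition~\ref{tensor_product_definition}, which is stated for finite-dimensional spaces with chosen bases. That is harmless here, since the proposition is only ever applied to irreducible (hence finite-dimensional) representations, but if you want the statement in full generality and in a basis-free way, it is cleaner to derive $\C \otimes_\R V \cong (\R \otimes_\R V) \oplus (i\R \otimes_\R V)$ from the decomposition $\C = \R \oplus i\R$ and the distributivity of the tensor product over direct sums, which follows from the universal property alone. Continuity of $\varphi_V$ is automatic in the finite-dimensional setting, so the isomorphism is indeed one of representations in the paper's sense.
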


\begin{proof}
This is the first statement in~\citet{broecker}, Proposition II.$6.1$.
\end{proof}

The following definition is actually not the definition that~\citet{broecker} formulate. However, it is an equivalent characterization that follows from their Proposition II.$6.6$ (vii), (viii) and (ix) and is more convenient for our needs:

\begin{dfn}[Real Type \emph{Complex} Representation]\label{def_real_type}
Let $\rho: G \to \GL(V)$ be a \emph{complex} irreducible representation. Then $\rho$ is called \emph{of real type} if there is an isomorphism of real representations $r(V) \cong U \oplus U$ where
\begin{enumerate}
\item $\rho_U: G \to \GL(U)$ is an irreducible real representation and
\item $r(\rho): G \to \GL(r(V))$ is the restriction of $\rho$, as defined in Definition \ref{restriction_and_extension}.
\end{enumerate}
\end{dfn}

\begin{pro}\label{all_real_real}
Assume $G$ is a compact group such that all complex irreducible representations are of real type. Then also all real irreducible representations are of real type.
\end{pro}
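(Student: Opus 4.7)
The plan is to show that under the hypothesis, for any real irrep $V$ the complexification $e(V)$ is actually a complex irrep, and then to extract $\End_{G,\R}(V) \cong \R$ from Schur's Lemma together with the restriction/extension adjunction $\Hom_{G,\C}(e(V), W) \cong \Hom_{G,\R}(V, r(W))$, a standard fact that follows immediately from $e(V) = \C \otimes_\R V$.

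First I would decompose $e(V)$ as a direct sum of complex irreps via Proposition \ref{perpendicular_direct_sum}, writing $e(V) \cong \bigoplus_i W_i^{\oplus n_i}$ with the $W_i$ pairwise non-isomorphic. By the hypothesis each $W_i$ is of real type, so for each $i$ there is a real irrep $U_i$ with $r(W_i) \cong U_i \oplus U_i$. Combining this with Proposition \ref{almost_inverse} yields two decompositions of the same real representation:
\begin{equation*}
V \oplus V \;\cong\; r(e(V)) \;\cong\; \bigoplus_i r(W_i)^{\oplus n_i} \;\cong\; \bigoplus_i U_i^{\oplus 2n_i}.
\end{equation*}

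Next I would invoke the Krull-Remak-Schmidt theorem (Proposition \ref{Krull-Remak-Schmidt}) to compare multiplicities of each real irrep on both sides. For any real irrep $U \not\cong V$ the multiplicity on the left is $0$, so no $U_i$ can be isomorphic to such a $U$; for $V$ itself the left multiplicity is $2$, forcing $\sum_{i: U_i \cong V} 2n_i = 2$, hence $\sum_i n_i = 1$. Combined with $n_i \geq 1$ and the $W_i$ being pairwise non-isomorphic, exactly one index survives with $n_1 = 1$, so $e(V) \cong W_1$ is a complex irrep.

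Finally, Schur's Lemma \ref{Schur} gives $\End_{G,\C}(e(V)) \cong \C$. Using the extension/restriction adjunction and Proposition \ref{almost_inverse},
\begin{equation*}
\End_{G,\C}(e(V)) \;\cong\; \Hom_{G,\R}(V, r(e(V))) \;\cong\; \Hom_{G,\R}(V, V \oplus V) \;\cong\; \End_{G,\R}(V)^{\oplus 2}
\end{equation*}
as real vector spaces. Comparing real dimensions yields $2 = 2\dim_\R \End_{G,\R}(V)$, so $\End_{G,\R}(V) \cong \R$ and $V$ is of real type by Definition \ref{def_quaternionic_type}. The main obstacle is the Krull-Remak-Schmidt bookkeeping: one must use the uniqueness of the decomposition carefully enough to conclude that not only does each $U_i$ have to be isomorphic to $V$, but also that the total multiplicity $\sum n_i$ equals one, which is what collapses the sum to a single irreducible summand.
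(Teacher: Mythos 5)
Your proof is correct, but it takes a genuinely different route from the paper: the paper disposes of this proposition with a one-line citation to Br\"ocker--tom Dieck (Proposition II.6.6 (ii) and (iii)), whereas you give a self-contained argument built from ingredients already present in the appendix. Your key move --- decomposing $e(V)$ into complex irreps, applying the real-type hypothesis to each summand, and then comparing $r(e(V)) \cong V \oplus V$ against $\bigoplus_i U_i^{\oplus 2n_i}$ via Krull--Remak--Schmidt to force $e(V)$ to be a single complex irrep --- in effect re-proves the relevant special case of Proposition \ref{all_extension_real_irr} without assuming its hypothesis (which you correctly avoid, since that hypothesis is exactly what is to be shown). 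The one ingredient you import from outside the paper is the extension/restriction adjunction $\Hom_{G,\C}(e(V), W) \cong \Hom_{G,\R}(V, r(W))$; this is standard, follows as you say from $e(V) = \C \otimes_\R V$ by restricting to $1 \otimes V$ and extending $\C$-linearly, and the resulting bijection is $\R$-linear, so your real-dimension count $2 = 2\dim_\R \End_{G,\R}(V)$ is legitimate. What your approach buys is transparency and independence from the black-box reference; what it costs is that the multiplicity bookkeeping must be done carefully (as you note), and one should observe that the uniqueness of multiplicities applies to the two decompositions of $V \oplus V$ because multiplicities of irreps are isomorphism invariants, even though the paper states Proposition \ref{Krull-Remak-Schmidt} for orthogonal decompositions of a fixed unitary representation. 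These are minor points; the argument is sound.
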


\begin{proof}
This follows from~\citet{broecker}, Proposition II.$6.6$ (ii) and (iii).
\end{proof}

\begin{pro}\label{all_extension_real_irr}
Let $\rho: G \to \GL(V)$ be an irreducible real representation of real type. Then its extension $e(\rho): G \to \GL(e(V))$ given as in Definition \ref{restriction_and_extension} is an irreducible complex representation (also of real type).
\end{pro}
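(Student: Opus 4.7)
The plan is to leverage Proposition~\ref{almost_inverse}, which tells us that $r(e(V)) \cong V \oplus V$ as real $G$-representations, and then to analyze how the complex structure on $e(V)$ sits inside this decomposition. First I would make explicit the identification $r(e(V)) \cong V \oplus V$ coming from $e(V) = \C \otimes_\R V = (1 \otimes V) \oplus (i \otimes V)$. Under this identification, multiplication by $i$ on $e(V)$ becomes the real $G$-equivariant endomorphism $J: V \oplus V \to V \oplus V$, $(v_1, v_2) \mapsto (-v_2, v_1)$, satisfying $J^2 = -\ID$. The key observation is then that complex subrepresentations of $e(V)$ correspond precisely to real subrepresentations of $V \oplus V$ that are $J$-invariant.

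Next, I would classify all real subrepresentations of $V \oplus V$. Since $V$ is of real type, $\End_{G,\R}(V) \cong \R$ by Definition~\ref{def_quaternionic_type}, so by standard arguments (analogous to Schur's Lemma~\ref{schur_unitary} for the real setting) one has $\End_{G,\R}(V \oplus V) \cong M_2(\R)$. Hence $V \oplus V$ is isotypic of type $V$, and every nonzero proper real subrepresentation is a copy of $V$ parameterized by a line $L \subseteq \R^2$ (namely the image of the embedding $V \to V \oplus V$, $v \mapsto (\alpha v, \beta v)$ for some $[\alpha : \beta] \in \mathbb{P}^1(\R)$).

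The decisive step is then to show that no such copy of $V$ is $J$-invariant. Under the identification above, a line $L \subseteq \R^2$ corresponds to a real subrepresentation $V_L \subseteq V \oplus V$, and $V_L$ is $J$-invariant if and only if $L$ itself is invariant under the matrix $\bigl(\begin{smallmatrix} 0 & -1 \\ 1 & \phantom{-}0 \end{smallmatrix}\bigr)$. But this matrix has characteristic polynomial $t^2 + 1$ with no real roots, so no line in $\R^2$ is invariant. Consequently, the only $J$-invariant real subrepresentations of $V \oplus V$ are $0$ and $V \oplus V$ itself, which translates to $e(V)$ having only the trivial complex subrepresentations. Therefore $e(V)$ is irreducible.

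Finally, the parenthetical claim that $e(V)$ is of real type as a complex representation is essentially immediate: by Definition~\ref{def_real_type}, a complex irreducible representation is of real type iff its restriction is isomorphic to $U \oplus U$ for some real irreducible $U$, and we have already verified $r(e(V)) \cong V \oplus V$ with $V$ real irreducible. The main obstacle, modest as it is, will be making the identification of $J$ with the complex structure and the classification of real subreps of $V \oplus V$ precise enough that the $J$-invariance argument is airtight; everything else is then routine.
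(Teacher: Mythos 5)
Your proof is correct, and it is a genuinely different route from the paper's: the paper disposes of this proposition with a bare citation to \citet{broecker}, Proposition II.$6.6$(i), whereas you give a self-contained argument. Your argument buys transparency about where the real-type hypothesis enters: it is exactly the step where an equivariant embedding $V \to V \oplus V$ must have components in $\End_{G,\R}(V) \cong \R$, hence be a ``graph'' copy $V_L = \{(\alpha v, \beta v)\}$ over a line $L \subseteq \R^2$; for complex or quaternionic type this classification fails and so does the proposition, which your proof makes visible while the citation does not. The one step you rightly flag as needing care --- that every nonzero proper real subrepresentation of $V \oplus V$ is such a graph copy --- is fully available from the paper's own toolkit: by Proposition \ref{perpendicular_direct_sum} any such subrepresentation decomposes into irreducibles, each of which is isomorphic to $V$ by Lemma \ref{perp} (anything non-isomorphic to $V$ would be perpendicular to both summands and hence zero), and a dimension count forces a single copy, realized as the image of an embedding whose two components are scalars by real type. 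The $J$-invariance computation ($\alpha^2 = -\beta^2$ has no nonzero real solutions) and the final real-type claim via $r(e(V)) \cong V \oplus V$ and Definition \ref{def_real_type} are both airtight. Since everything is finite-dimensional, the topological closedness and continuity caveats elsewhere in the paper are vacuous here, so no further gaps remain.
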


\begin{proof}
This is precisely~\citet{broecker}, Proposition II.$6.6$(i).
\end{proof}

\subsubsection{The Irreducible Representations of $\SO{3}$ over the Real Numbers}\label{how_to_go_to_r}

The rough strategy is to use the fact that the $\slimrealD{l}$, viewed as complex irreducible representations, are an exhaustive list of all the complex irreps. Then, using the restriction and extension operators $r$ and $e$ between real and complex representations, we can show that in the specific case of $\SO{3}$, there can not be any other real irreducible representations than the $\slimrealD{l}$, viewed as \emph{real} representations. 

\begin{lem}\label{all_complex_real}
All complex irreducible representations of $\SO{3}$ are of real type.
\end{lem}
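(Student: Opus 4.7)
The plan is to show that every $D_l$, viewed as a complex irreducible representation of $\SO{3}$, fits the characterization of being of real type as given in Definition \ref{def_real_type}, by exhibiting an explicit decomposition $r(D_l) \cong {}^{r}V_l \oplus {}^{r}V_l$ of its restriction into two copies of the real irreducible representation $\slimrealD{l}: \SO{3} \to \O{{}^{r}V_l}$ from Lemma \ref{irreducible_orthogonal}.

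First I would invoke Lemma \ref{we_get_rep}, which tells us that $D_l$ is isomorphic as a complex representation to $\slimrealD{l}: \SO{3} \to \U{\C^{2l+1}}$, the representation obtained by conjugating with the base change matrix $B_l$ from complex to real spherical harmonics. Since isomorphic complex representations have isomorphic restrictions, it suffices to prove the claim for $\slimrealD{l}$ in place of $D_l$. The advantage is that the matrices $\realD{l}{g}$ have \emph{real} entries, by Lemma \ref{real_matrix_entries}.

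Next I would use this reality of the matrix entries to decompose the restriction. Concretely, viewing $\C^{2l+1}$ as an $\R$-vector space via the restriction functor, we have the obvious $\R$-linear direct sum decomposition $r(\C^{2l+1}) = \R^{2l+1} \oplus i\R^{2l+1}$ into real and imaginary parts. Because every $\realD{l}{g}$ has real entries, it maps each summand into itself, and on each summand it acts exactly as $\slimrealD{l}: \SO{3} \to \O{{}^{r}V_l}$. Hence $r(\slimrealD{l}) \cong {}^{r}V_l \oplus {}^{r}V_l$ as real representations of $\SO{3}$. By Lemma \ref{irreducible_orthogonal}, ${}^{r}V_l$ is a real irrep, so this exhibits $D_l$ as being of real type in the sense of Definition \ref{def_real_type}.

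The only potential subtlety is making sure we are really applying Definition \ref{def_real_type} correctly: it requires that the restriction splits as a direct sum of two copies of a \emph{single} irreducible real representation, and this is precisely what the decomposition above yields, with both copies being ${}^{r}V_l$ (not merely two isomorphic but distinct irreps). Since both summands are literally fixed by the same real matrix action $\slimrealD{l}$, this causes no trouble. I expect no genuine obstacle; the whole argument is a matter of carefully repackaging Lemmas \ref{we_get_rep}, \ref{real_matrix_entries}, and \ref{irreducible_orthogonal} into the language of Definition \ref{def_real_type}.
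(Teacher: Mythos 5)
Your proof is correct and follows essentially the same route as the paper: reduce to $\slimrealD{l}$ via Lemma \ref{we_get_rep}, use the reality of the matrix entries (Lemma \ref{real_matrix_entries}) to split $r(\C^{2l+1}) = {}^{r}V_l \oplus i\,{}^{r}V_l$ into two invariant summands each carrying the real irrep of Lemma \ref{irreducible_orthogonal}, and conclude via Definition \ref{def_real_type}. The paper additionally spells out the isomorphism $i: {}^{r}V_l \to i\,{}^{r}V_l$ identifying the two summands, which you gloss over but which is immediate.
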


\begin{proof}
From Section \ref{wigner_matrices} and Lemma \ref{we_get_rep} we know that the $\slimrealD{l}: \SO{3} \to \U{\C^{2l+1}}$ give us, up to equivalence, all the complex irreducible representations of $\SO{3}$. According to Definition \ref{def_real_type} we now need to understand that its restriction splits into the direct sum of twice the same irreducible \emph{real} representation. We do this as follows:

We can write $r(\C^{2l+1}) = \R^{2l+1} \oplus (i\R)^{2l+1} = {}^r V_l \oplus i {}^r V_l$, which is a decomposition of $\C^{2l+1}$ when viewed as an $\R$-vector space. Then, we can note that both
\begin{align*}
& \slimrealD{l}: \SO{3} \to \O{{}^r V_l} \ \text{and} \\ 
& \slimrealD{l}: \SO{3} \to \O{i {}^r V_l} 
\end{align*}
are well-defined $\R$-representations, which follows from the fact that the matrix elements are all real. Furthermore, the first map is actually an irreducible real representation by Lemma \ref{irreducible_orthogonal}. The second one is isomorphic to the first since one can show that
\begin{equation*}
i: {}^r V_l \to i {}^r V_l, \ a \mapsto i \cdot a
\end{equation*}
is an isomorphism of real $\SO{3}$-representations. This gives us precisely the splitting of $r(\C^{2l+1})$ as a representation that we were looking for.
\end{proof}

\begin{cor}\label{all_real_so3_real}
All irreducible real representations of $\SO{3}$ are of real type.
\end{cor}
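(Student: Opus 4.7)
The plan is to derive this as an immediate consequence of the two results cited just above the statement. Proposition \ref{all_real_real} asserts that if every complex irreducible representation of a compact group $G$ is of real type, then every real irreducible representation of $G$ is of real type as well. The hypothesis of that proposition for $G=\SO3$ has just been verified in Lemma \ref{all_complex_real}, where the required splitting $r(\C^{2l+1}) \cong {}^rV_l \oplus i{}^rV_l$ of each restricted Wigner representation $\slimrealD{l}$ into two isomorphic real irreducible summands was exhibited explicitly via the base change $B_l$. So the argument will consist of a single line: invoke Proposition \ref{all_real_real} with its hypothesis supplied by Lemma \ref{all_complex_real}.

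There is no real obstacle here, since the two cited statements combine mechanically. I will, however, note for the reader that this corollary gives an alternative, conceptual derivation of Proposition \ref{endomorphisms_to_use}: being of real type is defined in Definition \ref{def_quaternionic_type} precisely by $\End_{G,\R}(V) \cong \R$, so the one-dimensionality of $\End_{\SO3,\R}({}^rV_J)$ that was proved directly by complexifying and applying Schur's Lemma \ref{Schur} is recovered here as a formal consequence of the classification. It may also be worth flagging that the corollary, together with Proposition \ref{all_extension_real_irr}, implies that the extension $e({}^rV_l)$ is an irreducible complex representation and, up to isomorphism, recovers $D_l$, which closes the loop between the real and complex pictures of $\widehat{\SO3}$ we have been developing.
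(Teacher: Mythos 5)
Your argument is exactly the paper's: the corollary follows by applying Proposition \ref{all_real_real} to $G=\SO3$, with its hypothesis supplied by Lemma \ref{all_complex_real}. The additional remarks you make are correct but not needed for the proof itself.
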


\begin{proof}
This follows directly from Lemma \ref{all_complex_real} and Proposition \ref{all_real_real}.
\end{proof}

\begin{pro}\label{theorememememe}
The $\slimrealD{l}: \SO{3} \to \O{{}^r V_l}$ are, up to equivalence, all real irreducible representations of $\SO{3}$.
\end{pro}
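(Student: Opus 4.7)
The plan is to show that any real irreducible representation $\rho: \SO{3} \to \O{V}$ must be equivalent to some $\slimrealD{l}$, by moving back and forth between the real and complex settings via the restriction functor $r$ and extension functor $e$ of Definition \ref{restriction_and_extension}, and then applying Krull-Remak-Schmidt to conclude.

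First I would invoke Corollary \ref{all_real_so3_real} to note that $\rho$ is automatically of real type. Then, by Proposition \ref{all_extension_real_irr}, the extension $e(\rho): \SO{3} \to \GL(e(V))$ is an irreducible \emph{complex} representation of $\SO{3}$. Since the Wigner D-matrices $D_l$ (and equivalently the $\slimrealD{l}$ viewed as complex representations via Lemma \ref{we_get_rep}) exhaust the isomorphism classes of complex irreducible representations of $\SO{3}$ as discussed in Section \ref{wigner_matrices}, there exists some $l \in \N_{\geq 0}$ with $e(\rho) \cong \slimrealD{l}$ as complex representations.

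Next I would apply the restriction functor $r$ to both sides of this isomorphism, which preserves isomorphism when interpreted at the level of real representations. On the one hand, Proposition \ref{almost_inverse} gives $r(e(\rho)) \cong \rho \oplus \rho$. On the other hand, the key computation in the proof of Lemma \ref{all_complex_real} establishes exactly that $r(\slimrealD{l}) \cong \slimrealD{l} \oplus \slimrealD{l}$, where on the right we view $\slimrealD{l}$ as a real representation on ${}^r V_l$. Combining these two identifications yields the isomorphism of real orthogonal representations
\begin{equation*}
    \rho \oplus \rho \ \cong\ \slimrealD{l} \oplus \slimrealD{l}.
\end{equation*}

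Finally, since both $\rho$ and $\slimrealD{l}$ are irreducible real orthogonal representations (the latter by Lemma \ref{irreducible_orthogonal}), the Krull-Remak-Schmidt Theorem \ref{Krull-Remak-Schmidt} guarantees the uniqueness of the isomorphism types of the irreducible summands on both sides. Matching types forces $\rho \cong \slimrealD{l}$ as real representations, which is what we wanted to show. The main conceptual obstacle is not any single computation but rather setting up the bookkeeping between $r$ and $e$ correctly; the technical core, namely that restriction of a complex irrep of real type splits as two copies of the same real irrep, has already been carried out in Lemma \ref{all_complex_real}, so here it only needs to be invoked.
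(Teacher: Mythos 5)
Your proposal is correct and follows essentially the same route as the paper's own proof: real type via Corollary \ref{all_real_so3_real}, extension to an irreducible complex representation via Proposition \ref{all_extension_real_irr}, identification with a Wigner D-matrix representation, then restriction combined with Proposition \ref{almost_inverse} and the splitting from the proof of Lemma \ref{all_complex_real} to get $\rho \oplus \rho \cong \slimrealD{l} \oplus \slimrealD{l}$, finished off by Krull-Remak-Schmidt. No gaps.
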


\begin{proof}
Assume that $\rho: \SO{3} \to \GL(V)$ is an irreducible real representation of $\SO{3}$. It is of real type by Corollary \ref{all_real_so3_real}. By Proposition \ref{all_extension_real_irr}, the extension $e(\rho): G \to \GL(e(V))$ is an irreducible complex representation. Since the $\slimrealD{l}$ give us all complex irreducible representations up to equivalence by Section \ref{wigner_matrices} and Lemma \ref{we_get_rep}, there is an equivalence of complex $\SO{3}$-representations $e(V) \cong \C^{2l+1}$ for some $l$. Since functors respect isomorphisms (and equivalences are isomorphisms in the categories of $G$-representations) and the restriction operation is a functor,\footnote{The reader does not need to know what a functor is if he or she believes these statements.} and using Proposition \ref{almost_inverse} as well as the proof of Lemma \ref{all_complex_real} we obtain:
\begin{equation*}
V \oplus V \cong r(e(V)) \cong r(\C^{2l+1}) \cong {}^r V_l \oplus i{}^r V_l = {}^r V_l \oplus {}^r V_l.
\end{equation*}
Using the Krull-Remak-Schmidt Theorem \ref{Krull-Remak-Schmidt}, we see that there is an isomorphism of $\SO{3}$-representations $V \cong {}^r V_l$. This finishes the proof.
\end{proof}

\subsubsection{The Clebsch-Gordan Decomposition}

We are almost there. The only thing left to understand is the Clebsch-Gordan decomposition. Remember the following from Section \ref{clebsch_gordan_so3}: For the \emph{complex} irreducible representations there are decompositions
\begin{equation*}
V_j \otimes V_l \cong \bigoplus_{J = |l-j|}^{l+j} V_J
\end{equation*}
where on each space, the representations $D_j$, $D_l$ and $D_J$ are given by the Wigner D-matrices. Furthermore, the Clebsch-Gordan coefficients are all real. Now, we know that $\slimrealD{l}$ is, as a complex representation, isomorphic to $D_l$ by Lemma \ref{we_get_rep}, and such a representation then acts on $\C^{2l+1}$ as well. Consequently, we also get the decomposition
\begin{equation*}
\C^{2j+1} \otimes \C^{2l+1} \cong \bigoplus_{J = |l-j|}^{l+j} \C^{2J+1}
\end{equation*}
of the complex representations $\slimrealD{j}$ and $\slimrealD{l}$. Obviously, the Clebsch-Gordan coefficients can be chosen to be exactly the same as before, and thus they are again real.

Let the above isomorphism be called $f$. Now, we can view all involved vector spaces as $\R$-vector spaces as well. Furthermore, we have subspaces ${}^r V_j = \R^{2j+1}$, ${}^r V_l = \R^{2l+1}$ and ${}^r V_J = \R^{2J+1}$ which are also invariant under the representations $\slimrealD{j}$, $\slimrealD{l}$ and $\slimrealD{J}$. Consequently, we can just restrict the isomorphism above to a map
\begin{equation*}
f|: {}^r V_j \otimes {}^r V_l \to \bigoplus_{J = |l-j|}^{l+j} {}^r V_J.
\end{equation*}
which is well-defined since the Clebsch-Gordan coefficients are real. It needs to be injective, since it is a restriction of an isomorphism. For dimension reasons, the restriction then needs to be an isomorphism, and obviously, it has the exact same Clebsch-Gordan coefficients as the original map $f$.\footnote{The reason for this is that the standard basis vectors in $\C^{k}$ which are used for the Clebsch-Gordan coefficients are exactly the standard basis vectors in $\R^{k} \subseteq \C^{k}$ by definition of this embedding.}

\subsubsection{Bringing Everything Together}

By what we've shown in the last sections, we see that the situation is basically the same as in Section \ref{everything_together}. The only thing that changes is that we now use the \emph{real} spherical harmonics, and therefore the complex conjugation disappears. What this overall means is the following: let $\slimrealD{l}: \SO{3} \to \O{{}^r V_l}$ and $\slimrealD{J}: \SO{3} \to \O{{}^r V_J}$ be the representations determining the input and output fields. Then a basis for steerable kernels $K: S^2 \to \Hom_{\R}({}^r V_l, {}^r V_J)$ is given by kernels $K_j: S^2 \to \Hom_{\R}({}^r V_l, {}^r V_J)$ for all $|l-J| \leq j \leq l+J$. The matrix elements are given by
\begin{equation}\label{final_result}
\left\langle JM \middle| K_j(x) \middle| ln \right\rangle = 
\sum_{m = -j}^{j} \left\langle JM \middle| jm;ln \right\rangle \cdot {}^r Y_j^m(x).
\end{equation}

\subsection{$\O{3}$-Steerable Kernels for Complex Representations}\label{o3-ex}

In this section, we deal with $\O{3}$-equivariant kernels for complex representations and then, in the next section, will transport the results over to real representations. In the earlier examples, we saw that the Peter-Weyl decomposition of $\Ltwo{\K}{X}$ always contained each irreducible representation of the symmetry group exactly once. The example of $\O{3}$ is the first in which this is not the case: \emph{parity} will play a role in determining which irreducible representations make their way in the space of square-integrable functions and which do not. Overall, we hope that the example of $\O{3}$ is a sufficient justification for our use of the multiplicities $m_j$ of irreducible representations that we considered in all our theorems. $\O{3}$-equivariant networks are to the best of our knowledge not described in any published work yet.

\subsubsection{The Irreducible Representations of $\O{3}$}

The most important observation is the following, after which we can deduce the irreducible representations of $\O{3}$ from those of $\SO{3}$:

\begin{lem}\label{an_important_isomorphism}
Let $\Z_2 \coloneqq (\{-1, +1\}, \cdot)$ be the group with two elements. Then the map
\begin{equation*}
\cdot: \Z_2 \times \SO{3} \to \O{3}, \ \ (s, g) \mapsto sg
\end{equation*}
is an isomorphism of groups.
\end{lem}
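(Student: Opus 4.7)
The plan is to verify the four standard requirements for a group isomorphism in order, exploiting the fact that in odd dimension $3$ the scalar $-1$ acts on $3\times 3$ matrices by flipping the determinant.

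First, I would check that the map is well-defined, i.e.\ that $sg \in \O{3}$ whenever $s \in \Z_2$ and $g \in \SO{3}$. This is immediate since $sg = (sI)g$ is a product of orthogonal matrices ($sI$ being orthogonal because $s = \pm 1$). Next, I would verify the homomorphism property: because $sI$ is a scalar matrix and therefore commutes with every element of $\SO{3}$, one has $(s_1 g_1)(s_2 g_2) = s_1 s_2 \, g_1 g_2$, which is exactly the image of $(s_1 s_2,\, g_1 g_2)$, the product in $\Z_2 \times \SO{3}$.

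For injectivity, suppose $sg = I$. Taking determinants gives $s^3 \det(g) = \det(I) = 1$. Since $\det(g) = 1$ and $s^3 = s$ in $\{-1,+1\}$, this forces $s = 1$, and then $g = I$. For surjectivity, given $h \in \O{3}$, I split on the determinant: if $\det(h) = 1$ then $h \in \SO{3}$ and $h$ is the image of $(1, h)$; if $\det(h) = -1$ then $-h = (-1)h$ has determinant $(-1)^3 \cdot (-1) = 1$, so $-h \in \SO{3}$, and $h$ is the image of $(-1, -h)$. Here the use of $(-1)^3$ (odd dimension) is the only place dimension $3$ actually matters; this is why an analogous statement fails for $\O{2}$.

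No real obstacle arises; the whole argument is one short verification resting on the commutativity of scalar matrices and on the odd-dimension sign behavior of the determinant. For completeness I would add that the map is also a homeomorphism (both components are continuous, as are the component inverses $h \mapsto \det(h)$ and $h \mapsto \det(h)\cdot h$), so the isomorphism is additionally an isomorphism of topological groups, which is what the subsequent representation-theoretic arguments for $\O{3}$ will need.
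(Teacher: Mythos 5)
Your proof is correct and follows essentially the same route as the paper: the homomorphism property via commutativity of the scalar matrices $\pm I$, and bijectivity via the determinant splitting $\O3 = \SO3 \sqcup (-1)\cdot\SO3$, which relies on the dimension being odd. Your added remark that the map is also a homeomorphism (hence an isomorphism of topological groups) is a worthwhile detail the paper leaves implicit.
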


\begin{proof}
It is a group homomorphism since $s \in \{-1, +1\}$ can be represented by a multiple of the identity matrix, and as such it commutes with every matrix $g$. That $\cdot$ is an isomorphism follows since all matrices in $\O{3}$ either have determinant $1$ or $-1$. The matrices with determinant $1$ form $\SO{3}$ and are the image of $\{+1\} \times \SO{3}$. The matrices with determinant $-1$ are the image of $\{-1\} \times \SO{3}$. 
\end{proof}

Note the fact that for $g \in \SO{3}$, $-g$ has determinant $-1$, which we used in the proof. This does only hold for $g \in \SO{d}$ with $d$ being \emph{odd}. Therefore, the above lemma is not true for $d$ even. In the even case, we obtain a \emph{semidirect} product and the story complicates somewhat.

Earlier, we already considered tensor product representations of one and the same group. A related notion is that of tensor product representations of two different groups:\footnote{It is not a direct generalization due to the presence of two different group elements being applied.}

\begin{dfn}[Tensor Product Representation]
Let $G$ and $H$ be two compact groups. Let $\rho_G: G \to \GL(V_G)$ and $\rho_H: H \to \GL(V_H)$ be representations of the two groups $G$ and $H$. Then the \emph{tensor product representation} is given by
\begin{align*}
\rho_G \otimes \rho_H: G \times H & \to \GL(V_G \otimes V_H), \\
\big[\left(\rho_G \otimes \rho_H\right)(g,h)\big](v_G \otimes v_H) & \coloneqq \rho_G(g)(v_G) \otimes \rho_H(h)(v_H).
\end{align*}
This is again a linear representation.
\end{dfn}

\begin{pro}\label{product_groups_representations}
Representatives of isomorphism classes of irreducible representations of $G \times H$ are given precisely by all the $\rho_G \otimes \rho_H$, where $\rho_G$ and $\rho_H$ run through representatives of isomorphism classes of irreducible representations of $G$ and $H$, respectively.
\end{pro}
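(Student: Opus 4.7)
The plan is to split the proof into three parts: (i) each tensor product $\rho_G \otimes \rho_H$ of irreducible representations is itself irreducible, (ii) every irreducible representation of $G \times H$ is isomorphic to such a tensor product, and (iii) the factors are determined up to isomorphism. Since the current section works over $\K = \C$, Schur's Lemma \ref{Schur} is available throughout.

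For (i), I would compute $\End_{G \times H, \C}(V_G \otimes V_H)$ directly. Given an endomorphism $\varphi$, I would fix an orthonormal basis $v_1, \dots, v_n$ of $V_H$ and decompose $\varphi$ via the $G$-equivariant identifications $V_G \otimes v_i \cong V_G$; the requirement that $\varphi$ commute with $\rho_G(g) \otimes \ID$ for all $g$ forces each ``block'' $\varphi_{ij}: V_G \to V_G$ to lie in $\End_{G,\C}(V_G) = \C \cdot \ID_{V_G}$ by Schur. Thus $\varphi = \ID_{V_G} \otimes L$ for some $L \in \End_{\C}(V_H)$, and the remaining requirement that $\varphi$ commute with $\ID \otimes \rho_H(h)$ reduces to $L \in \End_{H,\C}(V_H) = \C \cdot \ID_{V_H}$, by Schur again. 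Hence $\End_{G \times H, \C}(V_G \otimes V_H) = \C \cdot \ID$. Irreducibility now follows from Proposition \ref{perpendicular_direct_sum}: if the tensor product split nontrivially as a unitary representation, the orthogonal projection onto a proper direct summand would be a non-scalar equivariant endomorphism, contradicting the one-dimensionality.

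For (ii), let $\rho: G \times H \to \U{V}$ be irreducible; $V$ is then finite-dimensional by Proposition \ref{irreps_finite_dim}. I would restrict along the inclusion $G \hookrightarrow G \times H$ and use the canonical evaluation isomorphism
\[
V \;\cong\; \bigoplus_{\alpha \in \widehat{G}}\, V_\alpha \otimes \Hom_{G,\C}(V_\alpha, V), \qquad v_\alpha \otimes f \mapsto f(v_\alpha).
\]
Because the $H$-action on $V$ commutes with the $G$-action, it preserves each $G$-isotypic component and induces a well-defined $H$-representation on $W_\alpha \coloneqq \Hom_{G,\C}(V_\alpha, V)$ via $(h \cdot f)(v_\alpha) \coloneqq \rho(e,h)(f(v_\alpha))$. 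After unwinding, the displayed isomorphism becomes $G \times H$-equivariant for this $H$-structure on the right. Irreducibility of $V$ then forces exactly one $\alpha$ to survive, and the corresponding $W_\alpha$ to be $H$-irreducible (otherwise any nontrivial $H$-subrepresentation of $W_\alpha$ would yield a nontrivial $G \times H$-subrepresentation of $V_\alpha \otimes W_\alpha$). This identifies $V \cong V_\alpha \otimes W_\alpha$ as desired.

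For (iii), if $\rho_G \otimes \rho_H \cong \rho_G' \otimes \rho_H'$ as $G \times H$-representations, then restricting to $G$ gives $V_G^{\dim V_H} \cong V_G'^{\dim V_H'}$ as $G$-representations; by Krull--Remak--Schmidt (Proposition \ref{Krull-Remak-Schmidt}), $V_G \cong V_G'$ and $\dim V_H = \dim V_H'$, and the symmetric argument on $H$ gives $V_H \cong V_H'$. The main obstacle I anticipate is in step (ii): rigorously verifying that the evaluation isomorphism is genuinely $G \times H$-equivariant (and not merely $G$-equivariant) for the chosen $H$-structure on $W_\alpha$ requires careful bookkeeping, since on the left-hand side $G$ and $H$ act on the same space $V$ via commuting actions, whereas on the right-hand side they act on different tensor factors.
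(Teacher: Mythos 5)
Your argument is correct, but it is not the route the paper takes: the paper does not prove this proposition at all, instead deferring to \citet{broecker}, Chapter II, Propositions $4.14$ and $4.15$, whose proofs run through character theory (irreducibility of $V_G \otimes V_H$ via $\int_{G\times H}|\chi_{V_G\otimes V_H}|^2 = \int_G|\chi_{V_G}|^2\cdot\int_H|\chi_{V_H}|^2 = 1$, and exhaustiveness via completeness of characters). You replace this with a direct, character-free argument: an endomorphism-algebra computation showing $\End_{G\times H,\C}(V_G\otimes V_H)=\C\cdot\ID$ (which, combined with the existence of equivariant orthogonal projections onto invariant subspaces as in Proposition~\ref{perpendicular_direct_sum}, gives irreducibility), the canonical isotypic decomposition $V\cong\bigoplus_\alpha V_\alpha\otimes\Hom_{G,\C}(V_\alpha,V)$ with the commuting $H$-action on the multiplicity spaces for exhaustiveness, and Krull--Remak--Schmidt (Proposition~\ref{Krull-Remak-Schmidt}) applied to the restrictions for uniqueness of the factors. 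What your approach buys is consistency with the paper's toolkit, which never develops characters, and transparency about exactly where algebraic closedness enters: both your step (i) and the dimension count behind the evaluation isomorphism in step (ii) invoke the complex Schur Lemma~\ref{Schur}, which is precisely the dependence the paper flags immediately after the proposition as the reason the statement does not carry over to $\R$ (and indeed step (i) visibly fails there, e.g.\ $V_l\otimes V_l\cong V_0^2\oplus V_{2l}$ for the real irreps of $\SO2$). The character-theoretic proof is shorter once orthogonality relations are available, but buys nothing here since that machinery is absent from the paper. The bookkeeping you flag in step (ii) is routine and poses no obstacle.
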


\begin{proof}
This is proven in chapter II, Proposition $4.14$ and $4.15$ of~\citet{broecker}.
\end{proof}

It is important to note that the proof of the above proposition uses the property of the complex numbers to be algebraically closed in crucial steps, and therefore it is unclear how exactly a generalization to representations over the real numbers looks like. Therefore, we will not use the above proposition in our later considerations for real representations of $\O{3}$.

However, in our current situation, we can apply it without problems. This proposition, together with Lemma \ref{an_important_isomorphism}, suggests that we should understand the irreducible representations of $\Z_2$. We already saw this for real representations before and essentially obtain the same result: 
 
\begin{lem}\label{irreps_of_Z_2}
The irreducible representations of $\Z_2$ are up to equivalence precisely the following two, which we state for simplicity only on the generator:
\begin{align*}
& \rho_{+}: \Z_2 \to \GL(\C), \ \ \rho_{+}(-1) = \ID_{\C} \\
& \rho_{-}: \Z_2 \to \GL(\C), \ \ \rho_{-}(-1) = - \ID_{\C}.
\end{align*}
\end{lem}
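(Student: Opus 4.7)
The plan is to mirror the argument already carried out over $\R$ in Section~\ref{irreps_of_Z/2}, since that proof relies only on the fact that an involution satisfies $x^{2}-1 = (x-1)(x+1) = 0$, a polynomial which splits with distinct roots over any field of characteristic zero, in particular over $\C$. Alternatively, one could invoke Schur's Lemma~\ref{Schur} directly: since $\Z_{2}$ is abelian, every $\rho(g)$ is an endomorphism, so $\rho(-1) = \lambda \cdot \ID_{V}$ for some $\lambda \in \C$, and the relation $\rho(-1)^{2} = \ID_{V}$ forces $\lambda \in \{+1,-1\}$. I will follow the first, more elementary route, since it is self-contained.

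Concretely, I would start by letting $\rho: \Z_{2} \to \GL(V)$ be an arbitrary complex irreducible representation and observing that $\rho(-1)^{2} = \rho(1) = \ID_{V}$, so that $\rho(-1)$ is an involution satisfying the factored identity
\begin{equation*}
    \bigl(\rho(-1) - \ID_{V}\bigr) \circ \bigl(\rho(-1) + \ID_{V}\bigr) \ =\ 0.
\end{equation*}
Since the minimal polynomial of $\rho(-1)$ divides $(x-1)(x+1)$ and has distinct roots, $\rho(-1)$ is diagonalizable with eigenvalues in $\{+1,-1\}$. Both eigenspaces are clearly $\Z_{2}$-invariant subspaces of $V$, so irreducibility of $\rho$ forces one of them to equal all of $V$; equivalently, either $\rho(-1) = \ID_{V}$ or $\rho(-1) = -\ID_{V}$. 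In either situation every one-dimensional subspace of $V$ is invariant, so irreducibility forces $\dim V = 1$, allowing us to identify $V \cong \C$.

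Finally I would conclude by observing that the two representations $\rho_{+}$ and $\rho_{-}$ obtained this way are not equivalent, since any intertwiner $f: \C \to \C$ would be multiplication by some $c \in \C$, and the condition $c \cdot \rho_{+}(-1) = \rho_{-}(-1) \cdot c$ reduces to $c = -c$, forcing $c = 0$. Hence every irreducible complex representation of $\Z_{2}$ is equivalent to exactly one of $\rho_{+}$ and $\rho_{-}$, proving the lemma. There is no real obstacle in this argument; the only subtlety worth flagging is the initial diagonalizability step, which is why making the factorization of $x^{2}-1$ explicit is a cleaner path than invoking general spectral theorems.
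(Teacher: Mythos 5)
Your proof is correct and follows essentially the same route as the paper, which simply refers back to the involution argument of Section~\ref{irreps_of_Z/2}: factor $\rho(-1)^2 - \ID_V = 0$, use irreducibility to force $\dim V = 1$ and $\rho(-1) = \pm\ID_V$. The added check that $\rho_+$ and $\rho_-$ are inequivalent is a small but welcome completion of the statement "up to equivalence precisely the following two."
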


\begin{proof}
This can be shown in exactly the same way as in Section \ref{irreps_of_Z/2}.
\end{proof}

Thus we are ready to state our result about the irreducible representations of $\O{3}$:

\begin{pro}\label{irreps_of_o3}
The irreducible representations of $\O{3}$ are up to equivalence given as follows: for each $l \in \N_{\geq 0}$ there are precisely two representations $D_{l+}: \O{3} \to \U{V_{l+}}$ and $D_{l-}: \O{3} \to \U{V_{l-}}$ with $V_{l+} = \C^{2l+1} = V_{l-}$, given as follows:
\begin{align*}
& D_{l+}(sg) =  D_l(g) \text{ \ \ for all } s \in \Z_2, \ g \in \SO{3}. \\
& D_{l-}(sg) = s D_l(g) \text{ for all } s \in \Z_2, \ g \in \SO{3}.
\end{align*}
\end{pro}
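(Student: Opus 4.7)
The plan is to chain together the three tools already assembled: the identification $\O{3} \cong \Z_2 \times \SO{3}$ from Lemma \ref{an_important_isomorphism}, the classification of irreps of a product group over $\C$ from Proposition \ref{product_groups_representations}, and the lists of irreps of the two factors (Lemma \ref{irreps_of_Z_2} for $\Z_2$, and Section \ref{wigner_matrices} together with Section \ref{clebsch_gordan_so3} for $\SO{3}$). Once these are in place, the statement reduces to a computation showing that the representations $D_{l\pm}$ named in the proposition are exactly the tensor product representations one obtains via the isomorphism, and that no two of them are equivalent.

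First I would transfer all irreps of $\O{3}$ to irreps of $\Z_2 \times \SO{3}$ by precomposing with the inverse of the multiplication map $(s,g) \mapsto sg$ from Lemma \ref{an_important_isomorphism}, which is a continuous group isomorphism and hence sets up a bijection between isomorphism classes of irreducible unitary representations of the two groups. Applying Proposition \ref{product_groups_representations} together with the classifications of irreps of $\Z_2$ and of $\SO{3}$ (over $\C$), the irreps of $\Z_2 \times \SO{3}$ up to equivalence are precisely the tensor products $\rho_\epsilon \otimes D_l$ for $\epsilon \in \{+,-\}$ and $l \in \N_{\geq 0}$, on the spaces $\C \otimes \C^{2l+1} \cong \C^{2l+1}$.

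Next I would unwind the tensor product along the isomorphism. For $s \in \Z_2$ and $g \in \SO{3}$, the tensor product representation evaluated on the pair $(s,g)$ acts on a generator $z \otimes v$ as $\rho_\epsilon(s)(z) \otimes D_l(g)(v)$; under the canonical identification $\C \otimes \C^{2l+1} \cong \C^{2l+1}$ this is simply scalar multiplication by $\rho_\epsilon(s) \in \{+1,-1\}$ followed by $D_l(g)$. Transporting back to $\O{3}$ via $sg \leftrightarrow (s,g)$ gives exactly the formulas in the statement: $\epsilon = +$ yields $D_{l+}(sg) = D_l(g)$, and $\epsilon = -$ yields $D_{l-}(sg) = s \cdot D_l(g)$. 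I should also briefly check that these are well-defined group homomorphisms $\O{3} \to \U{\C^{2l+1}}$ (using that every element of $\O{3}$ has a \emph{unique} factorization $sg$ with $s \in \Z_2$ and $g \in \SO{3}$, which is the content of the isomorphism), and that they are continuous and unitary, which is immediate from the corresponding properties of the factors.

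Finally I would argue inequivalence and exhaustiveness. Exhaustiveness is automatic: we have produced one representation of $\O{3}$ for each pair $(\epsilon,l)$, and these are precisely all isomorphism classes of irreps of $\Z_2 \times \SO{3}$, transported across the group isomorphism. For inequivalence, pairs differing in $l$ are inequivalent already when restricted to $\SO{3}$ (they are the inequivalent $D_l$ and $D_{l'}$), while for fixed $l$ the representations $D_{l+}$ and $D_{l-}$ differ by a sign at the parity element $-\ID \in \O{3}$, so any intertwiner would have to commute with both $D_l(g)$ and both $\pm\ID$, which forces it to vanish. The main thing to be careful about is the invocation of Proposition \ref{product_groups_representations}: it uses algebraic closedness of $\C$ and so is available here, but the authors have flagged that this argument will \emph{not} transfer verbatim to $\R$, so I would emphasize that this proof is valid only over $\C$ and defer the real case to the next section.
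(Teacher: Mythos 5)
Your proposal is correct and follows essentially the same route as the paper: transfer along the isomorphism $\O{3} \cong \Z_2 \times \SO{3}$, invoke Proposition \ref{product_groups_representations} together with the irrep lists for $\Z_2$ and $\SO{3}$, and identify $\rho_\pm \otimes D_l$ with $D_{l\pm}$ via the canonical map $z \otimes v \mapsto zv$, which is exactly the intertwiner the paper writes down. The extra checks you add (well-definedness via unique factorization $sg$, and the direct inequivalence argument for $D_{l+}$ versus $D_{l-}$) are harmless refinements of what the paper already gets for free from the product-group classification.
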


\begin{proof}
Remember from Section \ref{wigner_matrices} that the irreducible representations of $\SO{3}$ are given by the Wigner D-matrices $D_l$. From Lemma \ref{irreps_of_Z_2} we know that the irreducible representations of $\Z_2$ are given by $\rho_{+}$ and $\rho_{-}$. From the isomorphism $\O{3} \cong \Z_2 \times \SO{3}$ from Lemma \ref{an_important_isomorphism} and from Proposition \ref{product_groups_representations} we thus obtain that the irreducible representations of $\O{3}$ are precisely given by all $\rho_{+} \otimes D_l$ and $\rho_{-} \otimes D_l$.
We now show that $\rho_{-} \otimes D_l$ is equivalent to $D_{l-}$: We have
\begin{equation*}
\rho_{-} \otimes D_{l}: \O{3} \to \GL(\C \otimes V_l), \ \big[(\rho_{-} \otimes D_{l})(sg)\big](z \otimes v) = sz \otimes \left[D_l(g)\right](v).
\end{equation*}
Now, consider the linear isomorphism $f: \C \otimes V_l \to V_{l+}$, $z \otimes v \mapsto zv$. We only need to check that it is equivariant and are then done:
\begin{align*}
f\big(\left[(\rho_{-} \otimes D_l)(sg)\right](z \otimes v)\big) & =  f \big( sz \otimes [D_l(g)](v)\big)\\
& = sz \cdot \left[ D_l(g)\right](v) \\
& = [s D_l(g)](zv) \\
& = [D_{l-}(sg)](f(z \otimes v)).
\end{align*}
The statement about $D_{l+}$ can be shown using the exact same map $f$.
\end{proof}

\subsubsection{The Peter-Weyl Theorem for $\Ltwo{\C}{S^2}$ as Representation of $\O{3}$}\label{pw_decomposition_o3}

The considerations in this section follow almost entirely from Section \ref{pwso3}. There we saw that, as a representation over $\SO{3}$, we have a decomposition
\begin{equation*}
\Ltwo{\C}{S^2} = \widehat{\bigoplus_{l \geq 0}} V_{l1}
\end{equation*}
with the spaces $V_{l1}$ being spanned by the spherical harmonics $Y_l^n$, $n = -l, \dots, l$. We immediately see that in $\Ltwo{\C}{S^2}$, viewed as a representation over $\O{3}$, there is \emph{not enough space} for all the irreducible representations, since they appear in pairs as shown in Proposition \ref{irreps_of_o3}.\footnote{With this, we mean the following: the irreducible representations of $\SO{3}$ already cover $\Ltwo{\C}{S^2}$. $\O{3}$ has \emph{even more} irreducible representations than $\SO{3}$, so it is a priori clear that they cannot all fit into $\Ltwo{\C}{S^2}$.} Thus, we need to figure out which irreducible representations are present and which are not. The core of this question is answered by the following proposition:

\begin{lem}[Parity in spherical harmonics]\label{parity_in_harmonics}
The spherical harmonics obey the following parity rules:
\begin{equation*}
Y_l^n(sx) = s^{l} \cdot Y_l^n(x)
\end{equation*}
for all $l \geq 0$, $n = -l, \dots, l$, $s \in \Z_2$ and $x \in S^2$.
\end{lem}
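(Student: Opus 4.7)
The plan is to combine representation-theoretic reasoning with one explicit computation to pin down the sign. Concretely, I would proceed as follows.

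First, I would observe that the antipodal map $x \mapsto -x$ on $S^2$ commutes with the $\SO{3}$-action, since the scalar $-1$ (viewed as $-I \in \O{3}$) is central in $\O{3}$. Consequently, the induced map
\begin{equation*}
P : \Ltwo{\C}{S^2} \to \Ltwo{\C}{S^2}, \qquad [P(f)](x) := f(-x),
\end{equation*}
is a continuous, linear, $\SO{3}$-equivariant involution on the regular representation $\lambda$.

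Second, I would invoke the Peter-Weyl decomposition from Section~\ref{pwso3}, $\Ltwo{\C}{S^2} = \widehat{\bigoplus}_{l \geq 0} V_{l1}$, in which each $\SO{3}$-irrep $V_l$ appears with multiplicity exactly one. Since $P$ is $\SO{3}$-equivariant, the Krull-Remak-Schmidt type uniqueness of isotypic components (Proposition~\ref{Krull-Remak-Schmidt} together with Schur-orthogonality) forces $P$ to map each $V_{l1}$ into itself. The restriction $P|_{V_{l1}}$ is then an $\SO{3}$-endomorphism of an irreducible \emph{complex} representation, so by Schur's Lemma~\ref{Schur} it equals $\alpha_l \cdot \ID_{V_{l1}}$ for some $\alpha_l \in \C$, and since $P^2 = \ID$ we must have $\alpha_l \in \{+1,-1\}$. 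Thus $Y_l^n(-x) = \alpha_l \, Y_l^n(x)$ with $\alpha_l \in \{\pm 1\}$ independent of $n$.

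Third, the main (and only genuinely computational) obstacle is identifying $\alpha_l = (-1)^l$. For this I would use the classical fact that each $Y_l^n$ is the restriction to $S^2$ of a harmonic homogeneous polynomial $P_l^n$ of degree $l$ on $\R^3$. Homogeneity gives $P_l^n(-x) = (-1)^l P_l^n(x)$ for all $x \in \R^3$, and restricting to $x \in S^2$ yields $Y_l^n(-x) = (-1)^l Y_l^n(x)$. Since the lemma is stated for both values $s = \pm 1$, and the case $s = +1$ is trivial, combining the two cases gives the unified formula $Y_l^n(sx) = s^l \, Y_l^n(x)$ claimed in the statement.

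If one prefers not to invoke the harmonic-polynomial description, an alternative for step three is to evaluate both sides at a convenient antipodal pair on $S^2$ for a single basis element, e.g.\ $Y_l^0$ along the $z$-axis, whose explicit form involves a Legendre polynomial $P_l(\cos\theta)$ with $P_l(-t) = (-1)^l P_l(t)$; by the scalar-multiple conclusion of step two, determining the sign for one element determines it for all of $V_{l1}$. Either route makes step three the substantive part of the argument, while steps one and two are a short representation-theoretic bookkeeping argument.
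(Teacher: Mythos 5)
Your proof is correct, but it does considerably more than the paper, which simply dismisses the lemma with ``this is a well-known property of the spherical harmonics'' and gives no argument. Your route is a genuine derivation that fits naturally into the paper's own machinery: the antipodal map $P$ is $\SO{3}$-equivariant because $-I$ is central in $\O{3}$; multiplicity-one of each irrep in the Peter--Weyl decomposition of $\Ltwo{\C}{S^2}$ (via Lemma~\ref{perp}/Schur) forces $P$ to preserve each $V_{l1}$; and Schur's Lemma~\ref{Schur} plus $P^2=\ID$ pins $P|_{V_{l1}}$ down to $\pm\ID_{V_{l1}}$ \emph{independently of the choice of basis} $Y_l^n$ --- which is a nice bonus, since the abstract properties listed in Section~\ref{pwso3} do not by themselves single out the $Y_l^n$. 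The only external input is then the sign $\alpha_l=(-1)^l$, for which either of your two computations (homogeneous harmonic polynomials of degree $l$, or the Legendre parity $P_l(-t)=(-1)^lP_l(t)$ applied to $Y_l^0$) suffices, and the reduction in your second step correctly propagates the sign from one nonvanishing basis element to the whole of $V_{l1}$. In short: what the paper imports as folklore, you prove, at the cost of one classical fact about the concrete realization of spherical harmonics; the representation-theoretic bookkeeping is sound.
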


\begin{proof}
This is a well-known property of the spherical harmonics.
\end{proof}

Thus, together with Section \ref{pwso3} we get the following transformation behavior of spherical harmonics under the group $\O{3}$, where $s \in \Z_2$ and $g \in \SO{3}$:
\begin{align*}
\lambda(sg)(Y_l^n) & = s^l \lambda(g)(Y_l^n) \\
& = s^l \sum_{n' = -l}^{l} D_l^{n'n}(g) Y_l^{n'} \\
& = \sum_{n' = -l}^{l} \pig(s^l D_l^{n'n}(g)\pig) Y_l^{n'} \\
& = 
\begin{cases}
\sum_{n' = -l}^{l} D_{l+}^{n'n}(sg) Y_l^{n'}, \ l \text{ even} \\
\sum_{n' = -l}^{l} D_{l-}^{n'n}(sg) Y_l^{n'}, \ l \text{ odd}.
\end{cases}
\end{align*}
Thus, we obtain the following decomposition of $\Ltwo{\C}{S^2}$:
\begin{equation*}
\Ltwo{\C}{S^2} = \widehat{\bigoplus_{\substack{l \geq 0 \\ l \text{ even }}}} V_{l1+} \oplus
\widehat{\bigoplus_{\substack{l \geq 0 \\ l \text{ odd }}}} V_{l1-}.
\end{equation*}
Here, $V_{l1+}$ and $V_{l1-}$ are generated from the spherical harmonics of order $l$ and we have $V_{l1+} \cong V_{l+}$ and $V_{l1-} \cong V_{l-}$ as representations according to the transformation behavior we saw above.

\subsubsection{The Clebsch-Gordan Decomposition}\label{clebsch_gordanananana}

Remember from Section \ref{clebsch_gordan_so3} that we have a decomposition of $\SO{3}$-representations
\begin{equation*}
V_j \otimes V_l \cong \bigoplus_{J = |l-j|}^{l+j} V_J
\end{equation*}
given by real Clebsch-Gordan coefficients. Now for $\O{3}$, remember that as vector spaces we have for all $j$ (and equally for $l$ and $J$) equalities $V_{j} = V_{j-} = V_{j+}$, and so we guess that in the isomorphism above, we just need to figure out the correct signs in order to be compatible with the corresponding representations. The idea is that ``multiplying the signs at the left'' should lead to the ``sign at the right'', and this paradigm leads us to believe that there are the following isomorphisms:
\begin{align*}
V_{j+} \otimes V_{l+} & \cong \bigoplus_{J = |l-j|}^{l+j} V_{J+}, \ \ \ \ \ 
V_{j+} \otimes V_{l-} \cong \bigoplus_{J = |l-j|}^{l+j} V_{J-}, \\
V_{j-} \otimes V_{l+} & \cong \bigoplus_{J = |l-j|}^{l+j} V_{J-}, \ \ \ \ \ 
V_{j-} \otimes V_{l-} \cong \bigoplus_{J = |l-j|}^{l+j} V_{J+}.
\end{align*}
We just show the lower-left isomorphism since the arguments are always the same. So, assume that $f: V_j \otimes V_l \to \bigoplus_{J = |l-j|}^{l+j} V_J$ is an isomorphism and thus in particular intertwines the given representations. Now, we take the exact same map $f: V_{j-} \otimes V_{l+} \to \bigoplus_{J = |l-j|}^{l+j}V_{J-}$ and only need to figure out that it is equivariant with respect to the given representations, using the same property for the original isomorphism we started with:
\begin{align*}
f \circ \big[ D_{j-}(sg) \otimes D_{l+}(sg)\big]  & = f \circ \big[ s (D_{j}(g) \otimes D_l(g)) \big] \\
& = s \bigoplus_{J = |l-j|}^{l+j} D_J(g) \circ f \\
& = \bigoplus_{J = |l-j|}^{l+j} D_{J-}(sg) \circ f.
\end{align*}
This shows the claim. From these considerations, it also follows that the Clebsch-Gordan coefficients do not in any way depend on the signs of the spaces $V_j$, $V_l$, $V_J$. Thus, we write them generically as $\left\langle JM \middle| jm;ln \right\rangle$.

\subsubsection{Endomorphisms of $V_J$}

As always over $\C$, Schur's Lemma \ref{Schur} shows that the endomorphism spaces are $1$-dimensional, and thus we can ignore endomorphisms.

\subsubsection{Bringing Everything Together}\label{bring_together_o_3}

Now we can finally compute the basis for steerable kernels. The section on the Clebsch-Gordan decomposition suggests that we need to do a case distinction for this. Namely, the possible kernels depend on the signs of $V_l$ and $V_J$. The results basically follow analogously to the results in Section \ref{everything_together}.

\subsubsection*{Steerable Kernels $K: S^2 \to \Hom_{\C}(V_{l+}, V_{J+})$:}

$V_{J+}$ can only be in a tensor product $V_{j} \otimes V_{l+}$ if the sign of $j$ is positive. Spaces $V_{j1+}$ appear in the tensor product decomposition of $\Ltwo{\C}{S^2}$ precisely for even $j$, according to Section \ref{pw_decomposition_o3}. Thus, a basis for steerable kernels is given by all $K_j$ with \emph{even} $j \in \big\lbrace|l-J|, \dots, l+J\big\rbrace$ . It has matrix elements
\begin{equation*}
\left\langle JM \middle| K_j(x) \middle| ln \right\rangle = 
\sum_{m = -j}^{j} \left\langle JM \middle| jm;ln \right\rangle \cdot \overline{Y_j^m}(x),
\end{equation*}
exactly as in Section \ref{everything_together}.

\subsubsection*{Steerable Kernels $K: S^2 \to \Hom_{\C}(V_{l+}, V_{J-})$:}

Analogously, a basis for steerable kernels is given by all $K_j$, with \emph{odd} $j \in\big\lbrace |l-J|, \dots, l+J \big\rbrace$.

\subsubsection*{Steerable Kernels $K: S^2 \to \Hom_{\C}(V_{l-}, V_{J+})$:}

Again, a basis for steerable kernels is given by all $K_j$ with \emph{odd} $j \in\big\lbrace |l-J|, \dots, l+J \big\rbrace$.

\subsubsection*{Steerable Kernels $K: S^2 \to \Hom_{\C}(V_{l-}, V_{J-})$:}

As in the first case, a basis for steerable kernels is given by all $K_j$ with \emph{even} $j \in\big\lbrace |l-J|, \dots, l+J \big\rbrace$.

Thus, we have determined all kernel bases for the group $\O{3}$ over the complex numbers. Compared to $\SO{3}$, we see that the kernel spaces get roughly halved. The reason for this is that with a bigger symmetry group, the kernel needs to obey more rules, which means that the kernel constraint has fewer solutions.

\subsection{$\O{3}$-Steerable Kernels for Real Representations}

Basically, we can argue exactly as in Section \ref{how_to_go_to_r} in order to transport the results for complex representations over to the real world. We shortly sketch the procedure and outcome. As we know from Section \ref{irreps_of_Z/2}, $\rho_{-}: \Z_2 \to \O{\R}$ and $\rho_{+}: \Z_2 \to \O{\R}$ are the only irreducible real representations of $\Z_2$. Thus, for each $l \geq 0$ we obtain two irreducible real representations $\slimrealD{l+}: \O{3} \to \O{{}^r V_{l+}}$ and $\slimrealD{l-}: \O{3} \to \O{{}^r V_{l-}}$. As before, they also act on complex vector spaces and are as such isomorphic to the complex irreducible representations of $\O{3}$. One can then show as in Lemma \ref{all_complex_real} that all complex irreducible representations are of real type since they split into two copies of the real version of this representation. Thus, by Corollary \ref{all_real_so3_real}, all real irreducible representations are of real type, and this means that we can proceed exactly as in Proposition \ref{theorememememe} in order to show that the $\slimrealD{l+}$ and $\slimrealD{l-}$ are already all the irreducible real representations of $\O{3}$ up to equivalence.

For the Peter-Weyl decomposition of $\Ltwo{\R}{S^2}$, we only need to note that the real spherical harmonics emerge with a base change from the complex ones, as seen in Eq. \eqref{important_equation}, and thus fulfill the same parity rules as the complex spherical harmonics. This gives us a decomposition
\begin{equation*}
\Ltwo{\R}{S^2} = \widehat{\bigoplus_{\substack{l \geq 0 \\ l \text{ even }}}} \big({}^r V_{l1+}\big) \oplus \widehat{\bigoplus_{\substack{l \geq 0 \\ l \text{ odd }}}} \big({}^r V_{l1-}\big).
\end{equation*}
For the Clebsch-Gordan coefficients, we again get decompositions 
\begin{equation*}
{}^r V_j \otimes {}^r V_l \cong \bigoplus_{J = |l - j|}^{l+j} {}^r V_J
\end{equation*}
where the signs on the left must ``multiply to'' the signs on the right, as in Section \ref{clebsch_gordanananana}. Finally, the endomorphism spaces must be $1$-dimensional since the endomorphism spaces of the complex versions are $1$-dimensional. 

Overall, we obtain the same kernels as in Section \ref{bring_together_o_3}, only that we need to use the real spherical harmonics as our steerable filters and can get rid of the complex conjugation.

\section{Mathematical Preliminaries}\label{mathematical_preliminaries_yabadaba}

In this chapter, we state mathematical preliminaries that we use throughout the earlier chapters. In this whole chapter, $\K$ is one of the two fields $\R$ or $\C$.

\subsection{Topological Spaces, Normed Spaces, and Metric Spaces}\label{topological_concepts}
Since in this work, we want to develop the theory of representations over \emph{compact} groups, and since this is a topological property, we need to formulate some topological concepts~\citep{topology}. Additionally, the vector spaces on which our compact groups act also carry a topology, mostly coming from their Hilbert space structure. 

\begin{dfn}[Topological Space, Open Sets, Closed Sets]
A \emph{topological space} $(X, \mathcal{T})$ consists of a set $X$ and a set $\mathcal{T}$ of subsets of $X$, called the \emph{open} sets, such that arbitrary unions and finite intersections of open sets are open. In particular, $X$ and the empty set $\emptyset$ are open. \emph{Closed} sets are the complements of open sets and fulfill dual axioms: arbitrary intersections and finite unions of closed sets are closed.
\end{dfn}

Let in the following $X$ and $Y$ be topological spaces.

\begin{dfn}[Open Neighborhood]
Let $x \in X$. An open set $U \subseteq X$ is called \emph{open neighborhood} of $x$ if $x \in U$.
\end{dfn}

\begin{dfn}[Hausdorff Space]\label{hausdorff}
$X$ is called a \emph{Hausdorff space} if two distinct points can always be separated by open sets, i.e., for all $x, y \in X$ there exist $U_x, U_y$ open such that $x \in U_x, y \in U_y$, and $U_x \cap U_y = \emptyset$. 
\end{dfn}

In this work, all topological spaces are Hausdorff.

\begin{dfn}[Subspace]
Assume $A \subseteq X$ is a subset. Then the set $\mathcal{T}_A \coloneqq \{U \cap A \mid U \in \mathcal{T}\}$ is a topology for $A$ and thus makes $A$ a topological space as well. It is called a \emph{subspace} of $X$.
\end{dfn}

Whenever we consider a subset of a topological space, it is viewed as a topological space with this construction.

\begin{dfn}[Closure, Density]
For $A \subseteq X$, its \emph{closure} $\overline{A}$ is defined as the smallest closed subset of $X$ that contains $A$. Equivalently, it is the intersection of all closed subsets of $X$ containing $A$, which is closed by the axioms of a topology. $A$ is called \emph{dense in $X$} if $\overline{A} = X$.
\end{dfn}

\begin{dfn}[Continuous Function, Homeomorphism]\label{continuous_function}
A function $f: X \to Y$ is called \emph{continuous} if preimages of open sets are always open. Equivalently, for each point $x_0 \in X$ and each open neighborhood $V$ of $f(x_0)$ there is an open neighborhood $U$ of $x_0$ such that $f(U) \subseteq V$.

A \emph{homeomorphism} is a continuous bijective function with a continuous inverse.
\end{dfn}
 
Note that compositions of continuous functions are continuous as well.

\begin{dfn}[Open Cover, Compact Space]\label{compact}
An \emph{open cover} of $X$ is a family of open sets $\{U_i\}_{i \in I}$ that cover $X$, i.e., $X = \bigcup_{i \in I} U_i$. $X$ is called \emph{compact} if all open covers have a finite subcover, that is: For all open covers $\{U_i\}_{i \in I}$ there exists a finite subset $J \subseteq I$ such that $\{U_i\}_{i \in J}$ is still an open cover of $X$.
\end{dfn}

\begin{pro}\label{compact_image}
If $X$ is compact and $f: X \to Y$ is continuous, then $f(X) \subseteq Y$ is compact as well. In particular, if $f$ surjective, then $Y$ is compact.
\end{pro}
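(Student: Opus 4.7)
The plan is to reduce the compactness of $f(X)$ to the compactness of $X$ by pulling an arbitrary open cover back through $f$ and using continuity. Because we work with the subspace topology on $f(X) \subseteq Y$, every open subset of $f(X)$ has the form $U \cap f(X)$ for some $U$ open in $Y$, so it suffices to handle covers by such sets.

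Concretely, let $\{V_i\}_{i \in I}$ be an open cover of $f(X)$, and write each $V_i = U_i \cap f(X)$ with $U_i$ open in $Y$. Since $f: X \to Y$ is continuous, each preimage $f^{-1}(U_i)$ is open in $X$. I would first verify that $\{f^{-1}(U_i)\}_{i \in I}$ covers $X$: for any $x \in X$, the image $f(x)$ lies in $f(X)$ and is therefore in some $V_{i_0} \subseteq U_{i_0}$, whence $x \in f^{-1}(U_{i_0})$. By compactness of $X$ (Definition \ref{compact}), this open cover admits a finite subcover indexed by some finite $J \subseteq I$.

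The final step is to check that $\{V_i\}_{i \in J}$ already covers $f(X)$. For $y \in f(X)$, pick $x \in X$ with $f(x) = y$; by the finite subcover property, $x \in f^{-1}(U_i)$ for some $i \in J$, so $y = f(x) \in U_i \cap f(X) = V_i$. This shows $f(X)$ is compact. The ``in particular'' clause is then immediate: if $f$ is surjective, then $f(X) = Y$, and the same argument shows $Y$ is compact.

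There is really no serious obstacle in this argument; the only mild point to be careful about is keeping the subspace topology straight, i.e., translating between open sets of $f(X)$ and open sets of $Y$ when applying continuity of $f$. Everything else is a direct application of the definitions of continuity (Definition \ref{continuous_function}) and compactness (Definition \ref{compact}).
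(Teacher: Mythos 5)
Your proof is correct and complete: it is the standard open-cover argument, with the subspace topology on $f(X)$ handled properly. The paper itself does not spell out a proof but simply cites a textbook reference, and the argument you give is exactly the one that reference supplies, so there is nothing to add.
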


\begin{proof}
See~\citet{metric_topological}, Proposition $13.15$.
\end{proof}

\begin{pro}\label{compact_hausdorff_homeo}
Let $f: X \to Y$ be a continuous bijection and assume that $X$ is compact and that $Y$ is Hausdorff. Then the inverse $f^{-1}$ is continuous as well and thus $f$ is a homeomorphism.
\end{pro}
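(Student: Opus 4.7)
The plan is to show that $f^{-1}\colon Y\to X$ is continuous by proving that $f$ is a closed map, i.e., that $f$ sends closed subsets of $X$ to closed subsets of $Y$. Since $f$ is a bijection, for any $C\subseteq X$ we have $(f^{-1})^{-1}(C)=f(C)$, so $f^{-1}$ pulls closed sets back to closed sets (equivalently open sets to open sets) exactly when $f$ is closed.

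First I would take an arbitrary closed subset $C\subseteq X$ and observe that $C$ is compact, because closed subsets of a compact space are compact (any open cover of $C$ in $X$ extends to one of $X$ by adjoining the open set $X\setminus C$, a finite subcover of which yields a finite subcover of $C$). Next, I would apply Proposition \ref{compact_image} to conclude that the image $f(C)\subseteq Y$ is compact, since $f$ is continuous.

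The remaining step is to show that a compact subset of a Hausdorff space is closed; this is the only nontrivial topological ingredient. Given $y\in Y\setminus f(C)$, I would use the Hausdorff property to separate $y$ from each $z\in f(C)$ by disjoint open neighborhoods $U_z\ni y$ and $V_z\ni z$. The family $\{V_z\}_{z\in f(C)}$ covers the compact set $f(C)$, so it admits a finite subcover $V_{z_1},\dots,V_{z_n}$. Then $U\coloneqq U_{z_1}\cap\cdots\cap U_{z_n}$ is an open neighborhood of $y$ that is disjoint from $V_{z_1}\cup\cdots\cup V_{z_n}\supseteq f(C)$, hence $U\subseteq Y\setminus f(C)$. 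This exhibits $Y\setminus f(C)$ as open, so $f(C)$ is closed.

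Putting the three steps together: for every closed $C\subseteq X$, the set $f(C)$ is closed in $Y$, so $f$ is a closed map, and therefore $f^{-1}$ is continuous. Combined with the bijectivity and continuity of $f$ that are given, this makes $f$ a homeomorphism. The main potential obstacle is the compactness-implies-closed step in Hausdorff spaces, but the finite-subcover argument above handles it cleanly; no further machinery is needed.
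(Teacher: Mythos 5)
Your proof is correct and is the standard argument; the paper itself only cites \citet{metric_topological} for this fact, and the chain you give (closed in compact is compact, continuous image of compact is compact via Proposition~\ref{compact_image}, compact in Hausdorff is closed, hence $f$ is a closed map and $f^{-1}$ is continuous) is exactly the textbook proof being referenced. No gaps.
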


\begin{proof}
See~\citet{metric_topological}, Proposition $13.26$.
\end{proof}

\begin{dfn}[Product Topology]
The \emph{product topology} on $X \times Y$ is the coarsest (i.e., smallest in terms of inclusion) topology that makes both projections $p_X: X \times Y \to X$ and $p_Y: X \times Y \to Y$ continuous. 
\end{dfn}

If $Z$ is a third topological space and we have two continuous functions $f_X: Z \to X$ and $f_Y: Z \to Y$, then the function $f_X \times f_Y: Z \to X \times Y$, $z \mapsto (f_X(z), f_Y(z))$ is continuous as well.

\begin{dfn}[Quotient Map, Quotient Space]\label{quotient_topology}
A continuous function $f: X \to Y$ is called a \emph{quotient map} if $f$ is surjective and if $U \subseteq Y$ is open if and only if $f^{-1}(U) \subseteq X$ is open. 

Let $\sim$ be any equivalence relation on $X$ and $\equivclass{X}$ be the quotient set formed by identifying equivalent elements. Let $q: X \to \equivclass{X}$ be the canonical function sending each element to its equivalence class. We define $U \subseteq \equivclass{X}$ to be open if $q^{-1}(U) \subseteq X$ is open. Then $q$ is a quotient map and $\equivclass{X}$ is called a \emph{quotient space}. 
\end{dfn}

\begin{pro}[Universal property of Quotient Maps]\label{quotient_universal}
Let $q: X \to \equivclass{X}$ be a standard quotient map and $f: X \to Y$ be any continuous function such that $f(x) = f(x')$ whenever $x \sim x'$. Then there is a unique continuous function $\overline{f}: \equivclass{X} \to Y$ such that the following diagram commutes:
\begin{equation*}
\begin{tikzcd}
X \ar[r, "f"] \ar[d, "q"'] & Y \\
\equivclass{X} \ar[ur, "\overline{f}"', shorten <= -.5em]
\end{tikzcd}
\end{equation*}
$\overline{f}$ is given on equivalence classes by $\overline{f}([x]) = f(x)$.
\end{pro}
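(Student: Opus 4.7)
The plan is to define $\overline{f}: \equivclass{X} \to Y$ by the formula given, namely $\overline{f}([x]) \coloneqq f(x)$, and then verify in sequence: well-definedness of the assignment, the commutativity of the diagram, uniqueness of any such factorization, and finally continuity. The hypothesis that $f(x) = f(x')$ whenever $x \sim x'$ is precisely what is needed for well-definedness, since two representatives of the same class get mapped to the same element of $Y$. Commutativity of the triangle then holds by construction: for any $x \in X$, we have $(\overline{f} \circ q)(x) = \overline{f}([x]) = f(x)$.

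For uniqueness, I would use that $q$ is surjective (every equivalence class is $[x]$ for some $x \in X$). If $g: \equivclass{X} \to Y$ is any continuous map satisfying $g \circ q = f$, then for any $[x] \in \equivclass{X}$ we get $g([x]) = g(q(x)) = f(x) = \overline{f}([x])$, so $g = \overline{f}$. Note that no use of continuity is needed for uniqueness; set-theoretic uniqueness already suffices.

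The main (and only nontrivial) step is continuity, and this is where the special structure of the quotient topology from Definition~\ref{quotient_topology} comes in. Given an open set $U \subseteq Y$, I want to show $\overline{f}^{-1}(U) \subseteq \equivclass{X}$ is open. By definition of the quotient topology, this is equivalent to showing that $q^{-1}\bigl(\overline{f}^{-1}(U)\bigr)$ is open in $X$. But
\begin{equation*}
q^{-1}\bigl(\overline{f}^{-1}(U)\bigr) \;=\; (\overline{f} \circ q)^{-1}(U) \;=\; f^{-1}(U),
\end{equation*}
and $f^{-1}(U)$ is open by continuity of $f$. Hence $\overline{f}^{-1}(U)$ is open in $\equivclass{X}$ and $\overline{f}$ is continuous. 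No step here is expected to present any real obstacle; the entire content of the proposition is essentially that the quotient topology was defined precisely so as to make this universal property hold.
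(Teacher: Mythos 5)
Your proof is correct and complete; the paper itself only cites a textbook for this standard fact, and your argument (well-definedness from the compatibility hypothesis, uniqueness from surjectivity of $q$, and continuity via the identity $q^{-1}(\overline{f}^{-1}(U)) = f^{-1}(U)$ together with the defining property of the quotient topology) is exactly the standard proof that reference would give.
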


\begin{proof}
See~\citet{topology}, Proposition $2.8.7$.
\end{proof}

It can be shown that all quotient maps are equivalent to a construction of the form $q: X \to \equivclass{X}$. Namely, for a quotient map $f: X \to Y$, define $\sim$ by $x \sim x'$ if $f(x) = f(x')$. Then the map $\overline{f}: \equivclass{X} \to Y$, $[x] \mapsto f(x)$ is a well-defined continuous map by the universal property of quotient maps Proposition \ref{quotient_universal}. One can show that this is a homeomorphism. Thus for a quotient map $f: X \to Y$ we also call $Y$ a quotient space.

Our route for defining concrete topologies is in most cases through the existence of inner products on Hilbert spaces, which will be defined in detail in Definition \ref{Hilbert space}. Namely, inner products define norms, which define metrics~\citep{metric_spaces}, which in turn define topologies. For this, we need some definitions:

\begin{dfn}[Norm]
Let $V$ be a $\K$-vector space, A \emph{norm} on $V$ is a map $\|\cdot\|: V \to \R_{\geq 0}$ with the following properties for all $\lambda \in \K$ and $v, w \in V$:
\begin{enumerate}
\item $\|v\| = 0$ if and only if $v = 0$.
\item $\|\lambda v\| = |\lambda| \cdot \|v\|$.
\item Triangle inequality: $\|v + w\| \leq \|v\| + \|w\|$.
\end{enumerate}
\end{dfn}

If $\left\langle \cdot \middle| \cdot \right\rangle: V \times V \to \K$ is an inner product on a Hilbert space, then it defines a norm $\|\cdot\|: V \to \R_{\geq 0}$ by $\|x\| \coloneqq \sqrt{\left\langle x \middle| x \right\rangle}$. 

\begin{dfn}[Metric]\label{metric}
Let $Y$ be a set. A \emph{metric} on $Y$ is a function $d: Y \times Y \to \R_{\geq 0}$ with the following properties for all $x, y, z \in Y$:
\begin{enumerate}
\item $d(x,y) = 0$ if and only if $x = y$.
\item Symmetry: $d(x,y) = d(y,x)$.
\item Triangle inequality: $d(x,z) \leq d(x,y) + d(y,z)$.
\end{enumerate}
\end{dfn}

A norm $\|\cdot\|: V \to \R_{\geq 0}$ defines a metric $d: V \times V \to \R$ by setting $d(x, y) \coloneqq \|x - y\|$. In turn, a metric defines a topology as follows:  open balls are given by all sets of the form $\B{\epsilon}{x} \coloneqq \{y \in V \mid d(x,y) < \epsilon\}$ for all $x \in V$ and $\epsilon > 0$. Open sets are then defined as arbitrary unions of arbitrary open balls.

Additionally, we need notions about convergence in this work. Since we will deal with them mostly in the context of metric spaces (with normed vector spaces and Hilbert spaces being special cases, as explained above), we focus on these notions for metric spaces.

\begin{dfn}[Convergent Sequence]
Let $Y$ be a metric space. Then a sequence $(y_k)_k$ in $Y$ is said to \emph{converge to $y$} if for all $\epsilon > 0$ there is a $k_{\epsilon} \in \N$ such that $y_k \in \B{\epsilon}{y}$ for all $k \geq k_{\epsilon}$.
\end{dfn}

With this in mind, one can give an equivalent definition of continuity that applies to metric spaces:

\begin{dfn}[Continuity in Metric Spaces]
A function $f: Y \to Z$ between metric spaces is \emph{continuous in $y \in Y$} if for each sequence $(y_k)_k$ of points $y_k \in Y$ converging to a point $y \in Y$, we also have that the sequence $f(y_k)$ converges to $f(y)$. This can be understood in terms of the function ``commuting with limits'':
\begin{equation*}
\lim_{k \to \infty} f(y_k) = f\pig(\lim_{k \to \infty} y_k\pig).
\end{equation*}
Furthermore, $f: Y \to Z$ is called \emph{continuous} if it is continuous in all points $y \in Y$.
\end{dfn}

Equivalently, the following holds: $f: Y \to Z$ is continuous in $y \in Y$ if and only of for all $\epsilon > 0$ there is a $\delta> 0$ such that $f\left( \B{\delta}{y}\right) \subseteq \B{\epsilon}{f(y)}$.

\begin{dfn}[Uniform Continuity]
A function $f: Y \to Z$ between metric spaces is called \emph{uniformly continuous} if for each $\epsilon > 0$ there is a $\delta > 0$ such that for all $y, y'\in Y$ with $d_Y(y, y') < \delta$ we obtain $d_Y(f(y), f(y')) < \epsilon$.
\end{dfn}

The following is a result we use several times in the main text:

\begin{pro}\label{characterization_continuity}
Let $f: V \to V'$ be a linear function between normed vector spaces. Then the following are equivalent:
\begin{enumerate}
\item $f$ is uniformly continuous.
\item $f$ is continuous.
\item $f$ is continuous in $0$.
\end{enumerate}
\end{pro}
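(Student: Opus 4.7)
The plan is to establish the chain of implications $(1) \Rightarrow (2) \Rightarrow (3) \Rightarrow (1)$. The first two implications are essentially formal: uniform continuity implies continuity at every point by taking the same $\delta$ in the local definition, and continuity everywhere trivially implies continuity at the single point $0$. Neither requires the hypothesis that $f$ is linear or that the domain is a normed vector space.

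The only substantive step is $(3) \Rightarrow (1)$, and this is where linearity plus the norm structure enter. First I would note that since $f$ is linear we have $f(0) = 0$, so continuity at $0$ unpacks to: for every $\epsilon > 0$ there exists $\delta > 0$ such that $\|v\| < \delta$ implies $\|f(v)\| < \epsilon$. Given any two points $v, w \in V$ with $\|v - w\| < \delta$, linearity gives $f(v) - f(w) = f(v - w)$, and applying the above to $v - w$ yields $\|f(v) - f(w)\| = \|f(v-w)\| < \epsilon$. Since $\delta$ depends only on $\epsilon$ and not on $v$ or $w$, this is precisely uniform continuity.

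I do not anticipate a main obstacle here; the statement is essentially the well-known fact that a linear map between normed spaces is continuous iff bounded iff continuous at the origin, stripped down to the part about uniform continuity. The short proof outlined above completes the equivalence.
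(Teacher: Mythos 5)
Your proof is correct and follows exactly the same route as the paper's: the implications $(1)\Rightarrow(2)\Rightarrow(3)$ are noted as trivial, and for $(3)\Rightarrow(1)$ both arguments use $f(0)=0$ together with $f(v)-f(w)=f(v-w)$ to transport the $\delta$ from continuity at the origin to a uniform modulus. Nothing is missing.
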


\begin{proof}
Trivially, $1$ implies $2$, which in turn implies $3$. Now assume $3$, i.e., $f$ is continuous in $0$. Let $\epsilon > 0$. Then by continuity in $0$, there exists $\delta > 0$ such that for all $v \in V$ with $\|v\| = \|v - 0\|<\delta$ we obtain $\|f(v)\| = \|f(v) - f(0)\| < \epsilon$. Now let $v, v' \in V$ be arbitrary with $\|v - v'\| < \delta$. Then by the linearity of $f$ we obtain:
\begin{equation*}
\|f(v) - f(v')\| = \|f(v - v')\| < \epsilon,
\end{equation*}
which is exactly what we wanted to show.
\end{proof}

Sometimes, sequences \emph{look like} they converge since their elements get ever closer to each other. However, not all such sequences need to converge. Therefore, there is the following notion:

\begin{dfn}[Cauchy Sequence]
Let $Y$ be a metric space. A sequence $(y_k)_k$ in $Y$ is a \emph{Cauchy Sequence} if for all $\epsilon > 0$ there is $k_{\epsilon} \in \N$ such that for all $k, k' > k_{\epsilon}$ we have $d(y_k, y_{k'}) < \epsilon$.
\end{dfn}

For example, one can consider the metric space $\R \setminus \{0\}$ together with the usual metric. Then the sequence $\left(\frac{1}{k}\right)_k$ is a Cauchy sequence but does not converge since the limit (in $\R$!), which would be $0$, is not in $\R \setminus \{0\}$. Thus, the following notion is useful:

\begin{dfn}[Complete Metric Space]\label{complete_metric_space}
A metric space $Y$ is called \emph{complete} if every Cauchy sequence converges.
\end{dfn}

\begin{dfn}[Completion]
Let $Y$ be a metric space. A \emph{completion of $Y$} is a metric space $Y'$ which contains $Y$ as a dense subspace and such that $Y'$ is complete.
\end{dfn}

\begin{pro}[Universal Property of Completions]\label{universal_property_completion}
Assume that $Y \subseteq Y'$ is a pair of metric spaces, where $Y'$ is a completion of $Y$. Then the following universal property holds:

Let $Z$ be any complete metric space and $f: Y \to Z$ be any uniformly continuous function. Then there is a unique continuous function $f': Y' \to Z$ that extends $f$, i.e., such that $f'|_Y = f$. $f'$ furthermore is also uniformly continuous. This can be expressed by the following commutative diagram, where $i: Y \to Y'$ is the canonical inclusion:
\begin{equation*}
\begin{tikzcd}
Y \ar[r, "f"] \ar[d, "i"] & Z \\
Y' \ar[ur, "f'"', shorten <= -.2em]
\end{tikzcd}
\end{equation*}
\end{pro}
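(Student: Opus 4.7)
The plan is a standard density-and-completeness argument, but I would be careful to exploit \emph{uniform} continuity (not just continuity) of $f$ in order to get both existence and uniform continuity of $f'$.

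First I would define $f'$. Given $y' \in Y'$, use density of $Y$ in $Y'$ to pick a sequence $(y_k)_k$ in $Y$ with $y_k \to y'$ in $Y'$. Such a sequence is Cauchy in $Y'$, and since $Y$ carries the subspace metric, it is Cauchy in $Y$. Now apply uniform continuity of $f$: for every $\epsilon > 0$ there is $\delta > 0$ with $d_Y(y,\tilde y) < \delta \Rightarrow d_Z(f(y),f(\tilde y)) < \epsilon$, so the sequence $(f(y_k))_k$ is Cauchy in $Z$. Completeness of $Z$ (Definition~\ref{complete_metric_space}) gives a limit, and I set
\[
f'(y') \coloneqq \lim_{k \to \infty} f(y_k).
\]
For well-definedness I would take a second sequence $\tilde y_k \to y'$; the ``interleaved'' sequence $y_1, \tilde y_1, y_2, \tilde y_2, \dots$ also converges to $y'$ in $Y'$, hence its image under $f$ converges in $Z$ by the same argument, forcing the two limits to coincide. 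That $f'$ extends $f$ is immediate by using the constant sequence $y_k = y$ for $y \in Y$.

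Next I would verify uniform continuity of $f'$. Given $\epsilon > 0$, choose $\delta > 0$ from uniform continuity of $f$ so that $d_Y(y,\tilde y) < \delta$ implies $d_Z(f(y), f(\tilde y)) \le \epsilon/2$. Pick $y', \tilde y' \in Y'$ with $d_{Y'}(y', \tilde y') < \delta/3$, and approximating sequences $y_k \to y'$, $\tilde y_k \to \tilde y'$ in $Y$. For all sufficiently large $k$ we have $d_{Y'}(y_k, y') < \delta/3$ and $d_{Y'}(\tilde y_k, \tilde y') < \delta/3$, so by the triangle inequality $d_Y(y_k, \tilde y_k) < \delta$, whence $d_Z(f(y_k), f(\tilde y_k)) \le \epsilon/2$. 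Passing to the limit (using continuity of the metric $d_Z$) gives $d_Z(f'(y'), f'(\tilde y')) \le \epsilon/2 < \epsilon$. This both establishes uniform continuity of $f'$ and, as a byproduct, its ordinary continuity.

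Finally, uniqueness: if $g: Y' \to Z$ is any continuous map with $g|_Y = f$, then for any $y' \in Y'$ and any sequence $y_k \to y'$ in $Y$,
\[
g(y') \;=\; \lim_{k \to \infty} g(y_k) \;=\; \lim_{k \to \infty} f(y_k) \;=\; f'(y'),
\]
where the first equality uses continuity of $g$ and the second uses $g|_Y = f$. The main potential obstacle is keeping the two metrics straight (on $Y'$ versus its restriction to $Y$) and ensuring that Cauchy sequences transfer cleanly between them, but this is automatic since $Y$ is assumed to be a metric \emph{subspace} of $Y'$. No stronger tool than uniform continuity plus completeness of $Z$ is needed.
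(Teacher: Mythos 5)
Your proof is correct and complete; the paper itself does not prove this proposition but simply defers to a standard reference on metric spaces, and your argument (extend by Cauchy sequences using uniform continuity, check well-definedness by interleaving, derive uniform continuity of $f'$ by a $\delta/3$ triangle-inequality estimate, and get uniqueness from density plus continuity) is exactly the standard textbook proof that reference would contain. No gaps.
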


\begin{proof}
See, for example,~\citet{metric_spaces}.
\end{proof}

\begin{dfn}[Boundedness]
Let $Y$ be a metric space. A subset $A \subseteq Y$ is called \emph{bounded} if there is a constant $C>0$ such that $d(a, b) \leq C$ for all $a, b \in A$.
\end{dfn}

\begin{thrm}[Heine-Borel Theorem]\label{heine_borel}
A subset $A \subseteq \K^d$ is compact if and only if it is closed and bounded.
\end{thrm}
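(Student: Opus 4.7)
The plan is to handle the two implications separately. For the forward direction (compact $\Rightarrow$ closed and bounded), I would proceed as follows. To show boundedness, I would cover $A$ by the open balls $\{\B{n}{0} : n \in \N\}$; by compactness a finite subcover exists, so $A \subseteq \B{N}{0}$ for some $N$, from which $d(a,b) \leq 2N$ for all $a,b \in A$. To show closedness, I would argue that the complement is open: fix $y \notin A$, and for each $a \in A$ use the Hausdorff property of $\K^d$ to pick disjoint open neighborhoods $U_a \ni a$ and $V_a \ni y$. The family $\{U_a \cap A\}_{a \in A}$ is an open cover of $A$, so by compactness finitely many $U_{a_1},\dots,U_{a_n}$ suffice, and then $V_{a_1} \cap \cdots \cap V_{a_n}$ is an open neighborhood of $y$ disjoint from $A$.

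For the backward direction, I would reduce everything to showing that the closed cube $Q_N \coloneqq [-N,N]^d$ is compact. First, since $A$ is bounded it is contained in some such $Q_N$. Second, I would prove a general auxiliary lemma: a closed subset $C$ of a compact space $K$ is itself compact. The argument is straightforward -- given an open cover $\{U_i\}$ of $C$ (by sets open in $K$), adjoining $K \setminus C$ yields an open cover of $K$, which has a finite subcover; dropping $K \setminus C$ gives a finite subcover of $C$. Applying this with $C = A$ and $K = Q_N$ reduces the theorem to compactness of $Q_N$.

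The main obstacle is thus proving $Q_N$ is compact, and this is the step that genuinely uses completeness of $\R$. My plan is the bisection (nested-interval) argument. For $d=1$, suppose for contradiction that some open cover $\{U_i\}_{i \in I}$ of $[-N,N]$ has no finite subcover. Bisect the interval; at least one of the two halves still has no finite subcover. Iterating produces a nested sequence of closed intervals $I_0 \supseteq I_1 \supseteq \cdots$ whose diameters tend to $0$, each admitting no finite subcover. By completeness of $\R$ (Proposition \ref{complete_metric_space}-style argument applied to the sequence of midpoints, which is Cauchy), their intersection contains a point $x$. This $x$ lies in some $U_{i_0}$, which contains a ball $\B{\epsilon}{x}$, and for large enough $k$ the interval $I_k$ is contained in $\B{\epsilon}{x} \subseteq U_{i_0}$ -- contradicting that $I_k$ has no finite subcover. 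The case of general $d$ follows by iterating bisection in each coordinate (cutting the cube into $2^d$ sub-cubes at each stage) so that the same nested-cube / Cauchy-sequence argument produces a limit point in $\bigcap_k Q^{(k)}$, yielding the same contradiction. Combining the two auxiliary results (closed-in-compact is compact; the cube is compact) with the containment $A \subseteq Q_N$ completes the proof.
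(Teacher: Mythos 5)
Your proposal is correct and complete in outline; it is the standard textbook proof of Heine--Borel (compact $\Rightarrow$ bounded via the ball cover, compact $\Rightarrow$ closed via the Hausdorff separation argument, and the converse via ``closed in compact is compact'' plus bisection of cubes). The paper itself does not prove this statement --- it simply cites a reference --- so there is no in-paper argument to compare against, but your route is exactly the one any standard source takes, and the key observation that completeness of $\R$ is the ingredient that makes the nested-cube step work is correctly placed. Two cosmetic points: when $\K = \C$ you should identify $\C^d$ with $\R^{2d}$ before writing the cube as $[-N,N]^{2d}$, since $[-N,N]^d$ is not literally a subset of $\C^d$; and the completeness you invoke is the completeness of $\R$ as a standing assumption (Definition~\ref{complete_metric_space} in the paper is only the definition of a complete metric space, not a proof that $\R$ is one). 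Neither affects the validity of the argument.
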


\begin{proof}
See~\citet{topology}, Theorem $1.4.8$.
\end{proof}

\begin{cor}[Extreme Value Theorem]\label{extreme_value}
Let $f: X \to \R$ be continuous, where $X$ is any nonempty compact topological space. Then $f$ has a maximum and a minimum.
\end{cor}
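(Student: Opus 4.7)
The plan is to chain together the two results that precede the corollary: Proposition~\ref{compact_image} (continuous images of compact spaces are compact) and Theorem~\ref{heine_borel} (Heine-Borel: compact subsets of $\K^d$ are exactly the closed and bounded ones). The strategy is to transfer the problem from $X$ to $f(X) \subseteq \R$, where the real-line structure makes existence of maximum and minimum almost tautological.

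First I would form the image $Y \coloneqq f(X) \subseteq \R$. By Proposition~\ref{compact_image}, since $f$ is continuous and $X$ is compact, $Y$ is compact. Applying Heine-Borel (Theorem~\ref{heine_borel}) with $d=1$ to $Y$ gives that $Y$ is closed and bounded in $\R$. Moreover, $Y$ is nonempty because $X$ is nonempty.

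Next I would invoke the completeness of $\R$: every nonempty bounded subset of $\R$ has a supremum $s = \sup Y$ and an infimum $i = \inf Y$ in $\R$. The remaining step is to show $s, i \in Y$, which is where closedness of $Y$ enters. By the defining property of the supremum, for every $\epsilon > 0$ there exists $y \in Y$ with $s - \epsilon < y \leq s$, so $s$ lies in the topological closure of $Y$; since $Y$ is closed, $s \in Y$. The argument for $i$ is symmetric. Finally, picking preimages $x_{\max}, x_{\min} \in X$ with $f(x_{\max}) = s$ and $f(x_{\min}) = i$ yields the required maximum and minimum.

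There is essentially no hard step here; each step is a direct invocation of a previously cited result. The only mild subtlety worth stating carefully is the passage from $s = \sup Y$ to $s \in Y$, which uses both completeness of $\R$ (so that $s$ exists as a real number, not merely as $+\infty$) and closedness of $Y$ (so that $s$, being a limit point of $Y$, belongs to $Y$). No further machinery is required.
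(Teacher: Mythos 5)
Your proof is correct and follows exactly the same route as the paper: apply Proposition~\ref{compact_image} to get that $f(X)$ is compact, then Heine--Borel (Theorem~\ref{heine_borel}) to conclude it is closed and bounded, so the supremum and infimum exist and lie in $f(X)$. The only difference is that you spell out the $\epsilon$-argument for why $\sup f(X)$ belongs to the closed set $f(X)$, which the paper leaves implicit.
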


\begin{proof}
By Proposition \ref{compact_image}, $f(X) \subseteq \R$ is compact. By Theorem \ref{heine_borel} this means that $f(X)$ is closed and bounded. Boundedness means that the supremum is finite and closedness means that the supremum must lie in $f(X)$, and consequently it is a maximum. For the minimum, the same arguments apply.
\end{proof}

\subsection{Limits of nets and approximated Dirac delta functions}\label{limit_of_nets_etc}

\todo[inline]{Moved from the main text in the appendix, since we discuss the theorem C.7 now in more precision already earlier in the text, but it's weird to discuss this limiting in such precision already in the formulation of the theorem. But we refer to this section from the main text.}

In this section, we discuss ``limits of nets'', where a net can be imagined as a sequence over an index set which may be ``too big to be handled as a sequence over the natural numbers''. They appear in the formulation of Theorem \ref{steerable kernels = representation operators}. This material can, for example, be found in \citep{topology}. 

\begin{dfn}[Partially Ordered Set, Directed Set]
Let $I$ be an index set and $\leq$ a relation on it. $I = (I, \leq)$ is a \emph{partially ordered set} if:
\begin{enumerate}
\item $\leq$ is \emph{reflexive}, i.e., $i \leq i$ for all $i \in I$.
\item $\leq$ is \emph{antisymmetric}, that is: $i \leq j$ and $j \leq i$ together imply $i = j$.
\item $\leq$ is \emph{transitive}, that is: $i \leq j$ and $j \leq k$ together imply $i \leq k$.
\end{enumerate}

A partially ordered set $I$ is called \emph{directed} if for all $i, j \in I$ there exists $k \in I$ such that $i \leq k$ and $j \leq k$.
\end{dfn}

\begin{exa}\label{standard_example_directed_set}
Clearly, the natural numbers together with the standard order relation form a directed set. 

An important example for our purposes is the following: let $Z$ be any topological space (for example, a homogeneous space $X$ of a compact group $G$) and $x \in Z$ be any point. Furthermore, define $\mathcal{U}_x$ as the set of open neighborhoods of $x$, i.e., open sets $U \subseteq Z$ such that $x \in U$. On this set, we define $U \leq V$ if $U \supseteq V$, i.e., by \emph{reversed} inclusion. Then $(\mathcal{U}_x, \leq)$ is a directed set:
\begin{enumerate}
\item Reflexivity is clear since $V \supseteq V$ for all $V$.
\item Antisymmetry is clear since $U \supseteq V$ and $V \supseteq U$ together clearly imply $U = V$.
\item Transitivity is clear since $U \supseteq V$ and $V \supseteq W$ together clearly imply $U \supseteq W$.
\item For directedness, let $U, V \in \mathcal{U}_x$. Define $W = U \cap V$. Then $W \in \mathcal{U}_x$ and clearly $U \supseteq W$ and $V \supseteq W$, which is what was to show.
\end{enumerate}
Note that $\mathcal{U}_x$ is usually \emph{not totally ordered}, i.e., there are usually $U, V \in \mathcal{U}_x$ such that neither $U \supseteq V$ nor $V \supseteq U$.
\end{exa}

\begin{dfn}[Net]
Let $Z$ be any topological space and $I$ a directed set. Then a \emph{net} in $Z$ is a function $x: I \to Z$. We write a net as $(x_i)_{i \in I}$, in analogy to sequences.
\end{dfn}

\begin{dfn}[Convergence of Nets]\label{convergence_nets}
Let $(x_i)_{i \in I}$ be a net in a topological space $Z$. Let $x \in Z$. We say that $(x_i)_{i \in I}$ \emph{converges to $x$}, written $\lim_{i \in I} x_i = x$, if the following holds: for all open neighborhoods $U$ of $x$ there is an $i_0 \in I$ such that for all $i \geq i_0$ we have $x_i \in U$.
\end{dfn}

Now we define the approximated Dirac delta for the special case that $X$ is a homogeneous space of a compact group $G$. Remember that there is a Haar measure $\mu$ on $X$.

\begin{dfn}[Approximated Dirac Delta]\label{approx_dirac_delta}
For $\emptyset \neq U \subseteq X$ open, we define the approximated Dirac delta by $\delta_U: X \to \K$ with
\begin{equation*}
\delta_U(x) = \frac{1}{\mu(U)} \cdot \indic{U}(x) 
= \begin{cases}
\frac{1}{\mu(U)}, \ x \in U \\
0, \ \text{else}.
\end{cases}
\end{equation*}
We have $\delta_U \in \Ltwo{\K}{X}$.
\end{dfn}

A priori, it is unclear that open sets have positive measure, which is needed for the well-definedness of this construction, since otherwise we divide by zero. Thus, we need the following lemma:

\begin{lem}
Let $\emptyset \neq U \subseteq X$ be an open set. Then $\mu(U) > 0$.
\end{lem}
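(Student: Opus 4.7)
The plan is to use the $G$-invariance of $\mu$ together with compactness of $X$ to transport measure from $U$ onto all of $X$ via finitely many translates. Since $\mu(X) = 1 > 0$, this forces $\mu(U)$ to be strictly positive.

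First I would note that $X$ is compact. This follows from Lemma~\ref{homeomorphism_to_group_quotient}: $X$ is homeomorphic to $G/H$ for some closed subgroup $H \leq G$, and $G/H$ is the continuous image of the compact group $G$ under the canonical projection, hence compact by Proposition~\ref{compact_image}.

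Next, I would show that the family $\{gU \mid g \in G\}$ is an open cover of $X$. Each translate $gU$ is open because the action map $X \to X$, $x \mapsto g \cdot x$ is a homeomorphism (it has the continuous inverse $x \mapsto g^{-1} \cdot x$). To see that the translates cover $X$, pick any $x_0 \in U$ (possible since $U \neq \emptyset$). For an arbitrary $y \in X$, transitivity of the $G$-action yields some $h \in G$ with $h \cdot x_0 = y$, so $y \in h \cdot U$. By compactness of $X$, there exist finitely many $g_1, \ldots, g_n \in G$ with $X = \bigcup_{i=1}^n g_i U$.

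Finally, I would apply the left $G$-invariance of $\mu$ on $X$ (stated in Section~\ref{square_integrable_functions}) to conclude $\mu(g_i U) = \mu(U)$ for every $i$. Combined with subadditivity and the normalization $\mu(X) = 1$, this gives
\begin{equation*}
1 \ =\ \mu(X)\ =\ \mu\Big( \bigcup\nolimits_{i=1}^n g_i U \Big)\ \leq\ \sum_{i=1}^n \mu(g_i U)\ =\ n\, \mu(U),
\end{equation*}
so $\mu(U) \geq 1/n > 0$. The argument is elementary; the only subtlety to double-check is that the $G$-invariance asserted for integrals in Section~\ref{square_integrable_functions} translates to the measure-level identity $\mu(gA) = \mu(A)$ for measurable $A$, which is standard since $\mu(gA) = \int_X \indic{gA}\, dx = \int_X \indic{A}(g^{-1}\cdot x)\, dx = \int_X \indic{A}(x)\, dx = \mu(A)$.
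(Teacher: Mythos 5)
Your proof is correct and follows exactly the same route as the paper: cover $X$ by the open translates $gU$ using transitivity, extract a finite subcover by compactness, and conclude from left-invariance and subadditivity that $1 = \mu(X) \leq n\,\mu(U)$. The extra justifications you supply (compactness of $X$ via $G/H$, and deriving $\mu(gA)=\mu(A)$ from invariance of the integral) are sound and only make the argument more self-contained.
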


\begin{proof}
Consider the family of open sets $(gU)_{g \in G}$. That all of these sets are necessarily open follows since the action $G \times X \to X$ is continuous, and thus by the definition of a group action, each $g \in G$ induces a homeomorphism $X \to X, x \mapsto gx$. Now, since the action is transitive, $(gU)_{g \in G}$ is an open cover of $X$, and since $X$ is compact, see Definition \ref{compact}, it has an open subcover $(g_i U)_{i = 1}^{n}$ with $g_i \in G$. Note that $\mu(g_iU) = \mu(U)$ for all $i$ since the measure $\mu$ on $X$ is by definition left invariant under the action of $G$. Overall, we obtain
\begin{equation*}
1 = \mu(X) = \mu\bigg( \bigcup_{i = 1}^{n} g_iU\bigg) \leq \sum_{i = 1}^{n} \mu(g_iU) = \sum_{i = 1}^{n} \mu(U) = n \cdot \mu(U)
\end{equation*}
and thus $\mu(U) \geq \frac{1}{n} > 0$.
\end{proof}

\subsection{Pre-Hilbert Spaces and Hilbert Spaces}\label{hilbert_spaces}

Here, we state foundational concepts in the theory of Hilbert spaces~\citep{hilbert_spaces}.

\begin{dfn}[pre-Hilbert Space, Hilbert space]\label{Hilbert space}
A \emph{pre-Hilbert space} $V = (V, \left\langle \cdot \middle| \cdot \right\rangle)$ consists of the following data:
\begin{enumerate}
\item A vector space $V$ over $\K$.
\item An inner product $\left\langle \cdot \middle| \cdot \right\rangle: V \times V \to \K$, $(x, y) \mapsto \left\langle x \middle| y \right\rangle$.
\end{enumerate}
It has the following properties that hold for all $x, x', y, y' \in V$, $\lambda \in \K$:
\begin{enumerate}
\item The inner product is conjugate linear in the first component: $\left\langle x+x' \middle| y \right\rangle = \left\langle x \middle| y \right\rangle  +  \left\langle x' \middle| y \right\rangle$ and $\left\langle \lambda x \middle| y \right\rangle = \overline{\lambda} \left\langle  x \middle| y \right\rangle$, where $\overline{\lambda}$ is the complex conjugate of $\lambda$.
\item The inner product is linear in the second component: $\left\langle x \middle| y+y' \right\rangle = \left\langle x \middle| y \right\rangle  +  \left\langle x \middle| y' \right\rangle$ and $\left\langle  x \middle| \lambda y \right\rangle = \lambda \left\langle  x \middle| y \right\rangle$.
\item The inner product is conjugate symmetric: $\left\langle y \middle| x \right\rangle = \overline{\left\langle x \middle| y \right\rangle}$
\item The inner product is positive definite: $\left\langle x \middle| x \right\rangle > 0$ unless $x = 0$.
\end{enumerate}
If additionally, the following statement holds, then $V$ is called a \emph{Hilbert Space}: 
\begin{enumerate}
\setcounter{enumi}{4}
\item $V$, together with the norm $\|\cdot\|: V \to V$ induced from the inner product by $\|x\| \coloneqq \sqrt{\left\langle x \middle| x\right\rangle}$, and consequently the metric defined by $d(x, y) \coloneqq \|x - y\|$, is a complete metric space as in Definition \ref{complete_metric_space}.
\end{enumerate}
\end{dfn}

\begin{rem}
Of course, all Hilbert Spaces are pre-Hilbert spaces, and so all Propositions about pre-Hilbert spaces in the following apply to Hilbert spaces just as well.

Note that the first property follows from the second and third. We also mention that usually, inner products on Hilbert spaces are assumed to be linear in the first and conjugate linear in the second component, in contrast to how we view it. The reason for our choice is that our work is inspired by connections to physics where our convention is more common. It is basically the bra-ket convention. Furthermore, note that if $\K = \R$, then conjugate linear maps are linear and thus the inner product will be linear in both components. Additionally, it will be symmetric instead of only conjugate symmetric.
\end{rem}

\begin{pro}[Cauchy-Schwartz Inequality]\label{cauchy_schwarz}
For any two elements $v, w$ in a pre-Hilbert space $V$, we have
\begin{equation*}
|\left\langle v \middle| w\right\rangle| \leq \|v\| \cdot \|w\|.
\end{equation*}
We have equality if and only if $v$ and $w$ are linearly dependent.
\end{pro}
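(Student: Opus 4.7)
The plan is to use the standard ``complete the square'' argument that exploits the non-negativity of the induced norm. First I would dispose of the trivial case $w = 0$: both sides of the inequality vanish, and $v, w$ are vacuously linearly dependent, so equality holds. So assume $w \neq 0$. The key step is to pick the scalar $\lambda := \langle w \mid v \rangle / \|w\|^2 \in \K$ that best approximates $v$ by a multiple of $w$, and then invoke positive definiteness (axiom 4 of Definition \ref{Hilbert space}) to get $0 \leq \|v - \lambda w\|^2$.

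Expanding this square via conjugate linearity in the first slot and linearity in the second slot yields
\begin{equation*}
0 \ \leq\ \|v\|^2 \,-\, \lambda\, \langle v \mid w \rangle \,-\, \overline{\lambda}\, \langle w \mid v \rangle \,+\, |\lambda|^2\, \|w\|^2.
\end{equation*}
By conjugate symmetry, $\overline{\lambda} = \langle v \mid w \rangle / \|w\|^2$, so upon substituting our chosen $\lambda$ the three $\lambda$-dependent terms collapse to $-|\langle v \mid w \rangle|^2 / \|w\|^2$. Rearranging and multiplying by $\|w\|^2$ gives $|\langle v \mid w \rangle|^2 \leq \|v\|^2 \|w\|^2$, and taking square roots (which are defined since both sides are non-negative reals) yields the inequality.

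For the equality case, I would argue in both directions. If equality holds, then the chain of inequalities above is an equality, forcing $\|v - \lambda w\|^2 = 0$; by positive definiteness this means $v = \lambda w$, so $v, w$ are linearly dependent. Conversely, if $v$ and $w$ are linearly dependent, either $w = 0$ (handled above) or $v = \mu w$ for some $\mu \in \K$, and a direct computation gives $|\langle v \mid w \rangle| = |\mu| \|w\|^2 = \|v\| \|w\|$.

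There is no real obstacle here, as every step is a direct consequence of the four inner product axioms. The only bookkeeping hazard is the paper's bra-ket convention in which the inner product is conjugate linear in the \emph{first} argument (rather than the second as in many analysis texts), so I must place the complex conjugate on $\lambda$ rather than on $\overline{\lambda}$ when pulling the scalar out of the first slot. A brief sanity check is that when $\K = \R$, all conjugations drop out and the argument reduces to the familiar real-case proof.
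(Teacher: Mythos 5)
Your proof is correct. The paper does not actually supply an argument for this proposition --- it simply defers to Theorem 3.2.9 of its cited reference on Hilbert spaces --- so there is no internal proof to compare against; your self-contained completion-of-the-square argument is the standard one and fills that gap correctly. In particular, your choice $\lambda = \left\langle w \middle| v \right\rangle / \|w\|^2$ is consistent with the paper's bra-ket convention (conjugate linear in the first slot, linear in the second): the expansion of $\|v - \lambda w\|^2$ you display is exactly what that convention produces, the three $\lambda$-dependent terms do collapse to $-|\left\langle v \middle| w \right\rangle|^2/\|w\|^2$, and the equality analysis (including the observation that linear dependence with $w \neq 0$ forces $v = \mu w$) is complete. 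The only blemish is the final verbal remark about ``placing the conjugate on $\lambda$ rather than $\overline{\lambda}$,'' which reads slightly garbled, but the displayed computation it describes is right.
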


\begin{proof}
See~\citet{hilbert_spaces}, Theorem $3.2.9$.
\end{proof}

\begin{dfn}[Orthogonality]
Two vectors $v, w$ in a pre-Hilbert space $V$ are called \emph{orthogonal}, written $v \perp w$, if$\left\langle v \middle| w\right\rangle = 0$.
\end{dfn}

Obviously, being orthogonal is a symmetric relation. 

\begin{dfn}[Orthogonal Complement]
Let $V$ be a pre-Hilbert space and $W \subseteq V$ a subset. $v \in V$ is \emph{orthogonal to $W$} if $\left\langle v \mid w \right\rangle = 0$ for all $w \in W$.

The \emph{orthogonal complement} of $W$, denoted $W^{\perp}$, is the set of all vectors in $V$ that are orthogonal to $W$.
\end{dfn}

\begin{pro}[Closedness of Complements]
Let $W \subseteq V$ be a subset of a pre-Hilbert space $V$. Then $W^{\perp}$ is a topologically closed linear subspace of $V$.
\end{pro}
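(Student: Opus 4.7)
The plan is to verify the two assertions separately: first that $W^{\perp}$ is a linear subspace, and then that it is topologically closed. For the subspace part, I would note that $0 \in W^{\perp}$ trivially, and for $v, v' \in W^{\perp}$ and $\lambda \in \K$, the conjugate-linearity of the inner product in the first slot gives
\[
\left\langle \lambda v + v' \,\middle|\, w \right\rangle \;=\; \overline{\lambda}\left\langle v \,\middle|\, w\right\rangle + \left\langle v' \,\middle|\, w\right\rangle \;=\; 0
\]
for every $w \in W$, so $\lambda v + v' \in W^{\perp}$.

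For closedness, the key tool is the Cauchy--Schwarz inequality (Proposition \ref{cauchy_schwarz}). For each fixed $w \in W$, I would consider the linear functional $\varphi_w: V \to \K$ defined by $\varphi_w(v) := \left\langle v \,\middle|\, w \right\rangle$. Using Cauchy--Schwarz, one obtains the estimate
\[
|\varphi_w(v) - \varphi_w(v')| \;=\; \left| \left\langle v - v' \,\middle|\, w\right\rangle \right| \;\leq\; \|v - v'\| \cdot \|w\|,
\]
which shows $\varphi_w$ is (uniformly) continuous. In particular $\varphi_w^{-1}(\{0\}) \subseteq V$ is closed, being the preimage of the closed singleton $\{0\} \subseteq \K$ under a continuous map.

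Finally, I would observe the identity
\[
W^{\perp} \;=\; \bigcap_{w \in W} \varphi_w^{-1}(\{0\}),
\]
and conclude by the topology axioms (arbitrary intersections of closed sets are closed) that $W^{\perp}$ is closed. There is no real obstacle in this proof: the only substantive input is the Cauchy--Schwarz estimate that upgrades (anti)linearity of the inner product to continuity, and everything else is formal algebra and formal topology. I would phrase the argument so that it works uniformly for both $\K = \R$ and $\K = \C$, which is why I use conjugate-linearity in the first slot rather than linearity.
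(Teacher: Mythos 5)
Your proof is correct. The paper does not actually prove this proposition---it simply cites a textbook---so there is nothing in-paper to compare against; your argument (write $W^{\perp}$ as $\bigcap_{w\in W}\varphi_w^{-1}(\{0\})$, use Cauchy--Schwarz to get continuity of each $\varphi_w$, and invoke the topology axioms) is exactly the standard route the cited reference takes. One cosmetic remark: under the paper's convention the map $\varphi_w(v)=\left\langle v\,\middle|\,w\right\rangle$ is conjugate-linear rather than linear, but since only additivity and the Lipschitz estimate $|\varphi_w(v)-\varphi_w(v')|\le\|v-v'\|\cdot\|w\|$ are used, this does not affect the argument.
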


\begin{proof}
See~\citet{hilbert_spaces}, Theorem $3.6.2$.
\end{proof}

\begin{pro}[Continuity of Scalar Product]\label{scalar_product_continuous}
For any pre-Hilbert space $V$, the scalar product $\left\langle \cdot \middle| \cdot \right\rangle: V \times V \to \K$ is continuous.
\end{pro}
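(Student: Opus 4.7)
The plan is to verify continuity sequentially. Since $V$ carries the metric induced by its norm, the product space $V \times V$ inherits the product topology, which agrees with the metric topology coming from, say, $d((v,w),(v',w')) \coloneqq \|v-v'\| + \|w-w'\|$. Continuity of the scalar product at an arbitrary point $(v,w) \in V \times V$ can therefore be tested on sequences: given $(v_n) \to v$ and $(w_n) \to w$ in $V$, I would show $\langle v_n | w_n\rangle \to \langle v | w \rangle$ in $\K$.

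The key tool is the Cauchy-Schwarz inequality (Proposition \ref{cauchy_schwarz}) together with the standard add-and-subtract trick. Concretely, I would estimate
\begin{align*}
|\langle v_n | w_n \rangle - \langle v | w \rangle|
&= |\langle v_n | w_n - w\rangle + \langle v_n - v | w \rangle| \\
&\leq |\langle v_n | w_n - w\rangle| + |\langle v_n - v | w \rangle| \\
&\leq \|v_n\| \cdot \|w_n - w\| + \|v_n - v\| \cdot \|w\|,
\end{align*}
using conjugate linearity in the first argument and linearity in the second for the opening equality, the triangle inequality in $\K$ next, and finally Cauchy-Schwarz on each summand.

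To finish, I would observe that $v_n \to v$ implies by the reverse triangle inequality that $\|v_n\| \to \|v\|$, so the sequence $(\|v_n\|)_n$ is bounded by some constant $C \geq 0$. Then the right-hand side is dominated by $C \cdot \|w_n - w\| + \|v_n - v\| \cdot \|w\|$, which tends to $0$ by assumption. This gives $\langle v_n|w_n\rangle \to \langle v | w\rangle$, i.e., continuity at $(v,w)$. There is no real obstacle here; the only mild subtlety is remembering to bound $\|v_n\|$ uniformly before applying Cauchy-Schwarz, which would fail if we tried to use $\|v\|$ directly in the first summand without the add-and-subtract step.
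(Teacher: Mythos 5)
Your proof is correct and complete. The paper does not actually supply an argument for this proposition --- it defers to a citation (Theorem $3.3.12$ of its Hilbert-space reference) --- and your sequential argument via the add-and-subtract decomposition, Cauchy--Schwarz, and the boundedness of $\|v_n\|$ is exactly the standard proof that such a reference gives, with the one genuine subtlety (bounding $\|v_n\|$ uniformly rather than naively using $\|v\|$) correctly handled.
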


\begin{proof}
See~\citet{hilbert_spaces}, Theorem $3.3.12$.
\end{proof}

\begin{dfn}[Orthonormal System]
A family $(v_i)_{i \in I}$ of elements in a pre-Hilbert space is called \emph{orthonormal system} if $\|v_i\| = 1$ for all $i \in I$ and $v_i \perp v_j$ for all $i \neq j$.
\end{dfn}

\begin{dfn}[Orthonormal Basis]\label{orthonormal_basis}
An orthonormal system  $(v_i)_{i \in I}$ in a  Hilbert space $V$ is called \emph{orthonormal basis} if the linear span of all $\{v_i\}_{i \in I}$ is dense in $V$. If this is the case, then each $v \in V$ can be uniquely written as
\begin{equation*}
v = \sum_{i \in I} \alpha_i v_i
\end{equation*}
with only countably many $\alpha_i \in \K$ being nonzero. The coefficients are given by $\alpha_i = \left\langle v_i \middle| v\right\rangle$.
\end{dfn}

We stress that while the index set $I$ can be uncountably infinite, the sequence expansions of each element in $V$ only have countably many entries. It is obvious from the Peter-Weyl Theorem \ref{pw} and this definition that the functions 
\begin{equation*}
\left\lbrace Y_{li}^{m} \mid l \in \widehat{G}, i \in \{1, \dots, m_l\}, m \in \{1, \dots, d_l\}\right\rbrace
\end{equation*}
form an orthonormal basis of $\Ltwo{\K}{X}$.

\begin{pro}[Gram-Schmidt Orthonormalization]\label{gram_schmidt}
For every linearly independent sequence $(y_k)_{k }$ in a pre-Hilbert space $V$ with $N \in \N \cup \{\infty\}$ elements, one can find an orthonormal sequence $(v_k)_k$ in $V$ such that the following holds: for all $n \in \N$, $n \leq N$, the progressive linear span stays the same:
\begin{equation*}
\spann_{\K}(v_1, \dots, v_n) = \spann_{\K}(y_1, \dots, y_n).
\end{equation*}
In particular, since every finite-dimensional Hilbert space has a vector space basis, it necessarily also has an orthonormal basis.
\end{pro}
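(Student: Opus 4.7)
The plan is to construct the orthonormal sequence $(v_k)_k$ by induction on $k$, mirroring the classical Gram-Schmidt procedure. For the base case $k=1$, set $v_1 := y_1/\|y_1\|$; this is well-defined since linear independence of $(y_k)_k$ implies in particular $y_1 \neq 0$, and clearly $\|v_1\| = 1$ and $\spann_\K(v_1) = \spann_\K(y_1)$. For the inductive step, suppose $v_1,\dots,v_{n-1}$ have already been constructed as an orthonormal system with $\spann_\K(v_1,\dots,v_{n-1}) = \spann_\K(y_1,\dots,y_{n-1})$. Define
\begin{equation*}
    w_n \ :=\ y_n \ -\ \sum\nolimits_{i=1}^{n-1} \left\langle v_i \middle| y_n \right\rangle v_i,
\end{equation*}
and then $v_n := w_n / \|w_n\|$.

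The three properties to verify in the inductive step are that $w_n \neq 0$ (so the normalization is legal), that $v_n$ is orthogonal to each previous $v_i$, and that the span condition is preserved. Non-vanishing of $w_n$ follows from the assumption that $(y_k)_k$ is linearly independent: if $w_n = 0$, then $y_n$ would lie in $\spann_\K(v_1,\dots,v_{n-1}) = \spann_\K(y_1,\dots,y_{n-1})$, contradicting linear independence of $y_1,\dots,y_n$. Orthogonality to each $v_j$ with $j < n$ is a direct calculation using the antilinearity/linearity of the scalar product together with the inductive orthonormality $\left\langle v_i \middle| v_j \right\rangle = \delta_{ij}$, which yields $\left\langle v_j \middle| w_n \right\rangle = \left\langle v_j \middle| y_n \right\rangle - \left\langle v_j \middle| y_n \right\rangle = 0$. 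The span identity $\spann_\K(v_1,\dots,v_n) = \spann_\K(y_1,\dots,y_n)$ follows because $v_n$ is a nonzero scalar multiple of $y_n$ plus an element of $\spann_\K(v_1,\dots,v_{n-1}) = \spann_\K(y_1,\dots,y_{n-1})$, and conversely $y_n$ can be written as a linear combination of $v_1,\dots,v_n$ by rearranging the definition of $w_n$.

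The ``in particular'' clause for finite-dimensional Hilbert spaces is then immediate: pick any vector space basis $(y_1, \dots, y_d)$ of $V$, which exists by standard linear algebra; this is a finite linearly independent sequence, so the procedure above produces an orthonormal system $(v_1, \dots, v_d)$ with $\spann_\K(v_1, \dots, v_d) = \spann_\K(y_1, \dots, y_d) = V$. Since the span of the $v_k$ already equals $V$ (not merely being dense in it), this is an orthonormal basis in the sense of Definition~\ref{orthonormal_basis}.

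Honestly, there is no real obstacle here: Gram-Schmidt is one of the most standard constructions in linear algebra and the proof is a short induction. The only mildly delicate point is being careful that everything works in the \emph{pre}-Hilbert setting rather than requiring completeness --- but since the construction at each step uses only finite sums, scalar products, and normalization by a nonzero scalar, no limits or Cauchy sequences are invoked, so the argument is entirely algebraic and completeness plays no role. This is why we can formulate the statement for pre-Hilbert spaces and still obtain orthonormal bases of finite-dimensional (pre-)Hilbert spaces as a corollary.
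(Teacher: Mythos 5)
Your proof is correct and is the standard Gram--Schmidt induction; the paper does not spell out an argument at all but simply defers to the literature (\citet{hilbert_spaces}, p.~110), which contains essentially the same construction. Your remark that the argument uses only finite sums and normalization, hence works in pre-Hilbert spaces without completeness, is exactly the point the paper relies on when invoking this proposition.
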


\begin{proof}
See~\citet{hilbert_spaces}, page $110$.
\end{proof}

\begin{dfn}[Adjoint of an Operator]\label{adjoint}
Let $f: V \to V'$ be a continuous linear function between Hilbert spaces. Then there is a unique continuous linear function $f^*: V' \to V$ such that for all $v \in V$ and $v' \in V'$ one has:
\begin{equation*}
\left\langle f(v) \middle| v'\right\rangle_{V'} = \left\langle v \middle| f^*(v')\right\rangle_{V}.
\end{equation*}
$f^*$ is called the \emph{adjoint} of $f$.
\end{dfn}

The existence of adjoints is, for example, discussed in~\citet{hilbert_spaces}, page 158. This book only considers the case of operators on a Hilbert space to itself, but these considerations generalize to the setting with two different Hilbert spaces. One has the following:

\begin{pro}\label{properties_adjoints}
Let $f: V \to V'$ and $g: V' \to V''$ be continuous linear functions between Hilbert spaces. Then:
\begin{enumerate}
\item $(f^{*})^* = f$.
\item $\ID_V^* = \ID_V$.
\item $(g \circ f)^* = f^* \circ g^*$.
\end{enumerate} 
\end{pro}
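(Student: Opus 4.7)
The plan is to prove each of the three identities by the same uniform strategy: exploit the uniqueness clause built into Definition~\ref{adjoint}. Recall that the adjoint of a continuous linear map is characterized as the unique continuous linear map satisfying a certain pairing identity with respect to the two scalar products. So for each of the three statements, I will exhibit a concrete candidate for the adjoint, verify that it satisfies the defining inner-product identity, and then conclude equality by uniqueness. No analytic work (e.g.\ existence arguments) is needed, since existence of all adjoints in question is already guaranteed by Definition~\ref{adjoint}; the proof is purely formal manipulation of brackets.

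For part~2 the verification is immediate: $\langle \ID_V(v) \mid w \rangle_V = \langle v \mid \ID_V(w) \rangle_V$, so $\ID_V$ satisfies the defining relation of its own adjoint, giving $\ID_V^* = \ID_V$. For part~3, I would compute for $v \in V$ and $v'' \in V''$:
\begin{equation*}
\langle (g \circ f)(v) \mid v'' \rangle_{V''} = \langle f(v) \mid g^*(v'') \rangle_{V'} = \langle v \mid f^*(g^*(v'')) \rangle_V,
\end{equation*}
using the defining property of $g^*$ and then of $f^*$. Since $f^* \circ g^*$ is continuous and linear as a composition of continuous linear maps, uniqueness of the adjoint of $g \circ f$ yields $(g \circ f)^* = f^* \circ g^*$.

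Part~1 is the only step requiring genuine attention to the convention that the scalar product on a complex Hilbert space is conjugate-linear in the first argument (cf.\ Definition~\ref{Hilbert space}). Here I would use conjugate symmetry twice: for $v \in V$ and $v' \in V'$,
\begin{equation*}
\langle f^*(v') \mid v \rangle_V = \overline{\langle v \mid f^*(v') \rangle_V} = \overline{\langle f(v) \mid v' \rangle_{V'}} = \langle v' \mid f(v) \rangle_{V'}.
\end{equation*}
This is exactly the defining identity for $(f^*)^*$, so uniqueness gives $(f^*)^* = f$. The potential subtlety is getting the complex conjugations right under our physics-style convention; in the real case the two conjugations simply disappear.

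Overall there is no serious obstacle: all three statements follow from combining the definition of the adjoint with elementary properties of the scalar product (conjugate symmetry, sesquilinearity) and the uniqueness clause. The proof should fit in roughly half a page, with the bracket manipulation in part~1 being the only spot where one must be careful with conventions.
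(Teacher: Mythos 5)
Your proposal is correct and follows exactly the route the paper takes: the paper's proof consists of the single remark that all three identities ``follow directly from the uniqueness of adjoints,'' and your bracket computations are precisely the details being elided there. The conjugate-symmetry bookkeeping in part~1 is handled correctly under the paper's physics-style convention.
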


\begin{proof}
All of these properties follow directly from the uniqueness of adjoints.
\end{proof}

\begin{pro}\label{adjoints_of_unitary}
Let $f: V \to V'$ be a unitary transformation between Hilbert spaces, i.e., an invertible linear function such that $\left\langle f(v) \middle| f(w)\right\rangle = \left\langle v \middle| w \right\rangle$ for all $v, w \in V$. Then the adjoint is the inverse, i.e., $f^* = f^{-1}$.
\end{pro}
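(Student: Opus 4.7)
The plan is to use the uniqueness characterization of the adjoint given in Definition \ref{adjoint}: there is exactly one continuous linear map $g: V' \to V$ satisfying $\langle f(v) \mid v' \rangle_{V'} = \langle v \mid g(v') \rangle_{V}$ for all $v \in V$, $v' \in V'$. Hence, to show $f^* = f^{-1}$, it suffices to verify that $f^{-1}$ satisfies this defining identity, and that $f^{-1}$ is continuous and linear.

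First I would check linearity and continuity of $f^{-1}$. Linearity of $f^{-1}$ follows from the linearity of $f$ together with invertibility by a standard elementary argument. Continuity of $f^{-1}$ follows from the isometry property: for any $w \in V'$, writing $w = f(v)$, we have $\|f^{-1}(w)\|_V = \|v\|_V = \|f(v)\|_{V'} = \|w\|_{V'}$, which was established via the unitary condition by setting $v = w$ in $\langle f(v) \mid f(w) \rangle = \langle v \mid w \rangle$ and taking square roots. So $f^{-1}$ is itself an isometry, hence continuous.

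Next I would verify the adjoint identity. Let $v \in V$ and $v' \in V'$ be arbitrary. Since $f$ is invertible, there exists a unique $u \in V$ with $f(u) = v'$, namely $u = f^{-1}(v')$. Using the unitary property, we get
\begin{equation*}
\langle f(v) \mid v' \rangle_{V'} \;=\; \langle f(v) \mid f(u) \rangle_{V'} \;=\; \langle v \mid u \rangle_V \;=\; \langle v \mid f^{-1}(v') \rangle_V.
\end{equation*}
This is precisely the defining property of the adjoint for the map $f^{-1}$. By the uniqueness clause of Definition \ref{adjoint}, we conclude $f^* = f^{-1}$.

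I do not expect any real obstacle here: the proof is essentially a one-line invocation of the defining identity of the adjoint combined with the substitution $v' = f(u)$ afforded by invertibility. The only mild care needed is the explicit check that $f^{-1}$ meets the background regularity hypotheses (linearity and continuity) under which the adjoint was defined, which is immediate from the isometry property.
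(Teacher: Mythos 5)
Your proof is correct and follows essentially the same route as the paper's: substitute $v' = f(f^{-1}(v'))$, apply the unitarity identity, and invoke the uniqueness of adjoints. Your explicit verification that $f^{-1}$ is linear and continuous (via the isometry property) is slightly more detailed than the paper, which merely asserts continuity of the inverse, but the argument is the same.
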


\begin{proof}
First of all, the inverse $f^{-1}$ is again continuous due to the unitarity of $f$. Furthermore, due to the unitarity, we obtain
\begin{align*}
\left\langle f(v) \middle| v' \right\rangle & = \left\langle  f(v) \middle| f(f^{-1}(v'))\right\rangle \\
& = \left\langle v \middle| f^{-1}(v')\right\rangle
\end{align*}
for all $v \in V$ and $v' \in V'$. Due to the uniqueness of adjoints, we obtain $f^{-1} = f^*$.
\end{proof}

The following proposition is sometimes used in the main text:

\begin{pro}\label{test_equality}
Let $v, w \in V$ be two elements in a pre-Hilbert space such that $\left\langle v \middle| u\right\rangle = \left\langle w \middle| u\right\rangle$ for all $u \in V$. Then $v = w$.
\end{pro}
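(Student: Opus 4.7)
The plan is to exploit the positive-definiteness of the inner product (property~4 in Definition~\ref{Hilbert space}) together with the (conjugate-)linearity in the first argument (property~1). The hypothesis gives an equation of scalars for every test vector $u$, and the standard move is to subtract and then specialize the test vector so as to trigger positive-definiteness.

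First I would rewrite the hypothesis by bringing everything to one side. For arbitrary $u \in V$, conjugate-linearity in the first slot yields
\begin{equation*}
\left\langle v - w \,\middle|\, u \right\rangle = \left\langle v \,\middle|\, u \right\rangle - \left\langle w \,\middle|\, u \right\rangle = 0.
\end{equation*}
Since this holds for \emph{every} $u \in V$, I am free to choose the most useful one.

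The decisive step is to specialize to $u = v - w$, which lies in $V$. This substitution gives
\begin{equation*}
\left\langle v - w \,\middle|\, v - w \right\rangle = 0.
\end{equation*}
By positive-definiteness of the inner product, $\left\langle x \,\middle|\, x \right\rangle = 0$ forces $x = 0$, so $v - w = 0$, i.e.\ $v = w$, as claimed.

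There is essentially no obstacle here; the argument is a one-line application of the axioms of a pre-Hilbert space. The only thing worth noting is that the conjugate-linearity convention used in the paper (antilinear in the first slot, linear in the second) causes no difficulty because only the additive part of linearity is needed to split the difference, and no scalar is being moved across the bracket.
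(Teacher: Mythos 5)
Your proof is correct and is essentially identical to the paper's: both subtract to get $\left\langle v - w \middle| u \right\rangle = 0$ for all $u$, specialize $u = v - w$, and invoke positive-definiteness. No differences worth noting.
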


\begin{proof}
We have
\begin{equation*}
\left\langle v - w \middle| u\right\rangle = \left\langle v \middle| u \right\rangle - \left\langle w \middle| u \right\rangle = 0
\end{equation*}
for all $u \in V$. In particular, when setting $u = v - w$ we obtain
\begin{equation*}
\left\langle v - w \middle| v - w \right\rangle = 0
\end{equation*}
and thus $v - w = 0$, i.e., $v = w$.
\end{proof}

\begin{pro}[Orthogonal Projection Operators]\label{existence_projections}
Let $W \subseteq V$ be a topologically closed subspace of a Hilbert space. Then there is a continuous linear function $P: V \to W$ such that for all $v \in V$ and $w \in W$ we have
\begin{equation*}
\left\langle P(v) \middle| w \right\rangle = \left\langle v \middle| w \right\rangle.
\end{equation*}
Furthermore, if $W$ is finite-dimensional and $w_1, \dots, w_n$ and orthonormal basis, then $P$ is given explicitly by
\begin{equation*}
P(v) = \sum_{i = 1}^{n} \left\langle w_i \middle| v \right\rangle w_i.
\end{equation*}
\end{pro}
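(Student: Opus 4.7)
The plan is to establish the orthogonal decomposition $V = W \oplus W^{\perp}$ for any closed subspace $W$ of a Hilbert space $V$, define $P$ as the projection onto the first summand, and then verify the defining scalar-product identity together with the explicit formula in the finite-dimensional case. The main nontrivial step is the decomposition, which in turn rests on showing that for each $v \in V$ there exists a (unique) closest point $w_0 \in W$ to $v$.

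For existence of the closest point, I would set $d \coloneqq \inf_{w \in W} \|v - w\|$ and pick a minimizing sequence $(w_k)_k \subseteq W$ with $\|v - w_k\| \to d$. Applying the parallelogram identity to $v - w_k$ and $v - w_{k'}$ gives
\begin{equation*}
\|w_k - w_{k'}\|^2 = 2\|v - w_k\|^2 + 2\|v - w_{k'}\|^2 - 4\big\|v - \tfrac{w_k + w_{k'}}{2}\big\|^2 \leq 2\|v - w_k\|^2 + 2\|v - w_{k'}\|^2 - 4d^2,
\end{equation*}
which tends to $0$. Thus $(w_k)_k$ is Cauchy, and by completeness of $V$ and closedness of $W$ it converges to some $w_0 \in W$ with $\|v - w_0\| = d$. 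To see $v - w_0 \in W^{\perp}$, I would consider, for arbitrary $w \in W$ and $\lambda \in \K$, the inequality $\|v - w_0 - \lambda w\|^2 \geq d^2$, expand it, and let $\lambda$ vary over real and (if $\K = \C$) imaginary multiples of $\overline{\langle w | v - w_0\rangle}$ to conclude $\langle w | v - w_0 \rangle = 0$.

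Having the decomposition, I would define $P(v) \coloneqq w_0$. Linearity follows because $W$ and $W^{\perp}$ are both subspaces and the decomposition is unique (a second decomposition would force a vector in $W \cap W^{\perp} = \{0\}$). Continuity follows from $\|P(v)\|^2 + \|v - P(v)\|^2 = \|v\|^2$, which gives $\|P(v)\| \leq \|v\|$, so $P$ is a bounded linear operator; continuity for linear operators on normed spaces is then immediate via Proposition \ref{characterization_continuity}. The defining identity $\langle P(v) | w \rangle = \langle v | w \rangle$ for $w \in W$ follows at once from $\langle v - P(v) | w \rangle = 0$.

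For the finite-dimensional formula, let $Q(v) \coloneqq \sum_{i=1}^n \langle w_i | v \rangle w_i$. Then $Q(v) \in W$, and for each basis vector $w_j$ I can compute $\langle w_j | Q(v) \rangle = \sum_i \langle w_i | v \rangle \langle w_j | w_i \rangle = \langle w_j | v \rangle$ using orthonormality. By linearity in the second argument and the fact that $\{w_j\}$ spans $W$, this extends to $\langle w | Q(v) \rangle = \langle w | v \rangle$ for all $w \in W$, i.e.\ $v - Q(v) \in W^{\perp}$. Uniqueness of the orthogonal decomposition (which again uses $W \cap W^{\perp} = \{0\}$) then forces $Q = P$. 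The anticipated obstacle is the decomposition step; everything else is then bookkeeping.
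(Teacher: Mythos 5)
Your proof is correct, but it takes a genuinely different route from the paper. The paper's argument is essentially two lines: since $W$ is closed it is itself a Hilbert space, so the inclusion $i\colon W \to V$ has an adjoint $i^*\colon V \to W$ by Definition~\ref{adjoint}, and setting $P \coloneqq i^*$ gives $\left\langle P(v) \middle| w \right\rangle = \left\langle v \middle| i(w) \right\rangle = \left\langle v \middle| w \right\rangle$ immediately; the finite-dimensional formula is then checked by testing against the basis vectors and invoking Proposition~\ref{test_equality}. You instead prove the classical projection theorem from scratch: closest-point existence via a minimizing sequence and the parallelogram law, the orthogonal decomposition $V = W \oplus W^{\perp}$, and $\|P(v)\| \leq \|v\|$ for continuity. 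The trade-off is clear. The paper's proof is shorter but offloads all the analytic content to the cited existence of adjoints, which is itself usually derived from the Riesz representation theorem, i.e.\ from exactly the decomposition you establish; your argument is therefore more self-contained and, as a bonus, yields the norm bound $\|P\| \leq 1$ and the closest-point characterization, neither of which the paper's version makes visible. Both constructions produce the same operator, since the identity $\left\langle P(v) \middle| w \right\rangle = \left\langle v \middle| w \right\rangle$ for all $w \in W$ determines $P(v) \in W$ uniquely (Proposition~\ref{test_equality}). The only point worth tightening in your write-up is the variational step showing $v - w_0 \in W^{\perp}$: the cleanest choice is $\lambda = t\left\langle w \middle| v - w_0 \right\rangle$ with $t \to 0^{+}$, though your real/imaginary-multiples variant also closes the argument.
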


\begin{proof}
That $W$ is topologically closed means that $W$, with the scalar product inherited from $V$, is a complete metric space. Thus, $W$ is a Hilbert space as well. Therefore, the continuous linear embedding $i: W \to V$ given by $w \mapsto w$ has an adjoint $i^*: V \to W$ by Definition \ref{adjoint}. Set $P \coloneqq i^*$. For arbitrary $v \in V$ and $w \in W$ we obtain:
\begin{align*}
\left\langle P(v) \middle| w \right\rangle & = \left\langle i^*(v) \middle| w \right\rangle \\
& = \left\langle v \middle| i(w) \right\rangle \\
& = \left\langle v \middle| w \right\rangle.
\end{align*}
For the second statement, note that for all $j \in \{1, \dots, n\}$ we have, using that the $w_i$ are orthonormal:
\begin{align*}
\left\langle \sum\nolimits_{i = 1}^{n} \left\langle w_i \middle| v \right\rangle w_i \middle| w_j\right\rangle & = \sum\nolimits_{i = 1}^{n} \overline{\left\langle w_i \middle| v\right\rangle} \left\langle w_i \middle| w_j\right\rangle \\
& = \left\langle v \middle| w_j \right\rangle \\
& = \left\langle P(v) \middle| w_j \right\rangle.
\end{align*}
By Proposition \ref{test_equality} and since the $w_j$ generate $W$ we obtain $\sum\nolimits_{i = 1}^{n} \left\langle w_i \middle| v \right\rangle w_i = P(v)$ as claimed.
\end{proof}

\begin{pro}\label{completeness_finite_dim}
Let $(V, \left\langle \cdot \middle| \cdot  \right\rangle)$ be a finite-dimensional pre-Hilbert space. Then this space is already complete and thus a Hilbert space.

In particular, all finite-dimensional subspaces of Hilbert spaces are topologically closed.
\end{pro}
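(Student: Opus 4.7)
The plan is to prove completeness of a finite-dimensional pre-Hilbert space $V$ by exhibiting it as an isometric copy of the standard Euclidean/Hermitian space $\K^n$, whose completeness is already a classical fact (and which is implicit in the Heine--Borel Theorem~\ref{heine_borel} we have stated). First, I would invoke Gram--Schmidt Orthonormalization (Proposition~\ref{gram_schmidt}) to produce an orthonormal basis $v_1, \dots, v_n$ of $V$. Then I would define
\begin{equation*}
\varphi: \K^n \to V, \qquad (a_1, \dots, a_n) \mapsto \sum\nolimits_{i=1}^n a_i v_i,
\end{equation*}
which is clearly a $\K$-linear bijection.

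The next step is to verify that $\varphi$ is an isometry, using the orthonormality of the $v_i$: for $a = (a_1, \dots, a_n)$ and $b = (b_1, \dots, b_n)$ in $\K^n$, the sesquilinear expansion of $\langle \varphi(a) \mid \varphi(b) \rangle$ together with $\langle v_i \mid v_j \rangle = \delta_{ij}$ collapses the double sum to $\sum_i \overline{a_i} b_i$, which is exactly the standard inner product on $\K^n$. In particular $\varphi$ and $\varphi^{-1}$ preserve norms and hence are uniformly continuous, so $\varphi$ is a homeomorphism of metric spaces that carries Cauchy sequences to Cauchy sequences in both directions. Since $\K^n$ is complete (every Cauchy sequence converges componentwise by the completeness of $\K$), the image $V$ is complete as well. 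This establishes that $V$ is a Hilbert space.

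For the second part, let $W$ be a finite-dimensional subspace of a Hilbert space $V'$. The restricted scalar product makes $W$ a finite-dimensional pre-Hilbert space, so by the first part it is complete in the metric induced from $V'$. It is a general fact about metric spaces that a complete subspace of any metric space is closed (any limit point in the ambient space would be the limit of a Cauchy sequence in $W$, which must already converge inside $W$, and limits are unique in Hausdorff spaces). Hence $W$ is topologically closed in $V'$.

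The argument is essentially routine; the only mild subtlety is making sure the chosen inner-product structure on $\K^n$ is genuinely transported by $\varphi$, which reduces to the orthonormality of the basis. I do not anticipate any real obstacle beyond citing Proposition~\ref{gram_schmidt} to secure the existence of the orthonormal basis in the first place.
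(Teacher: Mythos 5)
Your proof is correct and follows essentially the same route as the paper: Gram--Schmidt yields an orthonormal basis, the resulting isometric identification with $\K^n$ (with its standard scalar product) transfers completeness, and the closedness of finite-dimensional subspaces follows from the general fact that complete subspaces of metric spaces are closed. You merely spell out the isometry $\varphi$ more explicitly than the paper does.
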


\begin{proof}
The proof of the Gram-Schmidt orthonormalization in Proposition \ref{gram_schmidt} does not make use of the completeness of the Hilbert space, and thus it holds for pre-Hilbert spaces as well. Consequently, $V$, being finite-dimensional, has an orthonormal basis. It is thus isomorphic to $\K^n$ together with the standard scalar product, which is well-known to be complete. Thus, $V$ is a Hilbert space.

Now, let $W \subseteq V$ be a finite-dimensional subspace of a Hilbert space $V$ which may be infinite-dimensional. Then $W$ is a pre-Hilbert space and by what was just shown a Hilbert space. Consequently, all sequences in $W$ which have a limit in $V$ need, by completeness, to have that limit already in $W$. This shows that $W$ is topologically closed.  
\end{proof}

\fi

\end{document}